\newtheorem{assumption}{Assumption}
\newtheorem{thm}{Theorem}
\newtheorem{lem}{Lemma}
\newtheorem{cor}{Corollary}
\newtheorem{defi}{Definition}
\newtheorem{rem}{Remark}
\newcommand\wbm{{\ensuremath{\bm{w}}}}
\newcommand\ubm{{\ensuremath{\bm{u}}}}
\newcommand\xbm{{\ensuremath{\bm{x}}}}
\newcommand\kbm{{\ensuremath{\bm{k}}}}
\newcommand\Kbm{{\ensuremath{\bm{K}}}}
\newcommand\fbm{{\ensuremath{\bm{f}}}}
\newcommand\ybm{{\ensuremath{\bm{y}}}}
\newcommand\vbm{{\ensuremath{\bm{v}}}}
\newcommand\hbm{{\ensuremath{\bm{h}}}}
\newcommand\Ibm{{\ensuremath{\bm{I}}}}
\newcommand\Qbm{{\ensuremath{\bm{Q}}}}
\newcommand\Dbm{{\ensuremath{\bm{D}}}}
\newcommand\Pbm{{\ensuremath{\bm{P}}}}
\newcommand\pbm{{\ensuremath{\bm{p}}}}
\newcommand\zerobold{\ensuremath{\mathbf{0}}}
\title{Bayesian Optimization with Expected Improvement: No Regret and the Choice of Incumbent}
\author{ \quad \quad Jingyi Wang  $^{*, 1}$,  Haowei Wang $^{*, 2}$, Szu Hui Ng $^{3}$, Cosmin G. Petra $^{4}$
}
\begin{document}
\newcommand{\Rbb}{\ensuremath{\mathbb{R} }}
\newcommand{\Pbb}{\ensuremath{\mathbb{P} }}
\newcommand{\Cbb}{\ensuremath{\mathbb{C} }}
\newcommand{\Ebb}{\ensuremath{\mathbb{E} }}
\newcommand{\Sbb}{\ensuremath{\mathbb{S} }}
\newcommand{\Vbb}{\ensuremath{\mathbb{V} }}
\newcommand{\Nbb}{\ensuremath{\mathbb{N} }}
\newcommand{\norm}[1]{\left\lVert {#1} \right\rVert}
\newcommand\epsbold{{\ensuremath{\boldsymbol{\epsilon}}}}

\newcommand{\coscom}[1]{\textcolor{orange}{[#1]}}
\newcommand{\cosadd}[1]{\textcolor{blue}{#1}}

\newcommand{\frank}[1]{\textcolor{purple}{[#1]}}

\maketitle

\begin{abstract}
  Expected improvement (EI) is one of the most widely used acquisition functions in Bayesian
optimization (BO). Despite its proven empirical success in applications, the cumulative regret
upper bound of EI remains an open question. In this paper, we analyze the classic noisy
Gaussian process expected improvement (GP-EI) algorithm. 
We consider the Bayesian setting, where the objective is a sample from a GP. Three commonly used  incumbents, namely the best posterior
mean incumbent (BPMI), the best sampled posterior mean incumbent (BSPMI), and the
best observation incumbent (BOI) are considered as the choices of the current best value in GP-EI. We present for the first time the cumulative regret upper
bounds of GP-EI with BPMI and BSPMI. Importantly, we show that in both cases, GP-EI
is a no-regret algorithm for both squared exponential (SE) and Matérn kernels. Further, we present for the first time that GP-EI with BOI either
achieves  a sublinear cumulative regret upper bound or has a fast converging noisy simple regret bound for SE and Matérn kernels. 
Our results provide theoretical guidance to the choice of incumbent when practitioners apply GP-EI
in the noisy setting. Numerical experiments are conducted to validate our findings. 
%  Numerical experiments of benchmark problems are presented to demonstrate the theories.
\end{abstract}

\begin{keywords}
   Bayesian optimization, noisy observations, Gaussian process, expected
improvement, cumulative regret upper bound, no-regret, incumbent value
\end{keywords}

\begingroup
\renewcommand{\thefootnote}{\fnsymbol{footnote}} % symbols mode
\setcounter{footnote}{1}         % start with *
  \footnotetext{Two authors contributed equally to this work.}
  
  \renewcommand{\thefootnote}{\arabic{footnote}}   % numbers
  \setcounter{footnote}{0} 
 \stepcounter{footnote} \footnotetext{Center for Applied Scientific Computing,Lawrence Livermore National Laboratory, Livermore, CA. Email: wang125@llnl.gov}
 \stepcounter{footnote} \footnotetext{Department of Industrial Systems Engineering and Management,National University of Singapore, Singapore. Email: whw@nus.edu.sg}
  \stepcounter{footnote}\footnotetext{Department of Industrial Systems Engineering and Management,National University of Singapore, Singapore. Email: isensh@nus.edu.sg}
  \stepcounter{footnote}\footnotetext{Center for Applied Scientific Computing,Lawrence Livermore National Laboratory, Livermore, CA. Email: petra1@llnl.gov}
\endgroup
\setcounter{footnote}{0}                           
\renewcommand{\thefootnote}{\arabic{footnote}} 

\section{Introduction}\label{se:introduction}
Bayesian optimization (BO) is a derivative-free optimization method for black-box functions~\citep{frazier2018}. 
BO has seen huge success in many applications  
such as hyperparameter tuning in machine learning~\citep{letham2019bayesian}, structural design~\citep{mathern2021}, robotics~\citep{rai2019using},  etc. %additive manufacturing~\citep{wang2023optimization}, inertial confinement fusion design~\citep{wang2023multifidelity}, etc.
 In recent years, new BO methodologies in the optimization of expensive integral objectives~\citep{frazier2022}, constrained optimization~\citep{picheny2016bayesian,ariafar2019admmbo,losalka2024no}, multi-objective optimization~\citep{binois2020kalai}, noise-free Gaussian Process bandits problem~\citep{iwazaki2025gaussianprocessupperconfidence,iwazaki2025improvedregretanalysisgaussian}, etc, continue to be developed. 
In the classic form, BO aims to solve the optimization problem 
\begin{equation} \label{eqn:opt-prob}
 \centering
  \begin{aligned}
	  &\underset{\substack{\xbm}\in C}{\text{minimize}} 
	  & & f(\xbm), \\
  \end{aligned}
\end{equation}
where $\xbm$ is the decision variable, $C\subset \Rbb^d$ represents the bound constraints on $\xbm$, and $f:\Rbb^d\to \Rbb$ is the objective function. 

A Gaussian process (GP) is used to approximate the objective function in \eqref{eqn:opt-prob}  and an acquisition function is then optimized to select the candidate sample for
the next observation ~\citep{frazier2018}. One of the most successful and widely used acquisition functions is expected improvement (EI)~\citep{jones1998efficient,schonlau1998global}.
EI computes the conditional expectation of an improvement function with respect to an incumbent (namely, the best current value) in search of the next sample. 
With a closed form, EI is simple to implement and only requires the cumulative distribution function (CDF) and probability density function (PDF) of the standard normal distribution. 
%Thanks to its effectiveness and efficiency, EI has been extended to scalable methods~\citep{eriksson2021scalable}, constrained EI~\citep{gardner2014}, etc. 
The classic Bayesian optimization with expected improvement as the acquisition function is termed as GP-EI \citep{mockus}.
%The classic noise-free BO algorithm using EI is also referred to as the Efficient Global Optimization (EGO) algorithm~\citep{jones1998efficient}.

Despite its wide adoption and success, there remain several important open questions on the performance and applications of GP-EI. In particular, existing works on the convergence behavior of GP-EI are still limited. There exist two main streams of convergence analysis for GP-EI in literature.
The first one focuses on the asymptotic convergence of its simple regret, which measures the error between the current best observation and the optimal objective function value. Under the assumptions that $f$ is bounded in norm in reproducing kernel Hilbert space (RKHS) of the GP kernel with no observation noise, the asymptotic convergence rates of GP-EI is established in~\cite{bull2011convergence}. 
Its extension to the noisy Bayesian setting, where 
$f$ is assumed as a sample from a Gaussian process (GP), has only recently been introduced by ~\cite{wang2025convergence}. 
Moreover, ~\cite{wang2025convergence} provides a rigorous analysis of the exploration–exploitation trade-off inherent in expected improvement (EI), which also forms an important component of the methodology adopted in this study.
%The RKHS assumption of the objective is also referred to as the frequentist setting. 
%Another common and important assumption of $f$ is the Bayesian setting, where $f$ is assumed to be a sample from a GP~\citep{narcowich2006sobolev,van2011information,bect2019supermartingale,srinivas2009gaussian,lederer2019uniform,steinwart2001influence}.

The other stream has focused on the cumulative regret upper bound of GP-EI in the bandit setting~\citep{lattimore2020bandit}, following the seminal work of~\cite{srinivas2009gaussian}. 
The cumulative regret, denoted as $R_T$ with $T$ samples,  measures the sum of the error between the objective value at each sample and the optimal value, thus providing a performance measure over the entire optimization process. 
However, existing works often modify the GP-EI algorithm in order to achieve the desirable sublinear cumulative regret bounds, \textit{e.g.}, by introducing multiple additional hyperparameters. 
In~\cite{wang2014theoreybo}, the authors studied a modified EI function with additional hyperparameters.  By using the lower and upper bounds of hyperparameters in the algorithm, they achieved a sublinear cumulative regret bound.
In ~\cite{nguyen17a}, the authors added an additional stopping criterion to bound the instantaneous regret in terms of the posterior standard deviation. However, their analysis depends on a pre-defined constant $\kappa > 0$
which is set to be small for good performances, yet the constant term in their regret
upper bound will explode quickly as $\kappa \to \infty$.
%it is unclear how the bound on the summation of standard deviation at the optimal point was obtained {\color{red} so what does this impacts the bounds?}. 
In ~\cite{tran2022regret}, the authors also modified the EI acquisition function
by including additional control parameters. Additionally, they used a different cumulative regret definition from this work and other previous works. In ~\cite{hu2025}, the authors
adjusted GP-EI and introduced an evaluation cost to better balance the exploration and exploitation
trade-off.  
%In~\cite{lederer2019uniform}, the author pointed out that the RKHS analysis of GP is similar to the work of error bounds for radial basis function interpolation of scattered data~\cite{wu1993local,wendland2004scattered}.
%In~\cite{wu1993local}, the local error estimate is shown to be bounded above by the densities of data points, which is defined at a given point $\xbm$ by the maximum Euclidean distance of closest points in a neighborhood of $\xbm$. This quantity is also referred to as filled distance in~\cite{stuart2018posterior} and is directly correlated with the standard deviation at $\xbm$. In~\cite{hu2022adjustedeiregret}, the authors established the upper bound of global posterior variance with filled distance.

Furthermore, despite being a critical component of GP-EI, the choice of incumbent has yet to be comprehensively analyzed in literature. Instead, several incumbents have been intuitively suggested and applied. In \cite{wang2014theoreybo}, the authors used BPMI, \textit{i.e.}, best posterior mean incumbent, while BOI, \textit{i.e.}, best observation incumbent, is used in ~\cite{nguyen17a}. Both~\cite{tran2022regret} and ~\cite{hu2025} adopted BSPMI, \textit{i.e.}, best sampled posterior mean incumbent. In all the works mentioned above, the authors do not discuss or analyze their choice of the incumbent. Both \cite{wang2014theoreybo} and ~\cite{tran2022regret} have claimed that the choice of BOI is \textit{brittle} without further analysis. Since the incumbent is fundamental to the definition of the improvement function, it is reasonable to conjecture that the cumulative regret behavior of GP-EI is significantly impacted  by it. 
Therefore, the choice of incumbent and the cumulative regret behavior are invariably intertwined. This then leads to the following fundamental questions: \textit{1) Is there some unified framework to establish the cumulative regret upper bounds of GP-EI? and 2) Can these results then be used to guide the choice of incumbent?}

To the best of our knowledge, the cumulative regret upper bound for classic GP-EI has
not been established. Here, we use the term classic GP-EI to highlight that no changes or modifications are made to the algorithm, particularly the EI function. Further, it is not clear whether GP-EI is a no-regret algorithm, where the cumulative regret satisfies $\lim_{T\to\infty}R_T/T = 0$, \textit{i.e.}, a sublinear $R_T$.
Additionally, the choice of the incumbent is not well studied and little theoretical guidance exists to help the practitioners select an appropriate incumbent in the noisy setting.  

In this paper, we aim to answer the two questions raised above under the Bayesian setting. Our contributions can be summarized as follows. 
\begin{itemize}
    \item First, we establish the cumulative regret bounds for GP-EI with different choices of incumbents. Importantly, we demonstrate for the first time that GP-EI with BPMI and BSPMI
are no-regret algorithms for squared exponential (SE) and Matérn kernels. Specifically, BPMI achieves regret upper bounds $\mathcal{O}(T^{\frac{3}{4}}\log^{\frac{d+2}{2}}(T))$ for SE kernels and $\mathcal{O}(T^{\frac{3\nu+2d}{4\nu+2d}}\log^{\frac{4\nu+d}{4\nu+2d}}(T))$  for Matérn kernels, while BSPMI has regret bounds $\mathcal{O}(T^{\frac{3}{4}} \log^{\frac{d+3}{2}}(T))$ and $\mathcal{O}(T^{\frac{3\nu+2d}{4\nu+2d}}\log^{\frac{3\nu+d}{2\nu+d}}(T))$ for SE and Matérn kernels, respectively. 
\item Second, we prove for the first time that GP-EI with BOI has either a sublinear cumulative regret bound or a fast converging noisy simple regret. This finding that GP-EI with BOI might not be an outright no-regret algorithm is consistent with the \textit{`brittle'} description of BOI in \cite{wang2014theoreybo} and ~\cite{tran2022regret}.
 \item  Third, with the obtained cumulative regret upper bounds, we compare the three incumbents based on their convergence results, and provide guidance on the choice of incumbent from a practical use perspective. Our theoretical guidance complements previous empirical benchmark results on EI algorithms~\citep{picheny2013benchmark}. 
\end{itemize}
From a technical perspective, establishing cumulative regret bounds for the classical GP-EI algorithm involves several nontrivial challenges. To address these, we introduce a set of core analytical insights and proof techniques, which might be of independent interests to other algorithms and are detailed in Section~\ref{se:challenge}. 

This paper is organized as follows. In Section~\ref{se:bo}, we describe the relevant backgrounds. 
In Section ~\ref{se:inst-regret}, we analyze the preliminary instantaneous regret for the three incumbents. We transform these instantaneous regret bounds to a summable form in Section~\ref{se:instant-reg}.
In Section ~\ref{se:regret}, we present the cumulative regret bounds of GP-EI and their rates. Numerical experiments are used to validate our findings in Section ~\ref{se:exp}. Finally, we conclude in Section ~\ref{se:conclusion}.

%{\color{red} the key research question: whether in the noisy setting, the EI strategy with a standard incumbent converges is still an open question of the Bayesian optimization problem. And if true then what convergence rate GP-EI can reach? 
%}

\section{Background}\label{se:bo}
BO consists of two critical components, the GP surrogate model that approximates the objective $f$ in~\eqref{eqn:opt-prob} and the acquisition function that determines the iterative sample points. In this section, we first provide the basics of GP model and EI acquisition function in Section~\ref{se:gp} and~\ref{se:ei}. Then, we introduce the regret bound, maximum information gain and other relevant background information in Section~\ref{se:additionalback}. 
The discretization of $C$ and assumptions of $f$ are given in Section~\ref{se:discretizationsmoothness}. 
%{\color{red} Overview of BO should be introduced first, i.e. objective function/problem can be defined or explained formally. Now we do not formally define the problem. Then the elements of GP and incumbents make sense}
%\frank{F: I don't under this comment. The problem is defined in (1)}.

\subsection{Gaussian process}\label{se:gp}
We consider a zero mean GP with the kernel   $k(\xbm,\xbm'):\Rbb^d\times\Rbb^d\to\Rbb$, denoted as $GP(0,k(\xbm,\xbm'))$. 
At the $t$th sample $\xbm_t\in C$, we admit observation noise $\epsilon_t$ for $f$ so that the observed function value is 
$y_t = f(\xbm_t)+\epsilon_t$.  
We assume that the noise follows an independent zero mean Gaussian distribution with variance $\sigma^2$, \textit{i.e.}, $\epsilon_i \sim\mathcal{N}(0,\sigma^2), i=1,\dots,t$.
The prior distribution on 
$\xbm_{1:t}=[\xbm_1,\dots,\xbm_t]^T$, $\fbm_{1:t}=[f(\xbm_1),\dots,f(\xbm_t)]^T$
is $\fbm_{1:t}\sim\mathcal{N} (\mathbf{0}, \Kbm_t)$, 
where $\mathcal{N}$ denotes the normal distribution 
and $\Kbm_t = [k(\xbm_1,\xbm_1),\dots,k(\xbm_1,\xbm_t); \dots; k(\xbm_t,\xbm_1),\dots,k(\xbm_t,\xbm_t)]$. 
 
%In section~\ref{se:rkhs}, $\epsilon_t$ will admit a more general assumption.
Let $\ybm_{1:t}=[y_1,\dots,y_t]^T$.
The posterior mean $\mu_t$ and variance  $\sigma^2_t$  of  $\xbm_{1:t},\ybm_{1:t}$ can be inferred using Bayes' rule as
\begin{equation} \label{eqn:GP-post}
 \centering
  \begin{aligned}
  &\mu_t(\xbm)\ =\ \kbm_t(\xbm) \left(\Kbm_t+ \sigma^2 \Ibm\right)^{-1} \ybm_{1:t} \\
  &\sigma^2_t(\xbm)\ =\
k(\xbm,\xbm)-\kbm_t(\xbm)^T \left(\Kbm_t+\sigma^2 \Ibm\right)^{-1}\kbm_t(\xbm)\ ,
\end{aligned}
\end{equation}
where $\kbm_t(\xbm)= [k(\xbm_1,\xbm),\dots,k(\xbm_t,\xbm)]^T$.
We shall make the standard kernel assumption in literature~\citep{srinivas2009gaussian} that $k(\xbm,\xbm')\leq 1$ and $k(\xbm,\xbm)=1$, for $\forall \xbm,\xbm'\in C$.
Two of the most popular kernels are the SE and Matérn kernels.

\subsection{Expected improvement and incumbents }\label{se:ei}
The improvement function of $f$ given $t$ samples is defined as 
\begin{equation} \label{eqn:improvement}
 \centering
  \begin{aligned}
    I_t(\xbm) = \max\{ \xi^+_{t} -f(\xbm),0  \}, 
  \end{aligned}
\end{equation}
where $\xi^+_t$ denotes the incumbent, which serves as a measure of the best current output value.
We consider three widely adopted $\xi_t^+$ in this paper. 
First, the best posterior mean incumbent (BPMI), denoted as $\mu_t^+$, is defined by 
\begin{equation*} \label{def:best-post-mean}
 \centering
  \begin{aligned}
   \mu^+_{t}= \underset{\substack{x\in C}}{\text{min}} \ \mu_t(\xbm).
  \end{aligned}
\end{equation*}
Second, the best sampled posterior mean incumbent (BSPMI), denoted as $\mu_t^m$, is given by 
\begin{equation*} \label{def:best-sample-post-mean}
 \centering
  \begin{aligned}
   \mu^m_{t}= \underset{\substack{\xbm_i \in \xbm_{1:t}}}{\text{min}} \ \mu_t(\xbm_i).
  \end{aligned}
\end{equation*}
Third, the best observation incumbent (BOI), denoted as $y^+_t$, is defined as   
\begin{equation*} \label{def:best-obs}
 \centering
  \begin{aligned}
   y^+_{t}= \underset{\substack{y_i\in \ybm_{1:t}}}{\text{min}} \ y_i.
  \end{aligned}
\end{equation*}
The point that generates $\xi_t^+$ is denoted as $\xbm_t^+$.

The EI acquisition function is defined as the expectation of~\eqref{eqn:improvement} conditioned on $t$ samples, with a closed form expression:
\begin{equation} \label{eqn:EI-1}
 \centering
  \begin{aligned}
       EI_t(\xbm) =   (\xi_{t}^+-\mu_{t}(\xbm))\Phi(z_{t}(\xbm))+\sigma_{t}(\xbm)\phi(z_{t}(\xbm)),\\ 
  \end{aligned}
\end{equation}
where 
\begin{equation*} \label{eqn:EI-z}
 \centering
  \begin{aligned}
  z_{t}(\xbm) = \frac{\xi^+_{t}-\mu_{t}(\xbm)}{\sigma_{t}(\xbm)}.
  \end{aligned}
\end{equation*}
The functions 
$\phi$ and~$\Phi$ are the PDF and CDF of the standard normal distribution, respectively.
Throughout this paper, we refer to $\xi_t^+-\mu_t(\xbm)$ and $\sigma_t(\xbm)$ as the exploitation and exploration parts of $EI_t(\xbm)$. Equivalent forms of $EI_t(\xbm)$ useful for analysis are defined in Section~\ref{appdx:back} in the appendix.
%To further analyze the properties of EI, we use the \textit{trade-off} and $\tau$ forms of EI, given in Appendix~\ref{appdx:back}.
The next sample of GP-EI is chosen by maximizing EI over $C$, \textit{i.e.},  
\begin{equation} \label{eqn:acquisition-1}
 \centering
  \begin{aligned}
      \xbm_{t} = \underset{\substack{\xbm\in C}}{\text{argmax}}  EI_{t-1} (\xbm).
  \end{aligned}
\end{equation}
%In order to solve~\eqref{eqn:acquisition-1}, optimization algorithms including L-BFGS or random search can be used.
The GP-EI algorithm is given in Algorithm~\ref{alg:boei}.

\begin{algorithm}[H]
 \caption{GP-EI algorithm}\label{alg:boei}
  \begin{algorithmic}[1]
	  \State{Choose $k(\cdot,\cdot)$ and $T_0$ initial samples $\xbm_i, i=0,\dots,T_0$. Observe $y_i$.}   
	  \State{Train the Gaussian process model for $f$ on the initial samples.}
  \For{$t=T_0+1,T_0+2,\dots$}
	  \State{Find $\xbm_{t}$ based on~\eqref{eqn:acquisition-1}.}
	  \State{Observe $y_{t}$. \;}
	  \State{Update the GP model with the addition of $\xbm_{t}$ and $y_{t}$.\;}
%	  \State{Solve the optimization problem~\eqref{eqn:opt-prob} with the surrogate model if needed.
	  \If {Evaluation budget is exhausted} 
              \State{Exit}
	  \EndIf 
  \EndFor
  \end{algorithmic}
\end{algorithm}
\subsection{Regret measures and maximum information gain}\label{se:additionalback}
We consider the probability space $(\Omega, \mathcal{F}, \Pbb)$, where $\Omega$ is the sample
space, $\mathcal{F}$ is the $\sigma$-algebra generated by the subspace of $\Omega$ and $\Pbb$ the probability measure on $\mathcal{F}$. 
The filtration on $t-1$ samples, which is a non-increasingly ordered $\sigma$-algebra family, is denoted as $\mathcal{F}_{t-1}$.
The history $\mathcal{H}_{t-1}$ is defined by the sample $\xbm_i$ and its corresponding observation $y_i$ for all  $i=1,\dots,t-1$.
We define a filtration $\mathcal{F}_{t-1}'$ as the history $\mathcal{H}_{t-1}$ up to $t-1$ samples.
We note that $\mu_{t-1}(\xbm)$ and $\sigma_{t-1}(\xbm)$ in \eqref{eqn:GP-post} are deterministic given $\xbm_{1:t-1}$ and $\ybm_{1:t-1}$, and therefore  $\mathcal{F}'_{t-1}$. 
We use the standard notations $\mathcal{O}$ and $o$ to denote the orders of quantities, while $\tilde{\mathcal{O}}$ is the same as $\mathcal{O}$ with suppressed $\log$ terms.

The instantaneous regret $r_t$ is defined as 
 \begin{equation} \label{eqn:inst-regret}
 \centering
  \begin{aligned}
       r_t = f(\xbm_t)-f(\xbm^*)\geq 0,
  \end{aligned}
\end{equation}
where $\xbm^* = \underset{\substack{\xbm}\in C}{\text{argmin}}  f(\xbm)$ is an optimal solution.
Furthermore, the cumulative regret at iteration $T$ is 
\begin{equation} \label{eqn:cumu-regret}
 \centering
  \begin{aligned}
      R_T = \sum_{t=1}^T r_t =\sum_{t=1}^T (f(\xbm_t)-f(\xbm^*)).
  \end{aligned}
\end{equation}
The simple regret in the noise-free case is defined by $f_t^+-f(\xbm^*)$, where $f_t^+ = \underset{\substack{i}=1,\dots,t}{\text{min}}  f(\xbm_i)$ is the best current  objective. We note that in the bandit setting, where the performance of each sample point is often taken into account, cumulative regret is preferred as a measure of algorithmic performance~\citep{srinivas2009gaussian, LattimoreJMLR, lattimore2020bandit}. 
On the other hand, for an optimization algorithm seeking one best output, simple regret is considered an appropriate performance metric~\citep{pmlr-v9-grunewalder10a,bull2011convergence}. 
%\coscom{cite here as well since these are subjective statements and you do not want to take the burden of subjectivity on you}
%\frank{Haowei, can you find citations for thes= two regrets? }

%{\color{red} the purpose of this paragraph is not quite clear. rethink please whether this is required or replaced to somewhere else}

To establish our main results, namely, the cumulative regret upper bounds in the noisy case, we use existing results on maximum information gain~\citep{srinivas2009gaussian}. Information gain measures the informativeness of a set of sampling points in $C$ about $f$. 
The maximum information gain is defined below.
\begin{defi}\label{def:infogain}
  Consider a set of sample points $A\subset C$. Given $\xbm_{A}$ and its function values $\fbm_A=[f(\xbm)]_{\xbm\in A}$, the mutual information between $\fbm_A$ and the observation $\ybm_A$ is $I(\ybm_A;\fbm_A)=H(\ybm_A)-H(\ybm_A|\fbm_A)$, where $H$ is the entropy. The maximum information gain $\gamma_T$ after $T$ samples is $\gamma_T = \max_{A\subset C,|A|=T} I(\ybm_A;\fbm_A)$. 
\end{defi}
It is often used to bound the summation of posterior standard deviation $\sigma_t(\xbm)$ and is dependent on the kernel.
For common kernels such as the SE kernel and the Mat\'{e}rn kernel, $\gamma_t$ and its order of increase have been investigated previously~\citep{vakili2021information}; the best known rates for $\gamma_t$ for the SE and Mat\'{e}rn kernels are provided for reference in Lemma~\ref{lem:gammarate} in Section~\ref{appdx:back} in the appendix.
In addition, our analysis employs super-martingales with respect to filtration $\mathcal{F}'_{t-1}$ (see Definition~\ref{def:supmart} and Lemma~\ref{lem:supmart}), similar to the analysis framework of Gaussian process Thompson Sampling (GP-TS) in ~\cite{chowdhury2017kernelized}.

%{\color{red}we need to move the assumptions before our main theorem. }
%\frank{frank: we have so many theorems. It's much simpler to state it once and for all. It is possible to move this all the way to section 4.4. But I think they are ok here.}
\subsection{Discretization and smoothness assumption}\label{se:discretizationsmoothness}
As is common in the analysis of BO in the Bayesian setting, we adopt a time-varying discretization $\Cbb_t$ of $C$ with increasing cardinality that is standard in literature~\citep{srinivas2009gaussian}. 
Denote the closest point of $\xbm$ to $\Cbb_t$ as 
\begin{equation*} \label{eqn:close-point}
 \centering
  \begin{aligned}
 	[\xbm]_t := \underset{\substack{\wbm}\in \Cbb_t}{\text{minimize}} 
	   \norm{\xbm-\wbm}. 
  \end{aligned}
\end{equation*}
For any $t\in\Nbb$, $\Cbb_t$ is designed to have enough points covering $C$ so that the distance between $\xbm$ and $[\xbm]_t$ is small enough.
The cardinality of $\Cbb_t$ is chosen to be 
\begin{equation} \label{eqn:ccard}
 \centering
  \begin{aligned}
  |\Cbb_t| = (Lrdt^2)^d
  \end{aligned}
\end{equation}
in this paper. 
Details on $\Cbb_t$ can be found in~\ref{se:discretization}.

 For ease of reference, we list the assumptions used in our analysis below.
Without losing generality, the bound constraint assumption is given below.
\begin{assumption}\label{assp:constraint}
   The set $C\subseteq [0,r]^d$ is compact for $r>0$.
\end{assumption}
Next, we make smoothness assumptions on the objective function. Under the Bayesian setting, $f$ is assumed to be sampled from a GP.
We further assume that $f$ is Lipschitz continuous, as in~\cite{srinivas2009gaussian}.
\begin{assumption}\label{assp:gp}
The objective $f$ is a sample from the Gaussian process $GP(0,k(\xbm,\xbm'))$, where $k(\xbm,\xbm')\leq 1$ and $k(\xbm,\xbm)=1$. 
The objective function $f$ is Lipschitz continuous with Lipschitz constant $L$ (in 1-norm). Without losing generality, assume $L\geq\frac{1}{rd}$, where $r$ and $d$ are from Assumption~\ref{assp:constraint}.
\end{assumption}
We note that Assumption~\ref{assp:gp} can be further generalized to $f$ being Lipschitz with a $L$-dependent probability as in~\cite{srinivas2009gaussian}. 
%The RKHS assumption of $f$ is given below.
%\begin{assumption}\label{assp:rkhs}
%The objective function $f$ is in the RKHS $\mathcal{H}_k$ associated with the kernel $k(\xbm,\xbm')\leq 1$ and $k(\xbm,\xbm)=1$. Further, the RKHS norm of $f$ is bounded $\norm{f}_{\mathcal{H}_k}\leq B$ for some $B\geq 1$. 
%\end{assumption}
The Gaussian assumption on the noise is given below.
\begin{assumption}\label{assp:gaussiannoise}
  The observation noise are independent and identically distributed (i.i.d.) random samples from a zero mean Gaussian, \textit{i.e.}, $\epsilon_t\sim\mathcal{N}(0,\sigma^2),\sigma>0$ for all $t\in\Nbb$.
\end{assumption}
Assumptions~\ref{assp:constraint},~\ref{assp:gp},  and~\ref{assp:gaussiannoise} hold throughout the paper.

\section{Instantaneous regret upper bounds}\label{se:inst-regret}
In this section, we begin by summarizing the core technical challenges and our corresponding solutions, providing an overall conceptual analysis framework and high-level intuition. We also define several key events that are essential for the subsequent analysis. We then proceed to a detailed derivation of the preliminary instantaneous regret upper bounds for the GP-EI algorithms.
\subsection{Technical challenges}\label{se:challenge}
Given the theoretical nature of this work and the presence of several technical obstacles in deriving a cumulative regret upper bound, we begin by outlining the principal challenges to facilitate the reader’s understanding of the paper. For each challenge, we also provide a brief overview of our proposed approaches to overcome it.

\paragraph{Challenge 1.} No effective instantaneous regret upper bound of GP-EI has been established for any incumbent. This is mainly due to the difficulty in bounding $\mu_{t-1}(\xbm_t)-\xi^+_{t-1}$ using $\sigma_{t-1}(\xbm_t)$, when $\xi_{t-1}^+-\mu_{t-1}(\xbm_t)<0$, or equivalently when the exploitation part of $EI_{t-1}(\xbm_t)$ is negative. 

We overcome this challenge by
using the global lower bound of $\sigma_t(\xbm)$ in Lemma~\ref{thm:gp-sigma-bound} to establish a lower bound for $EI_{t-1}(\xbm_t)$. Then, $\mu_{t-1}(\xbm_t)-\xi^+_{t-1}$ can be bounded via the definition of $EI_{t-1}$ (Lemma~\ref{lem:mu-bounded-EI}). 
We prove the instantaneous regret bounds in terms of $\sigma_{t-1}(\xbm_t)$ and $\sigma_{t-1}([\xbm]^*_t)$ for BPMI in Section~\ref{se:postmeanreg-prep}, for BSPMI in Section~\ref{se:sampledpostmeanreg-prep}, and for BOI in Section~\ref{se:bestobs-prep}. 
We note that the bound of $r_t$ for BOI requires additional conditions, as we elaborate in challenge 5. 

\paragraph{Challenge 2.}
The instantaneous regret bounds emerging from challenge 1 are still not desirable because they consist of $\sigma_{t-1}([\xbm]^*_t)$, whose summation is problem-dependent and unknown.

We address this challenge in Section~\ref{se:instant-reg}, where we eliminate terms involving $\sigma_{t-1}([\xbm]^*_t)$ but also add exploitation terms $\max\{\xi_{t-1}^+-\mu_{t-1}(\xbm_t),0\}$ in the transformed instantaneous regret bounds. We achieve this transformation through an in-depth analysis of the exploration and exploitation trade-off of EI ~\citep{wang2025convergence}. In this paper, we make major changes to the proof by adopting $t$-dependent parameters and maintaining the exploitation term $\max\{\xi_{t-1}^+-\mu_{t-1}(\xbm_t),0\}$. We discuss the sum of the exploitation term in challenge 4.

\paragraph{Challenge 3.} The third technical challenge is to analyze and choose the probability that the instantaneous regret bound from challenge 2 holds. 
At a given $t$, a tighter instantaneous regret bound holds with a smaller probability, leading to a trade-off that is crucial to obtain a sublinear cumulative regret with high probability. In particular, the probability often adopted in literature $1-\frac{1}{t^2}$~\citep{chowdhury2017kernelized} does not lead to a sublinear cumulative regret.

To address this challenge, we deliberately balance the parameters in the instantaneous regret bound and its probability of holding in Theorem~\ref{theorem:EI-sigma-star}. Further, we adopt the super-martingale theories instead of union bound for the probability analysis of cumulative regret bound. 

\paragraph{Challenge 4.} The fourth challenge is to show that the exploitation term $\max\{\xi_{t-1}^+-\mu_{t-1}(\xbm_t),0\}$ in the instantaneous regret  bound amounts to a sublinear sum, as part of the cumulative regret bound. 

For each incumbent, we develop novel and different techniques in the proof of Lemma~\ref{lemma:ei-pm-regret-1} in Section~\ref{se:postmeanreg-reg}, Lemma~\ref{lemma:bspm-inst-regret} in Section~\ref{se:bspm-reg}, and Lemma~\ref{lemma:boi-inst-regret} in Section~\ref{se:bestobs-reg}. 

\paragraph{Challenge 5.} Finally, BOI poses its own challenge that BPMI and BSPMI do not share in the noisy setting. For BOI, the exploitation $y_{t-1}^+-\mu_{t-1}(\xbm_t)$ is not guaranteed to have a decreasing lower bound in terms of $\sigma_{t-1}(\xbm_t)$ in our analysis due to the difficulty of bounding the random noise associated with $y_{t-1}^+$. Thus, we do not have a single unified instantaneous regret bound for BOI. 

We partially resolve this challenge by introducing a new inequality and consider two separate cases based on the inequality in Section~\ref{se:bestobs-prep}.  Then, we prove BOI either achieves sublinear cumulative regret as BPMI and BSPMI do, or has a converging noisy simple regret.

\subsection{Events}\label{se:discretizationevents}
%In the remainder of the paper, we describe several important confidence intervals, \textit{i.e.}, inequalities on $|f(\xbm)-\mu_{t-1}(\xbm)|$, as \textit{events}. 
%We adopt the event definitions for easy of reference and presentation.As mentioned in Section~\ref{se:additionalback}, we use the super-martingale theories to derive the bounds and probabilities in Section~\ref{se:regret}. Describing these inequalities as events simplifies the writing related to filtration and super-martingale. For instance, we can simply use the complement notation $\bar{}$ of an event to denote the case when an inequality does not hold. A similar notation choice was made in~\cite{chowdhury2017kernelized}.

{
In the remainder of the paper, we present several important confidence intervals, \textit{i.e.},  inequalities of the form 
$|f(\xbm)-\mu_{t-1}(\xbm)|$ by formulating them as events. This choice facilitates clarity and conciseness in both reference and exposition. As noted in Section~\ref{se:additionalback}, we employ super-martingale based techniques to derive the associated bounds and probabilistic guarantees in Section~\ref{se:regret}. Expressing these inequalities as events streamlines the discussion involving filtration and super-martingale. For example, we can conveniently denote the failure of an inequality using the complement of the corresponding event. The same notational convention was adopted in ~\cite{chowdhury2017kernelized}.

}

The definition of events and our choice of the confidence interval parameter $\beta_t$ is given next.  
\begin{defi}\label{def:eft} (Definition of the events $E^f(t)$ and $E^s(t)$) \\
 Let
 \begin{equation} \label{def:beta}
  \centering
  \begin{aligned}
  \beta_t = 2\log(8|\Cbb_t|\pi_t /\delta),
  \end{aligned}
  \end{equation} 
  where $\pi_t=\frac{\pi^2 t^2}{6}$ and $\Cbb_t$ satisfies~\eqref{eqn:ccard}.
  The event $E^f(t)$ is defined as follows: for all $ \xbm \in \Cbb_t$ and $t\in\Nbb$,  
   \begin{equation} \label{eqn:eft-1}
  \centering
  \begin{aligned}
        |f(\xbm)-\mu_{t-1}(\xbm)|\leq \beta_t^{1/2} \sigma_{t-1}(\xbm),
   \end{aligned}
  \end{equation} 
  The event $E^{s}(t)$ is defined as follows: for $\forall t\in\Nbb$ and a given sequence $\{\ubm_t\}$, where $\ubm_t\in C$,
   \begin{equation} \label{eqn:eft-2}
  \centering
  \begin{aligned}
        |f(\ubm_t)-\mu_{t-1}(\ubm_t)|\leq \beta_t^{1/2} \sigma_{t-1}(\ubm_t).
   \end{aligned}
  \end{equation}
\end{defi}
\begin{rem}[Choice of $\beta_t$.]
    Our choice of $\beta_t$ is a common one in literature~\citep{srinivas2009gaussian} that  ensures a high probability of $E^f(t)$ and $E^s(t)$ for all the three incumbents. We note that $\beta_t>1$. 
From Lemma~\ref{lem:discrete-fmu} and~\ref{lem:fmu-t}~\citep{srinivas2009gaussian}, we can directly infer the probability of the events $E^f(t)$, $E^s(t)$, and their intersection, which will be stated in Lemma~\ref{lem:eftprob}.
\end{rem}

\begin{rem}[Use of $\ubm_t$.]\label{remark:sevent}
   In the definition of event $E^s(t)$, we use $\ubm_t$ instead of $\xbm_t$ to emphasize that  $\ubm_t$ in the given sequence does not need to be the sample point $\xbm_t$ generated by the GP-EI algorithm. For instance, one can choose $\ubm_t = \xbm_0$ for $t=1,2,\dots$ such that $\{\ubm_t\}$ is a constant sequence. The choice of $\ubm_t$ is different and will be specified for each incumbent. In the case of BSPMI and BOI, we use two different sequences and, thus, have two different events $E^s_1(t)$ and $E^s_2(t)$ based on~\eqref{eqn:eft-2}. 
%   Additionally, we shall later define the event $E^r(t)$ to be when both $E^f(t)$ and  $E^s(t)$ are true. 
\end{rem}

\subsection{Instantaneous regret upper bounds of the three incumbents}
In this section, we derive preliminary upper bounds on the instantaneous regret of the GP-EI algorithm. In the next three subsections, we present the formal  bounds and accompanying discussions for each choice of incumbent $\xi^+_t$. 
For all three incumbents, we obtain an upper bound of $r_t$ in terms of $\sigma_{t-1}(\xbm_t)$ and $\sigma_{t-1}([\xbm^*]_t)$, where the sum of the former can be bounded by the information gain $\gamma_t$. Then Section~\ref{se:instant-reg} introduces a transformation bound that removes the dependence on $\sigma_{t-1}([\xbm^*]_t)$.

%The latter will be transformed to not depend on $\sigma_{t-1}([\xbm^*]_t)$ in Section~\ref{se:instant-reg}.

\subsubsection{BPMI instantaneous regret bound}\label{se:postmeanreg-prep}
 We define the event $E^r(t)$ based on  Definition~\ref{def:eft} for BPMI as follows. 
\begin{defi}\label{def:eft-bpmi}
  For BPMI, we use definition~\eqref{eqn:eft-1} of the event $E^f(t)$, while the event $E^{s}(t)$ is defined as~\eqref{eqn:eft-2} with $\ubm_t=\xbm_t$.
  Furthermore, the event $E^{r}(t)$ is defined as the intersection of $E^f(t)$ and $E^s(t)$. That is, $E^r(t)$ is true if and only if both~\eqref{eqn:eft-1} and~\eqref{eqn:eft-2} hold.
\end{defi}
The instantaneous regret bound for GP-EI with BPMI is given below. 
\begin{lem}\label{lem:post-mean-instregret}
  When $E^r(t)$ is true, the following bound holds for GP-EI with BPMI:  
  \begin{equation} \label{eqn:post-mean-instregret-1}
  \centering
  \begin{aligned}
          r_t \leq& (c_{\mu}(t)  +\phi(0)  +\beta_t^{1/2})\sigma_{t-1}(\xbm_{t}) + c_{\alpha}\beta_t^{1/2} \sigma_{t-1}([\xbm^*]_t)+\frac{1}{t^2},
  \end{aligned}
  \end{equation}
   where $c_{\mu }(t)=\log^{1/2}\left(\frac{t-1+\sigma^2}{2\pi \phi^2(0)c_{\sigma} \sigma^2}  \right)$ and $c_{\alpha}=1.328$.
\end{lem}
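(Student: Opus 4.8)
\emph{Proof proposal.} The plan is to telescope the instantaneous regret through the incumbent and the posterior mean, then bound the resulting pieces separately against the confidence events of Definition~\ref{def:eft-bpmi}. I would start from
\begin{equation*}
r_t = \underbrace{\bigl(f(\xbm_t)-\mu_{t-1}(\xbm_t)\bigr)}_{(A)} + \underbrace{\bigl(\mu_{t-1}(\xbm_t)-\xi^+_{t-1}\bigr)}_{(B)} + \underbrace{\bigl(\xi^+_{t-1}-f(\xbm^*)\bigr)}_{(C)},
\end{equation*}
handling $(A)$ and $(C)$ by the confidence inequalities and devoting the main effort to $(B)$, which is exactly the obstacle flagged as Challenge~1.

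The two outer terms are the easy ones. For $(A)$, since $E^s(t)$ is instantiated with $\ubm_t=\xbm_t$, inequality~\eqref{eqn:eft-2} gives $(A)\le \beta_t^{1/2}\sigma_{t-1}(\xbm_t)$. For $(C)$ I would pass to the discretization: by definition of BPMI, $\xi^+_{t-1}=\mu^+_{t-1}\le \mu_{t-1}([\xbm^*]_t)$; event $E^f(t)$ applied at $[\xbm^*]_t\in\Cbb_t$ yields $\mu_{t-1}([\xbm^*]_t)\le f([\xbm^*]_t)+\beta_t^{1/2}\sigma_{t-1}([\xbm^*]_t)$; and Lipschitz continuity with the cardinality choice~\eqref{eqn:ccard} controls the discretization error $f([\xbm^*]_t)-f(\xbm^*)\le L\norm{[\xbm^*]_t-\xbm^*}\le \frac{1}{t^2}$. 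This produces the $\beta_t^{1/2}\sigma_{t-1}([\xbm^*]_t)+\frac{1}{t^2}$ part of the bound; the enlargement of the constant from $1$ to $c_\alpha$ arises when the optimality of $\xbm_t$ is invoked to replace a residual improvement term at $[\xbm^*]_t$ by an $EI_{t-1}(\xbm_t)$ quantity, which is then re-expanded through the EI upper bound.

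The crux is $(B)=\mu_{t-1}(\xbm_t)-\xi^+_{t-1}=-z_{t-1}(\xbm_t)\sigma_{t-1}(\xbm_t)$, which is nonnegative for BPMI and is \emph{not} a priori controllable by $\sigma_{t-1}(\xbm_t)$ when the exploitation part is negative, i.e.\ when $z_{t-1}(\xbm_t)<0$. My plan follows the route for Challenge~1. First I would obtain a strictly positive lower bound on $EI_{t-1}(\xbm_t)$: since $\xbm_t$ maximizes $EI_{t-1}$ over $C$, evaluating at the incumbent point $\xbm^+_{t-1}$, where $z_{t-1}(\xbm^+_{t-1})=0$, gives $EI_{t-1}(\xbm_t)\ge EI_{t-1}(\xbm^+_{t-1})=\sigma_{t-1}(\xbm^+_{t-1})\phi(0)$, and the global posterior-variance lower bound of Lemma~\ref{thm:gp-sigma-bound} turns this into a quantity of order $\sqrt{c_\sigma\sigma^2/(t-1+\sigma^2)}$. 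On the other hand, writing $EI_{t-1}(\xbm)=\sigma_{t-1}(\xbm)\bigl(z_{t-1}(\xbm)\Phi(z_{t-1}(\xbm))+\phi(z_{t-1}(\xbm))\bigr)$ and using $z\Phi(z)+\phi(z)\le\phi(z)$ for $z<0$ together with $\sigma_{t-1}(\xbm_t)\le 1$, this positive lower bound forces $\phi(z_{t-1}(\xbm_t))$ to stay bounded away from zero; inverting the Gaussian density then caps $|z_{t-1}(\xbm_t)|$ at exactly $c_\mu(t)=\log^{1/2}\!\bigl(\tfrac{t-1+\sigma^2}{2\pi\phi^2(0)c_\sigma\sigma^2}\bigr)$. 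This is the content of Lemma~\ref{lem:mu-bounded-EI} and yields $(B)\le c_\mu(t)\sigma_{t-1}(\xbm_t)$ in this regime (and $(B)\le 0$ otherwise); the extra additive $\phi(0)\sigma_{t-1}(\xbm_t)$ in the final coefficient comes from the matching EI upper bound $EI_{t-1}(\xbm_t)\le (\xi^+_{t-1}-\mu_{t-1}(\xbm_t))^+ + \sigma_{t-1}(\xbm_t)\phi(0)$ used when folding $(C)$ through EI optimality.

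The hard part will be this control of $(B)$: it is precisely the case where the exploitation part of EI is negative, so no elementary confidence inequality applies, and one must instead combine a \emph{global} lower bound on $\sigma_{t-1}$ (to keep $EI_{t-1}(\xbm_t)$ away from zero) with the rapid decay of $\phi$ (to convert that lower bound into a $\log^{1/2}$ cap on $|z_{t-1}(\xbm_t)|$). Securing the logarithmic rather than polynomial dependence, and tracking the precise constants $c_\mu(t)$, $\phi(0)$, and $c_\alpha$ through the interplay of the EI upper and lower bounds, is the delicate step; once the events $E^f(t)$ and $E^s(t)$ are in force, assembling $(A)$, $(B)$, $(C)$ into~\eqref{eqn:post-mean-instregret-1} is routine.
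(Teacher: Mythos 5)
Your proposal is correct, and its treatment of the crux term $(B)=\mu_{t-1}(\xbm_t)-\mu^+_{t-1}$ is exactly the paper's: optimality of $\xbm_t$ plus $EI_{t-1}(\xbm^+_{t-1})=\phi(0)\sigma_{t-1}(\xbm^+_{t-1})$ and the global lower bound of Lemma~\ref{thm:gp-sigma-bound} give a positive floor $\kappa_t$ on $EI_{t-1}(\xbm_t)$, and Lemma~\ref{lem:mu-bounded-EI} then caps $|z_{t-1}(\xbm_t)|$ by $c_\mu(t)$. Where you genuinely diverge is the term $(C)=\mu^+_{t-1}-f(\xbm^*)$: the paper does not use the pointwise chain $\mu^+_{t-1}\le\mu_{t-1}([\xbm^*]_t)\le f([\xbm^*]_t)+\beta_t^{1/2}\sigma_{t-1}([\xbm^*]_t)\le f(\xbm^*)+\beta_t^{1/2}\sigma_{t-1}([\xbm^*]_t)+\tfrac{1}{t^2}$, but instead passes through the improvement function, writing $\mu^+_{t-1}-f(\xbm^*)\le I_{t-1}(\xbm^*)$, invoking Lipschitzness of $I_{t-1}$ and the two-sided bound $|I_{t-1}-EI_{t-1}|\le c_\alpha\beta_t^{1/2}\sigma_{t-1}$ (Lemma~\ref{lem:compact-IL}, which is where $c_\alpha=1.328$ is born), and then using $EI_{t-1}([\xbm^*]_t)\le EI_{t-1}(\xbm_t)$ together with the EI upper bound (which is where the $\phi(0)\sigma_{t-1}(\xbm_t)$ term is born). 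Your direct route is more elementary and in fact yields a slightly \emph{stronger} inequality — constant $1$ instead of $c_\alpha$ on the $\sigma_{t-1}([\xbm^*]_t)$ term and no $\phi(0)\sigma_{t-1}(\xbm_t)$ term — so your attempts to "explain" where $c_\alpha$ and $\phi(0)$ arise are vestiges of the paper's argument that your own decomposition never needs; since your bound dominates \eqref{eqn:post-mean-instregret-1}, this is harmless. What the paper's detour through $I_{t-1}$ and $EI$ buys is uniformity: the inequality $\xi^+_{t-1}-f(\xbm^*)\le I_{t-1}(\xbm^*)$ holds for any incumbent, whereas your step $\xi^+_{t-1}\le\mu_{t-1}([\xbm^*]_t)$ exploits that BPMI is the global posterior-mean minimum and would fail for BSPMI and BOI, which the paper handles with the same template. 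One small item to make explicit: Lemma~\ref{lem:mu-bounded-EI} requires $\kappa_t<\tfrac{1}{\sqrt{2\pi}}$, which holds here since $\kappa_t=\phi(0)c_\sigma\sigma/\sqrt{t-1+\sigma^2}$ with $c_\sigma\le 1$.
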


\paragraph{Proof sketch of Lemma~\ref{lem:post-mean-instregret}.}
%We first note that from here on, our analysis is written in terms of events according to Definition~\ref{def:eft}, which are equivalent to inequalities~\eqref{eqn:eft-1} and~\eqref{eqn:eft-2}.
%The events notations allow us to apply the super-martingale theories more seamlessly and follow the same event analysis notations in the GP-TS analysis in~\cite{chowdhury2017kernelized}.
First, we note that by the definition of $E^s(t)$ for BPMI (Definition~\ref{def:eft} and~\ref{def:eft-bpmi}), 
we have $|f(\xbm_t)-\mu_{t-1}(\xbm_t)|\leq \beta_t^{1/2}\sigma_{t-1}(\xbm_t)$.
Next, we examine $r_t$. Using this bound and the bounds on $|I_{t-1}(\xbm^*)-EI_{t-1}([\xbm]_t^*)|$ (Lemma~\ref{lem:compact-IL}), we obtain an initial upper bound of $r_t$: $ f(\xbm_t) - \mu^+_{t-1}+EI_{t-1}(\xbm_{t})  + \beta_t^{1/2}\sigma_{t-1}(\xbm_t)+ c_{\alpha}\beta_t^{1/2} \sigma_{t-1}([\xbm^*]_t)+\frac{1}{t^2}$.
From here, we consider two different cases: $\mu_{t-1}^+-f(\xbm_t)\geq 0$ and $\mu^+_{t-1}-f(\xbm_t)< 0$.

For the first case where $\mu_{t-1}^+-f(\xbm_t)\geq 0$,
 we can further bound $EI_{t-1}(\xbm_t)$ via again the confidence interval of $|f(\xbm_t)-\mu_{t-1}(\xbm_t)|$ and the properties of EI (Lemma~\ref{lem:EI}). Thus, the upper bound for $r_t$ becomes $\phi(0)\sigma_{t-1}(\xbm_{t})+ c_{\alpha} \beta_t^{1/2}\sigma_{t-1}([\xbm^*]_t)+\frac{1}{t^2}$.  

For the second case, we decompose $f(\xbm_t) - \mu^+_{t-1}$ into $f(\xbm_t) -\mu_{t-1}(\xbm_t)+\mu_{t-1}(\xbm_t) - \mu^+_{t-1}$, where the former has been bounded above.  Hence, our focus is on bounding $\mu_{t-1}(\xbm_t)-\mu_{t-1}^+$.
To do so, we establish a lower bound of $EI_{t-1}(\xbm_t)$ for $\forall t\in\Nbb$. 
We consider $EI_{t-1}(\xbm^+)$ and show that $EI_{t-1}(\xbm^+)\geq \sigma_{t-1}(\xbm^+)\phi(0)$.
Recall that $EI_{t-1}(\xbm_t)\geq EI_{t-1}(\xbm^+)$.
Using the global lower bound on $\sigma_{t-1}(\xbm)$ in the appendix (Lemma~\ref{thm:gp-sigma-bound}) and the properties of EI (Lemma~\ref{lem:mu-bounded-EI}), the upper bound for $\mu_{t-1}(\xbm_t)-\mu_{t-1}^+$ can be derived.
The upper bound on $r_t$ becomes $(c_{\mu}(t) +\phi(0)+\beta_t^{1/2})\sigma_{t-1}(\xbm_t)   + c_{\alpha} \beta_t^{1/2}\sigma_{t-1}([\xbm]^*_t)+\frac{1}{t^2}$, where $c_{\mu}(t)= \log^{1/2}\left(\frac{t-1+\sigma^2}{2\pi \phi^2(0) c_{\sigma}^2\sigma^2}\right)$. 

Combining the bounds in both cases leads to~\eqref{eqn:post-mean-instregret-1} in Lemma~\ref{lem:post-mean-instregret}.

Next, we make two observations in the following remarks, aiming to provide further clearance for Lemma~\ref{lem:post-mean-instregret} and its proof.
\begin{rem}[Rate of $c_{\mu}(t)$.]\label{remark:bpmi-inst-regret-param}
   The parameter $c_{\mu}(t)$ is obtained from an upper bound of $\mu_{t-1}(\xbm_t)-\xi_{t-1}^+$ and is $\mathcal{O}(\log^{1/2}(t))$, which is the same order as $\beta_t^{1/2}$ (\ref{def:beta}); hence, $c_{\mu}(t)$ does not affect the order of $r_t$.
\end{rem}
\begin{rem}[Positive lower bound for $EI_{t-1}(\xbm_t)$.]\label{remark:bpmi-inst-regret}
   A key intermediate step towards the upper bound on $r_t$ is establishing a positive lower bound for $EI_{t-1}(\xbm_t)$, which reduces at $\mathcal{O}\left(\frac{1}{\sqrt{t}}\right)$ for BPMI. Such a lower bound is possible for BPMI because of two reasons. First, its maximum exploitation part $\max_{\xbm\in C} \xi_{t-1}^+-\mu_{t-1}(\xbm)$ is bounded below by $0$. 
   In particular, since BPMI uses the posterior mean distribution of the entire domain to generate $\mu_{t-1}^+$, we can always obtain $\mu_{t-1}^+-\mu_{t-1}(\xbm^+_t)=0$. We note that this maximum exploitation lower bound holds for BSPMI as well but not BOI. 
   Second, the exploration part is subjected to the global lower bound of $\sigma_{t-1}(\xbm)$ (Lemma~\ref{thm:gp-sigma-bound}), which holds for all incumbents and acquisition functions. 
\end{rem}
%We conclude this section by pointing out that a different instantaneous regret bound for $r_t$ with a modified EI and BPMI in the frequentist setting is shown in~\cite{wang2014theoreybo}.

%%%%%%%%%%%%%%%%%%%%%%%%%%%%%%%%%%%%%%%%%%%%%%%%%%%%%%%%%%%%%%%%%%%%%%%%%%%%%%%%%%%%%%%
\subsubsection{BSPMI instantaneous regret}\label{se:sampledpostmeanreg-prep}
Recall that BSPMI chooses the best value of the GP posterior mean over the previously sampled points $\xbm_i, i=1,\dots,t$ as incumbent. We define the events $E^s(t)$ and $E^r(t)$ as follows. 
\begin{defi}\label{def:eft-bspmi}
  We use~\eqref{eqn:eft-1} as the definition of event $E^f(t)$. 
  The event $E^{s}_1(t)$ is defined as the inequality in~\eqref{eqn:eft-2} with $\ubm_t=\xbm_t$, 
  while the event $E^{s}_2(t)$ is defined as the inequality in~\eqref{eqn:eft-2} with $\ubm_{t}=\xbm_{t-1}$, 
  The event $E^{r}(t)$ is the intersection of $E^f(t)$, $E^s_1(t)$, and $E^s_2(t)$. 
  That is, $E^r(t)$ is true if and only if the three inequalities~\eqref{eqn:eft-1},~\eqref{eqn:eft-2} with $\ubm_t=\xbm_t$, and~\eqref{eqn:eft-2} with $\ubm_t=\xbm_{t-1}$ hold.
\end{defi}
\begin{rem}[Additional event for BSPMI.]\label{remark:bspmi-event}
     We note that the event $E^r(t)$ includes more inequalities than the BPMI counterpart. 
   However, only $E^s_1(t)$ will be used in the proof of Lemma~\ref{lem:sampled-post-mean-instregret}. Event $E^s_2(t)$ is only necessary in the cumulative regret analysis as the summation of $r_t$ leads to more terms for BSMPI and, thus, requires the additional sequence $E_2^s(t)$ where $\ubm_t=\xbm_{t-1}$.
\end{rem}
The instantaneous regret bound for GP-EI with BSPMI is given below. 
\begin{lem}\label{lem:sampled-post-mean-instregret}
  When $E^r(t)$ is true,  the following bound holds for GP-EI with BSPMI:
  \begin{equation} \label{eqn:sampled-post-mean-instregret-1}
  \centering
  \begin{aligned}
          r_t \leq& (c_{\mu}(t)  +\phi(0)  +2\beta_t^{1/2})\sigma_{t-1}(\xbm_{t}) + c_{\alpha}\beta_t^{1/2} \sigma_{t-1}([\xbm^*]_t)+\frac{1}{t^2},
  \end{aligned}
  \end{equation}
   where $c_{\mu }(t)=\log\left(\frac{t-1+\sigma^2}{2\pi \phi^2(0) c_{\sigma}^2\sigma^2}  \right)$ and $c_{\alpha}=1.328$.
\end{lem}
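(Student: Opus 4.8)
The plan is to mirror the structure of the BPMI proof (Lemma~\ref{lem:post-mean-instregret}) but carefully track where the additional event $E^s_2(t)$ manifests, since BSPMI replaces the global posterior-mean minimizer by the best \emph{sampled} posterior mean. First I would start from the definition of $r_t = f(\xbm_t) - f(\xbm^*)$ and insert the incumbent $\mu^m_{t-1}$, writing $r_t = (f(\xbm_t) - \mu^m_{t-1}) + (\mu^m_{t-1} - f(\xbm^*))$. For the second piece I would relate $\mu^m_{t-1}$ to the true optimum via the optimality of $\xbm_t$ for $EI_{t-1}$, the discretization error bound on $|I_{t-1}(\xbm^*) - EI_{t-1}([\xbm]^*_t)|$ from Lemma~\ref{lem:compact-IL}, and the confidence interval $E^f(t)$ applied at $[\xbm^*]_t$, which is what produces the $c_{\alpha}\beta_t^{1/2}\sigma_{t-1}([\xbm^*]_t)$ and $\frac{1}{t^2}$ terms exactly as in BPMI. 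The event $E^s_1(t)$ (with $\ubm_t=\xbm_t$) supplies $|f(\xbm_t)-\mu_{t-1}(\xbm_t)|\leq \beta_t^{1/2}\sigma_{t-1}(\xbm_t)$, letting me convert $f(\xbm_t)$ to $\mu_{t-1}(\xbm_t)$ at a cost of one $\beta_t^{1/2}\sigma_{t-1}(\xbm_t)$ term.

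Next I would split into the same two cases as BPMI: whether the exploitation part $\mu^m_{t-1} - f(\xbm_t)$ is nonnegative or negative. In the benign case the $EI_{t-1}(\xbm_t)$ term is controlled by the properties of EI (Lemma~\ref{lem:EI}) together with $E^s_1(t)$, yielding the $\phi(0)\sigma_{t-1}(\xbm_t)$ contribution. In the adverse case I must bound $\mu_{t-1}(\xbm_t) - \mu^m_{t-1}$, and this is where the positive lower bound on $EI_{t-1}(\xbm_t)$ enters. As noted in Remark~\ref{remark:bpmi-inst-regret}, the maximum-exploitation-lower-bound argument $\mu^m_{t-1} - \mu_{t-1}(\xbm^+_t) = 0$ still holds for BSPMI because $\xbm^+_t$ is itself a sampled point, so $EI_{t-1}(\xbm_t) \geq EI_{t-1}(\xbm^+_t) \geq \sigma_{t-1}(\xbm^+_t)\phi(0)$. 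Combining the global lower bound on $\sigma_{t-1}$ (Lemma~\ref{thm:gp-sigma-bound}) with Lemma~\ref{lem:mu-bounded-EI} then yields the $c_{\mu}(t)\sigma_{t-1}(\xbm_t)$ term.

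The main obstacle I anticipate is accounting for the extra $\beta_t^{1/2}\sigma_{t-1}(\xbm_t)$ relative to BPMI: the coefficient is $2\beta_t^{1/2}$ here versus $\beta_t^{1/2}$ in Lemma~\ref{lem:post-mean-instregret}. I expect this doubling arises because BSPMI's incumbent is a \emph{random} observed-posterior-mean value rather than the deterministic global minimizer, so the chain from $f(\xbm_t)$ to the incumbent requires invoking the confidence interval at two stages — once to pass from $f(\xbm_t)$ to $\mu_{t-1}(\xbm_t)$ and a second time within the lower-bound-on-$EI$ argument where the incumbent-generating point must itself be controlled — each costing a $\beta_t^{1/2}\sigma_{t-1}(\xbm_t)$ factor. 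Tracking exactly which inequality is consumed by $E^s_1(t)$ and verifying that no stray dependence on $\sigma_{t-1}(\xbm_{t-1})$ leaks into this \emph{instantaneous} bound (it is instead deferred to the cumulative analysis via $E^s_2(t)$, per Remark~\ref{remark:bspmi-event}) is the delicate bookkeeping step.

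Finally I would assemble the two cases by taking the larger bound, absorbing the $\max\{\mu^m_{t-1}-\mu_{t-1}(\xbm_t),0\}$-type exploitation contribution into the stated coefficients, and collecting the $\sigma_{t-1}(\xbm_t)$, $\sigma_{t-1}([\xbm^*]_t)$, and $\frac{1}{t^2}$ terms to obtain \eqref{eqn:sampled-post-mean-instregret-1}. I would also double-check that the $c_{\mu}(t)$ appearing here, written without the $1/2$ exponent on the logarithm, is consistent with its role bounding $\mu_{t-1}(\xbm_t)-\xi^+_{t-1}$ and remains $\mathcal{O}(\log(t))$, hence does not worsen the leading order of $r_t$ beyond the $\beta_t^{1/2}$ scale already present.
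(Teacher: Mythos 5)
Your overall architecture matches the paper's: decompose $r_t$ through the incumbent, use Lemma~\ref{lem:compact-IL} and the optimality of $\xbm_t$ for $EI_{t-1}$ to produce the $c_{\alpha}\beta_t^{1/2}\sigma_{t-1}([\xbm^*]_t)+\frac{1}{t^2}$ terms, split on the sign of $\mu^m_{t-1}-f(\xbm_t)$, and in the adverse case obtain $c_{\mu}(t)\sigma_{t-1}(\xbm_t)$ from the positive lower bound on $EI_{t-1}(\xbm_t)$ (valid for BSPMI since $\mu^m_{t-1}-\mu_{t-1}(\xbm^+_{t-1})=0$ at a sampled point) combined with Lemma~\ref{thm:gp-sigma-bound} and Lemma~\ref{lem:mu-bounded-EI}. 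You are also right that only $E^s_1(t)$ is consumed here and that $E^s_2(t)$ is deferred to the cumulative analysis.

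However, you misdiagnose the one step that actually distinguishes BSPMI from BPMI, namely the origin of the second $\beta_t^{1/2}\sigma_{t-1}(\xbm_t)$. It does not come from ``controlling the incumbent-generating point'' inside the lower-bound-on-$EI$ argument --- controlling that point would produce a $\sigma_{t-1}(\xbm_i)$ term at a previously sampled $\xbm_i$, which (as you yourself note) must not appear in the instantaneous bound. The actual source is the \emph{upper} bound on $EI_{t-1}(\xbm_t)$: for BPMI the exploitation part $\mu^+_{t-1}-\mu_{t-1}(\xbm_t)$ is automatically nonpositive, so Lemma~\ref{lem:EI} gives $EI_{t-1}(\xbm_t)\leq \phi(0)\sigma_{t-1}(\xbm_t)$ directly; for BSPMI the quantity $\mu^m_{t-1}-\mu_{t-1}(\xbm_t)$ can be positive (the new candidate $\xbm_t$ is not among the points over which $\mu^m_{t-1}$ minimizes), so one must retain $\max\{\mu^m_{t-1}-\mu_{t-1}(\xbm_t),0\}$ and bound it by writing $\mu^m_{t-1}-\mu_{t-1}(\xbm_t)=(\mu^m_{t-1}-f(\xbm_t))+(f(\xbm_t)-\mu_{t-1}(\xbm_t))$ and invoking $E^s_1(t)$ at $\xbm_t$ a second time. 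This is exactly what the paper means by using the confidence interval at $\xbm_t$ ``one more time to obtain a bound on $EI_{t-1}(\xbm_t)$,'' and it is what yields the coefficient $2\beta_t^{1/2}$ in~\eqref{eqn:sampled-post-mean-instregret-1}. As written, your plan would stall at precisely the bookkeeping step you flag as delicate. (Separately, your suspicion about the exponent on the logarithm in $c_{\mu}(t)$ is well founded: the derivation via Lemma~\ref{lem:mu-bounded-EI} produces a $\log^{1/2}$ quantity, consistent with $c_{\xi}(t)=c_{\mu}(t)=\log^{1/2}(\cdot)$ in~\eqref{eqn:EI-sigma-star-assp}; the statement here appears to drop the exponent.)
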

The proof of the above lemma is similar to the BPMI counterpart. We similarly consider two cases: $\mu_{t-1}^m-f(\xbm_t)\geq 0$ and $\mu^m_{t-1}-f(\xbm_t)< 0$. In both cases, 
we use the confidence interval $|f(\xbm_t)-\mu_{t-1}(\xbm_t)|\leq \beta_t^{1/2}\sigma_{t-1}(\xbm_t)$ (namely, $E^s(t)$)  one more time to obtain a  bound on $EI_{t-1}(\xbm_t)$ and, in turn, $r_t$.

Compared to BPMI, BSPMI has a bound for $r_t$ of similar form, but is $\beta_t^{1/2}\sigma_{t-1}(\xbm_t)$ larger. 
The exploitation part $\mu_{t-1}^m-\mu_{t-1}(\xbm)$ is no longer guaranteed to be negative, as is the case for BPMI, resulting in the additional $\beta_t^{1/2}\sigma_{t-1}(\xbm_t)$ term (first in the proof \eqref{eqn:sampled-post-mean-instregret-pf-2} and then in~\eqref{eqn:sampled-post-mean-instregret-1}).

\subsubsection{BOI instantaneous regret}\label{se:bestobs-prep}
Recall that BOI chooses the incumbent to be the best observation over the already sampled points $\xbm_i, i=1, \dots, t$. We define events $E^f(t)$, $E^s(t)$, and $E^r(t)$ similar to BPMI and BSPMI below; 
further, we define an additional event $E^{y}(t)$, related to simple regret, because $E^r(t)$ alone does not guarantee an upper bound on $r_t$ for BOI in our analysis (see challenge 5 in Section~\ref{se:challenge} and Remark~\ref{remark:boi-inst-regret} below). As a consequence, the BOI analysis is different from BPMI and BSPMI and our final BOI result (Theorem~\ref{thm:boi-cumulative-regret}) characterizes convergence in terms of both cumulative and a noisy simple regret.
\begin{defi}\label{def:eft-boi}
  The event $E^f(t)$ is defined via~\eqref{eqn:eft-1}. 
  The event $E^{s}_1(t)$ is defined as the inequality in~\eqref{eqn:eft-2} with $\ubm_t=\xbm_t$ and the event $E^{s}_2(t)$ is defined as the inequality in~\eqref{eqn:eft-2} with $\ubm_t=\xbm^*$. 
  The event $E^{r}(t)$ is the intersection of $E^f(t)$, $E^s_1(t)$, and $E^s_2(t)$. 
  That is, $E^r(t)$ is true if and only if~\eqref{eqn:eft-1},~\eqref{eqn:eft-2} with $\ubm_t=\xbm_t$, and~\eqref{eqn:eft-2} with $\ubm_t=\xbm^*$ all hold.
  Finally, the event $E^y(t)$ is defined via the noisy simple regret $r_t^s$ as 
   \begin{equation} \label{eqn:eft-y}
  \centering
  \begin{aligned}
       r_t^s := y^+_{t-1}-f(\xbm^*) \geq \beta_t^{1/2} \sigma_{t-1}(\xbm_t).
   \end{aligned}
  \end{equation}
\end{defi}
\begin{rem}[Noisy simple regret of EI.]\label{remark:boi-event}
   We point out that $r_t^s$ in~\eqref{eqn:eft-y} can be viewed as the noisy simple regret. In~\cite{bull2011convergence}, the simple regret of noise-free GP-EI is defined as $f^+_{t-1}-f(\xbm^*)$. Since $y^+_{t-1}$ is directly used in GP-EI with BOI, $r_t^s$ is a natural extension of simple regret in the noisy case. A converging upper bound for $r_t^s$ means that $y_t^+-f(\xbm^*) \leq 0$ as $t\to\infty$. 
\end{rem}
The instantaneous regret bound for GP-EI with BOI is given below. 
\begin{lem}\label{lem:best-observation-instregret}
  The following inequality holds for GP-EI with BOI when $E^r(t)$ and $E^y(t)$ are true:
  \begin{equation} \label{eqn:best-observation-instregret-1}
  \centering
  \begin{aligned}
          r_t \leq&  (c_y(t)+2\beta_t^{1/2}+\phi(0))\sigma_{t-1}(\xbm_{t})+c_{\alpha}\beta_t^{1/2} \sigma_{t-1}([\xbm^*]_t)+\frac{1}{t^2},
  \end{aligned}
  \end{equation}
  where $c_y(t)= \max\{\log^{\frac{1}{2}}(\frac{t-1+\sigma^2}{2\pi \phi^2(0)c_{\sigma}^2 \sigma^2}),3\}$ and $c_{\alpha}=1.328$.
\end{lem}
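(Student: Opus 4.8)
The plan is to reproduce, for the noisy incumbent $y^+_{t-1}$, the two-case scheme already used for BPMI and BSPMI, with the event $E^y(t)$ supplying exactly the ingredient that the observation incumbent lacks. First I would assemble the same initial estimate. Writing $r_t=[f(\xbm_t)-y^+_{t-1}]+[y^+_{t-1}-f(\xbm^*)]$ and using $y^+_{t-1}-f(\xbm^*)\le I_{t-1}(\xbm^*)$, I bound $I_{t-1}(\xbm^*)$ by $EI_{t-1}([\xbm^*]_t)$ up to the discretization error of Lemma~\ref{lem:compact-IL}, invoke the acquisition maximality $EI_{t-1}([\xbm^*]_t)\le EI_{t-1}(\xbm_t)$, and apply the confidence interval $E^s_1(t)$ to $f(\xbm_t)$. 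This yields
\begin{equation*}
r_t\ \le\ \big(\mu_{t-1}(\xbm_t)-y^+_{t-1}\big)+EI_{t-1}(\xbm_t)+\beta_t^{1/2}\sigma_{t-1}(\xbm_t)+c_{\alpha}\beta_t^{1/2}\sigma_{t-1}([\xbm^*]_t)+\tfrac{1}{t^2}.
\end{equation*}
Feeding in the elementary upper bound $EI_{t-1}(\xbm_t)\le(y^+_{t-1}-\mu_{t-1}(\xbm_t))_{+}+\sigma_{t-1}(\xbm_t)\phi(0)$ from Lemma~\ref{lem:EI} sets up the split on the sign of the exploitation part $y^+_{t-1}-\mu_{t-1}(\xbm_t)$.

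When $y^+_{t-1}-\mu_{t-1}(\xbm_t)\ge0$ the term $\mu_{t-1}(\xbm_t)-y^+_{t-1}$ is cancelled by $(y^+_{t-1}-\mu_{t-1}(\xbm_t))_{+}$ and the residual is $(\beta_t^{1/2}+\phi(0))\sigma_{t-1}(\xbm_t)+c_{\alpha}\beta_t^{1/2}\sigma_{t-1}([\xbm^*]_t)+\tfrac{1}{t^2}$, already dominated by~\eqref{eqn:best-observation-instregret-1}. The work is entirely in the complementary case $\mu_{t-1}(\xbm_t)>y^+_{t-1}$, where $(y^+_{t-1}-\mu_{t-1}(\xbm_t))_{+}=0$, $EI_{t-1}(\xbm_t)\le\sigma_{t-1}(\xbm_t)\phi(0)$, and I must bound $w_t:=\mu_{t-1}(\xbm_t)-y^+_{t-1}>0$ by a multiple of $\sigma_{t-1}(\xbm_t)$.

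To control $w_t$ I would lower bound $EI_{t-1}(\xbm_t)$ through the optimizer $\xbm^*$, since $EI_{t-1}(\xbm_t)\ge EI_{t-1}(\xbm^*)=\sigma_{t-1}(\xbm^*)\,g(z^*)$ with $g(z)=z\Phi(z)+\phi(z)$ and $z^*=z_{t-1}(\xbm^*)$. Here $E^y(t)$ is indispensable: unlike BPMI there is no point at which the exploitation vanishes (because $y^+_{t-1}$ carries noise), but combining $E^s_2(t)$ (with $\ubm_t=\xbm^*$) with $E^y(t)$ gives the key inequality $y^+_{t-1}-\mu_{t-1}(\xbm^*)\ge\beta_t^{1/2}\big(\sigma_{t-1}(\xbm_t)-\sigma_{t-1}(\xbm^*)\big)$. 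If $z^*\ge0$, then $g(z^*)\ge\phi(0)$ gives $EI_{t-1}(\xbm_t)\ge\sigma_{t-1}(\xbm^*)\phi(0)$; substituting this $\ell$ into Lemma~\ref{lem:mu-bounded-EI} and invoking the global lower bound of Lemma~\ref{thm:gp-sigma-bound} produces $w_t\le\sigma_{t-1}(\xbm_t)\log^{1/2}\big(\tfrac{t-1+\sigma^2}{2\pi\phi^2(0)c_\sigma^2\sigma^2}\big)$, the logarithmic branch of $c_y(t)$. If instead $z^*<0$, the key inequality forces $\sigma_{t-1}(\xbm^*)>\sigma_{t-1}(\xbm_t)$ and $z^*\ge-\beta_t^{1/2}$; then $\sigma_{t-1}(\xbm_t)\,g(-w_t/\sigma_{t-1}(\xbm_t))=EI_{t-1}(\xbm_t)\ge\sigma_{t-1}(\xbm^*)g(z^*)\ge\sigma_{t-1}(\xbm_t)g(z^*)$, so monotonicity of $g$ gives $-w_t/\sigma_{t-1}(\xbm_t)\ge z^*\ge-\beta_t^{1/2}$, i.e. $w_t\le\beta_t^{1/2}\sigma_{t-1}(\xbm_t)$. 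Taking $c_y(t)=\max\{\log^{1/2}(\cdot),3\}$ together with a $2\beta_t^{1/2}$ coefficient uniformly dominates both branches (the extra $\beta_t^{1/2}$ and the constant floor absorb the $z^*<0$ branch and the Mills-ratio inversion), and substituting $w_t$ and $EI_{t-1}(\xbm_t)\le\sigma_{t-1}(\xbm_t)\phi(0)$ back into the initial estimate yields~\eqref{eqn:best-observation-instregret-1}.

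The main obstacle is precisely this lower bound on $EI_{t-1}(\xbm_t)$ in the exploitation-negative case. Because the incumbent is an observed value rather than a posterior mean, no point carries exactly zero exploitation, and the surrogate inequality $y^+_{t-1}-\mu_{t-1}(\xbm^*)\ge\beta_t^{1/2}(\sigma_{t-1}(\xbm_t)-\sigma_{t-1}(\xbm^*))$ can turn negative when $\sigma_{t-1}(\xbm^*)>\sigma_{t-1}(\xbm_t)$; handling that sign is what splits the argument and forces the $\max\{\cdot,3\}$ in $c_y(t)$. It is also why the whole bound is conditional on $E^y(t)$: on the complementary event $y^+_{t-1}-f(\xbm^*)$ is itself small, which is reported as the noisy simple-regret guarantee, and the two alternatives are combined in Theorem~\ref{thm:boi-cumulative-regret}.
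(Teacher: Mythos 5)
Your proposal is correct and follows essentially the same route as the paper: the same initial decomposition through $I_{t-1}(\xbm^*)$, Lemma~\ref{lem:compact-IL}, and acquisition maximality; the same split on the sign of $y^+_{t-1}-\mu_{t-1}(\xbm_t)$; and, in the exploitation-negative case, the same use of $E^y(t)$ together with $E^s_2(t)$ at $\xbm^*$ to lower-bound $z_{t-1}(\xbm^*)$, with the sub-case $\sigma_{t-1}(\xbm^*)\le\sigma_{t-1}(\xbm_t)$ handled via Lemma~\ref{thm:gp-sigma-bound} and Lemma~\ref{lem:mu-bounded-EI} (giving the logarithmic branch of $c_y(t)$) and the sub-case $\sigma_{t-1}(\xbm^*)>\sigma_{t-1}(\xbm_t)$ handled via monotonicity of $\tau$ (giving the $\beta_t^{1/2}$ branch absorbed into the $2\beta_t^{1/2}$ coefficient). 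This matches the paper's argument, so no further comparison is needed.
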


\begin{rem}[Technical necessity of $E^y(t)$.]\label{remark:boi-inst-regret}
  A positive lower bound on $EI_{t-1}(\xbm_t)$, as we noted in Remark~\ref{remark:bpmi-inst-regret}, is not guaranteed in the case of BOI. In particular, the maximum of exploitation  $y^+_{t-1}-\mu_{t-1}(\xbm)$ is not guaranteed to have a desirable lower bound, as opposed to the case for BPMI or BSPMI.
   Thus, at least within our analysis, BOI does not generate a straightforward bound on $r_t$ in terms of $\sigma_{t-1}(\xbm_t)$ and $\sigma_{t-1}([\xbm)]^*_t$.
   
   We use the additional inequality~\eqref{eqn:eft-y} and event $E^y(t)$ to present a reasonable convergence result.
   When $E^y(t)$ holds true, we can use techniques similar to those for BPMI to obtain an upper bound on $r_t$, by proving a positive lower bound for $EI_{t-1}(\xbm^*)$. When $E^y(t)$ is not true, the noisy simple regret $r_t^s$ is by definition bounded above by $\beta_t^{1/2}\sigma_{t-1}(\xbm_t)$, which converges to $0$ at a rate $\mathcal{O}(T^{-1/2}\log^{1/2}(T)\sqrt{\gamma_T})$ (see Theorem~\ref{thm:boi-cumulative-regret}). %While this ``mixed'' convergence result for BOI may be considered unusual, %or even unsatisfactory by optimizers used to no-regret algorithms
   %it provides theoretical convergence guarantees that, to the best our knowledge, were not previously established. 
   More importantly, GP-EI with BOI may well not possess no-regret properties as BPMI and BSPMI do. We further provide some intuition of this point in Remark~\ref{remark:boi-inst-regret-example}. 
   %The reduction in upper bound of noisy simple regret could be what the user desires. 
\end{rem}
As Remark~\ref{remark:boi-inst-regret} explains, the lack of a lower bound for $y^+_{t-1}-\mu_{t-1}(\xbm_t)$  in terms of $\sigma_{t-1}(\xbm_t)$ is the main technical obstacle to an upper bound for $r_t$. 
   In~\cite{nguyen17a}, the authors used a stopping criterion to provide a lower bound for $EI_{t-1}(\xbm_t)$ with BOI, and therefore a lower bound for  $y^+_{t-1}-\mu_{t-1}(\xbm_t)$. However, it is not entirely clear whether GP-EI has reached a satisfactory solution when the stopping criterion is met. Further, a finite exit $T$ can complicate the cumulative regret bound analysis.

In the following remark, we attempt to provide some intuition on why the behavior of GP-EI with BOI might not be no-regret when $E^y(t)$ is \textit{not} true.
\begin{rem}[Brittleness of BOI]\label{remark:boi-inst-regret-example}
 % \coscom{This remark will definitely be read in detail by the reviewers. Make sure not only it is correct and crystal clear but also that has absolutely no typos, both grammar and math.}
   
   We provide a possible explanation of why GP-EI with BOI might not display no-regret property when $E^r(t)$ is true but $E^y(t)$ is not true using the following example. 
   We consider SE and Matérn kernels. Suppose the noisy simple regret $r_n^s=y^+_{n-1}-f(\xbm^*)=-0.01$ for some $n\in\Nbb$, a possible value due to the noise. That is, the best observation is smaller than $f(\xbm^*)$. Since $y^+_t$ is monotonic, we have $y^+_t-f(\xbm^*)\leq -0.01$ for all $t\geq n$. Then, given $E^r(t)$, we can write that 
    \begin{equation*} \label{eqn:remark-boi-1}
  \centering
  \begin{aligned}
       y^+_{t-1} - \mu_{t-1}(\xbm_t) =& y^+_{t-1} -f(\xbm_t) +f(\xbm_t)-\mu_{t-1}(\xbm_t) \leq y^+_{t-1} -f(\xbm^*) +\beta_t^{1/2}\sigma_{t-1}(\xbm_t) \\ \leq & -0.01 +\beta_t^{1/2}\sigma_{t-1}(\xbm_t).\\
  \end{aligned}
  \end{equation*}
   Since $\beta_t^{1/2}\sigma_{t-1}(\xbm_t)\to 0$ for SE and Matérn  kernels, for a large enough $t$, we have 
    \begin{equation} \label{eqn:remark-boi-2}
  \centering
  \begin{aligned}
       y^+_{t-1} - \mu_{t-1}(\xbm_t) < -0.005.
  \end{aligned}
  \end{equation}
  The number $-0.005$ is chosen for convenience. 
  Given that $\sigma_{t-1}(\xbm_t)\to 0$, by~\eqref{eqn:remark-boi-2}, we have $z_{t-1}(\xbm_t)\to -\infty$, which represents the ratio between exploitation and exploration at $\xbm_t$.

  We consider the derivative of $EI_{t-1}(\xbm)$ with respect to the exploitation $y^+_{t-1} - \mu_{t-1}(\xbm)$, the exploration $\sigma_{t-1}(\xbm)$, and their ratio at $\xbm_t$.  
   From Lemma~\ref{lem:EI-ms}, the derivatives are  $\Phi(z_{t-1}(\xbm_t))$ and $\phi(z_{t-1}(\xbm_t))$, respectively.  
    As $t\to\infty$ and $z_{t-1}(\xbm_t)\to -\infty$, by Lemma~\ref{lem:EI-ms}, $\frac{\Phi(z_{t-1}(\xbm_t))}{\phi(z_{t-1}(\xbm_t))}=\mathcal{O}(-\frac{1}{z_{t-1}(\xbm_t)})=\mathcal{O}(\sigma_{t-1}(\xbm_t))$. 
  Thus, the relative importance of exploitation of EI reduces at the rate $\mathcal{O}(\sigma_{t-1}(\xbm_t))$.
 We conjecture that this rate could be decreasing to $0$ too quickly for GP-EI with BOI to be no-regret.
  We base our conjecture on the ratios between the derivatives of exploitation and exploration of two other well-understood algorithms. 
  First, this ratio of a pure exploration algorithm is $0$, which has linear cumulative regret~\citep{pmlr-v65-scarlett17a,lattimore2020bandit} and is not no-regret. 
  %\coscom{here you need to be carefull since you need to show proof of it. cite baby cite}
  Second, for the no-regret UCB, this ratio is $\mathcal{O}\left(\frac{1}{\beta_t^{1/2}}\right)$, which is $\mathcal{O}\left(\frac{1}{\log^{1/2}(t)}\right)$ in the Bayesian setting, and is therefore orders larger than $\mathcal{O}(\sigma_{t-1}(\xbm_t))$ for SE and Matérn kernels 
  \footnote{One can see this point by estimating the order of $\sum_{t=1}^T \frac{1}{\log^{1/2}(t)}$ using the logarithmic integral function  and compare it to the order of $\sum_{t=1}^T \sigma_{t-1}(\xbm_t) \leq \sqrt{C_{\gamma}T\gamma_T}$ using $\gamma_T$ for the kernels.}.
  Thus, we posit that BOI's trade-off between exploitation and exploration might be moving towards pure exploration too fast to attain a sublinear cumulative regret for GP-EI, when $y_{t-1}^+-f(\xbm^*)$ becomes small and negative.

  In some literature, BOI is called brittle~\citep{wang2014theoreybo,tran2022regret}.
   Further, BOI is found to perform relatively poorly in the noisy case in~\cite{picheny2013benchmark}.
   This can be explained by the example above where GP-EI turns quickly to exploration if a relatively negative noise (large in absolute value) is encountered at some $t$. In addition, once such a noise is observed, the behavior of GP-EI does not reverse itself due to its monotonicity of $_T^+$. 
\end{rem}

\section{Instantaneous regret bound transformation}\label{se:instant-reg}
%In this section, we transform the instantaneous regret bounds in Lemmas~\ref{lem:post-mean-instregret},~\ref{lem:sampled-post-mean-instregret}, and~\ref{lem:best-observation-instregret}. The reason for the transformation is that unlike $\sum_{t=1}^T \sigma_{t-1}(\xbm_t)$, the upper bound of  $\sum_{t=1}^T \sigma_{t-1}([\xbm^*]_t)$ is not unclear and likely algorithm-dependent. 
In this section, we transform the instantaneous regret bounds established in Lemmas~\ref{lem:post-mean-instregret}, \ref{lem:sampled-post-mean-instregret}, and \ref{lem:best-observation-instregret}. This transformation is necessary because, unlike $\sum_{t=1}^T \sigma_{t-1}(\xbm_t)$, the upper bound of $\sum_{t=1}^T \sigma_{t-1}([\xbm^*]_t)$ is generally unclear and likely dependent on the specific algorithm.
Accordingly, the changes to the upper bounds are threefold.  

\begin{itemize}
    \item First, we replace $\sigma_{t-1}([\xbm^*]_t)$ with $\sigma_{t-1}(\xbm_t)$, where the parameters multiplying $\sigma_{t-1}(\xbm_t)$ become larger than those in
Lemma~\ref{lem:post-mean-instregret},~\ref{lem:sampled-post-mean-instregret}, and~\ref{lem:best-observation-instregret}. 
    \item Second, the exploitation term $\max\{\xi_{t-1}^+-\mu_{t-1}(\xbm_t),0\}$ appears in the transformed upper bound.

   \item Third, given any filtration $\mathcal{F}'_{t-1}$ that $E^r(t)$ is true, we present the transformed bound on $r_t$ with a probability ($\geq 1-\frac{1}{2t^{\frac{\alpha}{2}}}$, $\alpha\in(0,1]$), while the bound in Lemma~\ref{lem:post-mean-instregret},~\ref{lem:sampled-post-mean-instregret}, and~\ref{lem:best-observation-instregret} does not have one. 
\end{itemize}
We explain more on the three points above. First, after the transformation, the parameters multiplying $\sigma_{t-1}(\xbm_t)$ are larger as they incorporate the effect from terms involving $\sigma_{t-1}([\xbm]^*_t)$. Clearly, the new parameters need to be  controlled so that a sublinear cumulative regret bound is possible. 
Second, the exploitation term $\max\{\xi_{t-1}^+-\mu_{t-1}(\xbm_t),0\}$ appears in the transformed upper bound, as a result of the trade-off between exploitation and exploration of EI. We bound $\sum_{t=1}^T \max\{\xi_{t-1}^+-\mu_{t-1}(\xbm_t),0\}$ for each incumbent with different techniques in Section~\ref{se:regret}.
This point corresponds to challenge 4 in Section~\ref{se:challenge}. Finally, the probability given $\mathcal{F}'_{t-1}$ ($\geq 1-\frac{1}{2t^{\frac{\alpha}{2}}}$) is attentively designed so that the instantaneous regret bound holds with a high-enough probability, but also not too large that the corresponding parameters multiplying $\sigma_{t-1}(\xbm_t)$ become too large to generate sublinear cumulative regret. This is our solution to challenge 3 mentioned in Section~\ref{se:challenge}. 

%The transformation (Theorem~\ref{theorem:EI-sigma-star}) is important in producing a viable cumulative regret bound since the upper bound of $\sum_{t-1}^T\sigma_{t-1}([\xbm]^*_t)$ is algorithm-dependent and, therefore, unknown. We achieve this transformation by revealing new exploration and exploitation properties of the non-convex EI functions and comparing the $EI_{t-1}$ values at $\xbm_t$ and $[\xbm^*]_t$. 
{
The transformation introduced in Theorem~\ref{theorem:EI-sigma-star} plays a critical role in deriving a tractable cumulative regret bound, as the upper bound of the quantity $\sum_{t-1}^T\sigma_{t-1}([\xbm]^*_t)$ is inherently algorithm-dependent and thus not explicitly known. The key to achieve this transformation is to reveal new exploration and exploitation properties of the non-convex EI functions and comparing the $EI_{t-1}$ values at $\xbm_t$ and $[\xbm^*]_t$. }

A similar analysis of EI properties was first presented in ~\citep{wang2025convergence} with time-independent parameters. Here, we significantly evolve the proof by introducing time-varying parameters and probabilities. More details can be found in the proof sketch and proof for Theorem~\ref{theorem:EI-sigma-star}.

We define the following parameters: 
 \begin{equation} \label{def:eta}
  \centering
  \begin{aligned}
   \alpha_t = \alpha \log(t), \ 
 \zeta_t^{1/2} =\frac{\sqrt{2\pi}\phi(0)}{\sqrt{\alpha}} t^{\frac{\alpha}{2}}, \  \eta_t^{1/2} = \zeta_t^{1/2} \beta_t^{1/2}, 
   \end{aligned}
  \end{equation} 
where $\alpha\in(0,1]$. 
Recall that $\beta_t$ is defined in~\eqref{def:beta}.
 We emphasize that all these parameters are used as auxiliaries in the analysis, and are not involved in the GP-EI algorithm.

We explain the parameters in~\eqref{def:eta} next. 
First, we consider $\alpha_t$. We state a lemma similar to Lemma~\ref{lem:fmu}, which is the well-known confidence interval on $|f(\xbm)-\mu_{t-1}(\xbm)|$, but with parameters $\alpha$ and  $\alpha_t$ for ease of reference. 
\begin{lem}\label{lem:ei-r-1}
  For any given $\mathcal{F}'_{t-1}$, one has that
  \begin{equation} \label{eqn:ft-bound-RKHS-1}
  \centering
  \begin{aligned}
        |f(\xbm)-\mu_{t-1}(\xbm)|\leq \alpha_t^{1/2} \sigma_{t-1}(\xbm),
   \end{aligned}
  \end{equation} 
  with probability $\geq 1-\frac{1}{t^{\frac{\alpha}{2}}}$ for given $\xbm\in C$, where $\alpha\in(0,1]$, $\alpha_t = \alpha \log(t)$.
\end{lem}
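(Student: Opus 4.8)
The plan is to read the bound directly off the posterior Gaussianity of the sample path. By Assumption~\ref{assp:gp}, $f$ is drawn from $GP(0,k(\xbm,\xbm'))$, so conditioning on the filtration $\mathcal{F}'_{t-1}$ (the history $\mathcal{H}_{t-1}$ of the first $t-1$ sample--observation pairs) makes $f(\xbm)$ Gaussian with mean $\mu_{t-1}(\xbm)$ and variance $\sigma_{t-1}^2(\xbm)$ as given in~\eqref{eqn:GP-post}. As already noted in Section~\ref{se:additionalback}, both $\mu_{t-1}(\xbm)$ and $\sigma_{t-1}(\xbm)$ are deterministic given $\mathcal{F}'_{t-1}$, so the standardized variable $Z:=(f(\xbm)-\mu_{t-1}(\xbm))/\sigma_{t-1}(\xbm)$ is, conditionally on $\mathcal{F}'_{t-1}$, a standard normal. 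This is the same starting point as the well-known confidence interval in Lemma~\ref{lem:fmu}.

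First I would invoke the classical Gaussian tail estimate $\Pbb(Z>c)\leq \tfrac{1}{2}e^{-c^2/2}$ for $c\geq 0$, which one verifies by checking that $\tfrac{1}{2}e^{-c^2/2}-\Pbb(Z>c)$ vanishes at $c=0$, has a derivative that changes sign exactly once, and returns to $0$ as $c\to\infty$, hence stays nonnegative. Symmetry of the normal density then gives $\Pbb(|Z|>c)\leq e^{-c^2/2}$.

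The remaining step is pure constant matching. Taking $c=\alpha_t^{1/2}=(\alpha\log t)^{1/2}$, which is nonnegative for $t\geq 1$, yields $e^{-c^2/2}=e^{-\alpha\log(t)/2}=t^{-\alpha/2}$. Consequently the event that~\eqref{eqn:ft-bound-RKHS-1} is violated has conditional probability at most $t^{-\alpha/2}$, so~\eqref{eqn:ft-bound-RKHS-1} holds with probability at least $1-t^{-\alpha/2}$, as claimed.

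I do not expect a genuine obstacle here: the difficulty is entirely bookkeeping, since the statement is a single-point, $\alpha$-tuned confidence interval rather than a uniform-over-$\Cbb_t$ guarantee. The only point worth flagging is that no union bound over the discretization is taken in this lemma---that role is played separately by the events $E^f(t)$ and the parameter $\beta_t$ in~\eqref{def:beta}---because the present lemma is meant only to supply the pointwise confidence radius $\alpha_t^{1/2}\sigma_{t-1}(\xbm)$ that feeds the exploration--exploitation transformation of Theorem~\ref{theorem:EI-sigma-star}.
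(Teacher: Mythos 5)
Your proposal is correct and follows essentially the same route as the paper: the paper proves this as a restatement of Lemma~\ref{lem:fmu} (conditional Gaussianity of $f(\xbm)$ given $\mathcal{F}'_{t-1}$, the tail bound $\Pbb(Z>c)\leq\tfrac{1}{2}e^{-c^2/2}$ from Lemma~\ref{lem:phi} applied to both tails, then the substitution $\delta=e^{-\alpha_t/2}=t^{-\alpha/2}$). Your observation that no union bound over $\Cbb_t$ is needed here is also consistent with how the paper separates this pointwise bound from the uniform event $E^f(t)$.
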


     From Lemma~\ref{lem:ei-r-1}, it is clear that $\alpha_t$ serves the same purpose as $\beta_t$ since it is used in the confidence interval $|f(\xbm)-\mu_{t-1}(\xbm)|$. However, unlike $\beta_t$ which leads to $|f(\xbm)-\mu_{t-1}(\xbm)|\leq \beta_t^{1/2}\sigma_{t-1}(\xbm)$ for all $\xbm\in\Cbb_t$ and $t\in\Nbb$, $\alpha_t$ bounds $|f(\xbm)-\mu_{t-1}(\xbm)|\leq \alpha_t^{1/2}\sigma_{t-1}(\xbm)$ at given $\xbm$ with $\mathcal{F}_{t-1}'$. 
The choice of $\alpha\in(0,1]$ allows us to use a significantly smaller $\alpha_t$ compared to $\beta_t$, the latter having an equivalent $\alpha$ of $4d+4$. The reason we need a smaller $\alpha_t$ is to in turn generate small enough $\zeta_t$ and $\eta_t$, as we explain below.

    The parameters $\zeta_t$ and $\eta_t$ in~\eqref{def:eta} are chosen to ensure $\eta_t^{1/2}\Phi(-\alpha_t^{1/2})>\phi(0)$, which is necessary in the exploration-exploitation trade-off analysis of EI and used in the proof of Theorem~\ref{theorem:EI-sigma-star}.    
The relationship between the parameters $\alpha_t$, $\beta_t$, $\zeta_t$, and $\eta_t$ is formalized in the next lemma.
\begin{lem}\label{lem:EI-regret-parameters}
    The parameters $\alpha_t$, $\beta_t$, $\zeta_t$, and $\eta_t$ satisfy
    \begin{equation} \label{eqn:EI-regret-parameters}
  \centering
  \begin{aligned}
     \eta_t^{1/2} \Phi(-\alpha_t^{1/2})=\zeta_t^{1/2}\beta_t^{1/2} \Phi(-\alpha_t^{1/2}) > \phi(0). 
   \end{aligned}
  \end{equation}
\end{lem}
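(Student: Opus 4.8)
The equality in \eqref{eqn:EI-regret-parameters} is immediate from the definition $\eta_t^{1/2}=\zeta_t^{1/2}\beta_t^{1/2}$ in \eqref{def:eta}, so the entire content is the strict inequality $\zeta_t^{1/2}\beta_t^{1/2}\Phi(-\alpha_t^{1/2})>\phi(0)$. The plan is to first strip away the multiplicative constants using the identity $\sqrt{2\pi}\,\phi(0)=1$, which collapses the definition of $\zeta_t$ in \eqref{def:eta} to $\zeta_t^{1/2}=\alpha^{-1/2}t^{\alpha/2}$. The key observation is that the Gaussian density at $\alpha_t^{1/2}$ exactly cancels the factor $t^{\alpha/2}$: since $\alpha_t=\alpha\log t$, one has $\phi(\alpha_t^{1/2})=\tfrac{1}{\sqrt{2\pi}}e^{-\alpha_t/2}=\phi(0)\,t^{-\alpha/2}$, so that $\zeta_t^{1/2}\phi(\alpha_t^{1/2})=\phi(0)/\sqrt{\alpha}$ is constant in $t$.

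Next I would rewrite the tail as density times Mills ratio, giving
\begin{equation*}
  \zeta_t^{1/2}\Phi(-\alpha_t^{1/2}) = \zeta_t^{1/2}\phi(\alpha_t^{1/2})\cdot\frac{\Phi(-\alpha_t^{1/2})}{\phi(\alpha_t^{1/2})} = \frac{\phi(0)}{\sqrt{\alpha}}\cdot\frac{\Phi(-\alpha_t^{1/2})}{\phi(\alpha_t^{1/2})}.
\end{equation*}
Applying the standard Mills-ratio lower bound $\Phi(-x)/\phi(x)>x/(x^2+1)$ for $x>0$ at $x=\alpha_t^{1/2}$, multiplying by $\beta_t^{1/2}$, and using $\alpha_t^{1/2}/\sqrt{\alpha}=\sqrt{\log t}$, the target inequality reduces after cancelling $\phi(0)$ to the scalar statement
\begin{equation*}
  \beta_t^{1/2}\sqrt{\log t} > \alpha_t+1 = \alpha\log t+1 .
\end{equation*}

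Finally I would close this scalar inequality by lower-bounding $\beta_t$. Expanding \eqref{def:beta} with $|\Cbb_t|=(Lrd)^d t^{2d}$ and $\pi_t=\pi^2t^2/6$ yields $\beta_t=C_0+(4d+4)\log t$ with $C_0=2\log\!\big(4\pi^2(Lrd)^d/(3\delta)\big)$; since $Lrd\ge 1$ by Assumption~\ref{assp:gp}, $\delta\in(0,1)$, and $4\pi^2/3>1$, the constant $C_0$ is positive, hence $\beta_t>(4d+4)\log t\ge 8\log t$ for $d\ge 1$. Then $\beta_t^{1/2}\sqrt{\log t}>2\sqrt{2}\,\log t>\alpha\log t+1$ for all $t\ge 2$, using $\alpha\le 1$. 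The main obstacle is the boundary case $t=1$, where the Mills-ratio step degenerates ($\sqrt{\log 1}=0$ makes the reduced inequality vacuous); I would dispatch it directly from $\Phi(0)=\tfrac12$, $\zeta_1^{1/2}=\alpha^{-1/2}\ge 1$, and $\beta_1^{1/2}>1$ (as $\beta_t>1$), which give $\eta_1^{1/2}\Phi(0)=\tfrac12\zeta_1^{1/2}\beta_1^{1/2}>\tfrac12>\phi(0)$. Combining the two ranges establishes \eqref{eqn:EI-regret-parameters} for all $t\in\Nbb$.
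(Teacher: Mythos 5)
Your proof is correct and takes essentially the same route the paper intends: the Gaussian tail lower bound $\Phi(-z) > \tfrac{z}{1+z^2}\phi(z)$ (recorded in the paper as Lemma~\ref{lem:Philowbound}) cancels the $t^{\alpha/2}$ built into $\zeta_t^{1/2}$ against $\phi(\alpha_t^{1/2})=\phi(0)t^{-\alpha/2}$, reducing the claim to $\beta_t^{1/2}\sqrt{\log t}>\alpha\log t+1$, which your lower bound $\beta_t>(4d+4)\log t$ settles for $t\ge 2$. Your separate treatment of the degenerate case $t=1$ via $\Phi(0)=\tfrac12$ and $\zeta_1^{1/2}\beta_1^{1/2}>1$ is a correct and necessary completion.
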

\begin{rem}[Choice of $\zeta_t$ and $\eta_t$.]\label{remark:parameter-relation}
    The parameters $\zeta_t$ and $\eta_t$ are chosen to be as small as they can be while satisfying~\eqref{eqn:EI-regret-parameters}. That is, $\alpha_t$ and $\beta_t$ essentially determine $\zeta_t$ and $\eta_t$. 
    We add that if the standard parameter $\beta_t$ is used in place of $\alpha_t$, 
    the corresponding $\zeta_t$ would be $\mathcal{O}(t^{2d+2})$, which leads to $c_1(t)$ and $c_2(t)$ in Theorem~\ref{theorem:EI-sigma-star} that are too large to achieve sublinear cumulative regret upper bounds. 
\end{rem}
To resolve the smaller probability of $|f(\xbm)-\mu_{t-1}(\xbm)|\leq \alpha_t^{1/2}\sigma_{t-1}(\xbm)$ holding from requiring $\alpha \in (0,1]$, we use the super-martingale theories, similar to the GP-TS analysis in~\cite{chowdhury2017kernelized} instead of union bound. This point is further discussed in Remark~\ref{remark:small-probability} and repeatedly utilized in the proof for Section~\ref{se:regret}.

For simplicity of presentation and without losing generality, we assume throughout the remainder of the paper that  
 \begin{equation} \label{eqn:EI-sigma-star-assp}
  \centering
  \begin{aligned}
   c_{\xi}(t)+\phi(0)+(c_{\alpha}+2)\beta_t^{1/2} \leq (4+\zeta_t^{1/2})\beta_t^{1/2}, 
   \end{aligned}
  \end{equation}
 where $c_{\alpha}=1.328$, $c_{\xi}(t)=c_{\mu}(t)=\log^{\frac{1}{2}}(\frac{t-1+\sigma^2}{2\pi \phi^2(0)c_{\sigma}^2 \sigma^2})$ for both BPMI and BSPMI, and $c_{\xi}(t)=c_y(t)= \max\{c_{\mu}(t),3\}$ for BOI.
 The inequality~\eqref{eqn:EI-sigma-star-assp} is always true for $t$ large enough since the left-hand side is $\mathcal{O}(log^{1/2}(t))$ and the right-hand side is $\mathcal{O}({t^{\frac{\alpha}{2}}}\log^{1/2}(t))$.
Hence,~\eqref{eqn:EI-sigma-star-assp} does not lose generality. This relationship simplifies the upper bound in Theorem~\ref{theorem:EI-sigma-star} and is used towards the end of its proof.

The transformed instantaneous regret bound is now given in the following theorem. 
\begin{thm}\label{theorem:EI-sigma-star}
   From the parameters in~\eqref{def:eta}, let $c_1(t)= \frac{\eta_t^{1/2}}{\phi(0)}$ and $c_{2}(t)= (4+\zeta^{1/2}_t)\beta_t^{1/2} $.
   For any filtration $\mathcal{F}_{t-1}'$ with BPMI or BSPMI, if the corresponding $E^r(t)$ is true,
   then we have with probability $\geq 1- \frac{1}{2t^{\frac{\alpha}{2}}}$ that
      \begin{equation} \label{eqn:EI-sigma-star-1}
  \centering
  \begin{aligned}
      r_t  \leq c_1(t) \max\{\xi^+_{t-1}-\mu_{t-1}(\xbm_{t}),0\} + c_2(t) \sigma_{t-1}(\xbm_{t})+\frac{1}{t^2}. 
   \end{aligned}
  \end{equation}
  For BOI and any filtration $\mathcal{F}_{t-1}'$, if $E^r(t)$ is true with probability $\geq 1- \frac{1}{2t^{\frac{\alpha}{2}}}$, then either~\eqref{eqn:EI-sigma-star-1} is true or $E^y(t)$ is false.  
\end{thm}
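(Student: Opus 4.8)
}
The plan is to eliminate the $\sigma_{t-1}([\xbm^*]_t)$ term appearing in Lemmas~\ref{lem:post-mean-instregret}--\ref{lem:best-observation-instregret} by combining the optimality of $\xbm_t$, i.e. $EI_{t-1}(\xbm_t)\ge EI_{t-1}([\xbm^*]_t)$, with a \emph{sharpened one-sided} confidence interval at $[\xbm^*]_t$. Fix a filtration $\mathcal{F}'_{t-1}$ on which the relevant $E^r(t)$ is true and write $z:=z_{t-1}([\xbm^*]_t)$ and $\sigma^*:=\sigma_{t-1}([\xbm^*]_t)$. Rather than starting from the final bounds of Lemmas~\ref{lem:post-mean-instregret}--\ref{lem:best-observation-instregret}, I would re-open the estimate of $r_t$. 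Applying Lemma~\ref{lem:ei-r-1} at $\ubm_t=[\xbm^*]_t$ but retaining only the lower inequality $f([\xbm^*]_t)\ge \mu_{t-1}([\xbm^*]_t)-\alpha_t^{1/2}\sigma^*$ (whose failure probability is $\Phi(-\alpha_t^{1/2})\le \tfrac{1}{2}t^{-\alpha/2}$, the source of the probability $1-\tfrac{1}{2t^{\alpha/2}}$ in the statement), using Lipschitz continuity and the discretization to pass from $\xbm^*$ to $[\xbm^*]_t$ at cost $\tfrac{1}{t^2}$, and using $f(\xbm_t)\le \mu_{t-1}(\xbm_t)+\beta_t^{1/2}\sigma_{t-1}(\xbm_t)$ from $E^s(t)$ together with $\mu_{t-1}([\xbm^*]_t)=\xi^+_{t-1}-z\sigma^*$, I would arrive at
\[
 r_t\le \big(\mu_{t-1}(\xbm_t)-\xi^+_{t-1}\big)+\beta_t^{1/2}\sigma_{t-1}(\xbm_t)+\sigma^*\max\{z+\alpha_t^{1/2},0\}+\tfrac{1}{t^2}.
\]
Whenever the first term is positive it is bounded by $c_{\xi}(t)\sigma_{t-1}(\xbm_t)$ (the intermediate estimate proved inside Lemmas~\ref{lem:post-mean-instregret}--\ref{lem:best-observation-instregret} via the positive lower bound on $EI_{t-1}$ at the incumbent-generating point), so the only genuinely new object is the third term $\sigma^*\max\{z+\alpha_t^{1/2},0\}$.

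I would control this term through the EI comparison. Lower-bounding the posterior expectation $EI_{t-1}([\xbm^*]_t)$ on the posterior lower-tail event $\{f([\xbm^*]_t)\le \mu_{t-1}([\xbm^*]_t)-\alpha_t^{1/2}\sigma^*\}$ (which carries posterior mass $\Phi(-\alpha_t^{1/2})$ and on which the improvement is at least $\sigma^*\max\{z+\alpha_t^{1/2},0\}$) gives the deterministic bound
\[
 EI_{t-1}([\xbm^*]_t)\ \ge\ \sigma^*\,\max\{z+\alpha_t^{1/2},0\}\,\Phi(-\alpha_t^{1/2}).
\]
Combining with $EI_{t-1}(\xbm_t)\ge EI_{t-1}([\xbm^*]_t)$ and the upper bound $EI_{t-1}(\xbm_t)\le \max\{\xi^+_{t-1}-\mu_{t-1}(\xbm_t),0\}+\phi(0)\sigma_{t-1}(\xbm_t)$ from Lemma~\ref{lem:EI} yields
\[
 \sigma^*\max\{z+\alpha_t^{1/2},0\}\ \le\ \tfrac{1}{\Phi(-\alpha_t^{1/2})}\Big(\max\{\xi^+_{t-1}-\mu_{t-1}(\xbm_t),0\}+\phi(0)\sigma_{t-1}(\xbm_t)\Big).
\]

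I would then substitute this into the reduced bound for $r_t$ and invoke Lemma~\ref{lem:EI-regret-parameters}: the relation $\eta_t^{1/2}\Phi(-\alpha_t^{1/2})>\phi(0)$ converts the prefactor on the exploitation term into $\tfrac{1}{\Phi(-\alpha_t^{1/2})}<\eta_t^{1/2}/\phi(0)=c_1(t)$ and the prefactor on $\sigma_{t-1}(\xbm_t)$ into $\tfrac{\phi(0)}{\Phi(-\alpha_t^{1/2})}<\eta_t^{1/2}=\zeta_t^{1/2}\beta_t^{1/2}$. Collecting the $\sigma_{t-1}(\xbm_t)$ coefficients $c_{\xi}(t)$, $\beta_t^{1/2}$ and $\zeta_t^{1/2}\beta_t^{1/2}$ and using the simplifying assumption~\eqref{eqn:EI-sigma-star-assp} to absorb them into $c_2(t)=(4+\zeta_t^{1/2})\beta_t^{1/2}$ produces exactly~\eqref{eqn:EI-sigma-star-1} with probability $\ge 1-\tfrac{1}{2t^{\alpha/2}}$. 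This argument applies to BPMI and BSPMI; for BSPMI an extra application of the confidence interval at $\xbm_t$ only inflates the $\beta_t^{1/2}$ coefficient (cf. Lemma~\ref{lem:sampled-post-mean-instregret}), still absorbed by~\eqref{eqn:EI-sigma-star-assp}.

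Finally, for BOI the single step that can fail is bounding $\mu_{t-1}(\xbm_t)-\xi^+_{t-1}$ by $c_y(t)\sigma_{t-1}(\xbm_t)$: as explained in Remark~\ref{remark:boi-inst-regret}, this needs the positive lower bound on $EI_{t-1}$ furnished only when $E^y(t)$ holds. Hence if $E^y(t)$ holds the whole chain goes through and~\eqref{eqn:EI-sigma-star-1} follows with probability $\ge 1-\tfrac{1}{2t^{\alpha/2}}$, while if $E^y(t)$ is false the stated disjunction holds trivially. I expect the main obstacle to be the lower bound on $EI_{t-1}([\xbm^*]_t)$ in the regime where $z$ is moderately negative (the exploitation part at $[\xbm^*]_t$ is negative): there the naive estimate $EI_{t-1}([\xbm^*]_t)\ge\sigma^*\big(z\Phi(z)+\phi(z)\big)$ degenerates, and the correct device is the $\Phi(-\alpha_t^{1/2})$-weighted tail bound above, whose usefulness rests on choosing $\alpha_t$ (hence $\alpha\in(0,1]$) small enough that $\Phi(-\alpha_t^{1/2})$ is not too small while the one-sided confidence still fails with probability at most $\tfrac{1}{2}t^{-\alpha/2}$ — precisely the balance encoded by Lemma~\ref{lem:EI-regret-parameters} and the parameter choices in~\eqref{def:eta}.
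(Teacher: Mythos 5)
Your proposal is correct, but it reaches \eqref{eqn:EI-sigma-star-1} by a genuinely different route than the paper. The paper keeps the bounds of Lemmas~\ref{lem:post-mean-instregret}--\ref{lem:best-observation-instregret} intact and argues by contradiction: it splits into Scenario A (where the target bound on $f(\xbm_t)-f([\xbm^*]_t)$ already holds) and Scenario B, and in Scenario B uses the monotonicity of $EI(a,b)$ in both arguments (Lemma~\ref{lem:EI-ms}) together with $EI_{t-1}(\xbm_t)\ge EI_{t-1}([\xbm^*]_t)$ to force $\sigma_{t-1}([\xbm^*]_t)\le\sigma_{t-1}(\xbm_t)$ with probability $\ge 1-\tfrac{1}{2}t^{-\alpha/2}$, via a nested case analysis (Cases 1, 2.1, 2.1.1.1, 2.2, distinguishing the sign of $z_{t-1}(\xbm_t)$) needed because EI is not convex; it then substitutes $\sigma_{t-1}(\xbm_t)$ for $\sigma_{t-1}([\xbm^*]_t)$ in the earlier lemmas. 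You instead re-open the regret estimate and bound the offending quantity $\sigma_{t-1}([\xbm^*]_t)\max\{z+\alpha_t^{1/2},0\}$ directly, through the deterministic tail inequality $EI_{t-1}([\xbm^*]_t)\ge\sigma_{t-1}([\xbm^*]_t)\max\{z+\alpha_t^{1/2},0\}\,\Phi(-\alpha_t^{1/2})$ (valid since on the posterior event $\{f\le\mu_{t-1}-\alpha_t^{1/2}\sigma_{t-1}\}$, of mass $\Phi(-\alpha_t^{1/2})$, the improvement is at least $\sigma_{t-1}(z+\alpha_t^{1/2})$; equivalently $\tau(z)\ge(z+c)\Phi(-c)$ for $z\ge -c$, which follows from $\tau'=\Phi$ and $\tau(-c)>0$), combined with the acquisition optimality $EI_{t-1}(\xbm_t)\ge EI_{t-1}([\xbm^*]_t)$ and the upper bound $EI_{t-1}(\xbm_t)\le\max\{\xi^+_{t-1}-\mu_{t-1}(\xbm_t),0\}+\phi(0)\sigma_{t-1}(\xbm_t)$ from Lemma~\ref{lem:EI}. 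Lemma~\ref{lem:EI-regret-parameters} then converts $1/\Phi(-\alpha_t^{1/2})$ into $c_1(t)$ and $\phi(0)/\Phi(-\alpha_t^{1/2})$ into $\zeta_t^{1/2}\beta_t^{1/2}$, exactly as needed; the only probabilistic step is the one-sided $\alpha_t$-interval at $[\xbm^*]_t$, which fails with probability at most $\tfrac{1}{2}t^{-\alpha/2}$, matching the statement. What your approach buys is the elimination of the entire contradiction/case structure — the exploration--exploitation comparison is replaced by one convexity-free inequality on $\tau$ — at the cost of a marginally different bookkeeping of the $\sigma_{t-1}(\xbm_t)$ coefficients (you need $c_\xi(t)+\beta_t^{1/2}+\zeta_t^{1/2}\beta_t^{1/2}\le(4+\zeta_t^{1/2})\beta_t^{1/2}$, i.e.\ $c_\xi(t)\le 3\beta_t^{1/2}$, which is of the same ``true for all relevant $t$'' nature as~\eqref{eqn:EI-sigma-star-assp} but not literally that inequality). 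Your treatment of BOI — that the only step requiring $E^y(t)$ is the positive lower bound on $EI_{t-1}$ needed to control $\mu_{t-1}(\xbm_t)-y^+_{t-1}$, so the disjunction holds trivially when $E^y(t)$ fails — matches the paper's.
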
 

\paragraph{Proof sketch for Theorem~\ref{theorem:EI-sigma-star}}\label{remark:explorevexploit}
Starting from the upper bound of $r_t$ in Lemmas~\ref{lem:post-mean-instregret},~\ref{lem:sampled-post-mean-instregret}, and~\ref{lem:best-observation-instregret}, we consider two scenarios based on the value of $f(\xbm_{t})-f([\xbm^*]_t)$. 
Scenario A is when $f(\xbm_{t})-f([\xbm^*]_t) \leq  c_1(t)\max\{\xi^+_{t-1}-\mu_{t-1}(\xbm_{t}),0\}+c_2(t)\sigma_{t-1}(\xbm_{t})$, which implies~\eqref{eqn:EI-sigma-star-1}. 

Scenario B is when $f(\xbm_{t})-f([\xbm^*]_t) > c_1(t)\max\{\xi^+_{t-1}-\mu_{t-1}(\xbm_{t}),0\}+c_2(t)\sigma_{t-1}(\xbm_{t})$.
In this scenario, we can derive a lower bound for $\xi_{t-1}^+-\mu_{t-1}([\xbm]_t^*) - \xi_{t-1}^+-\mu_{t-1}(\xbm_t)$, which represents the exploitation of $EI_{t-1}([\xbm]^*_t)$ subtracted by the exploitation of $EI_{t-1}(\xbm_t)$, with probability $\geq 1-\frac{1}{2t^{\alpha/2}}$. 
Since  $EI_{t-1}(\xbm_t) \geq EI_{t-1}([\xbm]^*_t)$ and $EI_{t-1}(\xbm)$ is monotonic with respect to both its exploration $\sigma_{t-1}(\xbm)$ and exploitation $\xi^+_{t-1}-\mu_{t-1}(\xbm)$, we prove by contradiction that the exploration of $EI_{t-1}([\xbm]^*_t)$ and $EI_{t-1}(\xbm_t)$ satisfies $\sigma_{t-1}([\xbm]^*_t) \leq \sigma_{t-1}(\xbm_t)$ with probability $\geq 1-\frac{1}{2t^{\alpha/2}}$. 
The contradiction stems from our analysis that under Scenario B, $EI_{t-1}(\xbm_t) < EI_{t-1}([\xbm]^*_t)$ with probability $\geq 1-\frac{1}{2t^{\alpha/2}}$ if $\sigma_{t-1}([\xbm]^*_t) > \sigma_{t-1}(\xbm_t)$, which we elaborate next.

Suppose $\sigma_{t-1}([\xbm]^*_t) > \sigma_{t-1}(\xbm_t)$. We consider two cases distinguished by the relationship between $\sigma_{t-1}([\xbm]^*_t)$ and $\sigma_{t-1}(\xbm_t)$ under Scenario B, represented by~\eqref{eqn:EI-sigma-star-pm-pf-14} and~\eqref{eqn:EI-sigma-star-pm-pf-16}, respectively. While the first case (marked Case 1 in the proof) is rather straightforward, the second case (marked Case 2) requires more complex analysis. 
We further divide the proof into Case 2.1 where $z(\xbm_t)<0$ and Case 2.2 where $z(\xbm_t)\geq 0$. Due to the non-convexity of EI, to prove that $EI_{t-1}(\xbm_t) < EI_{t-1}([\xbm]^*_t)$ with probability $\geq 1-\frac{1}{2t^{\alpha/2}}$, we distinguish Case 2.1 into another two layers of cases (\textit{e.g.}, Case 2.1.1.1), as each of them requires its own analysis. For Case 2.2, we adopt existing results from Case 2.1 through our choice of $c_1(t)$ and similarly prove $EI_{t-1}(\xbm_t) < EI_{t-1}([\xbm]^*_t)$ with probability $\geq 1-\frac{1}{2t^{\alpha/2}}$.

Combining all the Cases and Scenarios, we can then replace $\sigma_{t-1}([\xbm^*_t])$ with $\sigma_{t-1}(\xbm_t)$ in ~\eqref{eqn:post-mean-instregret-1},~\eqref{eqn:sampled-post-mean-instregret-1}, and~\eqref{eqn:best-observation-instregret-1} from Lemma~\ref{lem:post-mean-instregret},~\ref{lem:sampled-post-mean-instregret}, and~\ref{lem:best-observation-instregret}. 

\begin{rem}[Reduced probability of inequalities]\label{remark:small-probability}
As we mentioned above in this section, as well as in challenge 3 of Section~\ref{se:challenge}, $\alpha$ and $\alpha_t$ are critical analytic choices leading to sublinear cumulative regret bounds.
From Remark~\ref{remark:parameter-relation}, $\alpha_t$ essentially determines $c_1(t)$ and $c_2(t)$. The larger $\alpha_t$ is, the larger $c_1(t)$ and $c_2(t)$ need to be. 
Hence, for a smaller upper bound on $r_t$, $\alpha_t$ should be smaller, $\alpha$ should be smaller, and the probability of~\eqref{eqn:EI-sigma-star-1} holding is also smaller. 

On the other hand, we need to consider the probability of~\eqref{eqn:EI-sigma-star-1} \textbf{not} holding when we compute the cumulative regret bound and its probability. To reach sublinear cumulative regret with an arbitrary probability ($1-\delta$), the probability of~\eqref{eqn:EI-sigma-star-1} not holding needs to be small enough. Hence, $\alpha$ should be large enough. 

Clearly, $\alpha$ and $\alpha_t$ need to be carefully chosen to balance these two desirable but contradictory properties. 
We do not choose the values of $\alpha$ until Section~\ref{se:regret} for specific kernels in order to  minimize the rates of cumulative regret bounds. The value of $\alpha$ is dependent on the kernel and its $\gamma_t$. 

In BO literature, the probability of an event or inequality dependent on $t$ is often chosen to be $1-\frac{1}{t^2}$ so that the summation of the probabilities of these events being \textbf{not} true, \textit{i.e.}, $\sum_{t=1}^T \frac{1}{t^2}$, is finite~\citep{chowdhury2017kernelized}.
However, this is in fact not necessary for an algorithm to be no-regret, as the sum is only required to be sublinear $o(T)$. 
Here, given $E^r(t)$, we choose the probability of~\eqref{eqn:EI-sigma-star-1} \textbf{not} holding to be $ \frac{1}{t^{\frac{\alpha}{2}}}$, whose sum from $1$ to $T$ is $\mathcal{O}(T^{1-\frac{\alpha}{2}})$, to control the total expectation of $r_t$ when~\eqref{eqn:EI-sigma-star-1} does not hold. 
In other words, Theorem~\ref{theorem:EI-sigma-star} does \textbf{not} hold with a small probability, but not too small ($\frac{1}{t^2}$ would be too small) that would require the parameters $c_1(t)$ and $c_2(t)$ to be too large to generate sublinear cumulative regret bounds. 
For instance, by introducing and controlling $\alpha_t$, which is different and smaller than $\beta_t$, we allow $c_2(t)$ to be $\mathcal{O}(t^{\frac{\alpha}{2}}\log^{1/2}(t))$.

Finally, we point out that the usual $\frac{1}{t^2}$ rate is equivalent to $\alpha=4$ (which makes $c_2(t)$ too large) while we let $\alpha\in (0,1]$.
\end{rem}

\section{Cumulative regret bounds for GP-EI}\label{se:regret}
With the transformed instantaneous regret upper bound obtained in Section~\ref{se:instant-reg}, we can now bound the cumulative regret for GP-EI in this section. %we present the cumulative regret bounds for the GP-EI algorithms, following a similar analysis framework as GP-TS~\citep{chowdhury2017kernelized}. 
We first state the probability of $E^r(t)$ for each incumbent with the choice of $\beta_t$ in~\eqref{def:beta} below.
\begin{lem}\label{lem:eftprob}
     Given parameters $\beta_t$, $E^f(t)$ is true for all $t\in\Nbb$ with probability  $\geq 1-\delta/8$ and $E^s(t)$ is true with probability $\geq 1-\delta/8$ for each sequence of $\ubm_t$.
     Therefore, for BPMI, $E^r(t)$ is true with probability $\geq 1-\delta/4$.
     For BSPMI, $E^r(t)$ is true with probability $\geq 1-3\delta/8$.
     For BOI, $E^r(t)$ is true with probability $\geq 1-3\delta/8$.
\end{lem}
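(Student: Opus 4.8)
The plan is to reduce the whole statement to the two pointwise confidence lemmas already cited in the preceding remark, Lemma~\ref{lem:discrete-fmu} and Lemma~\ref{lem:fmu-t}~\citep{srinivas2009gaussian}, together with two elementary facts: the standard Gaussian tail bound and the summability of $1/\pi_t$. The crux is the algebraic identity engineered by the choice of $\beta_t$ in~\eqref{def:beta}: since $\beta_t = 2\log(8|\Cbb_t|\pi_t/\delta)$, one has $\exp(-\beta_t/2) = \delta/(8|\Cbb_t|\pi_t)$, which is tuned precisely so that a union bound over the discretization $\Cbb_t$ absorbs the cardinality factor and leaves a residual that is summable in $t$. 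I would therefore first establish the per-event bounds $\Pbb(E^f(t))\geq 1-\delta/8$ and $\Pbb(E^s(t))\geq 1-\delta/8$, and then assemble the three incumbents' results by a union bound over the component events defining each $E^r(t)$.

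For $E^f(t)$ I would argue as follows. Because $f\sim GP(0,k)$ and the noise is i.i.d.\ Gaussian (Assumptions~\ref{assp:gp} and~\ref{assp:gaussiannoise}), for any a~priori fixed $\xbm$ and any fixed $t$ the posterior law is $f(\xbm)\mid\mathcal{F}'_{t-1}\sim\mathcal{N}(\mu_{t-1}(\xbm),\sigma^2_{t-1}(\xbm))$, so by $\Pbb(|Z|>c)\leq e^{-c^2/2}$ for $Z\sim\mathcal{N}(0,1)$ we get $\Pbb(|f(\xbm)-\mu_{t-1}(\xbm)|>\beta_t^{1/2}\sigma_{t-1}(\xbm))\leq e^{-\beta_t/2}$ (this is Lemma~\ref{lem:discrete-fmu}). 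A union bound over the $|\Cbb_t|$ points of $\Cbb_t$ gives a per-$t$ failure probability at most $|\Cbb_t|\,e^{-\beta_t/2} = \delta/(8\pi_t)$, and a further union over $t\in\Nbb$ yields total failure probability at most $\tfrac{\delta}{8}\sum_{t=1}^{\infty}1/\pi_t$. Since $\pi_t = \pi^2 t^2/6$, the Basel identity gives $\sum_{t=1}^{\infty}1/\pi_t = \tfrac{6}{\pi^2}\sum_{t=1}^{\infty}t^{-2} = 1$, so $E^f(t)$ holds for all $\xbm\in\Cbb_t$ and all $t$ with probability $\geq 1-\delta/8$. For $E^s(t)$ the only structural change is that each time index contributes a single evaluation point $\ubm_t$ instead of the whole discretization, so no factor $|\Cbb_t|$ appears. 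For predictable sequences (e.g.\ $\ubm_t=\xbm_t$ or $\ubm_t=\xbm_{t-1}$), $\ubm_t$ is $\mathcal{F}'_{t-1}$-measurable, hence deterministic given $\mathcal{F}'_{t-1}$, and $f(\ubm_t)$ is still posterior-Gaussian, so the tail bound applies conditionally and then in expectation (Lemma~\ref{lem:fmu-t}). Using $|\Cbb_t|=(Lrdt^2)^d\geq 1$, which follows from $L\geq 1/(rd)$ in Assumption~\ref{assp:gp} and $t\geq 1$ via~\eqref{eqn:ccard}, the per-$t$ failure probability is $e^{-\beta_t/2}=\delta/(8|\Cbb_t|\pi_t)\leq \delta/(8\pi_t)$, and summing over $t$ again gives total failure probability $\leq\tfrac{\delta}{8}\sum_t 1/\pi_t = \delta/8$; thus $E^s(t)$ holds for all $t$ with probability $\geq 1-\delta/8$ for each fixed sequence $\{\ubm_t\}$.

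The three incumbent statements then follow by a single union bound over the events defining each $E^r(t)$. For BPMI we have $E^r(t)=E^f(t)\cap E^s(t)$ (Definition~\ref{def:eft-bpmi}), so $\Pbb(\overline{E^r(t)})\leq \delta/8+\delta/8=\delta/4$. For BSPMI (Definition~\ref{def:eft-bspmi}) and for BOI (Definition~\ref{def:eft-boi}), $E^r(t)$ is the intersection of three confidence events $E^f(t),E^s_1(t),E^s_2(t)$, each failing with probability $\leq\delta/8$, giving $\Pbb(\overline{E^r(t)})\leq 3\delta/8$ in both cases.

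The main obstacle — indeed the only place requiring genuine care — is the handling of data-dependent evaluation points inside $E^s$. The predictable sequences $\xbm_t$ and $\xbm_{t-1}$ are cleanly covered by the conditional-Gaussian argument above, but the BOI event $E^s_2(t)$ uses $\ubm_t=\xbm^*$, which is a functional of the entire sample path of $f$ and is therefore \emph{not} $\mathcal{F}'_{t-1}$-measurable, so one cannot directly claim that $f(\xbm^*)\mid\mathcal{F}'_{t-1}$ is $\mathcal{N}(\mu_{t-1}(\xbm^*),\sigma^2_{t-1}(\xbm^*))$. To close this gap I would invoke the form of Lemma~\ref{lem:fmu-t} valid for an arbitrary fixed (possibly $f$-dependent) point, exactly as in the Bayesian confidence analysis of~\cite{srinivas2009gaussian}, so that the single-point bound at rate $e^{-\beta_t/2}$ still applies to $\xbm^*$ and the same summation over $t$ yields the $\delta/8$ budget; verifying that this pointwise guarantee genuinely extends to $\xbm^*$ is the one step that must be argued rather than merely quoted.
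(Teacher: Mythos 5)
Your proposal is correct and follows the same route as the paper: the paper offers no standalone proof of Lemma~\ref{lem:eftprob}, stating only (in the remark after Definition~\ref{def:eft}) that it is directly inferred from Lemma~\ref{lem:discrete-fmu} and Lemma~\ref{lem:fmu-t}, i.e., exactly the Gaussian-tail, union-over-$\Cbb_t$, union-over-$t$ calculation you carry out, with $\beta_t=2\log(8|\Cbb_t|\pi_t/\delta)$ calibrated so that $E^f$ and each $E^s$-sequence each consume a $\delta/8$ budget, followed by a union bound over the two (BPMI) or three (BSPMI, BOI) component events. Your closing observation about $\ubm_t=\xbm^*$ in the BOI event $E^s_2(t)$ is a legitimate subtlety that the paper itself leaves unaddressed: Lemma~\ref{lem:fmu-t} is proved for points that are deterministic given $\mathcal{F}'_{t-1}$, and $\xbm^*=\arg\min f$ is a functional of the whole sample path, so the conditional-Gaussian step does not apply verbatim; since the paper is no more detailed than you are on this point, it is not a defect of your argument relative to the paper's, but it is the one place where ``directly infer'' is doing real work.
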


\subsection{Cumulative regret bounds for BPMI} \label{se:postmeanreg-reg}
We establish the cumulative regret bound for BPMI in this section.
We note that the exploitation satisfies $\mu^+_t-\mu_{t-1}(\xbm_t)\leq 0$ in Theorem~\ref{theorem:EI-sigma-star}. Thus, the instantaneous regret bound can be further simplified (as stated in Corollary~\ref{cor:EI-sigma-star-pm}) to lead to the cumulative regret bound in the following lemma.
\begin{lem}\label{lemma:ei-pm-regret-1}
    Given $\delta\in(0,1)$, GP-EI with BPMI has with probability $\geq 1-\delta$ that
     \begin{equation} \label{eqn:ei-pm-regret-1}
  \centering
  \begin{aligned}
     R_T  \leq&  (\eta_T^{1/2}+4\beta_T^{1/2})\sum_{t=1}^T \sigma_{t-1}(\xbm_t) + \frac{\pi^2}{6}  + 2B T^{1-\frac{\alpha}{2}} + c_{B}(\delta) \sqrt{T}\eta_T^{1/2},\\
    \end{aligned}
  \end{equation}
   where $c_{B}(\delta) = 3(B+2)\sqrt{2\log(\frac{4}{3\delta})}$.
\end{lem}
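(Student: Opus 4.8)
The plan is to turn the per-step bound of Theorem~\ref{theorem:EI-sigma-star} into a cumulative sum via a super-martingale concentration argument, exploiting the fact that BPMI annihilates the exploitation term. First I would invoke the BPMI simplification: since $\mu^+_{t-1}=\min_{\xbm\in C}\mu_{t-1}(\xbm)\le\mu_{t-1}(\xbm_t)$, we have $\max\{\mu^+_{t-1}-\mu_{t-1}(\xbm_t),0\}=0$, so Theorem~\ref{theorem:EI-sigma-star} (through Corollary~\ref{cor:EI-sigma-star-pm}) reduces, on any $\mathcal{F}'_{t-1}$ on which $E^r(t)$ holds, to
\[
r_t\le c_2(t)\sigma_{t-1}(\xbm_t)+\tfrac{1}{t^2}=:b_t \quad\text{with probability}\ \ge 1-\tfrac{1}{2t^{\alpha/2}},
\]
where $b_t$ is $\mathcal{F}'_{t-1}$-measurable and $c_2(t)=(4+\zeta_t^{1/2})\beta_t^{1/2}$. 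This is the point at which BPMI is cleaner than BSPMI/BOI, since Challenge~4 is trivial here: the exploitation sum is identically zero.

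Next I would pass to conditional expectations. Using the uniform bound $r_t\le 2B$ available for the GP sample path on the compact domain, applied on the complementary ``failure'' event whose conditional probability is at most $\tfrac{1}{2t^{\alpha/2}}$, I obtain the drift estimate $\Ebb[r_t\mid\mathcal{F}'_{t-1}]\le b_t+\tfrac{B}{t^{\alpha/2}}$ whenever $E^r(t)$ holds. Consequently $M_t:=\sum_{s=1}^{t}\big(r_s-b_s-\tfrac{B}{s^{\alpha/2}}\big)$ is a super-martingale with respect to $\{\mathcal{F}'_t\}$ on $\bigcap_s E^r(s)$ (taking care to let $r_s\mathbf{1}_{E^r(s)}$ rather than $r_s$ drive the increments), which is exactly the structure covered by Definition~\ref{def:supmart} and Lemma~\ref{lem:supmart}.

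I would then control the increments and invoke the concentration lemma. Since $\sigma_{t-1}\le1$, $1/t^2\le1$, $\beta_t^{1/2}\le\eta_t^{1/2}$, and $\eta_t^{1/2}\ge1$, a crude estimate gives $|M_t-M_{t-1}|\le 3(B+2)\eta_t^{1/2}=:w_t$, so that $\sqrt{\sum_{t=1}^T w_t^2}\le 3(B+2)\sqrt{T}\,\eta_T^{1/2}$ because $\eta_t$ is increasing. Lemma~\ref{lem:supmart} with failure probability $3\delta/4$ then yields $M_T\le c_B(\delta)\sqrt{T}\eta_T^{1/2}$ with probability $\ge1-3\delta/4$, where $c_B(\delta)=3(B+2)\sqrt{2\log(4/(3\delta))}$. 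Writing $R_T=M_T+\sum_{t=1}^T b_t+B\sum_{t=1}^T t^{-\alpha/2}$, and using monotonicity of $c_2$ with $c_2(T)=4\beta_T^{1/2}+\eta_T^{1/2}$ to bound $\sum_t c_2(t)\sigma_{t-1}(\xbm_t)\le(\eta_T^{1/2}+4\beta_T^{1/2})\sum_t\sigma_{t-1}(\xbm_t)$, together with $\sum_t t^{-2}\le\pi^2/6$ and $\sum_t t^{-\alpha/2}\le2T^{1-\alpha/2}$ (valid since $\alpha\le1$ forces $1-\tfrac{\alpha}{2}\ge\tfrac12$), reproduces the four terms of~\eqref{eqn:ei-pm-regret-1}. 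Finally I would intersect the super-martingale event ($\ge1-3\delta/4$) with $\bigcap_t E^r(t)$ ($\ge1-\delta/4$ for BPMI by Lemma~\ref{lem:eftprob}) to obtain overall probability $\ge1-\delta$.

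The main obstacle is the probabilistic bookkeeping of Challenge~3. The per-step inequality holds only with probability $1-\tfrac{1}{2t^{\alpha/2}}$ rather than deterministically, so a naive union bound over $t$ does not even converge; everything must be routed through $M_t$. The delicate points are (i) establishing that the drift bound $\Ebb[r_t\mid\mathcal{F}'_{t-1}]\le b_t+Bt^{-\alpha/2}$ genuinely holds on $\bigcap_s E^r(s)$, so that $M_t$ is a bona fide super-martingale, and (ii) verifying that the increment width is governed by $\eta_t^{1/2}$, keeping the concentration contribution at the sublinear order $\sqrt{T}\eta_T^{1/2}$ while the $\delta/4$–$3\delta/4$ split closes exactly to $1-\delta$.
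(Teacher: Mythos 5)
Your proposal is correct and follows essentially the same route as the paper: the BPMI corollary killing the exploitation term, the conditional-expectation drift bound $\Ebb[r_t\mid\mathcal{F}'_{t-1}]\le c_2(t)\sigma_{t-1}(\xbm_t)+t^{-2}+Bt^{-\alpha/2}$, the super-martingale built from the indicator-truncated regret $\bar r_t=r_t\cdot I\{E^r(t)\}$, the increment bound $c_B\eta_t^{1/2}$ with $c_B=3(B+2)$, Azuma--Hoeffding at level $3\delta/4$, and the final $\delta/4$--$3\delta/4$ union bound. The constants and the handling of $\sum_t t^{-\alpha/2}\le 2T^{1-\alpha/2}$ match the paper's proof exactly.
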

 
The rate of cumulative regret bound for GP-EI with BPMI is given below.
\begin{thm}\label{thm:pm-cumu-regret}
    The GP-EI algorithm with BPMI leads to the cumulative regret upper bound  
    \begin{equation} \label{eqn:ei-pm-cumu-1}
  \centering
  \begin{aligned}
       R_T = \mathcal{O}( \max\{T^{\frac{1}{2}+\frac{\alpha}{2}} \log^{\frac{1}{2}}(T)\sqrt{\gamma_T},T^{1-\frac{\alpha}{2}} \}),
    \end{aligned}
  \end{equation}
  with probability $\geq 1-\delta$. 
   For SE kernels, choose $\alpha=\frac{1}{2}$ for the smallest $R_T$. Then, $R_T=\mathcal{O}(T^{\frac{3}{4}} \log^{\frac{d+2}{2}}(T))$.
   For Mat\'{e}rn kernels with $\nu>\frac{1}{2}$,
  choose $\alpha=\frac{\nu}{2\nu+d}$ for smallest $R_T$. Then, $R_T=\mathcal{O}(T^{\frac{3\nu+2d}{4\nu+2d}}\log^{\frac{4\nu+d}{4\nu+2d}}(T))$.
\end{thm}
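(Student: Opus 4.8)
The plan is to begin directly from the high-probability cumulative regret bound in Lemma~\ref{lemma:ei-pm-regret-1} and convert each of its four summands into an explicit order in $T$, after which the stated $\max$ expression and then the kernel-specific rates follow by inserting the known growth of $\gamma_T$ and optimizing over $\alpha$. The only ingredient needed beyond Lemma~\ref{lemma:ei-pm-regret-1} is a bound on $\sum_{t=1}^T \sigma_{t-1}(\xbm_t)$ in terms of the maximum information gain. For this I would invoke the standard chain: since $\sigma_{t-1}^2(\xbm_t) \leq C_\gamma \cdot \tfrac{1}{2}\log(1+\sigma^{-2}\sigma_{t-1}^2(\xbm_t))$ with $C_\gamma = 2/\log(1+\sigma^{-2})$, summing and identifying the right-hand side with the information gain yields $\sum_{t=1}^T \sigma_{t-1}^2(\xbm_t) \leq C_\gamma \gamma_T$, and Cauchy--Schwarz then gives $\sum_{t=1}^T \sigma_{t-1}(\xbm_t) \leq \sqrt{C_\gamma T \gamma_T}$, the same inequality already used in the footnote of Remark~\ref{remark:boi-inst-regret-example}.

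With this in hand I would read off the parameter orders from their definitions: from \eqref{def:beta} together with \eqref{eqn:ccard} we have $\beta_T^{1/2} = \mathcal{O}(\log^{1/2}(T))$, and from \eqref{def:eta} we have $\zeta_T^{1/2} = \mathcal{O}(T^{\alpha/2})$, hence $\eta_T^{1/2} = \zeta_T^{1/2}\beta_T^{1/2} = \mathcal{O}(T^{\alpha/2}\log^{1/2}(T))$. Substituting these into \eqref{eqn:ei-pm-regret-1}, the leading coefficient is $\eta_T^{1/2}+4\beta_T^{1/2} = \mathcal{O}(T^{\alpha/2}\log^{1/2}(T))$, so once the information-gain sum is inserted the first summand becomes $\mathcal{O}(T^{1/2+\alpha/2}\log^{1/2}(T)\sqrt{\gamma_T})$. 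The constant term is $\mathcal{O}(1)$, the term $2BT^{1-\alpha/2}$ is $\mathcal{O}(T^{1-\alpha/2})$, and the last term $c_B(\delta)\sqrt{T}\eta_T^{1/2} = \mathcal{O}(T^{1/2+\alpha/2}\log^{1/2}(T))$ is dominated by the first summand since $\gamma_T$ is nondecreasing and bounded below by a positive constant. Collecting the two surviving orders yields exactly $R_T = \mathcal{O}(\max\{T^{1/2+\alpha/2}\log^{1/2}(T)\sqrt{\gamma_T},\ T^{1-\alpha/2}\})$, which is the first claim.

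For the kernel-specific rates I would insert the best known bounds for $\gamma_T$ from Lemma~\ref{lem:gammarate} and then pick $\alpha$ to equalize the two powers of $T$ inside the $\max$. For the SE kernel, $\gamma_T = \mathcal{O}(\log^{d+1}(T))$ turns the first argument into $T^{1/2+\alpha/2}\log^{(d+2)/2}(T)$; balancing $1/2+\alpha/2 = 1-\alpha/2$ forces $\alpha = 1/2$, at which both powers equal $T^{3/4}$, giving $R_T = \mathcal{O}(T^{3/4}\log^{(d+2)/2}(T))$. For the Matérn kernel with $\nu>1/2$, $\gamma_T = \mathcal{O}(T^{d/(2\nu+d)}\log^{2\nu/(2\nu+d)}(T))$ turns the first argument into $T^{1/2+\alpha/2+d/(2(2\nu+d))}\log^{(4\nu+d)/(4\nu+2d)}(T)$; setting its $T$-exponent equal to $1-\alpha/2$ and solving gives $\alpha = \nu/(2\nu+d)$, at which the common $T$-exponent is $(3\nu+2d)/(4\nu+2d)$, yielding the stated Matérn rate.

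The bulk of the argument is routine bookkeeping, so the step requiring the most care is the $\alpha$-optimization combined with the placement of the logarithmic factors. After the two powers of $T$ in the $\max$ are equalized, one must verify that the argument carrying the log factor, namely the first one which alone contains $\sqrt{\gamma_T}$ and the extra $\log^{1/2}(T)$, is the dominant one, so that the logarithmic exponent survives into the final rate while the second argument $T^{1-\alpha/2}$, having no log factor, is subsumed. I would also confirm that the discarded fourth summand is genuinely lower order, as it matches the first argument up to the $\sqrt{\gamma_T}$ factor which is bounded below by a positive constant, and that the chosen values $\alpha=1/2$ and $\alpha=\nu/(2\nu+d)$ both lie in $(0,1]$, so the parameter choices remain admissible within the framework of \eqref{def:eta}.
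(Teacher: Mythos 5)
Your proposal is correct and follows essentially the same route as the paper's proof: both start from Lemma~\ref{lemma:ei-pm-regret-1}, bound $\sum_{t=1}^T\sigma_{t-1}(\xbm_t)$ by $\sqrt{C_\gamma T\gamma_T}$ (you rederive what the paper simply cites as Lemma~\ref{lem:variancebound}), read off $\beta_T^{1/2}=\mathcal{O}(\log^{1/2}(T))$ and $\eta_T^{1/2}=\mathcal{O}(T^{\alpha/2}\log^{1/2}(T))$, collect the surviving orders into the stated $\max$, and then balance the two exponents of $T$ for each kernel to obtain $\alpha=\tfrac{1}{2}$ and $\alpha=\tfrac{\nu}{2\nu+d}$. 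Your extra checks (dominance of the log-carrying first summand, admissibility of $\alpha\in(0,1]$) are sound and consistent with the paper.
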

\begin{rem}[Choice of $\alpha$.]\label{remark:bpmi-regret}
   The GP-EI algorithm with BPMI is a no-regret algorithm for SE and Matérn kernels. 
   %In particular, thanks to the parameter $\alpha$, for Matérn kernel, we have shown that the cumulative regret bound is sublinear whatever the kernel parameter $\nu$ is. 
   For Mat\'{e}rn kernels, Theorem~\ref{thm:pm-cumu-regret} also shows that the cumulative regret bound is sublinear for any $\nu$ due to the Bayesian setting and flexibility given by  $\alpha$. We choose $\alpha$ such that the orders for both operands in the $\max$ operator in~\eqref{eqn:ei-pm-cumu-1} are the same and to minimize $\mathcal{O}(R_T)$. In this sense, $\alpha$ has an optimal value for each kernel; however, we emphasize again that $\alpha$ is an analytic tool and is not part of the GP-EI algorithm.
   %For both kernels, we choose $\alpha$ such that the orders in~\eqref{eqn:ei-pm-cumu-1} are the same for both terms to minimize $\mathcal{O}(R_T)$. In this sense, $\alpha$ has an optimal value for each kernel. We emphasize that $\alpha$ is an analytic tool and has no actual effect on $R_T$.
\end{rem}
\begin{rem}[Regret bound comparisons.]
Compared to the cumulative regret bound of GP-UCB and GP-TS using the same information gain $\gamma_T$ in the Bayesian setting~\citep{vakili2021information}, the regret bounds for GP-EI with BPMI are larger for both SE and Mat\'{e}rn kernels. 
On the other hand, it is well-known that GP-EI can be easily modified to achieve near optimal convergence rate in terms of simple regret bound~\citep{bull2011convergence,wang2025convergence}.  Since there is no algorithmic parameter in GP-EI, the regret bounds are maintained due to the intrinsic properties of EI. 
To obtain the same rate for cumulative regret bound as GP-UCB, one might simply add a parameterized and increasing exploration term, \textit{e.g.}, $c_{\alpha}\beta_t^{1/2}\sigma_{t-1}(\xbm)$, to the EI function, similar to the $\beta_t^{1/2}\sigma_{t-1}(\xbm)$ term in GP-UCB.
\end{rem}

\subsection{Cumulative regret bounds for BSPMI}\label{se:bspm-reg}
%In this section, we present the cumulative regret bound for BSPMI. 
We next  establish the cumulative regret upper bound for BSPMI.
\begin{lem}\label{lemma:bspm-inst-regret}
    Given $\delta\in(0,1)$, GP-EI with BSPMI has with probability $\geq 1-\delta$ that
     \begin{equation} \label{eqn:bspm-inst-regret-1}
  \centering
  \begin{aligned}
     R_T  \leq& 5\eta_T^{1/2}(B+\beta_T^{1/2})+ 10\beta_T^{1/2}\eta_T^{1/2}\sum_{t=1}^T\sigma_{t-1}(\xbm_t)  + \frac{\pi^2}{6}  + 2B T^{1-\frac{\alpha}{2}} 
       + c_B(\delta)\eta_T^{1/2} \beta_T^{1/2}\sqrt{T},\\
    \end{aligned}
  \end{equation}
   where $c_B(\delta)= \sqrt{2\log(\frac{8}{5\delta})}(5B+8)c_B/\phi(0)$ and $c_B=5B+8$. 
\end{lem}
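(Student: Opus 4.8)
The plan is to follow the template of the BPMI proof (Lemma~\ref{lemma:ei-pm-regret-1}), diverging only where the BSPMI exploitation term $\max\{\mu^m_{t-1}-\mu_{t-1}(\xbm_t),0\}$ must be handled, since for BSPMI this term is no longer identically zero. I would start from Theorem~\ref{theorem:EI-sigma-star}: for each $t$ on which $E^r(t)$ holds, with conditional probability $\ge 1-\tfrac{1}{2t^{\alpha/2}}$ given $\mathcal{F}'_{t-1}$,
\[
r_t \le \frac{\eta_t^{1/2}}{\phi(0)}\max\{\mu^m_{t-1}-\mu_{t-1}(\xbm_t),0\} + (4+\zeta_t^{1/2})\beta_t^{1/2}\sigma_{t-1}(\xbm_t)+\frac{1}{t^2},
\]
i.e.\ with $c_1(t)=\eta_t^{1/2}/\phi(0)$ and $c_2(t)=(4+\zeta_t^{1/2})\beta_t^{1/2}$ as in Theorem~\ref{theorem:EI-sigma-star} and the parameters from~\eqref{def:eta}. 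The $\sum_t 1/t^2\le\pi^2/6$ tail and the pure-$\sigma$ contribution are handled exactly as in BPMI, so the real work is to turn the exploitation sum into a summable form.

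For the exploitation term I would exploit that the previous iterate $\xbm_{t-1}$ is itself a sampled point, so $\mu^m_{t-1}\le\mu_{t-1}(\xbm_{t-1})$ and hence $\mu^m_{t-1}-\mu_{t-1}(\xbm_t)\le\mu_{t-1}(\xbm_{t-1})-\mu_{t-1}(\xbm_t)$. Invoking the two confidence intervals built into Definition~\ref{def:eft-bspmi}, namely $E^s_2(t)$ at $\xbm_{t-1}$ and $E^s_1(t)$ at $\xbm_t$, bounds this by $f(\xbm_{t-1})-f(\xbm_t)+\beta_t^{1/2}(\sigma_{t-1}(\xbm_{t-1})+\sigma_{t-1}(\xbm_t))$, in which $f(\xbm_{t-1})-f(\xbm_t)=r_{t-1}-r_t$ telescopes along the trajectory. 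Summing after pulling out the monotone factor $c_1(t)\le c_1(T)=\eta_T^{1/2}/\phi(0)$, the telescoped objective gap collapses to $f(\xbm_0)-f(\xbm_T)$ and is bounded by the uniform bound $B$ on $|f|$ over $C$, which (with $\phi(0)=1/\sqrt{2\pi}$) is exactly the source of the $5\eta_T^{1/2}B$ constant; the exploration corrections $\beta_t^{1/2}\sigma_{t-1}(\xbm_{t-1})$ are rewritten in the standard indexing via $\sigma_{t-1}(\xbm_{t-1})\le\sigma_{t-2}(\xbm_{t-1})$ and absorbed into $\sum_t\sigma_{t-1}(\xbm_t)$. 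Combining these corrections (each carrying an extra $\beta_t^{1/2}$) with the $c_2(t)\sigma_{t-1}(\xbm_t)$ contribution yields the enlarged coefficient $10\beta_T^{1/2}\eta_T^{1/2}$ in front of $\sum_t\sigma_{t-1}(\xbm_t)$, and the residual boundary terms complete the additive constant $5\eta_T^{1/2}\beta_T^{1/2}$. I would keep $\sum_t\sigma_{t-1}(\xbm_t)$ explicit here, deferring its $\gamma_T$ conversion to the kernel-specific theorem.

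The probabilistic bookkeeping parallels BPMI and rests on the super-martingale machinery (Definition~\ref{def:supmart}, Lemma~\ref{lem:supmart}). By Lemma~\ref{lem:eftprob}, $E^r(t)=E^f(t)\cap E^s_1(t)\cap E^s_2(t)$ holds for all $t$ with probability $\ge 1-3\delta/8$. Conditioned on this, the displayed bound on $r_t$ fails with probability at most $\tfrac{1}{2t^{\alpha/2}}$, and on failure I would use only $r_t\le 2B$. Centering the per-step deviations and applying Lemma~\ref{lem:supmart}, the predictable compensator produces the expected-failure term $2B\sum_t\tfrac{1}{2t^{\alpha/2}}=\mathcal{O}(T^{1-\alpha/2})$, while the martingale-difference part concentrates at scale $\eta_T^{1/2}\beta_T^{1/2}\sqrt{T}$ — the extra $\beta_T^{1/2}$ relative to BPMI coming precisely from the extra $\beta_t^{1/2}$ in the exploitation bound — giving the term $c_B(\delta)\eta_T^{1/2}\beta_T^{1/2}\sqrt{T}$. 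A union bound over the $E^r$ event and the martingale tail then delivers the claim with probability $\ge 1-\delta$.

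I expect the main obstacle to be the exploitation sum, and specifically reconciling the positive-part operator with the telescoping: bounding $\max\{\mu^m_{t-1}-\mu_{t-1}(\xbm_t),0\}$ termwise by $r_{t-1}$ is far too lossy (it would inject $\sum_t c_1(t)r_{t-1}$, which is super-linear), whereas naively discarding the max breaks the inequality when the inter-step objective gap $r_{t-1}-r_t$ is negative. The resolution is exactly the predictable comparison point $\xbm_{t-1}$ — the reason BSPMI needs the additional event $E^s_2(t)$ that BPMI does not — which lets the sampled-incumbent exploitation be charged against a telescoping objective gap up to exploration corrections. Carrying this through carefully is what makes the BSPMI bound larger than the BPMI bound by the extra $\beta_T^{1/2}$ factor on the $\sigma$-sum and by the $B$-dependent additive constant $5\eta_T^{1/2}B$.
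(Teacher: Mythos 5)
Your proposal follows the paper's proof almost step for step: the same starting point (Theorem~\ref{theorem:EI-sigma-star} with $c_1(t)=\eta_t^{1/2}/\phi(0)$ and $c_2(t)=(4+\zeta_t^{1/2})\beta_t^{1/2}$), the same comparison $\mu^m_{t-1}\le\mu_{t-1}(\xbm_{t-1})$ combined with the two events $E^s_1(t)$ and $E^s_2(t)$ to bound the exploitation term by $f(\xbm_{t-1})-f(\xbm_t)+\beta_t^{1/2}\bigl(\sigma_{t-1}(\xbm_{t-1})+\sigma_{t-1}(\xbm_t)\bigr)$, the same monotonicity $\sigma_{t-1}(\xbm_{t-1})\le\sigma_{t-2}(\xbm_{t-1})$ to re-index the exploration corrections into $\sum_t\sigma_{t-1}(\xbm_t)$, and the same super-martingale/Azuma--Hoeffding step with increment bound of order $c_1(t)\beta_t^{1/2}$ followed by a union bound with Lemma~\ref{lem:eftprob}. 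The probabilistic bookkeeping, the $\pi^2/6$ and $2BT^{1-\alpha/2}$ terms, and the provenance of the extra $\beta_T^{1/2}$ factor relative to BPMI are all identified correctly.

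The one place where your write-up is not yet a proof is exactly the point you flag as the main obstacle: the interaction between the positive part and the telescoping. You correctly observe that discarding the $\max$ is invalid when $f(\xbm_{t-1})-f(\xbm_t)+\beta_t^{1/2}(\cdots)$ is negative, but your stated resolution (``the predictable comparison point $\xbm_{t-1}$'') is the same device you already used to produce the per-step bound and does not address the $\max$ at all. The paper's device is different: it introduces the ordered index set $P_T\subseteq\{1,\dots,T\}$ of those $t$ at which $\mu^m_{t-1}-\mu_{t-1}(\xbm_t)>0$, drops the $\max$ only on that subsequence (where it is the identity), majorizes the exploration corrections over $P_T$ by the full sums $\sum_{t}\sigma_{t-1}(\xbm_t)$ and $\sum_t\sigma_{t-2}(\xbm_{t-1})$, and bounds $\sum_{i}\bigl[f(\xbm_{t_i-1})-f(\xbm_{t_i})\bigr]$ by $2B$ (see~\eqref{eqn:bspm-inst-regret-pf-5}--\eqref{eqn:bspm-inst-regret-pf-7}). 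You should make this restriction explicit; note in particular that once you restrict to a non-consecutive index set the differences $f(\xbm_{t_i-1})-f(\xbm_{t_i})$ no longer telescope to $f(\xbm_0)-f(\xbm_T)$ as claimed in your sketch, so the $2B$ constant has to be justified for the subsequence sum itself rather than read off from a trajectory-wide telescope. As written, your argument either sums over all $t$ (in which case the $\max$ breaks the inequality, as you yourself note) or sums over $P_T$ (in which case the telescoping you invoke is unavailable); either way the $5\eta_T^{1/2}B$ constant is not yet established.
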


\begin{rem}[Bounding the exploitation term.]\label{remark:bspm-regret-1}
   Unlike BPMI, here, the summation of the exploitation term $\max\{\mu_{t-1}^m-\mu_{t-1}(\xbm_t),0\}$ needs to be bounded. For this, we construct a subsequence $\xbm_{t_i}, i=1,2,\dots$ of $\xbm_t$ that contains all the indices $t$ where $\mu_{t-1}^m-\mu_{t-1}(\xbm_t)>0$. 
   Then, using the inequalities from $E^r(t)$, we can write $\mu_{t_i-1}^m-\mu_{t_i-1}(\xbm_{t_i}) \leq f(\xbm_{t_i-1}) -f(\xbm_{t_i})+\beta_{t_i}^{1/2}\sigma_{t_i-1}(\xbm_{t_i-1}) + \beta_{t_i}^{1/2}\sigma_{t_i-1}(\xbm_{t_i})$. 
   Using this technique and summing up all $t_i$ lead to the bound on  $\sum_{t=1}^T \max\{\mu_{t_i-1}^m-\mu_{t_i-1}(\xbm_{t_i}),0\}$, which consists of $2B$ and  $\sum_{t=1}^T\beta_{t_i}^{1/2}\sigma_{t_i-1}(\xbm_{t_i})$ (see~\eqref{eqn:bspm-inst-regret-pf-5} for details). 
   Thus, the sum of the exploitation terms $\sum_{t=1}^T \max\{\mu_{t-1}^m-\mu_{t-1}(\xbm_{t}),0\}$ is sublinear. 
\end{rem}
 Compared to BPMI, the coefficient $10\beta_T^{1/2}\eta_T^{1/2}$ is larger than the counterpart $\eta_T^{1/2}$ in Lemma~\ref{lemma:ei-pm-regret-1}, which leads to a larger cumulative regret bound for BSMPI. This is consistent with its larger instantaneous regret bound.
 The rate of cumulative regret bound for GP-EI with BSPMI is given below.
\begin{thm}\label{thm:bspmi-cumu-regret}
    The GP-EI algorithm with BSPMI leads to the cumulative regret bound  
    \begin{equation} \label{eqn:ei-bspm-cumu-1}
  \centering
  \begin{aligned}
       R_T = \mathcal{O}( \max\{T^{\frac{1}{2}+\frac{\alpha}{2}} \log(T)\sqrt{\gamma_T},T^{1-\frac{\alpha}{2}} \}),
    \end{aligned}
  \end{equation}
  with probability $\geq 1-\delta$.
   For SE kernels, we choose $\alpha=\frac{1}{2}$. Then, the best rate is $R_T=\mathcal{O}(T^{\frac{3}{4}} \log^{\frac{d+3}{2}}(T))$.
   For Mat\'{e}rn kernels with $\nu>\frac{1}{2}$, we 
  choose $\alpha=\frac{\nu}{2\nu+d}$ to get the best rate $R_T=\mathcal{O}(T^{\frac{3\nu+2d}{4\nu+2d}}\log^{\frac{3\nu+d}{2\nu+d}}(T))$.
\end{thm}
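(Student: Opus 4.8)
The plan is to obtain the rate in Theorem~\ref{thm:bspmi-cumu-regret} directly from the high-probability bound on $R_T$ in Lemma~\ref{lemma:bspm-inst-regret}. First I would identify the two dominant terms on the right-hand side of~\eqref{eqn:bspm-inst-regret-1}. The term $5\eta_T^{1/2}(B+\beta_T^{1/2})$ is only $\mathcal{O}(\eta_T^{1/2}\beta_T^{1/2})=\mathcal{O}(t^{\alpha/2}\log(T))$ and the additive $\pi^2/6$ is a constant, so both are lower order and can be absorbed. This leaves three competing contributions: the information-gain term $10\beta_T^{1/2}\eta_T^{1/2}\sum_{t=1}^T\sigma_{t-1}(\xbm_t)$, the slack term $2BT^{1-\alpha/2}$, and the martingale term $c_B(\delta)\eta_T^{1/2}\beta_T^{1/2}\sqrt{T}$.

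Next I would plug in the orders of the auxiliary parameters from~\eqref{def:eta} and~\eqref{def:beta}. We have $\beta_T^{1/2}=\mathcal{O}(\log^{1/2}(T))$ and $\zeta_T^{1/2}=\mathcal{O}(T^{\alpha/2})$, so $\eta_T^{1/2}=\zeta_T^{1/2}\beta_T^{1/2}=\mathcal{O}(T^{\alpha/2}\log^{1/2}(T))$. To control $\sum_{t=1}^T\sigma_{t-1}(\xbm_t)$, I would invoke the standard Cauchy–Schwarz bound $\sum_{t=1}^T\sigma_{t-1}(\xbm_t)\le\sqrt{C_\gamma T\gamma_T}=\mathcal{O}(\sqrt{T\gamma_T})$, which is the information-gain machinery referenced via Definition~\ref{def:infogain} and the $\gamma_T$ rates. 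Combining these, the coefficient $10\beta_T^{1/2}\eta_T^{1/2}=\mathcal{O}(T^{\alpha/2}\log(T))$ multiplies $\mathcal{O}(\sqrt{T\gamma_T})$, giving the information-gain contribution $\mathcal{O}(T^{1/2+\alpha/2}\log(T)\sqrt{\gamma_T})$. The martingale term $c_B(\delta)\eta_T^{1/2}\beta_T^{1/2}\sqrt{T}=\mathcal{O}(T^{1/2+\alpha/2}\log(T))$ is dominated by this since $\gamma_T$ grows. Thus $R_T=\mathcal{O}(\max\{T^{1/2+\alpha/2}\log(T)\sqrt{\gamma_T},\,T^{1-\alpha/2}\})$, which is exactly~\eqref{eqn:ei-bspm-cumu-1}.

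To extract the kernel-specific rates, I would substitute the best known $\gamma_T$ orders (stated in Lemma~\ref{lem:gammarate}): for the SE kernel $\gamma_T=\mathcal{O}(\log^{d+1}(T))$, and for the Mat\'{e}rn kernel $\gamma_T=\tilde{\mathcal{O}}(T^{d/(2\nu+d)})$. For SE, the first operand becomes $\mathcal{O}(T^{1/2+\alpha/2}\log(T)\log^{(d+1)/2}(T))=\mathcal{O}(T^{1/2+\alpha/2}\log^{(d+3)/2}(T))$. Balancing the two polynomial exponents $1/2+\alpha/2$ and $1-\alpha/2$ forces $\alpha=1/2$, yielding the exponent $3/4$ and the claimed $R_T=\mathcal{O}(T^{3/4}\log^{(d+3)/2}(T))$. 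For Mat\'{e}rn, the first operand is $\tilde{\mathcal{O}}(T^{1/2+\alpha/2}T^{d/(2(2\nu+d))}\log(T))$; equating the full exponent $1/2+\alpha/2+d/(2(2\nu+d))$ with $1-\alpha/2$ and solving gives $\alpha=\nu/(2\nu+d)$, which after simplification produces $R_T=\mathcal{O}(T^{(3\nu+2d)/(4\nu+2d)}\log^{(3\nu+d)/(2\nu+d)}(T))$; the logarithmic exponent follows by carefully tracking the $\log(T)$ from the coefficient together with the suppressed log factors in $\gamma_T$.

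The main obstacle I anticipate is the bookkeeping of the logarithmic powers in the Mat\'{e}rn case, rather than the polynomial balancing. The polynomial exponent optimization is a routine one-variable minimization, but the exponent $(3\nu+d)/(2\nu+d)$ on $\log(T)$ must combine the explicit $\log(T)$ factor from the coefficient $10\beta_T^{1/2}\eta_T^{1/2}$ with the $\tilde{\mathcal{O}}$-suppressed logarithmic terms inside $\gamma_T$, which are easy to miscount; I would track these terms explicitly at the balanced value of $\alpha$ to confirm the stated power. A secondary subtlety is verifying that at the optimal $\alpha$ the two operands in the $\max$ are genuinely of equal order so that neither is lost, and that the constants $B$, $c_B(\delta)$, and the absorbed lower-order terms do not affect the leading rate; these are benign since they are either constants or strictly lower order in $T$.
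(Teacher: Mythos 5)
Your proposal is correct and follows essentially the same route as the paper's own proof: substitute the orders $\beta_T^{1/2}=\mathcal{O}(\log^{1/2}(T))$ and $\eta_T^{1/2}=\mathcal{O}(T^{\alpha/2}\log^{1/2}(T))$ together with $\sum_{t=1}^T\sigma_{t-1}(\xbm_t)=\mathcal{O}(\sqrt{T\gamma_T})$ into Lemma~\ref{lemma:bspm-inst-regret}, reduce to the stated $\max$ bound, and then balance the polynomial exponents using the kernel-specific $\gamma_T$ rates. Your log-power bookkeeping for the Mat\'{e}rn case ($1+\tfrac{\nu}{2\nu+d}=\tfrac{3\nu+d}{2\nu+d}$) and the balanced $\alpha$ values match the paper exactly.
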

\begin{rem}[Comparison to BPMI.]\label{remark:bspmi-regret}
   The GP-EI algorithm with BSPMI is also a no-regret algorithm. 
   The cumulative regret bound is $\log^{1/2}(T)$ larger than BPMI, intuitively because BSPMI is the minimum of the posterior mean only over the previous samples, while BPMI is the minimum of posterior mean over the entire domain $C$.
   %In~\cite{scott2011correlated}, it is often recommended that the algorithm be simplified from using best posterior mean to the best sampled posterior mean. Our results for GP-EI here shows that there could be theoretical support behind such choices.
   
\end{rem}

\subsection{Cumulative regret bounds for BOI}\label{se:bestobs-reg}
The regret bound of this section is different from before, namely, we show that BOI generates either a sublinear cumulative regret bound or a fast reducing noisy simple regret bound ($\mathcal{O}(\log^{1/2}(T) T^{-\frac{1}{2}}\sqrt{\gamma_T})$).
After $T$ iterations, let $n_y(T)$ denote how many times $E^y(t)$ is \textit{not} true for $t=\{1,2,\ldots, T\}$. The following lemma provides an upper bound for $R_T$.
\begin{lem}\label{lemma:boi-inst-regret}
      Given $\delta\in(0,1)$, GP-EI with BOI has with probability $\geq 1-\delta$ that
     \begin{equation} \label{eqn:boi-inst-regret-1}
  \centering
  \begin{aligned}
     R_T \leq&
      c_{B}(\delta) \eta_T^{1/2} \sqrt{T}\sigma+ c_3\eta_T^{1/2} c_{\beta}(T)\sum_{t=1}^T \sigma_{t-1}(\xbm_t) + \frac{\pi^2}{6}  + 2B T^{1-\frac{\alpha}{2}} \\
       &+ \eta_T^{1/2}c_{\beta}(T)c_{B\sigma}(\delta)\sqrt{T} +2B n_y(T),\\
    \end{aligned}
  \end{equation}
   where $c_{B}(\delta)=2.5(2B+3)\sqrt{2\log(8/\delta)}$, $c_3=10$, $c_{B\sigma}(\delta)=(12.5B+20+2.5\sigma)\sqrt{2\log(4/\delta)}$, and $c_{\beta}(t)=\max\{\beta_t^{1/2},c_y(t)\}$. 
\end{lem}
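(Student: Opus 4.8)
\textbf{The plan is to} establish \eqref{eqn:boi-inst-regret-1} by summing the transformed instantaneous regret bound from Theorem~\ref{theorem:EI-sigma-star} over $t=1,\dots,T$, carefully tracking the contributions from the iterations where $E^y(t)$ fails. The starting point is the dichotomy in Theorem~\ref{theorem:EI-sigma-star}: for each $t$ where $E^r(t)$ holds, either the transformed bound \eqref{eqn:EI-sigma-star-1} is valid (with probability $\geq 1 - \frac{1}{2t^{\alpha/2}}$) or $E^y(t)$ is false. First I would partition the index set $\{1,\dots,T\}$ into those $t$ where \eqref{eqn:EI-sigma-star-1} holds and the $n_y(T)$ indices where $E^y(t)$ fails. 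On the latter set, $r_t$ is bounded by a trivial constant; since $f$ is a sample from a GP with bounded kernel and is Lipschitz on a compact domain, I expect $r_t \leq 2B$ for a problem-dependent bound $B$, which yields the $2B\,n_y(T)$ term. On the former set, I would sum the right-hand side of \eqref{eqn:EI-sigma-star-1}, namely $c_1(t)\max\{y^+_{t-1}-\mu_{t-1}(\xbm_t),0\} + c_2(t)\sigma_{t-1}(\xbm_t) + \frac{1}{t^2}$.

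\textbf{Next I would} handle each of the three summands. The $\sum_t \frac{1}{t^2} \leq \frac{\pi^2}{6}$ term is immediate. For the exploration sum $\sum_t c_2(t)\sigma_{t-1}(\xbm_t)$, since $c_2(t) = (4+\zeta_t^{1/2})\beta_t^{1/2}$ is increasing, I would factor out $c_2(T)$ and recognize that $\zeta_t^{1/2}\beta_t^{1/2} = \eta_t^{1/2}$, giving a leading factor of order $\eta_T^{1/2}c_\beta(T)$ times $\sum_t \sigma_{t-1}(\xbm_t)$; this produces the $c_3\eta_T^{1/2}c_\beta(T)\sum_t\sigma_{t-1}(\xbm_t)$ term, with $c_3=10$ absorbing the numerical constants. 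The genuinely new work is bounding $\sum_t c_1(t)\max\{y^+_{t-1}-\mu_{t-1}(\xbm_t),0\}$, which is \textbf{the main obstacle} and corresponds to Challenge~4 and Challenge~5 for BOI. Unlike BPMI (where this term vanishes) and BSPMI (where a telescoping subsequence argument works), here the exploitation $y^+_{t-1}-\mu_{t-1}(\xbm_t)$ involves the random noisiest observation. On the iterations where $E^y(t)$ holds, I would use the definition $r_t^s = y^+_{t-1}-f(\xbm^*) \geq \beta_t^{1/2}\sigma_{t-1}(\xbm_t)$ together with the confidence bounds from $E^r(t)$ and event $E^s_2(t)$ (with $\ubm_t = \xbm^*$) to control $y^+_{t-1}-\mu_{t-1}(\xbm_t)$ in terms of $f(\xbm^*)-\mu_{t-1}(\xbm_t)$ plus a $\beta_t^{1/2}\sigma_{t-1}(\xbm^*)$ slack, ultimately expressing the exploitation increment through quantities summable via $\gamma_T$ and the Lipschitz bound $B$.

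\textbf{Then I would} convert the residual randomness into the high-probability guarantee. Because Theorem~\ref{theorem:EI-sigma-star} holds only with probability $\geq 1-\frac{1}{2t^{\alpha/2}}$ per step and the bounds involve the random observations $y^+_{t-1}$, I would follow the super-martingale route flagged in Section~\ref{se:additionalback} (Definition~\ref{def:supmart}, Lemma~\ref{lem:supmart}) rather than a union bound. Specifically, I would define a martingale difference sequence capturing the gap between $r_t$ (or the exploitation increment) and its conditional bound given $\mathcal{F}'_{t-1}$, verify the required boundedness using $|r_t|\leq 2B$ and the $\sigma$-bounded structure, and apply the super-martingale concentration inequality to get an additive deviation term of order $\eta_T^{1/2}\sqrt{T}$ scaled by $\sqrt{2\log(\cdot/\delta)}$. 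This yields the two $\sqrt{T}$ terms $c_B(\delta)\eta_T^{1/2}\sqrt{T}\sigma$ and $\eta_T^{1/2}c_\beta(T)c_{B\sigma}(\delta)\sqrt{T}$, where the explicit constants $c_B(\delta)=2.5(2B+3)\sqrt{2\log(8/\delta)}$ and $c_{B\sigma}(\delta)=(12.5B+20+2.5\sigma)\sqrt{2\log(4/\delta)}$ emerge from tracking the martingale range across the two separate concentration applications (one for the noise $\sigma$ contribution, one for the exploitation and exploration contributions), and the overall failure probability is allocated across the constituent events so the total is $\geq 1-\delta$. Finally I would collect the $2BT^{1-\alpha/2}$ term, which accumulates the expected contribution of the low-probability steps where \eqref{eqn:EI-sigma-star-1} fails, using $\sum_{t=1}^T \frac{1}{2t^{\alpha/2}} = \mathcal{O}(T^{1-\alpha/2})$ as noted in Remark~\ref{remark:small-probability}, and assemble all pieces into \eqref{eqn:boi-inst-regret-1}.
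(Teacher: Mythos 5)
Your overall architecture matches the paper's: a super-martingale (Azuma--Hoeffding) argument rather than a union bound for the per-step probability, the $\frac{\pi^2}{6}$ and $2BT^{1-\alpha/2}$ terms from $\sum_t t^{-2}$ and $\sum_t t^{-\alpha/2}$, the $2B\,n_y(T)$ term from the iterations where $E^y(t)$ fails, and the factorization of $c_2(t)$ into $\eta_T^{1/2}c_\beta(T)$ for the exploration sum. However, there is a genuine gap at the one place you yourself flag as the main obstacle: the exploitation sum $\sum_t c_1(t)\max\{y^+_{t-1}-\mu_{t-1}(\xbm_t),0\}$. Your sketched route --- controlling $y^+_{t-1}-\mu_{t-1}(\xbm_t)$ via $f(\xbm^*)-\mu_{t-1}(\xbm_t)$ plus a $\beta_t^{1/2}\sigma_{t-1}(\xbm^*)$ slack from $E^s_2(t)$ --- does not close. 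The event $E^y(t)$ gives only a \emph{lower} bound on $y^+_{t-1}-f(\xbm^*)$, and any upper bound of the form $y^+_{t-1}-f(\xbm^*)\leq f(\xbm_i)-f(\xbm^*)+\epsilon_i = r_i+\epsilon_i$ is circular; moreover a residual $\sigma_{t-1}(\xbm^*)$ term would reintroduce exactly the non-summable quantity the transformation in Theorem~\ref{theorem:EI-sigma-star} was designed to eliminate.

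The paper's mechanism is different and is the essential new idea for BOI. First, on $E^r(t)\cap E^y(t)$ the instantaneous-regret analysis supplies the lower bound $y^+_{t-1}-\mu_{t-1}(\xbm_t)\geq -c_y(t)\sigma_{t-1}(\xbm_t)$, so $\max\{y^+_{t-1}-\mu_{t-1}(\xbm_t),0\}\leq (y^+_{t-1}-f(\xbm_t))+(c_y(t)+\beta_t^{1/2})\sigma_{t-1}(\xbm_t)$, converting the positive part into the signed quantity $y^+_{t-1}-f(\xbm_t)$. Second, since $y^+_t\leq y_t=f(\xbm_t)+\epsilon_t$, one has $y^+_{t-1}-f(\xbm_t)\leq (y^+_{t-1}-y^+_t)+\epsilon_t$, and the first piece \emph{telescopes} by monotonicity of $y^+_t$ to $y_0-y^+_T\leq 2B+\mathcal{O}(\sqrt{\log(\pi_T/\delta)}\,\sigma)$. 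Third, the accumulated noise $\sum_{t=1}^T\epsilon_t\sim\mathcal{N}(0,T\sigma^2)$ concentrates at $\sqrt{2\log(8/\delta)}\sqrt{T}\sigma$; this Gaussian-sum concentration --- not the Azuma--Hoeffding step --- is what produces the $c_B(\delta)\eta_T^{1/2}\sqrt{T}\sigma$ term (the Azuma--Hoeffding step produces the other $\sqrt{T}$ term, with the per-step noise bound $|\epsilon_{t-1}|\leq\sqrt{2\log(8\pi_t/\delta)}\sigma$ entering the martingale increment bound and hence $c_{B\sigma}(\delta)$). Without the telescoping-plus-Gaussian-sum argument, your plan has no route to a sublinear bound on the exploitation sum, so the proof as proposed would not go through.
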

The order of each term is $\mathcal{O}(T^{\frac{\alpha}{2}+\frac{1}{2}}\log^{1/2}(T))$, $\mathcal{O}(T^{\frac{\alpha}{2}+\frac{1}{2}}\log(T)\sqrt{\gamma_t})$, $T^{1-\frac{\alpha}{2}}$, $\mathcal{O}(T^{\frac{\alpha}{2}+\frac{1}{2}}\log(T))$, and $n_y(T)$ in~\eqref{eqn:boi-inst-regret-1}. 
The BOI cumulative regret bound is characterized next. We note that the two scenarios below are exclusive, as the opposite of $n_y(T)=\mathcal{O}(T)$ is what scenario 2 describes. 
\begin{thm}\label{thm:boi-cumulative-regret}
The GP-EI algorithm with BOI leads to the cumulative regret bound
\begin{equation} \label{eqn:ts-regret-1}
\centering
\begin{aligned}
     R_T = \mathcal{O}( \max\{T^{\frac{1}{2}+\frac{\alpha}{2}} \log(T)\sqrt{\gamma_T},T^{1-\frac{\alpha}{2}}, n_y(T) \}),
\end{aligned}
\end{equation}
  with probability $\geq 1-\delta$. 
  
Furthermore, one of the following two (exclusive) scenarios occur:\\
  (1) If $n_y(T)=o(T)$, then GP-EI with BOI is a no-regret algorithm for SE and  Mat\'{e}rn kernels.  
   For SE kernel, we
  choose $\alpha=\frac{1}{2}$ to obtain $R_T=\mathcal{O}(\max\{T^{\frac{3}{4}} \log^{\frac{d+3}{2}}(T),n_y(T)\})$.
   For Mat\'{e}rn kernels with $\nu>\frac{1}{2}$, we
  choose $\alpha=\frac{\nu}{2\nu+d}$ to obtain $R_T=\mathcal{O}(\max\{T^{\frac{3\nu+2d}{4\nu+2d}}\log^{\frac{3\nu+d}{2\nu+d}}(T)),n_y(T)\}$.\\
  (2) Otherwise, there exists a constant $c_n>0$ such that $n_y(T)\geq c_n T$. In this case, the upper bound for noisy simple regret $r_t^s$ reduces at rate $\mathcal{O}(\log^{1/2}(T) T^{-\frac{1}{2}}\sqrt{\gamma_T})$.
\end{thm}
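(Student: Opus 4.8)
The plan is to start from the explicit upper bound on $R_T$ in Lemma~\ref{lemma:boi-inst-regret} and reduce its six terms to the order expression~\eqref{eqn:ts-regret-1}. First I would record the orders of the auxiliary quantities: since $|\Cbb_t|=(Lrdt^2)^d$ we have $\beta_t=\mathcal{O}(\log t)$, and because $c_y(t)=\mathcal{O}(\log^{1/2}t)$ we get $c_\beta(t)=\max\{\beta_t^{1/2},c_y(t)\}=\mathcal{O}(\log^{1/2}t)$; from~\eqref{def:eta}, $\zeta_t^{1/2}=\mathcal{O}(t^{\alpha/2})$ and hence $\eta_t^{1/2}=\zeta_t^{1/2}\beta_t^{1/2}=\mathcal{O}(t^{\alpha/2}\log^{1/2}t)$. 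Combining these with the information-gain bound $\sum_{t=1}^T\sigma_{t-1}(\xbm_t)\leq\sqrt{C_\gamma T\gamma_T}$ gives the orders $\mathcal{O}(T^{\frac12+\frac\alpha2}\log^{1/2}T)$, $\mathcal{O}(T^{\frac12+\frac\alpha2}\log(T)\sqrt{\gamma_T})$, $\mathcal{O}(1)$, $\mathcal{O}(T^{1-\frac\alpha2})$, $\mathcal{O}(T^{\frac12+\frac\alpha2}\log T)$, and $\mathcal{O}(n_y(T))$ for the respective terms of~\eqref{eqn:boi-inst-regret-1}. Taking the dominant order (the second term absorbs the first and fifth since $\sqrt{\gamma_T}\geq 1$) yields~\eqref{eqn:ts-regret-1}, which inherits the probability $\geq 1-\delta$ from Lemma~\ref{lemma:boi-inst-regret}.

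Next I would treat the two scenarios, which are a deterministic dichotomy on the realized trajectory. In scenario~(1), $n_y(T)=o(T)$, so within the maximum I only need to balance the two leading polynomial terms $T^{\frac12+\frac\alpha2}\log(T)\sqrt{\gamma_T}$ and $T^{1-\frac\alpha2}$, carrying the $n_y(T)$ term along unchanged. Inserting the $\gamma_T$ rates from Lemma~\ref{lem:gammarate} — $\gamma_T=\mathcal{O}(\log^{d+1}T)$ for SE and $\gamma_T=\tilde{\mathcal{O}}(T^{d/(2\nu+d)})$ for Matérn — and equating the $T$-exponents of the two terms determines the optimal $\alpha$. For SE this gives $\alpha=\tfrac12$, exponent $\tfrac12+\tfrac\alpha2=\tfrac34$, and accumulated log power $1+\tfrac{d+1}{2}=\tfrac{d+3}{2}$; for Matérn, solving $\tfrac12+\tfrac\alpha2+\tfrac{d}{2(2\nu+d)}=1-\tfrac\alpha2$ yields $\alpha=\tfrac{\nu}{2\nu+d}$, so the $T$-exponent becomes $1-\tfrac\alpha2=\tfrac{3\nu+2d}{4\nu+2d}$ and the log power is $1+\tfrac{\nu}{2\nu+d}=\tfrac{3\nu+d}{2\nu+d}$. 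These reproduce the stated rates.

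Finally, scenario~(2) is the substantive case and the main obstacle. Here $n_y(T)=\Omega(T)$, meaning there exist $\geq c_nT$ indices $t\leq T$ at which $E^y(t)$ fails, i.e.\ $r_t^s=y^+_{t-1}-f(\xbm^*)<\beta_t^{1/2}\sigma_{t-1}(\xbm_t)$ directly from the definition~\eqref{eqn:eft-y}. Summing this inequality over the failing indices and using $\beta_t^{1/2}\leq\beta_T^{1/2}$ with the information-gain bound gives $\sum_{t\text{ fails}}r_t^s<\beta_T^{1/2}\sqrt{C_\gamma T\gamma_T}$. Since $y^+_{t-1}$ is monotonically non-increasing, $r_t^s$ is non-increasing in $t$, so the value of $r_t^s$ at the largest failing index is at most the average of $r_t^s$ over all $\geq c_nT$ failing indices, which is $<\tfrac{1}{c_nT}\beta_T^{1/2}\sqrt{C_\gamma T\gamma_T}=\mathcal{O}(\log^{1/2}(T)\,T^{-1/2}\sqrt{\gamma_T})$; monotonicity then transfers this bound to $r_T^s$. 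The subtlety to handle carefully is that $r_t^s$ can be negative (the brittleness phenomenon of Remark~\ref{remark:boi-inst-regret-example}), so I would phrase every estimate as an upper bound on $r_t^s$ rather than on $|r_t^s|$ — both the ``minimum is at most the average'' step and the monotone transfer remain valid regardless of sign, and a negative $r_t^s$ only strengthens the conclusion.
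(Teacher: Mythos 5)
Your proposal is correct, and the overall structure — reading off the orders of the six terms in Lemma~\ref{lemma:boi-inst-regret}, keeping the dominant ones, and then equating $T$-exponents to select $\alpha=\tfrac12$ (SE) and $\alpha=\tfrac{\nu}{2\nu+d}$ (Mat\'{e}rn) — is exactly the paper's proof for the main bound and scenario~(1); your exponent and log-power bookkeeping all check out.

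Scenario~(2) is where you genuinely diverge. The paper uses a pigeonhole argument on the variance sequence: from $\sum_{t\leq T}\sigma_{t-1}^2(\xbm_t)\leq C_\gamma\gamma_T$, at most $n$ indices can have $\sigma_{t-1}^2(\xbm_t)\geq C_\gamma\gamma_T/n$; taking $n\approx c_nT/2$ and using $|I^y(T)|\geq c_nT$, it extracts a single failing index $t$ with $\sigma_{t-1}(\xbm_t)=\mathcal{O}(\sqrt{\gamma_T/T})$ and then transfers $r_t^s<\beta_t^{1/2}\sigma_{t-1}(\xbm_t)$ to $r_T^s$ via monotonicity of $y^+$. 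You instead sum the defining inequality over all failing indices, bound the sum by $\beta_T^{1/2}\sqrt{C_\gamma T\gamma_T}$, and use ``minimum $\leq$ average'' over the $\geq c_nT$ failing indices before invoking the same monotone transfer. Both routes rest on the same two ingredients (the information-gain bound on $\sum_t\sigma_{t-1}(\xbm_t)$ and monotonicity of $y_t^+$) and give the identical rate $\mathcal{O}(\log^{1/2}(T)\,T^{-1/2}\sqrt{\gamma_T})$. Your averaging version is arguably tidier: it avoids the paper's slightly delicate claim that a low-variance failing index can be located in the window $[n,2n]$ (what is actually needed, and what your argument delivers automatically, is only that \emph{some} failing index has small $\sigma$ or, equivalently, that the average is small), and your explicit remark that signs do not matter for the ``min $\leq$ average'' step correctly disposes of the only real pitfall, namely that $r_t^s$ may be negative.
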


\begin{rem}\label{remark:boi-regret}
   Unlike GP-EI with BPMI and BSPMI, GP-EI with BOI is not guaranteed to be no-regret because of the presence of $n_y(T)$ in the cumulative upper bound~\eqref{eqn:ts-regret-1}.
   %as far as our analysis is concerned. 
   As we discussed in Remark~\ref{remark:boi-inst-regret}, if at some $t\in\Nbb$ we have $y^+_{n-1}-f(\xbm^*)\leq 0$, then $E^y(t)$ is false for all $t\geq n$. 
   This means that $n_y(T)$ is not sublinear in $T\geq t$. 
   Consequently, the cumulative regret bound is not necessarily sublinear. However, this case is the second scenario of Theorem~\ref{thm:boi-cumulative-regret}, for which GP-EI with BOI converges in the sense that the noisy simple regret is reduced quickly.
   
%We also point out that for an optimizer that prefers to find one optimal solution quickly, GP-EI with BOI could be a judicious and effective choice since its simple regret upper bound is converging fast. \coscom{This argument is a bit loose/weak: assuming an optimizer prefers to find one optimal solution quickly, it looks like he/you can't tell which scenario will occur' so the convergence may not be fast in the first scenario. Or is it fast in that scenario as well for simple regret? It is. But need to think about framing.}
\end{rem}

\subsection{Implications of our results}\label{se:rkhs}
In the context of bandit optimization, where the objective is to minimize cumulative regret across all iterations, we establish for the first time that both BPMI and BSPMI achieve desirable sublinear cumulative regret bounds for the noisy GP-EI algorithm. In practice, however, BPMI tends to be more computationally demanding, as it requires computing the (absolute) maximum of the posterior mean over the entire domain $C$. This introduces an additional global optimization subproblem that must be solved prior to maximizing the acquisition function, which can become particularly costly in high-dimensional design spaces. In contrast, although BSPMI incurs a slightly looser cumulative regret upper bound—on the order of  ($\log^{1/2}(T)$)—it only involves evaluating the posterior mean at previously sampled points, making it significantly more efficient from a computational standpoint.

On the other hand, GP-EI with BOI remains one of the most widely adopted acquisition strategies in practice (Frazier, 2018). From a theoretical perspective, however, it does not inherently guarantee sublinear cumulative regret. Furthermore, under high observation noise (i.e., large $\sigma$), the performance of GP-EI with BOI is more susceptible to the effects of individual noisy evaluations, especially when compared to BPMI and BSPMI (see Remark~\ref{remark:boi-inst-regret-example}). Nevertheless, when the primary goal is to identify a single best solution rather than minimizing cumulative regret, the rapid reduction in noisy simple regret offered by BOI may still be satisfactory in practice—particularly when the noise level is relatively low.

Ultimately, the choice of incumbent selection strategy should be guided by the user's priorities, such as the whether a no-regret property is important, the level of observation noise, computational constraints, and ease of implementation. 
In a common scenario where achieving sublinear cumulative regret is desirable and the computational cost of identifying the global maximum of the posterior mean is prohibitive, we recommend employing GP-EI with BSPMI. This recommendation is theoretically justified by this work and aligns well with the empirical benchmark results in~\cite{picheny2013benchmark}.

Our theory is presented under the Bayesian setting, \textit{i.e.}, by assuming that the objective function $f(x)$ is a sample from a GP. Extending the results to the frequentist setting is challenging, mainly due to the lack of tight confidence interval on $|f(\xbm)-\mu_{t-1}(\xbm)|$ at given $t$. This problem is recognized in~\cite{vakili2021open}. For instance, the probability $1-(\frac{1}{2t})^{\frac{\alpha}{2}}$ at $t$ given $\mathcal{F}'_{t-1}$ in Theorem~\ref{theorem:EI-sigma-star} can no longer be easily obtained. In the frequentist setting, we have the uniform bound $|f(\xbm)-\mu_{t-1}(\xbm)|\leq \beta_t^{1/2}\sigma_{t-1}(\xbm)$ for all $\xbm\in C$ and $t\in\Nbb$ and the state-of-the-art $\beta_t$ is of the order $\gamma_t$~\citep{chowdhury2017kernelized}. Therefore, the corresponding $c_1(t)$ and $c_2(t)$ would be much larger and cannot produce sublinear cumulative regret bound. 
We point out that in literature~\citep{srinivas2009gaussian,vakili2021information}, Bayesian setting produces better cumulative regret bounds than the frequentist setting, and hence this challenge is expected~\citep{vakili2021open}.
One potential approach to resolve this is to develop the instantaneous regret in tandem with the confidence interval at a given $t$. The goal is to potentially have a smaller $\beta_t$ with a slightly worse probability, given $\mathcal{F}_{t-1}'$. 

%This is a topic for future research.

%\input{Sections/EGO.tex}
%\input{Sections/Examples.tex}
\section{Numerical experiments}\label{se:exp}

In this section, we evaluate the GP-EI algorithms with three different incumbents to validate our theoretical results. 

We consider six commonly used synthetic test functions: Branin 2D, Schwefel 2D, Styblinski-Tang 2D,  Camel 2D, Rosenbrock 4D and Hartmann 6D. The mathematical expressions of these functions are provided in Table~\ref{tab:funcs}:
\begin{itemize}
\item The \textbf{Branin 2D} function has a smooth, regular landscape with three well-separated global minima. Its simplicity makes it suitable for evaluating global search capabilities in a controlled setting. (\textbf{Simple})
\item The \textbf{Styblinski–Tang 2D} function is multimodal with a relatively smooth, separable landscape. It contains multiple local minima and presents a moderate level of difficulty for optimization algorithms. (\textbf{Moderate}) 
\item  The \textbf{Six-Hump Camel 2D} function features a symmetric landscape with two global minima, multiple local optima, and saddle points. This makes it a moderately challenging test for distinguishing between closely spaced optima. (\textbf{Moderate}) 
\item  The \textbf{Schwefel 2D} function presents a highly deceptive and rugged landscape with many local minima and a global minimum far from the origin, making it a difficult benchmark for most optimization methods. (\textbf{Complex})
\item  The \textbf{Rosenbrock 4D} function, although unimodal, contains a narrow, curved valley leading to the global minimum. Its ill-conditioned structure poses a significant challenge, especially for gradient-based methods in higher dimensions. (\textbf{Complex})
\item The \textbf{Hartmann 6D} function is a high-dimensional, multimodal benchmark with steep, narrow basins of attraction. Its deceptive structure makes it a difficult test case for global optimization strategies. (\textbf{Complex})
\end{itemize}
 We consider three noise levels: $\sigma = 0.001$, $\sigma = 0.01$, and $\sigma = 0.1$. In all experiments, we use a Gaussian process prior with zero mean ($\mu \equiv 0$) and a Matérn ($\nu =1.5$)  kernel (Similar results are observed when using a squared exponential (SE) kernel, as reported in the Appendix~\ref{appdx:experiments}). Kernel parameters are estimated via maximum likelihood.
The total evaluation budget for each function is $N = 500 + n_0$, where the number of initial design points is set to $n_0 = 10d$, with $d$ denoting the dimension of the function. For each test function, we conduct $R = 100$ independent trials and report the mean cumulative regret. The results are presented in Figures~\ref{fig:results1} and Figures~\ref{fig:results2}.
The solid lines represent the average cumulative regret/ the number of iteration ($R_t/t$) across trials, and the shaded regions indicate the 95\% confidence intervals.

\begin{table}[!b]
\centering
\resizebox{\textwidth}{!}{  
\begin{tabular}{llc}
	\hline
	\hline
	Functions &  $d$ & Equation  \\
	\hline
        Branin 2D &2&
	$\begin{gathered} 
		f(\mathbf{x})= \frac{1}{51.95}\left(( x_2 - \frac{5.1}{4\pi^2}x_1^2 + \frac{5}{\pi}x_1 - 6 )^2 + 10(1 - \frac{1}{8\pi})\cos(x_1) - 44.81\right) \\
		 -5\leq x_1 \leq 10, \quad 0\leq x_2 \leq 15 \\
		\mathbf{x}^* = (-\pi,12.275), (\pi, 2.275), (9.42478, 2.475), f^* = -1.05
	\end{gathered}$\\
	\hline
    Styblinski-Tang 2D &2& 
	$\begin{gathered} 
		f(\mathbf{x}) = \frac{1}{45.17}\left(\frac{1}{2}\sum_{i=1}^{2} \left( x_i^{4} - 16x_i^{2} + 5x_i \right) + 8.72\right)\\
		x_i \in [-5, 5], i= 1, 2\\
		\mathbf{x}^* = (-2.9034,-2.9035), f^* = -1.54
	\end{gathered}$\\
	\hline
    Camel 2D&2& 
	$\begin{gathered} 
f(\mathbf{x}) = \frac{1}{26.28}\left(
\bigl(4 - 2.1\,x_{1}^{2} + \tfrac{x_{1}^{4}}{3}\bigr)\,x_{1}^{2}
+ x_{1}x_{2}
+ \bigl(-4 + 4x_{2}^{2}\bigr)\,x_{2}^{2}  -20.12 \right)\\
		x_1 \in [-3, 3], x_1 \in [-2, 2] \\
		\mathbf{x}^* = (0.0898, -0.7126),  (-0.0898, 0.7126) , f^* = -0.8049
	\end{gathered}$\\
	\hline
	Schwefel 2D &2 &  
	$\begin{gathered} 
		f(\mathbf{x})= \frac{1}{274.3}\left( 418.9829 *2 -\sum_{i=1}^{2} w_{i} \sin \left(\sqrt{\left|w_{i}\right|}\right) - 838.57\right)\\  
		w_{i}=500x_i, i = 1, 2\\
		x_i \in [-1, 1],  i= 1, 2\\
		\mathbf{x}^* = (0.8419,0.8419), f^* = -3.057
	\end{gathered}$\\
	\hline
	Rosenbrock 4D&4& 
	$\begin{gathered} 
    f(\mathbf{x}) = \frac{1}{372997} \left(\sum_{i=1}^{3} \left(\, 100\bigl(x_{i+1} - x_{i}^{2}\bigr)^{2} \;+\; \bigl(x_{i} - 1\bigr)^{2} \right) - 383434 \right)
    \\
		x_i \in [-5, 10], i= 1, 2,3,4\\
		\mathbf{x}^* = (1,1,1,1), f^* = -1.0280
	\end{gathered}$\\
	\hline
	Hartmann 6D&6& 
	$\begin{gathered}
		f(\mathbf{x})=\frac{1}{0.38}\left(-\sum_{i=1}^{4} \alpha_{i} \exp \left(-\sum_{j=1}^{6} A_{i j}\left(x_{j}-P_{i j}\right)^{2}\right) + 0.26\right)\\
		\alpha=(1.0,1.2,3.0,3.2)^{T}\\
		\mathbf{A}=\left(\begin{array}{cccccc}
			10 & 3 & 17 & 3.50 & 1.7 & 8 \\
			0.05 & 10 & 17 & 0.1 & 8 & 14 \\
			3 & 3.5 & 1.7 & 10 & 17 & 8 \\
			17 & 8 & 0.05 & 10 & 0.1 & 14
		\end{array}\right)\\
		\mathbf{P}=10^{-4}\left(\begin{array}{cccccc}
			1312 & 1696 & 5569 & 124 & 8283 & 5886 \\
			2329 & 4135 & 8307 & 3736 & 1004 & 9991 \\
			2348 & 1451 & 3522 & 2883 & 3047 & 6650 \\
			4047 & 8828 & 8732 & 5743 & 1091 & 381
		\end{array}\right)\\
		x_i \in [0, 1], i = 1, \cdots, 6\\
		\mathbf{x}^* = (0.20169,0.150011,0.476874,0.275332,0.311652,0.6573), f^* = -8.059
	\end{gathered}$ \\
	\hline
	\hline
\end{tabular}%
}
\caption{\label{tab:funcs} List of test functions. All test functions are standardized such that their output values have zero mean and unit standard deviation across the design space.}
\end{table}%

\begin{figure}[!t]
\centering
\begin{subfigure}{0.33\linewidth}
\includegraphics[width=\linewidth]{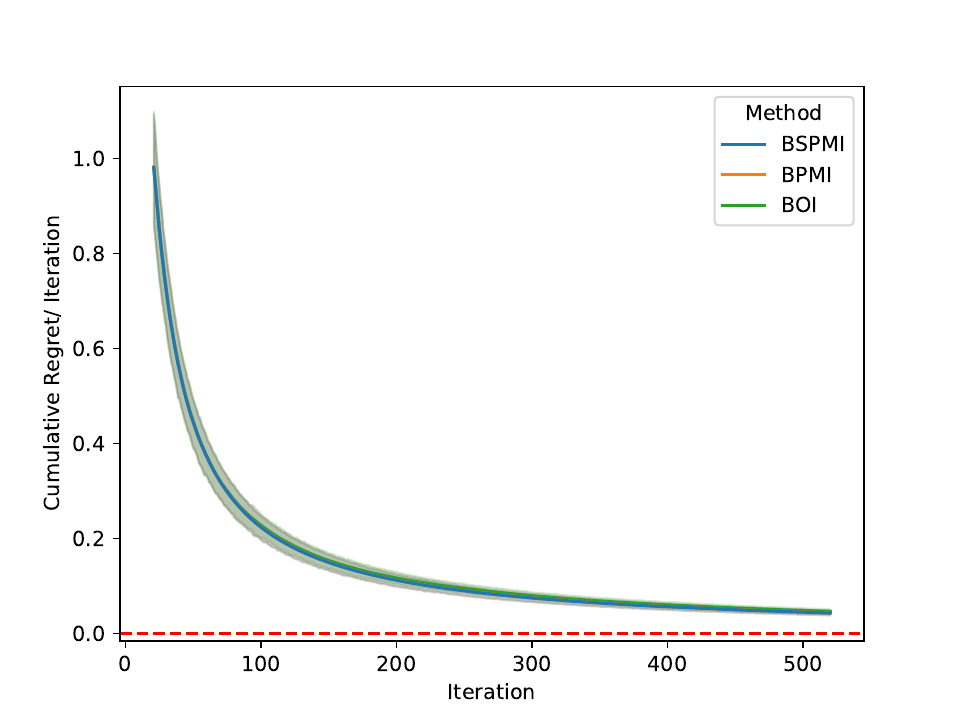} 
\caption{Branin 2D, $\sigma = 0.001$}
\label{fig:1a}
\end{subfigure}\hfill
\begin{subfigure}{0.33\linewidth}
\includegraphics[width=\linewidth]{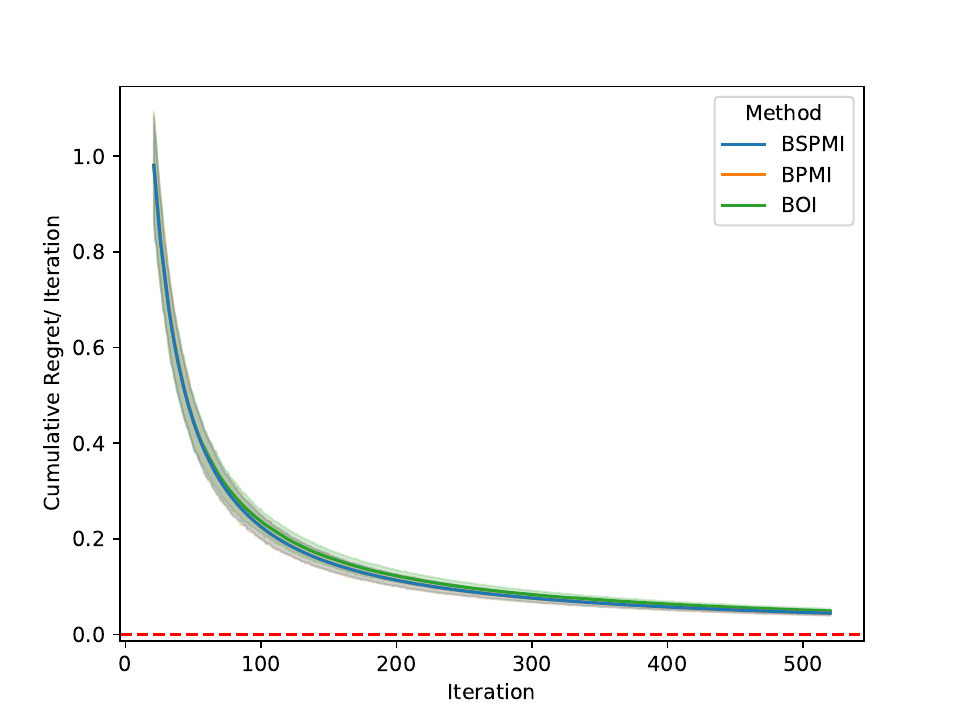}
\caption{Branin 2D, $\sigma = 0.01$}
\label{fig:1b}
\end{subfigure}\hfill
\begin{subfigure}{0.33\linewidth}
\includegraphics[width=\linewidth]{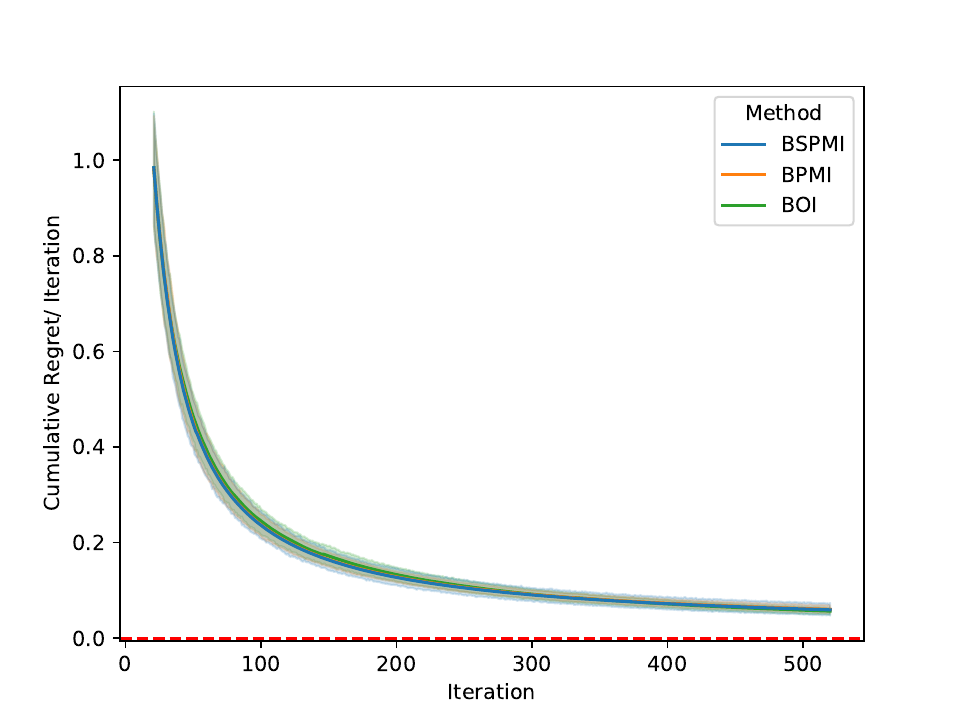}
\caption{Branin 2D, $\sigma = 0.1$}
\label{fig:1c}
\end{subfigure}

\begin{subfigure}{0.33\linewidth}
\includegraphics[width=\linewidth]{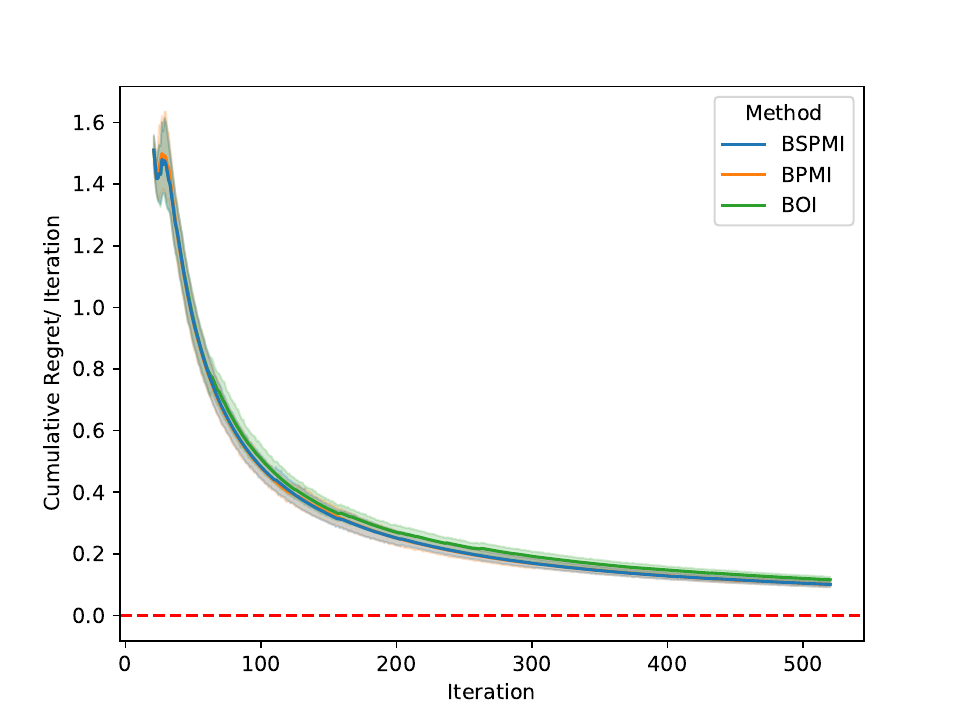} 
\caption{Styblinski-Tang 2D, $\sigma = 0.001$}
\label{fig:2a}
\end{subfigure}\hfill
\begin{subfigure}{0.33\linewidth}
\includegraphics[width=\linewidth]{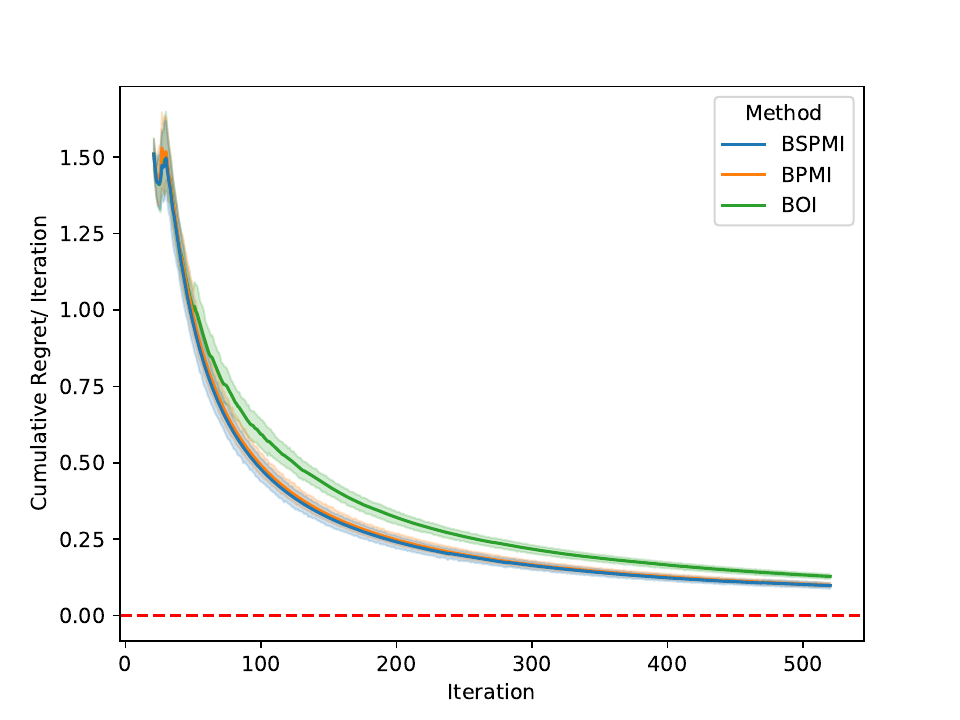}
\caption{Styblinski-Tang 2D, $\sigma = 0.01$}
\label{fig:2b}
\end{subfigure}
\begin{subfigure}{0.33\linewidth}
\includegraphics[width=\linewidth]{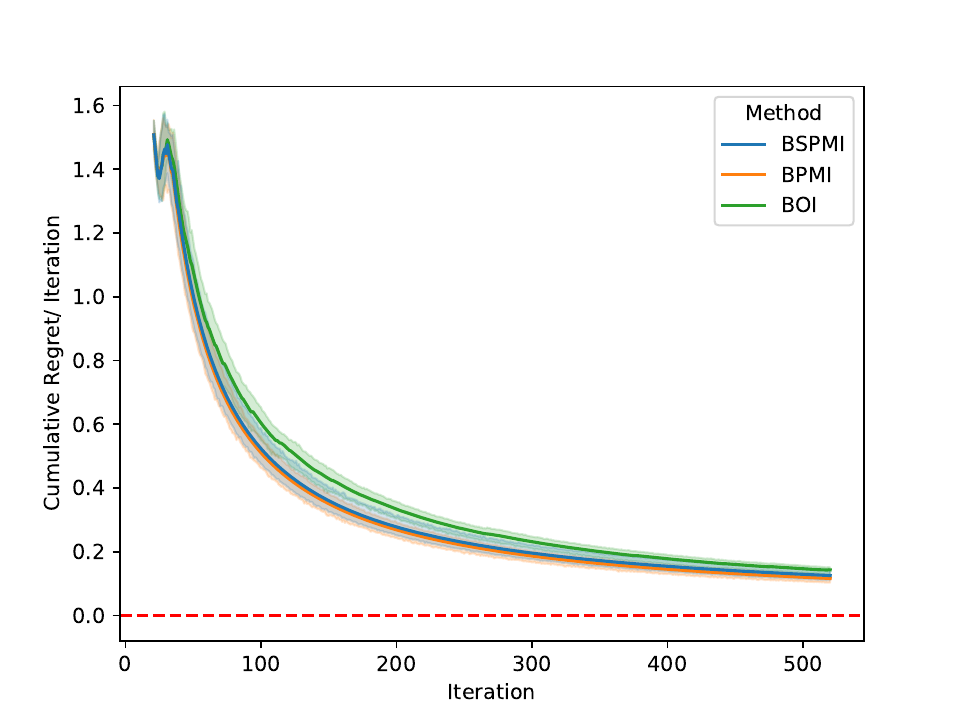}
\caption{Styblinski-Tang 2D, $\sigma = 0.1$}
\label{fig:2c}
\end{subfigure}

\begin{subfigure}{0.33\linewidth}
\includegraphics[width=\linewidth]{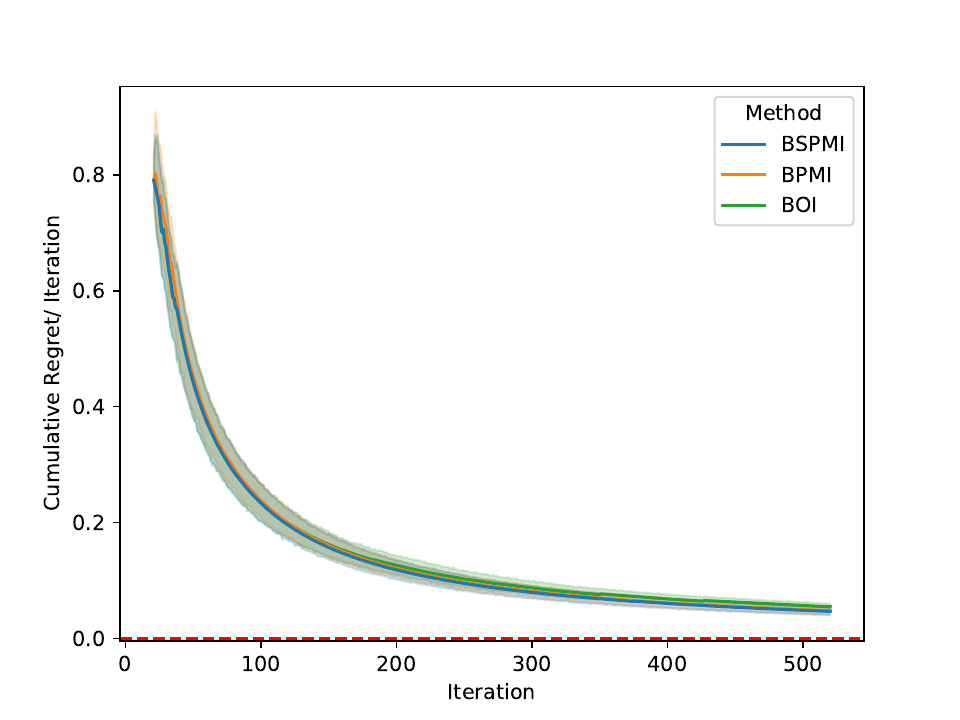} 
\caption{Camel 2D, $\sigma = 0.001$}
\label{fig:2a}
\end{subfigure}\hfill
\begin{subfigure}{0.33\linewidth}
\includegraphics[width=\linewidth]{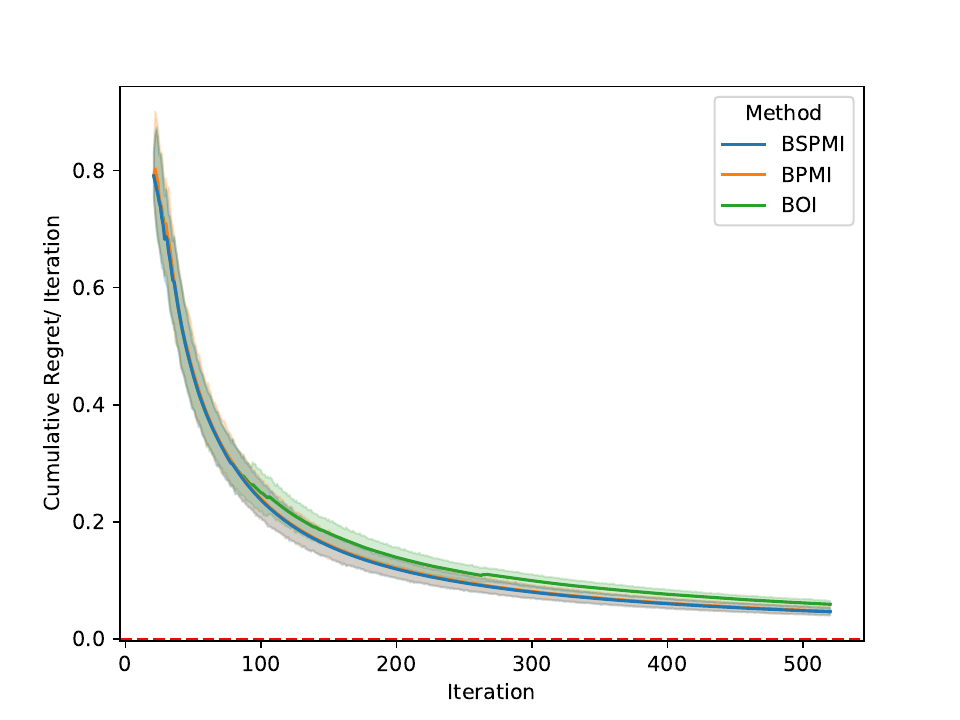}
\caption{Camel 2D, $\sigma = 0.01$}
\label{fig:2b}
\end{subfigure}
\begin{subfigure}{0.33\linewidth}
\includegraphics[width=\linewidth]{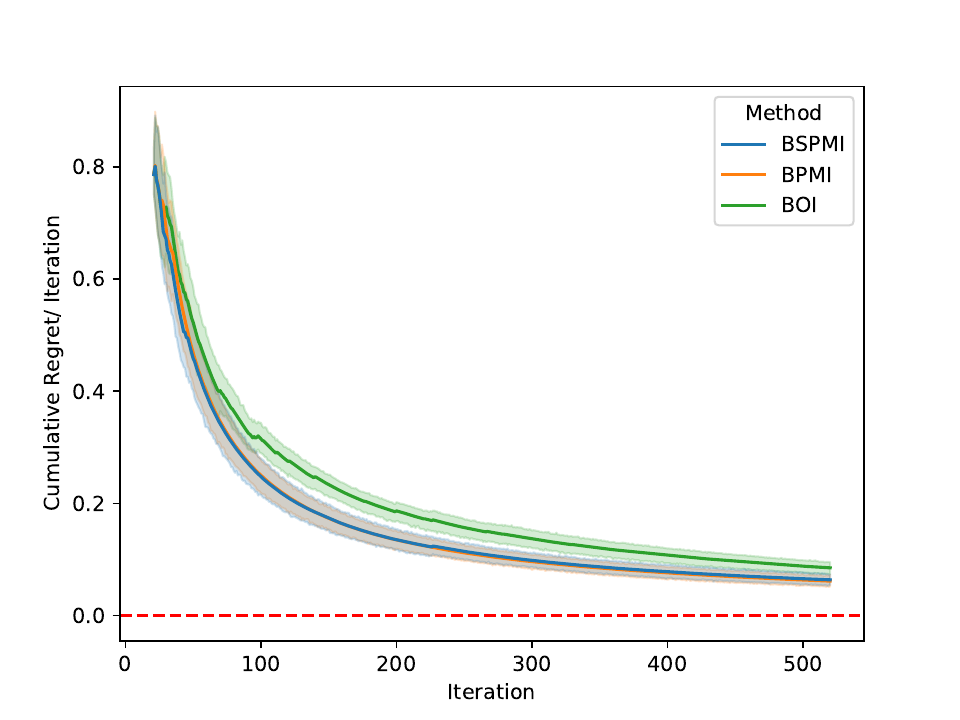}
\caption{Camel 2D, $\sigma = 0.1$}
\label{fig:2c}
\end{subfigure}

\begin{subfigure}{0.33\linewidth}
\includegraphics[width=\linewidth]{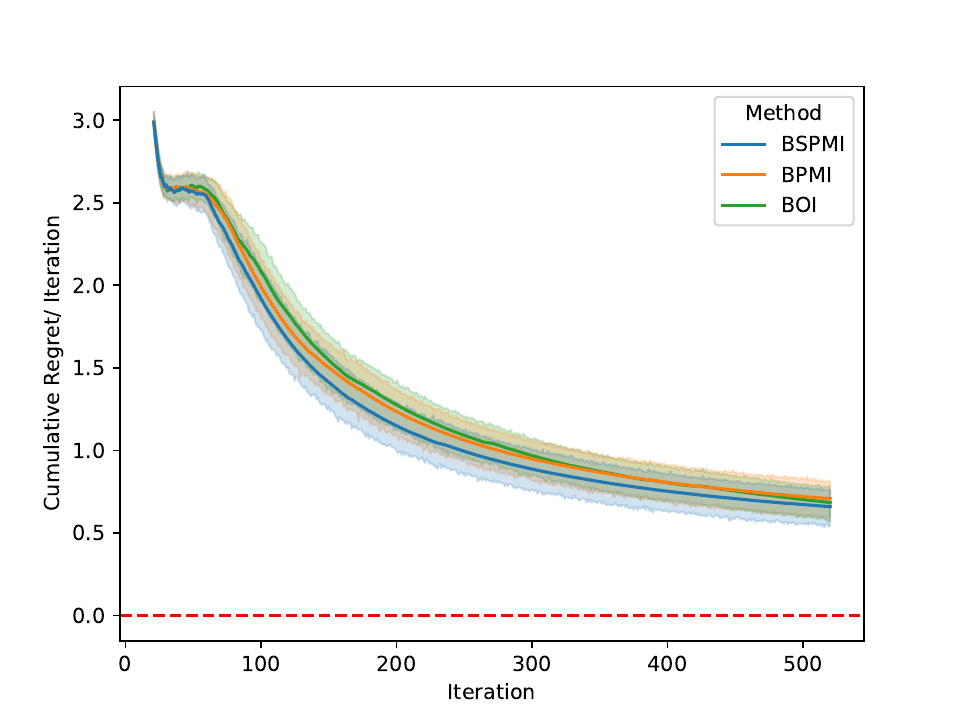} 
\caption{Schwefel 2D, $\sigma = 0.001$}
\label{fig:2a}
\end{subfigure}\hfill
\begin{subfigure}{0.33\linewidth}
\includegraphics[width=\linewidth]{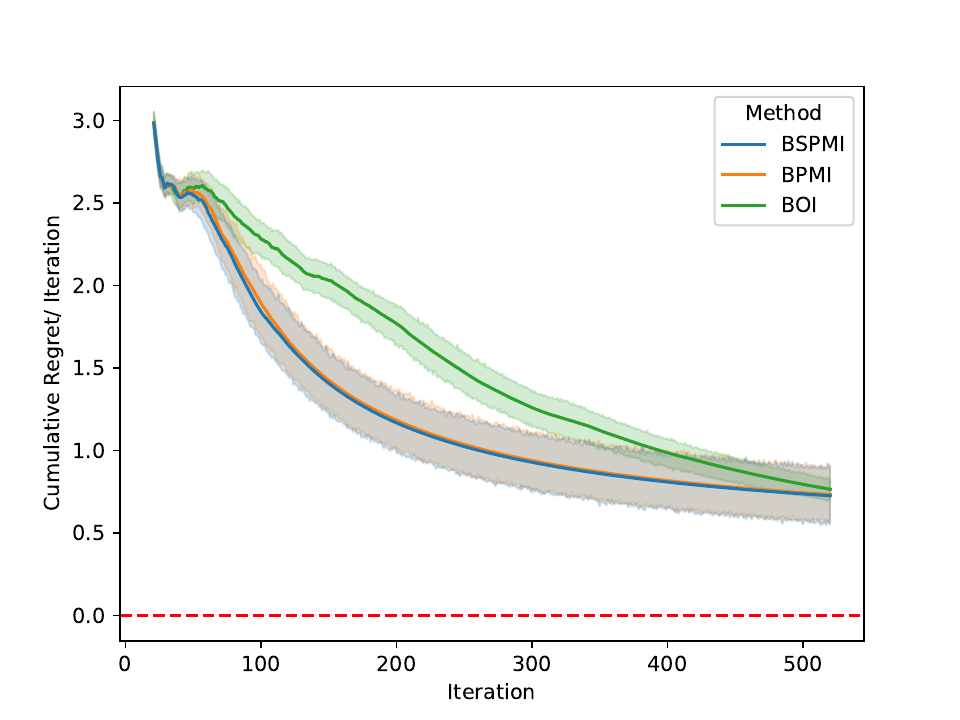}
\caption{Schwefel 2D, $\sigma = 0.01$}
\label{fig:2b}
\end{subfigure}
\begin{subfigure}{0.33\linewidth}
\includegraphics[width=\linewidth]{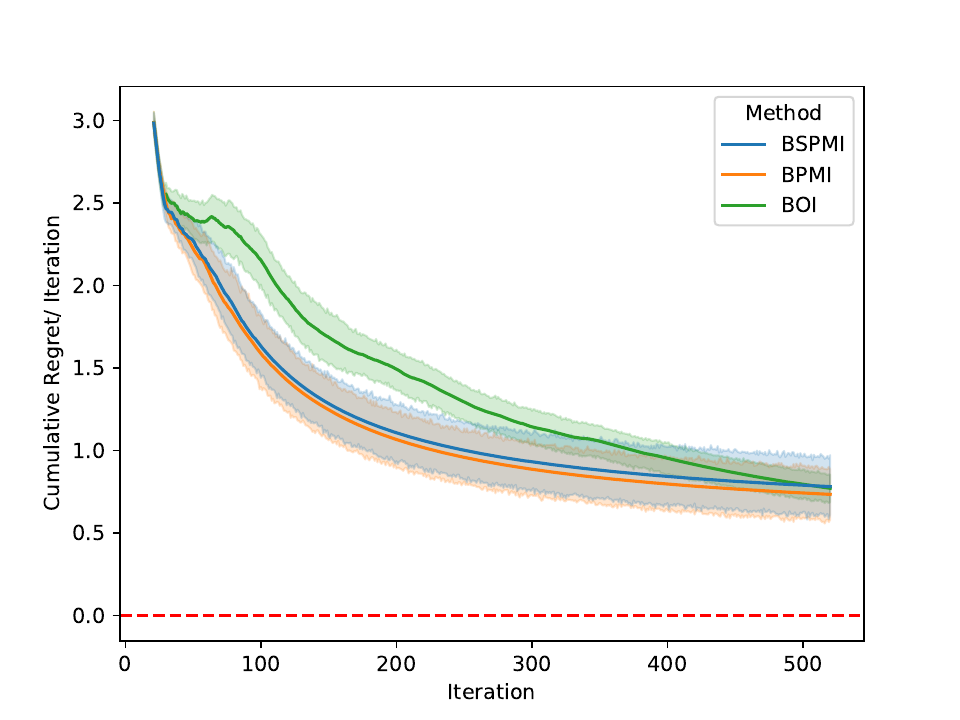}
\caption{Schwefel 2D, $\sigma = 0.1$}
\label{fig:2c}
\end{subfigure}

\caption{Cumulative regret/the number of iterations for different GP-EI algorithms with different incumbents on four 2D test functions (Branin 2D, Styblinski-Tang 2D, Camel 2D and Schwefel 2D) with the Matérn ($\nu =1.5$) kernel.}
\label{fig:results1}
\end{figure}

\begin{figure}[!t]
\centering
\begin{subfigure}{0.33\linewidth}
\includegraphics[width=\linewidth]{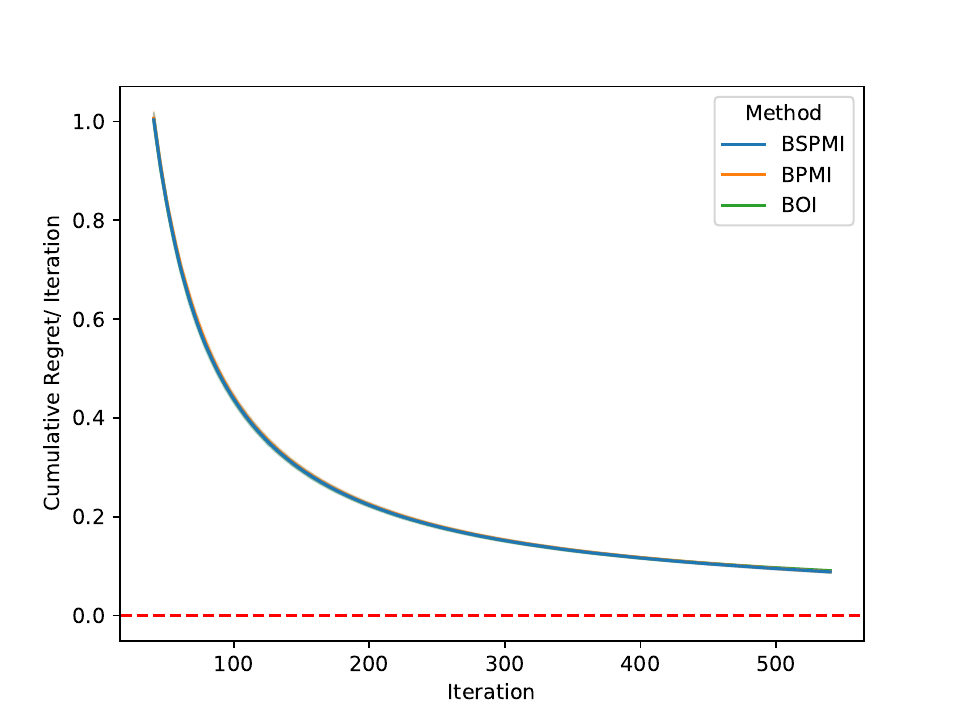} 
\caption{Rosenbrock 4D, $\sigma = 0.001$}
\label{fig:2a}
\end{subfigure}\hfill
\begin{subfigure}{0.33\linewidth}
\includegraphics[width=\linewidth]{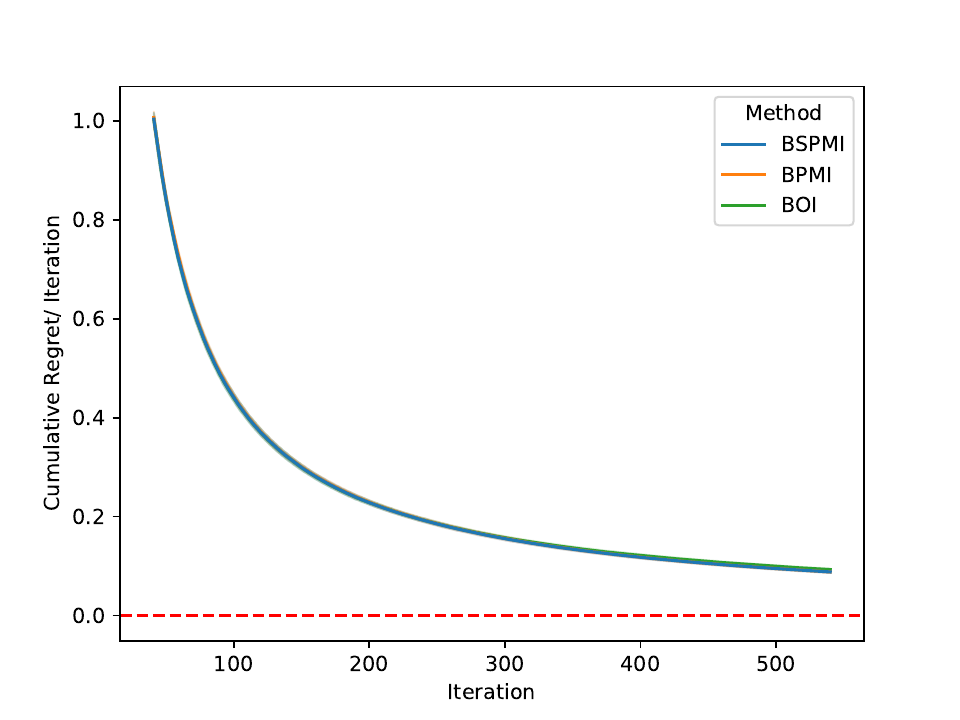}
\caption{Rosenbrock 4D, $\sigma = 0.01$}
\label{fig:2b}
\end{subfigure}
\begin{subfigure}{0.33\linewidth}
\includegraphics[width=\linewidth]{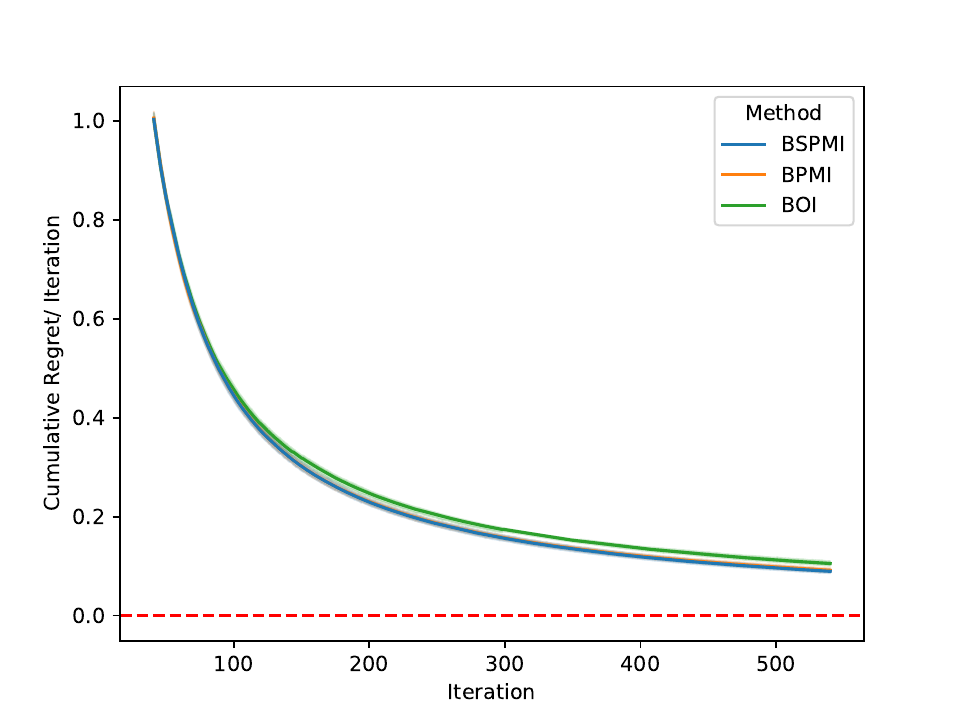}
\caption{Rosenbrock 4D, $\sigma = 0.1$}
\label{fig:2c}
\end{subfigure}

\begin{subfigure}{0.33\linewidth}
\includegraphics[width=\linewidth]{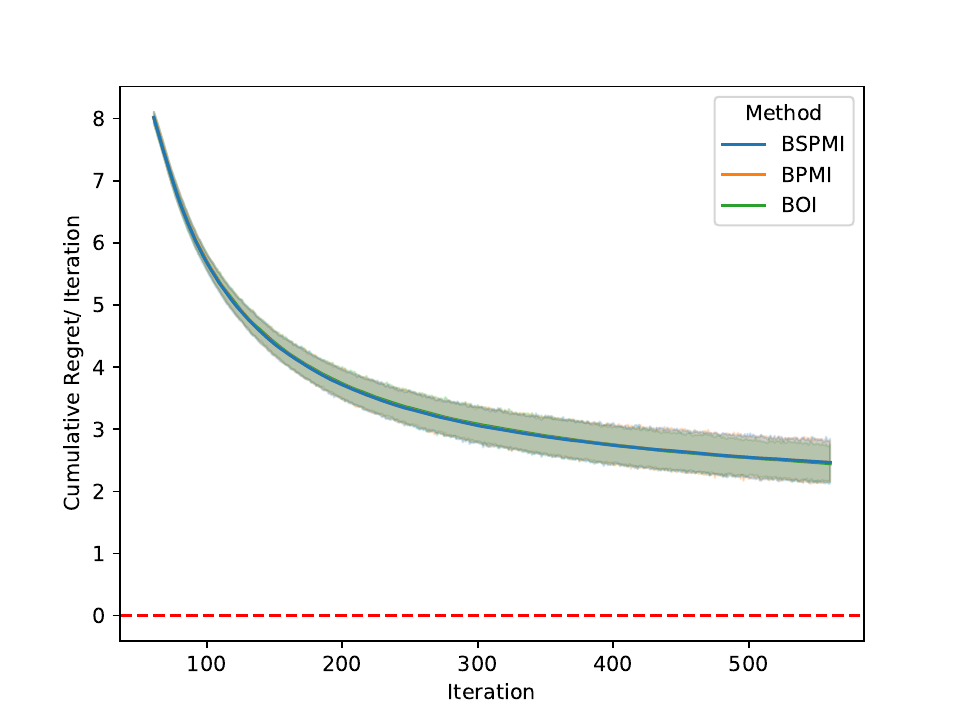} 
\caption{Hartmann 6D, $\sigma = 0.001$}
\label{fig:2a}
\end{subfigure}\hfill
\begin{subfigure}{0.33\linewidth}
\includegraphics[width=\linewidth]{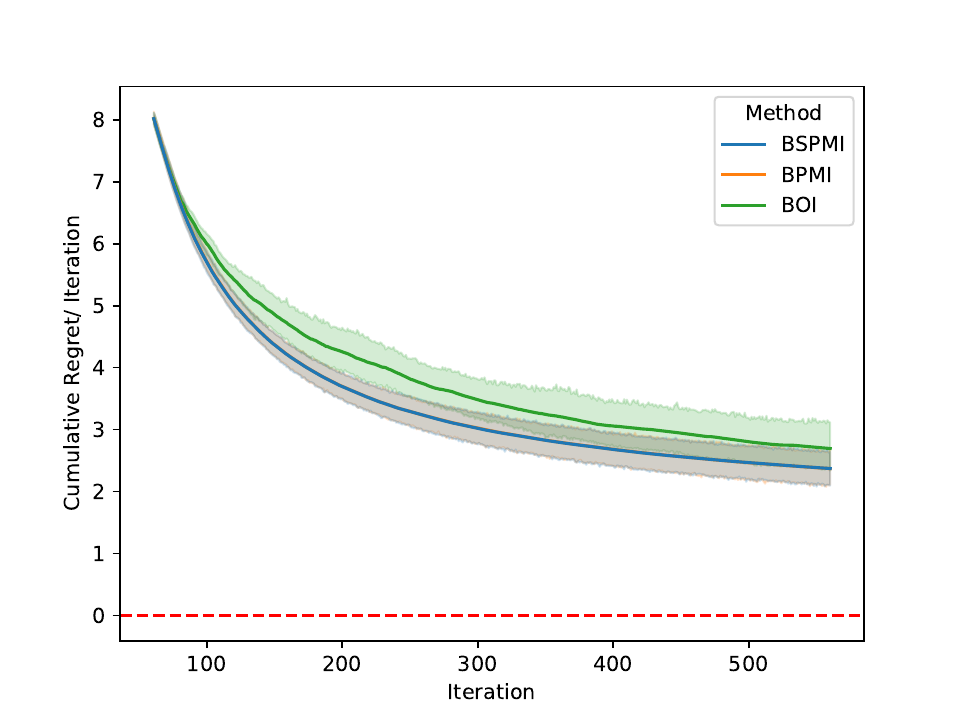}
\caption{Hartmann 6D, $\sigma = 0.01$}
\label{fig:2b}
\end{subfigure}
\begin{subfigure}{0.33\linewidth}
\includegraphics[width=\linewidth]{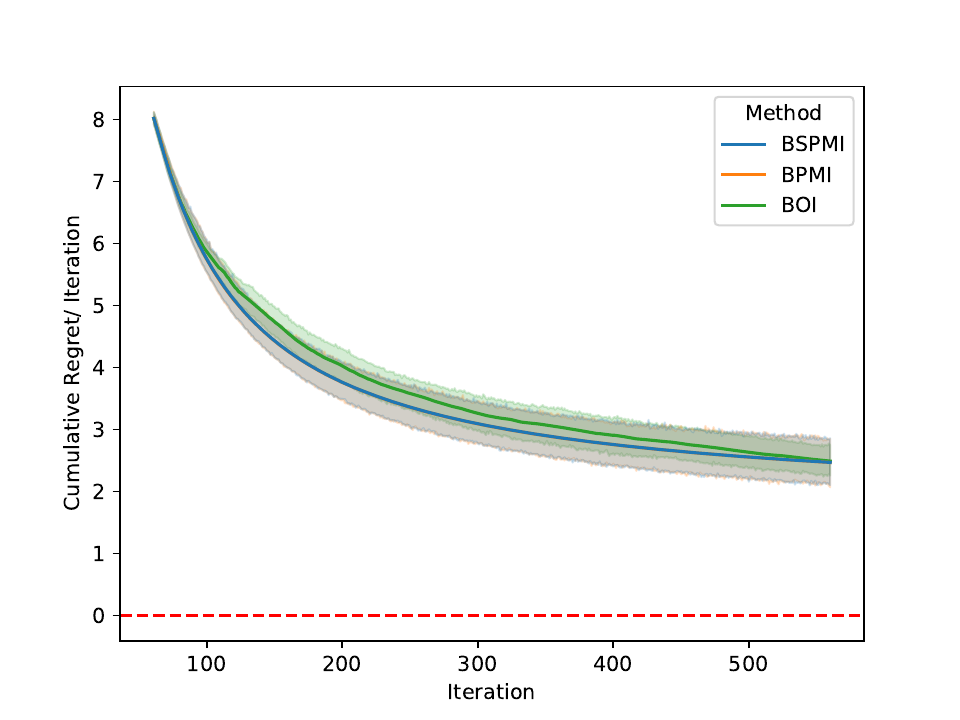}
\caption{Hartmann 6D, $\sigma = 0.1$}
\label{fig:2c}
\end{subfigure}

\caption{Cumulative regret/the number of iterations for different GP-EI algorithms with different incumbents on two higher dimension test functions (Rosenbrock 4D and Hartmann 6D) with the  Matérn ($\nu =1.5$) kernel.}
\label{fig:results2}
\end{figure}

It is evident that BPMI and BSPMI display no-regret behavior, i.e. $R_t/t$ converges to 0 as $t$ increases for most of the functions. For BOI, when $\sigma$ is small (0.001), its performance is similar to that of BPMI and BSPMI. However, as the noise level increases, and, thus, $y^+_{t-1}<f(\xbm^*)$ more likely to occur, BOI performs noticeably worse than BPMI and BSPMI, especially for more complex and higher-dimensional problems. This is consistent with our theoretical findings, including our discussion in Remark~\ref{remark:boi-inst-regret} and~\ref{remark:boi-regret}, as well as previous benchmark tests in literature~\citep{picheny2013benchmark}.
To further validate this point, we also plot $y_{t-1}^+-f(\xbm^*)$ for BOI for the Rosenbrock 4D function for $\sigma = 0.1$ in Figure~\ref{fig:noisysimpleregret}. It is clear that the $y_{t-1}^+-f(\xbm^*)$ decreases below $0$ quickly (just with the initial design of 40 points, the value already falls below 0). 
 
\begin{figure}[!t]
    \centering
    \includegraphics[width=0.5\linewidth]{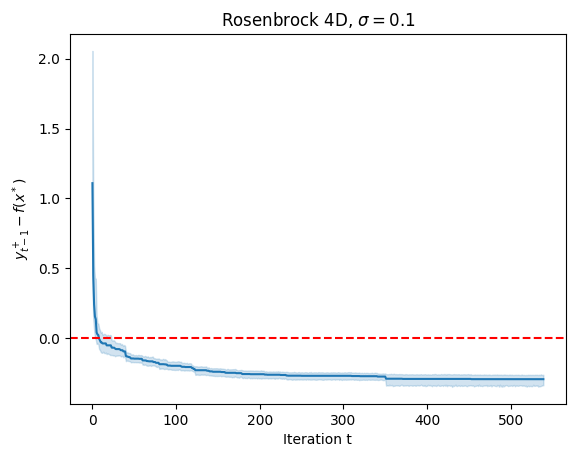}
    \caption{Values of $y_{t-1}^+-f(\xbm^*)$ for Rosenbrock 4D function for $\sigma = 0.1$. Blue solid lines represents the mean and and the shaded regions indicate the 95\% confidence intervals.}
    \label{fig:noisysimpleregret}
\end{figure}

\section{Conclusions}\label{se:conclusion}
We prove the cumulative regret upper bounds for classic GP-EI algorithms in this paper. We show that GP-EI with both BPMI and BSPMI are no-regret algorithms for SE and Mat\'{e}rn kernels. We further show that with BOI, GP-EI either has a sublinear cumulative regret bound or has a sublinear noisy simple regret bound. With the established regret bounds, practitioners have theoretical guidance on the choice of the incumbent.

%\section{Numerical experiments}\label{se:example}
%We test extensively BO-EI with varying $\alpha$.

\section*{Acknowledgments}
This work was performed under the auspices of the U.S. Department of Energy by Lawrence Livermore National Laboratory under contract DE--AC52--07NA27344.  Release number LLNL-JRNL-2004380. This work is partially supported by Singapore Ministry of Education (MOE) Academic Research Fund (AcRF) [R-266-149–114]

\medskip
\clearpage

\appendix 
\section{Background information}\label{appdx:back}
We first define two equivalent forms of EI~\eqref{eqn:EI-1}.
We distinguish between its exploration and exploitation parts and define the \textit{trade-off} form $EI(a,b):\Rbb\times\Rbb\to\Rbb$ as
\begin{equation} \label{eqn:EI-ab}
 \centering
  \begin{aligned}
       EI(a,b) = a \Phi\left(\frac{a}{b}\right)+b \phi\left(\frac{a}{b}\right), 
  \end{aligned}
\end{equation}
where $b\in(0,1]$. One can view $a$ and $b$ as two independent variables.
For a given $\xbm$, if $a_t=\xi^+_{t}-\mu_{t}(\xbm)$ and $b_t=\sigma_{t}(\xbm)\in (0,1]$, $EI(a_t,b_t)=EI_t(\xbm)$. 

Another commonly used function in the analysis of EI is the $\tau:\Rbb\to\Rbb$ function thanks to its reduced dimensionality. It is defined as 
\begin{equation} \label{def:tau}
 \centering
  \begin{aligned}
   \tau(z) = z\Phi(z) + \phi(z). 
  \end{aligned}
\end{equation}
Thus, the $\tau$ form of EI can be written as $EI_t(\xbm) = \sigma_t(\xbm) \tau(z_t(\xbm))$.
Union bound is given in the next lemma.
\begin{lem}\label{lem:unionbound}
  For a countable set of events $A_1,A_2,\dots$, we have 
 \begin{equation*} \label{eqn:union-bound-1}
  \centering
  \begin{aligned}
    \Pbb(\bigcup_{i=1}^{\infty} A_i) \leq \sum_{i=1}^{\infty} \Pbb(A_i).
  \end{aligned}
\end{equation*}
\end{lem}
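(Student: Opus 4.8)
The plan is to establish Boole's inequality via the standard disjointification argument, reducing the general (possibly overlapping) union to a disjoint one so that countable additivity of the probability measure $\Pbb$ can be applied directly. First I would construct a sequence of pairwise disjoint events from the given sequence $A_1, A_2, \dots$ by setting $B_1 = A_1$ and, for $i \geq 2$,
\begin{equation*}
  B_i = A_i \setminus \bigcup_{j=1}^{i-1} A_j.
\end{equation*}
By construction each $B_i \in \mathcal{F}$ (the $\sigma$-algebra is closed under complements and countable unions, so it is closed under the set differences used here), the $B_i$ are pairwise disjoint, and $B_i \subseteq A_i$ for every $i$.

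The second step is to verify the two properties that make this construction useful. I would show that $\bigcup_{i=1}^{\infty} B_i = \bigcup_{i=1}^{\infty} A_i$: the inclusion ``$\subseteq$'' is immediate since $B_i \subseteq A_i$, and for ``$\supseteq$'' any point in $\bigcup_i A_i$ lies in some $A_i$, hence lies in $B_m$ where $m$ is the \emph{smallest} index with the point in $A_m$. Next, from $B_i \subseteq A_i$ and monotonicity of the measure $\Pbb$ (itself a consequence of nonnegativity and finite additivity), I obtain $\Pbb(B_i) \leq \Pbb(A_i)$ for each $i$.

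The final step is to assemble these facts using countable additivity, which holds because $\Pbb$ is a probability measure on $(\Omega, \mathcal{F}, \Pbb)$ and the $B_i$ are disjoint:
\begin{equation*}
  \Pbb\!\left(\bigcup_{i=1}^{\infty} A_i\right)
  = \Pbb\!\left(\bigcup_{i=1}^{\infty} B_i\right)
  = \sum_{i=1}^{\infty} \Pbb(B_i)
  \leq \sum_{i=1}^{\infty} \Pbb(A_i),
\end{equation*}
which is the claimed bound. This result is entirely standard and presents no genuine obstacle; the only point requiring minor care is the disjointification, specifically confirming that the disjoint union $\bigcup_i B_i$ recovers the original union $\bigcup_i A_i$ and that each $B_i$ remains measurable so that countable additivity is legitimately invoked. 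One could alternatively argue through continuity from below applied to the increasing sequence of partial unions $\bigcup_{i=1}^{n} A_i$ together with finite subadditivity, but the disjointification route is the most direct.
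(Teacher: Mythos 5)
Your proof is correct and complete: the disjointification $B_1 = A_1$, $B_i = A_i \setminus \bigcup_{j<i} A_j$, together with monotonicity and countable additivity of $\Pbb$, is the standard and rigorous route to Boole's inequality. The paper states this lemma without proof as a well-known fact, so there is nothing to compare against; your argument fills that in correctly and needs no changes.
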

  
The following Lemmas are well-established results from~\cite{srinivas2009gaussian} on the information gain and variances.
\begin{lem}\label{lem:variancebound}
 The sum of posterior standard deviation $\sigma_{t}(\xbm)$ satisfies
 \begin{equation} \label{eqn:var-1}
  \centering
  \begin{aligned}
    \sum_{t=1}^T  \sigma_{t-1}(\xbm) \leq \sqrt{C_{\gamma} T \gamma_T}, 
  \end{aligned}
\end{equation}
 where $C_{\gamma} = 2/log(1+\sigma^{-2})$.
\end{lem}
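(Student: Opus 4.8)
The plan is to follow the classical argument of \cite{srinivas2009gaussian}, reading the summand $\sigma_{t-1}(\xbm)$ as the posterior standard deviation evaluated at the point $\xbm_t$ queried in round $t$, since this is the quantity the bound is applied to throughout the paper. First I would pass from the sum of standard deviations to the sum of variances by Cauchy--Schwarz:
\begin{equation*}
  \sum_{t=1}^T \sigma_{t-1}(\xbm_t) \;\leq\; \sqrt{T \sum_{t=1}^T \sigma_{t-1}^2(\xbm_t)}.
\end{equation*}
It then suffices to show $\sum_{t=1}^T \sigma_{t-1}^2(\xbm_t) \leq C_{\gamma}\gamma_T$, after which substituting back gives exactly $\sqrt{C_{\gamma}T\gamma_T}$.

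To control the variance sum, I would use the standing kernel assumption $k(\xbm,\xbm)=1$, which forces $\sigma_{t-1}^2(\xbm_t)\leq 1$ for every $t$, together with the elementary inequality that for $s\in[0,1]$ one has $s \leq \frac{1}{\log(1+\sigma^{-2})}\log\!\left(1+\sigma^{-2}s\right)$. This inequality follows because $x\mapsto x/\log(1+x)$ is increasing, so on the interval $[0,\sigma^{-2}]$ (into which $\sigma^{-2}s$ falls) its value is maximized at the right endpoint $x=\sigma^{-2}$. Applying it with $s=\sigma_{t-1}^2(\xbm_t)$ and summing yields
\begin{equation*}
  \sum_{t=1}^T \sigma_{t-1}^2(\xbm_t) \;\leq\; \frac{1}{\log(1+\sigma^{-2})}\sum_{t=1}^T \log\!\left(1+\sigma^{-2}\sigma_{t-1}^2(\xbm_t)\right).
\end{equation*}

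The main step, and the one I expect to carry the real content, is identifying the remaining log-sum with the information gain. Under Assumption~\ref{assp:gaussiannoise} the observations are jointly Gaussian, so the mutual information $I(\ybm_{1:T};\fbm_{1:T})$ decomposes as a telescoping sum of conditional entropies, and each increment equals $\tfrac12\log\!\left(1+\sigma^{-2}\sigma_{t-1}^2(\xbm_t)\right)$ because the predictive variance of $y_t$ given the past is $\sigma^2+\sigma_{t-1}^2(\xbm_t)$. Hence $\sum_{t=1}^T \log(1+\sigma^{-2}\sigma_{t-1}^2(\xbm_t)) = 2\,I(\ybm_{1:T};\fbm_{1:T})$, which by Definition~\ref{def:infogain} is bounded by $2\gamma_T$. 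Combining this with the previous display gives $\sum_{t=1}^T \sigma_{t-1}^2(\xbm_t)\leq \frac{2}{\log(1+\sigma^{-2})}\gamma_T = C_{\gamma}\gamma_T$, and feeding this into the Cauchy--Schwarz bound completes the proof. The only delicate point is the Gaussian entropy identity; everything else is a short deterministic estimate.
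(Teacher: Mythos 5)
Your proof is correct and is exactly the standard argument from \cite{srinivas2009gaussian} (Lemmas 5.3 and 5.4 there), which is what the paper itself cites for this result without reproducing the proof: Cauchy--Schwarz, the elementary bound $s\leq \log(1+\sigma^{-2}s)/\log(1+\sigma^{-2})$ for $s\in[0,1]$, and the Gaussian telescoping identity $I(\ybm_{1:T};\fbm_{1:T})=\tfrac12\sum_t\log(1+\sigma^{-2}\sigma_{t-1}^2(\xbm_t))\leq\gamma_T$. Your reading of the summand as $\sigma_{t-1}(\xbm_t)$ is also the intended one.
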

The state-of-the-art rates of $\gamma_t$ for two commonly-used kernels are summarized below~\citep{vakili2021information}.
\begin{lem}\label{lem:gammarate}
  For a GP with $t$ samples, the information gain for the SE kernel is $\gamma_t=\mathcal{O}\left(\log^{d+1}(t)\right)$, while for 
  the Mat\'{e}rn kernel with  parameter $\nu>\frac{1}{2}$ is $\gamma_t=\mathcal{O}\left(t^{\frac{d}{2\nu+d}}\log^{\frac{2\nu}{2\nu+d}} (t)\right)$.
\end{lem}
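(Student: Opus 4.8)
The plan is to bound $\gamma_T$ through the spectral properties of the kernel. The starting point is the standard Gaussian identity: for any finite $A\subset C$ with $|A|=T$, the mutual information equals $I(\ybm_A;\fbm_A)=\tfrac{1}{2}\log\det(\Ibm+\sigma^{-2}\Kbm_A)$, so that $\gamma_T=\max_{|A|=T}\tfrac{1}{2}\log\det(\Ibm+\sigma^{-2}\Kbm_A)$. I would first rewrite the log-determinant as $\tfrac{1}{2}\sum_{i=1}^{T}\log(1+\sigma^{-2}\hat\lambda_i)$, where $\hat\lambda_1\geq\cdots\geq\hat\lambda_T$ are the eigenvalues of $\Kbm_A$; these satisfy $\sum_i\hat\lambda_i=\mathrm{tr}(\Kbm_A)=T$ because $k(\xbm,\xbm)=1$.

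The crux is to relate these data-dependent eigenvalues to the Mercer eigenvalues $\{\lambda_m\}_{m\geq 1}$ (in decreasing order) of the kernel integral operator on $C$. By Mercer's theorem $k(\xbm,\xbm')=\sum_m\lambda_m\psi_m(\xbm)\psi_m(\xbm')$, and I would use a rank-$D$ truncation $k=k_D+k_{>D}$ that keeps the top $D$ terms. Projecting onto the top-$D$ eigenspace bounds the contribution of the first $D$ directions, each carrying eigenvalue mass at most $\mathrm{tr}(\Kbm_A)=T$, while the tail $k_{>D}$ contributes at most $\sigma^{-2}T\sum_{m>D}\lambda_m$ via $\log(1+x)\leq x$. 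This yields the master bound
\begin{equation*}
\gamma_T \;\leq\; \tfrac{1}{2}\,D\,\log\!\big(1+\sigma^{-2}T\big) \;+\; \tfrac{1}{2}\,\sigma^{-2}\,T\sum_{m>D}\lambda_m,
\end{equation*}
valid for every truncation level $D$.

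The remaining step is purely computational: insert the known Mercer eigenvalue decay for each kernel and optimize over $D$. For the SE kernel the eigenvalues decay super-exponentially, $\lambda_m=\mathcal{O}(\exp(-c\,m^{1/d}))$, so the tail is negligible once $D\asymp(\log T)^d$; the head term then gives $\gamma_T=\mathcal{O}((\log T)^d\cdot\log T)=\mathcal{O}((\log T)^{d+1})$. For the Mat\'ern-$\nu$ kernel the eigenvalues decay polynomially, $\lambda_m=\mathcal{O}(m^{-(2\nu+d)/d})$, so $\sum_{m>D}\lambda_m=\mathcal{O}(D^{-2\nu/d})$; balancing the head $D\log T$ against the tail $T D^{-2\nu/d}$ gives $D\asymp(T/\log T)^{d/(2\nu+d)}$ and hence $\gamma_T=\mathcal{O}\big(T^{d/(2\nu+d)}(\log T)^{2\nu/(2\nu+d)}\big)$.

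I expect the main obstacle to be the reduction from the data-dependent matrix $\Kbm_A$ to the fixed operator spectrum $\{\lambda_m\}$: the empirical eigenvalues $\hat\lambda_i$ are not the Mercer eigenvalues, and the bound must hold uniformly over all admissible point sets $A$ of size $T$. Making the projection/truncation argument rigorous—in particular controlling the head by $\mathcal{O}(D\log T)$ without picking up extra dependence on $T$, and ensuring the tail estimate is uniform in $A$—is the technical heart of the proof. The subsequent kernel-specific eigendecay substitutions and the optimization over $D$ are routine once this master bound is in place.
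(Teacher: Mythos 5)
The paper does not prove this lemma at all: it is imported verbatim as a known result, with the SE and Mat\'ern rates attributed to Vakili et al.\ (2021) (and the general information-gain machinery to Srinivas et al.\ (2009)), so there is no in-paper proof to compare against. Your sketch is, in outline, exactly the argument of that cited reference --- the identity $\gamma_T=\max_{|A|=T}\tfrac{1}{2}\log\det(\Ibm+\sigma^{-2}\Kbm_A)$, a rank-$D$ Mercer truncation giving a master bound of the form $\tfrac{1}{2}D\log(1+\sigma^{-2}T)+\tfrac{1}{2}\sigma^{-2}T\,\delta_D$, and kernel-specific eigendecay plus optimization over $D$ --- and your final balancing computations for both kernels are correct and reproduce the stated rates. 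One caveat you should make explicit: your tail term is written as $\sigma^{-2}T\sum_{m>D}\lambda_m$, but what the trace argument actually gives is $\sigma^{-2}\sum_{\xbm\in A}\sum_{m>D}\lambda_m\psi_m(\xbm)^2$, so you need the Mercer eigenfunctions to be uniformly bounded (an assumption Vakili et al.\ impose explicitly) before you can replace $\sum_{m>D}\lambda_m\psi_m(\xbm)^2$ by a constant multiple of $\sum_{m>D}\lambda_m$ uniformly over $A$; this, rather than the head term, is where the uniformity over point sets is really paid for.
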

The formal definition of RKHS is given below.
\begin{defi}\label{def:rkhs}
   Consider a positive definite kernel $k: \mathcal{X}\times \mathcal{X}\to\Rbb $ with respect to a finite Borel measure supported on $\mathcal{X}$. A Hilbert space $H_k$ of functions on $\mathcal{X}$ with an inner product $\langle \cdot,\cdot \rangle_{H_k}$ is called a RKHS with kernel $k$ if $k(\cdot,\xbm)\in H_k$ for all $\xbm\in \mathcal{X}$, and $\langle f,k(\cdot,\xbm)\rangle_{H_k}=f(\xbm)$ for all $\xbm\in \mathcal{X}, f\in H_k$. The induced RKHS norm $\norm{f}_{H_k}=\sqrt{\langle f,f\rangle_{H_k}}$ measures the smoothness of $f$ with respect to $k$.
\end{defi}
The super-martingale series is defined next.
\begin{defi}\label{def:supmart}
   A sequence of random variables $\{Y_t\},t=0,1,\dots$ is called a super-martingale of a filtration $\mathcal{F}_t$ if for all $t$, $Y_t$ is $\mathcal{F}_t$-measurable, and for $t\geq 1$,
    \begin{equation} \label{eqn:supmart-1}
  \centering
  \begin{aligned}
     \Ebb[Y_t|\mathcal{F}_{t-1}] \leq Y_{t-1}.
    \end{aligned}
  \end{equation} 
\end{defi}
The Azuma-Hoeffding inequality is given in the next lemma, which provides an important bound for a super-martingale~\citep{azuma1967weighted}.
\begin{lem}\label{lem:supmart}
    If a super-martingale $\{Y_t\}$ of $\mathcal{F}_t$ satisfies $|Y_t-Y_{t-1}|\leq c_t$ for some constant $c_t$, $t=1,\dots,T$, then for $\forall \delta\geq 0$,
    \begin{equation} \label{eqn:supmart-2}
  \centering
  \begin{aligned}
       \Pbb\left[Y_T-Y_{0}\leq \sqrt{2\log(1/\delta)\sum_{t=1}^T c_t^2}\right]\geq 1-\delta.
    \end{aligned}
  \end{equation} 
\end{lem}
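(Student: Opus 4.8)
The plan is to prove this via the standard exponential-moment (Chernoff) method, reducing the two-sided-looking statement to a one-sided upper tail bound on the super-martingale increments. Write $S_t := Y_t - Y_0$ and $D_t := Y_t - Y_{t-1}$, so that $\{D_t\}$ are the super-martingale differences satisfying $\Ebb[D_t \mid \mathcal{F}_{t-1}] \leq 0$ and $|D_t| \leq c_t$. The target event is $\{S_T \leq \sqrt{2\log(1/\delta)\sum_{t=1}^T c_t^2}\}$; its complement is an upper-tail event, so it suffices to show $\Pbb[S_T \geq a] \leq \delta$ with $a := \sqrt{2\log(1/\delta)\sum_{t=1}^T c_t^2}$ (the nontrivial regime being $\delta \in (0,1)$, since for $\delta \geq 1$ the bound is vacuous).

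First I would, for any $s > 0$, apply Markov's inequality to the exponentiated deviation: $\Pbb[S_T \geq a] = \Pbb[e^{sS_T} \geq e^{sa}] \leq e^{-sa}\,\Ebb[e^{sS_T}]$. Next I would bound the moment generating function by iterated (tower) conditioning. Since $S_T = S_{T-1} + D_T$ and $S_{T-1}$ is $\mathcal{F}_{T-1}$-measurable, $\Ebb[e^{sS_T} \mid \mathcal{F}_{T-1}] = e^{sS_{T-1}}\,\Ebb[e^{sD_T} \mid \mathcal{F}_{T-1}]$. The key auxiliary fact is a conditional Hoeffding bound: for a random variable $D$ with $\Ebb[D \mid \mathcal{F}] \leq 0$ and $|D| \leq c$, one has $\Ebb[e^{sD} \mid \mathcal{F}] \leq e^{s^2 c^2/2}$ for all $s > 0$. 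This follows by centering $D$ at its conditional mean $m := \Ebb[D \mid \mathcal{F}] \leq 0$: the centered variable $D - m$ lies in an interval of length $2c$, so the (conditional) Hoeffding lemma gives $\Ebb[e^{s(D-m)} \mid \mathcal{F}] \leq e^{s^2 c^2/2}$, and multiplying by $e^{sm} \leq 1$ (using $m \leq 0$ and $s > 0$) yields the claim. Applying this with $c = c_T$ and peeling off one increment at a time gives $\Ebb[e^{sS_T}] \leq \exp\!\big(\tfrac{s^2}{2}\sum_{t=1}^T c_t^2\big)$.

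Finally I would optimize the free parameter. Combining the two displays, $\Pbb[S_T \geq a] \leq \exp\!\big(-sa + \tfrac{s^2}{2}\sum_{t=1}^T c_t^2\big)$ for every $s > 0$; minimizing the exponent over $s$ at $s = a/\sum_{t=1}^T c_t^2$ produces $\Pbb[S_T \geq a] \leq \exp\!\big(-a^2/(2\sum_{t=1}^T c_t^2)\big)$. Choosing $a = \sqrt{2\log(1/\delta)\sum_{t=1}^T c_t^2}$ makes the right-hand side exactly $\delta$, so $\Pbb[S_T \geq a] \leq \delta$, and taking complements yields the stated bound (replacing $\geq a$ with $> a$ only decreases the probability, so the $\leq$ in the event is harmless).

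The main obstacle is the conditional Hoeffding lemma together with correctly exploiting the super-martingale inequality $\Ebb[D_t \mid \mathcal{F}_{t-1}] \leq 0$ rather than a martingale equality. The centering argument handles this cleanly precisely because restricting to $s > 0$ lets us discard the factor $e^{sm_t} \leq 1$; everything else—Markov's inequality, the tower property, and the scalar optimization over $s$—is routine.
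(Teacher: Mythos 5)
Your proof is correct. The paper does not actually prove Lemma~\ref{lem:supmart}; it is quoted as the classical Azuma--Hoeffding inequality with a citation to Azuma (1967), so there is no in-paper argument to compare against. What you give is the standard exponential-moment proof, and you handle the one genuinely delicate point correctly: because $\{Y_t\}$ is only a super-martingale, the increments satisfy $\Ebb[D_t\mid\mathcal{F}_{t-1}]\leq 0$ rather than $=0$, and your centering at $m_t=\Ebb[D_t\mid\mathcal{F}_{t-1}]$ followed by discarding the factor $e^{sm_t}\leq 1$ (valid precisely because you restrict to $s>0$, which is all the one-sided upper-tail bound requires) is the right fix. The conditional Hoeffding step is also applied correctly: $D_t-m_t$ lies in an interval of length $2c_t$, giving the factor $e^{s^2c_t^2/2}$, and the optimization $s=a/\sum_{t=1}^{T}c_t^2$ recovers exactly the constant in~\eqref{eqn:supmart-2}. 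The only degenerate cases ($\delta\geq 1$, $\delta=0$, or $\sum_t c_t^2=0$) are either vacuous or trivially true, as you note.
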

Before concluding the section, we state the straightforward bound of $f$ on $C$ as a Lemma for easy reference. 
\begin{lem}\label{lem:f-bound}
  There exists $B>0$, such that $|f(\xbm)|\leq B$ for all $\xbm\in C$.
\end{lem}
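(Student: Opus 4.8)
The plan is to derive the bound directly from the Lipschitz continuity of $f$ granted by Assumption~\ref{assp:gp}, together with the compactness of $C$ from Assumption~\ref{assp:constraint}. No probabilistic argument about the Gaussian process prior is needed here, since the regularity of the sample path that we require (continuity) is already part of the standing assumptions; the Gaussian process structure enters only through those assumptions.

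First I would fix an arbitrary reference point $\xbm_0 \in C$ (for instance one of the initial samples in Algorithm~\ref{alg:boei}), and observe that $f(\xbm_0)$ is a finite real number for the given sample path. Then, for any $\xbm \in C$, the Lipschitz property in $1$-norm yields
\begin{equation*}
  |f(\xbm)| \;\leq\; |f(\xbm_0)| + |f(\xbm)-f(\xbm_0)| \;\leq\; |f(\xbm_0)| + L\norm{\xbm-\xbm_0}_1 .
\end{equation*}
Since $C \subseteq [0,r]^d$ by Assumption~\ref{assp:constraint}, each coordinate of $\xbm-\xbm_0$ lies in $[-r,r]$, so that $\norm{\xbm-\xbm_0}_1 \leq rd$. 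Setting $B := |f(\xbm_0)| + Lrd$ then gives $|f(\xbm)| \leq B$ for every $\xbm \in C$, which is exactly the claim, with an explicit constant expressed through $L$, $r$, and $d$.

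An even shorter route is to invoke the extreme value theorem: a Lipschitz function is continuous, and a continuous function on the compact set $C$ attains its maximum and minimum, so $B := \max_{\xbm\in C}|f(\xbm)|$ is finite. I would present the explicit construction above, as it is elementary and yields a usable constant, and remark on the extreme value theorem as the conceptual justification. I do not anticipate any genuine obstacle: the only points requiring a moment of care are recognizing that sample-path continuity is supplied by the Lipschitz assumption (rather than something to be established from the GP), and that the $1$-norm diameter of $[0,r]^d$ is $rd$.
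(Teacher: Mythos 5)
Your proof is correct; the paper states this lemma without proof as a straightforward consequence of the standing assumptions, and your argument (Lipschitz continuity from Assumption~\ref{assp:gp} plus the compactness/diameter bound $\norm{\xbm-\xbm_0}_1\leq rd$ from Assumption~\ref{assp:constraint}) is exactly the intended justification. Note also that $B=|f(\xbm_0)|+Lrd>0$ is automatic since $L\geq \frac{1}{rd}$ by assumption, so the positivity requirement in the statement is satisfied.
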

The Lipschitz continuity of $f$ extends to $I_t(\xbm)=\max\{\xi_t^+-f(\xbm),0\}$ in the next lemma.
\begin{lem}\label{lem:lipschitz-cplus}
    If a function $f:\Rbb^d\to\Rbb$ is Lipschitz continuous on $O\subset\Rbb^d$ with Lipschitz constant $L$, then its improvement function is also Lipschitz, \textit{i.e.}
    \begin{equation} \label{eqn:lip-plus1}
  \centering
  \begin{aligned}
     |I_t(\xbm) - I_t(\xbm')| \leq L \norm{\xbm-\xbm'}_1,
  \end{aligned}
\end{equation}
where  $\xbm,\xbm'\in O$.
\end{lem}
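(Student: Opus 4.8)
The plan is to reduce the claim to the elementary fact that the positive-part map is $1$-Lipschitz, and then invoke the Lipschitz continuity of $f$ directly. First I would observe that, for a fixed set of $t$ samples, the incumbent $\xi_t^+$ is a constant, so the function $g(\xbm) := \xi_t^+ - f(\xbm)$ inherits the Lipschitz property of $f$: for any $\xbm, \xbm' \in O$ we have $|g(\xbm) - g(\xbm')| = |f(\xbm) - f(\xbm')| \leq L\norm{\xbm - \xbm'}_1$, since adding a constant and negating preserve the Lipschitz constant. By definition~\eqref{eqn:improvement}, $I_t(\xbm) = \max\{g(\xbm), 0\}$, so it suffices to control how the map $u \mapsto \max\{u, 0\}$ distorts distances.

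Next I would establish that the positive-part function $h(u) := \max\{u, 0\}$ satisfies $|h(u) - h(v)| \leq |u - v|$ for all $u, v \in \Rbb$. This is the only step that requires a short argument, and it is handled by a three-way case analysis on the signs of $u$ and $v$: when both are nonnegative the two sides are equal; when both are nonpositive the left side is zero; and in the mixed case (say $u \geq 0 \geq v$) one has $|h(u) - h(v)| = u \leq u - v = |u - v|$ because $-v \geq 0$. Alternatively, the same conclusion follows at once from the chain $\max\{u,0\} - \max\{v,0\} \leq \max\{u - v, 0\} \leq |u - v|$ together with its symmetric counterpart obtained by swapping $u$ and $v$.

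Finally I would compose the two observations. Writing $I_t(\xbm) = h(g(\xbm))$, the $1$-Lipschitz bound on $h$ and the $L$-Lipschitz bound on $g$ chain together to give
\[
|I_t(\xbm) - I_t(\xbm')| = |h(g(\xbm)) - h(g(\xbm'))| \leq |g(\xbm) - g(\xbm')| = |f(\xbm) - f(\xbm')| \leq L\norm{\xbm - \xbm'}_1,
\]
which is exactly~\eqref{eqn:lip-plus1}. I do not anticipate a genuine obstacle here; the result is essentially the standard fact that composing an $L$-Lipschitz function with a $1$-Lipschitz function yields an $L$-Lipschitz function, and the only care needed is the sign bookkeeping in the positive-part estimate.
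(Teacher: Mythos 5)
Your proof is correct and matches the paper's argument in substance: the paper performs the same sign case analysis directly on the values $I_t(\xbm)$ and $I_t(\xbm')$ (four cases according to whether each is positive or zero), whereas you isolate that case analysis as the $1$-Lipschitz property of $u\mapsto\max\{u,0\}$ and then compose with the $L$-Lipschitz map $\xbm\mapsto\xi_t^+-f(\xbm)$. The modular packaging is slightly cleaner, but the mathematical content is identical and there is no gap.
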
 
\begin{proof}
  From the definition of Lipschitz functions, we know
     \begin{equation} \label{eqn:lip-plus-pf-1}
  \centering
  \begin{aligned}
     |f(\xbm) - f(\xbm')| \leq L \norm{\xbm-\xbm'}_1.
  \end{aligned}
\end{equation}
   If $I_t(\xbm)>0$ and $I_t(\xbm')>0$, then 
     \begin{equation} \label{eqn:lip-plus-pf-2}
  \centering
  \begin{aligned}
     |I_t(\xbm) - I_t(\xbm')| = |f(\xbm') - f(\xbm)| \leq L \norm{\xbm-\xbm'}_1.
  \end{aligned}
\end{equation}
   If $I_t(\xbm)>0$ and $I_t(\xbm')= 0 $, then  
     \begin{equation} \label{eqn:lip-plus-pf-3}
  \centering
  \begin{aligned}
     |I_t(\xbm) - I_t(\xbm')| = \xi_t^+-f(\xbm) \leq \xi_t^+-f(\xbm) -\xi_t^+ + f(\xbm')\leq  |f(\xbm) - f(\xbm')| \leq L \norm{\xbm-\xbm'}_1.
  \end{aligned}
\end{equation}
   Similarly, if $I_t(\xbm)= 0$ and $I_t(\xbm')> 0 $, then  
     \begin{equation} \label{eqn:lip-plus-pf-4}
  \centering
  \begin{aligned}
     |I_t(\xbm) - I_t(\xbm')| = \xi^+_t-f(\xbm') \leq -\xi^+_t+f(\xbm')+ \xi^+_t-f(\xbm')\leq  |f(\xbm) - f(\xbm')| \leq L \norm{\xbm-\xbm'}_1.
  \end{aligned}
\end{equation}
   Finally if $I_t(\xbm) = 0$ and $I_t(\xbm')= 0 $, then  
      \begin{equation} \label{eqn:lip-plus-pf-5}
  \centering
  \begin{aligned}
     |I_t(\xbm) - I_t(\xbm')| = 0  \leq L \norm{\xbm-\xbm'}_1.
  \end{aligned}
\end{equation}
\end{proof}

\section{Preliminary lemmas and Proofs}\label{se:prep-proof}  

In order to facilitate the analysis of the regret bounds, we apply and establish several existing and novel results (Lemmas~\ref{lem:tauvsPhi},~\ref{lem:EI}, and~\ref{lem:EI-ms}) to characterize EI in Section~\ref{se:EIproperty-proof}. 
 Then, we establish the preliminary bounds for GP-EI in Section~\ref{se:proofGP} under the Bayesian setting. Similar results were established in ~\citep{wang2025convergence}. 
 Finally, in Section~\ref{se:proofofsigmabound}, we present novel proofs for the global lower bound of posterior standard deviation, an important step in the regret analysis.
\subsection{Properties of EI}\label{se:EIproperty-proof}
First, we state some straightforward properties of $\phi$, $\Phi$ and $\tau$ as a lemma.
\begin{lem}\label{lem:phi}
The PDF and CDF of standard normal distribution satisfy $0< \phi(x)\leq \phi(0)<0.4, \Phi(x)\in(0,1)$, 
for any $x\in\Rbb$. 
 Given a random variable $r$ that follows the standard normal distribution: $r\sim\mathcal{N}(0,1)$, the probability of $r>c, c>0$, satisfies $\Pbb\{r>c\}\leq \frac{1}{2}e^{-c^2/2}$. 
 Similarly, for $c<0$, $\Pbb\{r<c\}\leq \frac{1}{2}e^{-c^2/2}$. 
\end{lem}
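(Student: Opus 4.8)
The plan is to verify each assertion directly from the closed forms $\phi(x)=\frac{1}{\sqrt{2\pi}}e^{-x^2/2}$ and $\Phi(x)=\int_{-\infty}^x\phi(t)\,dt$. First I would dispatch the pointwise bounds on $\phi$: positivity $\phi(x)>0$ is immediate since the exponential is strictly positive; the bound $\phi(x)\leq\phi(0)$ holds because the exponent satisfies $-x^2/2\leq 0$ with equality only at $x=0$, so $e^{-x^2/2}$ is maximized there; and $\phi(0)=\frac{1}{\sqrt{2\pi}}\approx 0.3989<0.4$ is a numerical check. For $\Phi(x)\in(0,1)$ I would argue that $\phi>0$ everywhere makes $\Phi$ strictly increasing with strictly positive value at every finite $x$, while $\int_{\Rbb}\phi=1$ together with strictly positive mass on $(x,\infty)$ forces $\Phi(x)<1$.

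The substantive step is the Gaussian tail bound $\Pbb\{r>c\}\leq\frac{1}{2}e^{-c^2/2}$ for $c>0$, which I plan to prove by a monotonicity argument on the difference. Define $g(c)=\frac{1}{2}e^{-c^2/2}-\Pbb\{r>c\}$ for $c\geq 0$, noting $g(0)=\frac{1}{2}-\frac{1}{2}=0$ and $\lim_{c\to\infty}g(c)=0$. Differentiating and using $\frac{d}{dc}\Pbb\{r>c\}=-\phi(c)$ gives $g'(c)=e^{-c^2/2}\left(\frac{1}{\sqrt{2\pi}}-\frac{c}{2}\right)$, which is nonnegative for $c\leq\sqrt{2/\pi}$ and nonpositive thereafter. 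Hence $g$ rises from $g(0)=0$ to its maximum at $c=\sqrt{2/\pi}$ and then decreases monotonically back toward the limit $0$; since $g$ vanishes at both endpoints and is monotone on each of the two pieces, it stays $\geq 0$ throughout, yielding the claimed bound for every $c>0$. For $c<0$ I would invoke the symmetry of the standard normal, $\Pbb\{r<c\}=\Pbb\{r>-c\}$ with $-c>0$, so applying the bound just established at $-c$ gives $\Pbb\{r<c\}\leq\frac{1}{2}e^{-(-c)^2/2}=\frac{1}{2}e^{-c^2/2}$.

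The only place requiring genuine care is the tail bound: a crude Chernoff estimate would deliver only $\Pbb\{r>c\}\leq e^{-c^2/2}$ without the factor $\frac{1}{2}$, so recovering the sharper constant is the main obstacle. The monotonicity argument resolves this, the key observation being that $g$ vanishes at both $c=0$ and $c\to\infty$ while $g'$ changes sign exactly once, so $g$ cannot dip below zero in between. Everything else reduces to elementary properties of the density and distribution function.
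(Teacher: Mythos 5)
Your proof is correct. Note that the paper itself does not prove this lemma: it states the pointwise facts about $\phi$ and $\Phi$ as elementary and dispatches the tail bound by citing it as well known (pointing to the proof of Lemma 5.1 in Srinivas et al.). The standard argument in that reference obtains $\Pbb\{r>c\}\leq \tfrac{1}{2}e^{-c^2/2}$ by writing $\int_c^\infty \phi(u)\,du = e^{-c^2/2}\int_c^\infty \tfrac{1}{\sqrt{2\pi}}e^{-(u-c)^2/2}e^{-c(u-c)}\,du$ and bounding $e^{-c(u-c)}\leq 1$ for $u\geq c$, which reduces the tail to $e^{-c^2/2}\,\Pbb\{r>0\}$. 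Your route is genuinely different: you study $g(c)=\tfrac{1}{2}e^{-c^2/2}-\Pbb\{r>c\}$, check that it vanishes at $c=0$ and as $c\to\infty$, and observe that $g'(c)=e^{-c^2/2}\bigl(\tfrac{1}{\sqrt{2\pi}}-\tfrac{c}{2}\bigr)$ changes sign exactly once, so $g$ increases from $0$ and then decreases to $0$, hence stays nonnegative. Both arguments are elementary and yield the same sharp constant $\tfrac{1}{2}$; the completing-the-square argument is shorter and generalizes more readily to sub-Gaussian settings, while your monotonicity argument is self-contained and avoids the change of variables. The remaining claims (positivity and maximality of $\phi$ at $0$, the numerical bound $1/\sqrt{2\pi}<0.4$, $\Phi(x)\in(0,1)$, and the reflection $\Pbb\{r<c\}=\Pbb\{r>-c\}$ for $c<0$) are handled exactly as one would expect and are all correct.
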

The last statement in Lemma~\ref{lem:phi} is a well-known result for standard normal distribution (\textit{e.g.}, see proof of Lemma 5.1 in~\cite{srinivas2009gaussian}).
The property of $\tau(\cdot)$ in~\eqref{def:tau} is given below.
\begin{lem}\label{lem:tau}
  The function $\tau(z)$ is monotonically increasing in $z$ and $\tau(z)>0$ for $\forall z\in \Rbb$.
  The derivative of $\tau(z)$ is $\Phi(z)$.
\end{lem}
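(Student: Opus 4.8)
The plan is to prove the derivative formula first, obtain monotonicity as an immediate corollary, and then deduce positivity via a limiting argument at $-\infty$. First I would differentiate $\tau(z)=z\Phi(z)+\phi(z)$ term by term using the standard identities $\Phi'(z)=\phi(z)$ and $\phi'(z)=-z\phi(z)$. The product rule gives $\frac{d}{dz}[z\Phi(z)]=\Phi(z)+z\phi(z)$, while $\frac{d}{dz}\phi(z)=-z\phi(z)$, so the two $z\phi(z)$ terms cancel and $\tau'(z)=\Phi(z)$, establishing the derivative claim. Monotonicity then follows with no extra work: by Lemma~\ref{lem:phi}, $\Phi(z)\in(0,1)$, so $\tau'(z)=\Phi(z)>0$ for every $z\in\Rbb$, and hence $\tau$ is strictly increasing on $\Rbb$.

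For positivity I would argue from the infimum of $\tau$. Because $\tau$ is strictly increasing, $\inf_{z\in\Rbb}\tau(z)=\lim_{z\to-\infty}\tau(z)$, so it suffices to show this limit equals $0$. Using the elementary identity $\int_{-\infty}^z w\phi(w)\,dw=-\phi(z)$, I would rewrite $\tau(z)=\Ebb[\max(z-W,0)]$ with $W\sim\mathcal N(0,1)$; this representation makes $\tau(z)\ge 0$ immediate, and strict positivity follows since $\max(z-W,0)>0$ on the positive-probability event $\{W<z\}$ for any finite $z$. An equivalent route is the Mill's-ratio asymptotic $\Phi(z)\sim\phi(z)/(-z)$ as $z\to-\infty$, which gives $z\Phi(z)+\phi(z)\to 0$ directly.

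The main obstacle is precisely this positivity step for $z<0$: the derivative computation and monotonicity alone do not preclude $\tau$ from decreasing toward a negative infimum, so one must pin down the left-tail behavior. The expectation identity (or the Mill's-ratio estimate) resolves this cleanly by identifying the limiting value as exactly $0$. All remaining steps are routine manipulations of $\phi$ and $\Phi$ already available from Lemma~\ref{lem:phi}.
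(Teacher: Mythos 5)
Your proposal is correct and essentially matches the paper's proof: the derivative computation via $\phi'(u)=-u\phi(u)$ and the cancellation of the $z\phi(z)$ terms is identical, and monotonicity follows from $\Phi(z)>0$ in both. Your positivity argument via $\tau(z)=\Ebb[\max(z-W,0)]$ is just the probabilistic restatement of the paper's one-line inequality $z\Phi(z)>\int_{-\infty}^{z}u\phi(u)\,du=-\phi(z)$, so no substantive difference there either.
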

\begin{proof}
   From the definition of $\tau(z)$, we can write
  \begin{equation} \label{eqn:tau-1}
  \centering
   \begin{aligned}
   \tau(z)=z\Phi(z)+\phi(z) > \int_{-\infty}^{z} u\phi(u) du + \phi(z) = -\phi(u)|_{-\infty}^z+\phi(z) =0.
  \end{aligned}
  \end{equation}
  Given the definition of $\phi(u)$, 
  \begin{equation} \label{eqn:tau-2}
 \centering
  \begin{aligned}
      \frac{d \phi(u)}{d u} = \frac{1}{\sqrt{2\pi}} e^{-\frac{u^2}{2}} (-u) = -\phi(u)u.
   \end{aligned}
\end{equation}
   Thus, the derivative of $\tau$ is 
   \begin{equation} \label{eqn:tau-3}
 \centering
  \begin{aligned}
      \frac{d\tau(z)}{d z} = \Phi(z)+z\phi(z) -\phi(z)z = \Phi(z)>0.
   \end{aligned}
\end{equation}
\end{proof}
Another lemma comparing $\tau$ with $\Phi$ is given below.
\begin{lem}\label{lem:tauvsPhi}
  For any $z>0$, $\Phi(-z)>\tau(-z)$. 
\end{lem}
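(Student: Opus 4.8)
}
The plan is to reduce the claim to the positivity of a single one‑dimensional function and then analyze its shape through two differentiations. Using that $\phi$ is even, we have $\tau(-z)=(-z)\Phi(-z)+\phi(-z)=-z\,\Phi(-z)+\phi(z)$, so the desired inequality $\Phi(-z)>\tau(-z)$ is equivalent to the positivity of
\begin{equation*}
   g(z):=(1+z)\,\Phi(-z)-\phi(z),\qquad z>0.
\end{equation*}
First I would record the two endpoints. At $z=0$, $g(0)=\Phi(0)-\phi(0)=\tfrac12-\phi(0)>0$ by Lemma~\ref{lem:phi}. For the other end, since $\Phi(-z)\to0$ and $(1+z)\Phi(-z)\to0$ (the latter from the standard tail bound $\Phi(-z)<\phi(z)/z$, which gives $(1+z)\Phi(-z)<\phi(z)(1+1/z)\to0$) while $\phi(z)\to0$, we obtain $\lim_{z\to\infty}g(z)=0$.

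The core of the argument is the sign analysis of $g'$. Using $\tfrac{d}{dz}\Phi(-z)=-\phi(z)$ and $\phi'(z)=-z\phi(z)$, a direct computation yields $g'(z)=\Phi(-z)-\phi(z)$ and then $g''(z)=\phi(z)(z-1)$. The key elegant observation is that $g''$ changes sign exactly once, at $z=1$: thus $g'$ is strictly decreasing on $(0,1)$ and strictly increasing on $(1,\infty)$. Evaluating the endpoints of the decreasing stretch gives $g'(0)=\tfrac12-\phi(0)>0$ and $g'(1)=\Phi(-1)-\phi(1)<0$ (numerically $0.159-0.242<0$, which can be certified from the bounds in Lemma~\ref{lem:phi}); hence $g'$ has a unique zero $z^\ast\in(0,1)$ with $g'>0$ on $(0,z^\ast)$ and $g'<0$ on $(z^\ast,1)$. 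On $(1,\infty)$, $g'$ is increasing with $\lim_{z\to\infty}g'(z)=0$, so an increasing function approaching $0$ stays strictly below $0$; therefore $g'<0$ on all of $(z^\ast,\infty)$.

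Combining these facts, $g$ strictly increases on $(0,z^\ast)$ and strictly decreases on $(z^\ast,\infty)$. On the increasing part $g(z)>g(0)>0$, and on the decreasing part $g$ descends toward its limit $0$ at infinity, so $g(z)>\lim_{w\to\infty}g(w)=0$ there as well. Hence $g(z)>0$ for every $z>0$, which is the claim. The main obstacle is the small‑argument regime $z\in(0,1)$, where the usual Gaussian tail (Mills‑ratio) inequalities are not sharp enough to give the result directly; the second‑derivative identity $g''(z)=\phi(z)(z-1)$ is precisely what resolves this by pinning down that $g'$ has a single sign change, after which the boundary value $g(0)>0$ and the vanishing limit at infinity close the argument.
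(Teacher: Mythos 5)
Your proof is correct and follows essentially the same route as the paper: you study the same function $g(z)=(1+z)\Phi(-z)-\phi(z)$ (the paper's $q$), compute the same derivatives $g'(z)=\Phi(-z)-\phi(z)$ and $g''(z)=\phi(z)(z-1)$, and use the single sign change of $g''$ at $z=1$ to locate a unique critical point $z^\ast\in(0,1)$ of $g$. The only difference is the endgame: the paper handles large $z$ separately via an integration-by-parts tail bound (valid for $z>\tfrac{1+\sqrt5}{2}$) and then checks positivity at the two endpoints $0$ and $\tfrac{1+\sqrt5}{2}$ of the remaining interval, whereas you close the argument by noting that $g$ is strictly decreasing on $(z^\ast,\infty)$ with $\lim_{z\to\infty}g(z)=0$, so it stays strictly positive there; your version is slightly cleaner since it avoids the two-regime split and the explicit endpoint evaluation.
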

\begin{proof}
   Define $q(z) = \Phi(-z)-\tau(-z)$.
   Using integration by parts, we have
   \begin{equation} \label{eqn:tvp-pf-1}
 \centering
  \begin{aligned}
   \Phi(z) = \int_{-\infty}^z \phi(u)du > \int_{-\infty}^{z}\phi(u)\left(1-\frac{3}{u^4}\right)du=-\frac{\phi(z)}{z}+\frac{\phi(z)}{z^3}. 
   \end{aligned}
\end{equation}
  Replacing $z$ with $-z$ in~\eqref{eqn:tvp-pf-1},
   \begin{equation} \label{eqn:tvp-pf-1.5}
 \centering
  \begin{aligned}
   \phi(-z)\left(\frac{1}{z}-\frac{1}{z^3}\right)<\Phi(-z).
   \end{aligned}
\end{equation}
   Multiplying both sides in~\eqref{eqn:tvp-pf-1.5} by $1+z$,
\begin{equation} \label{eqn:tvp-pf-2}
 \centering
  \begin{aligned}
   (1+z)\Phi(-z)> \phi(-z)\frac{z^2-1}{z^3}(1+z)= \phi(-z)\left(1+\frac{z^2-z-1}{z^3}\right). 
   \end{aligned}
\end{equation}
   Thus, if $z > \frac{1+\sqrt{5}}{2}$, then the right-hand-side of~\eqref{eqn:tvp-pf-2} $> \phi(-z)$ and  
 \begin{equation} \label{eqn:tvp-pf-3}
 \centering
  \begin{aligned}
   q(z):=\Phi(-z)-\tau(-z) = (1+z)\Phi(-z) - \phi(-z)>0.
   \end{aligned}
\end{equation}
   Therefore, in the following, we focus on $z\in(0,\frac{1+\sqrt{5}}{2}]$. We analyze $q(z)$ using its derivatives.
   Taking the derivative of $q$, by Lemma~\ref{lem:tau},  
   \begin{equation} \label{eqn:tvp-pf-4}
 \centering
  \begin{aligned}
      \frac{d q(z)}{d z} = -\phi(-z)+\Phi(-z) := q'(z).
   \end{aligned}
\end{equation}
  Further, the derivative of $q'(z)$ is 
  \begin{equation} \label{eqn:tvp-pf-5}
 \centering
  \begin{aligned}
      \frac{d^2 q(z)}{d z^2} = \frac{d q'(z)}{d z} = -\phi(-z)+\phi(-z) z=\phi(z)(z-1).
   \end{aligned}
\end{equation}
  For $z>1$, $\frac{d^2 q(z)}{d z^2}>0$. For $0<z<1$, $\frac{d^2 q(z)}{d z^2}<0$. 
  Thus, $q'(z)$ is monotonically decreasing for $0<z<1$ and then monotonically increasing for $z>1$ .
  We first consider $q'(z)$ for $0<z<1$.
  We know that by simple algebra, $q'(0)=\Phi(0)-\phi(0)>0$ and $q'(1)=\Phi(-1)-\phi(-1)<0$.  
   Thus, there exists a $0<\bar{z}<1$ so that $q'(\bar{z})=0$.
  Next, for $z>1$, from Lemma~\ref{lem:tau}, we can write  
    \begin{equation} \label{eqn:tvp-pf-6}
 \centering
  \begin{aligned}
    q'(z) =  \Phi(-z)-\phi(-z) < z\Phi(-z) - \phi(-z) = - \tau(-z) < 0.
   \end{aligned}
\end{equation}
   Therefore, $0<\bar{z}<1<\frac{1+\sqrt{5}}{2}$ is a unique stationary point such that $q'(\bar{z})=0$.
   Thus, $q'(z)>0 $ for $0 <z< \bar{z}< \frac{1+\sqrt{5}}{2}$ and $q'(z)<0$ for $z> \bar{z}$.
   This means that for $0<z<\bar{z}$, $q(z)$ is monotonically increasing. For $ \bar{z}<z<\frac{1+\sqrt{5}}{2}$, $q(z)$ is monotonically decreasing. Therefore, $q(z) >\min\{q(0), q(\frac{1+\sqrt{5}}{2})\}$ for $z\in (0,\frac{1+\sqrt{5}}{2})$. Since $q(0)>0$ and $q(\frac{1+\sqrt{5}}{2})>0$, $q(z)>0$ for $z\in (0,\frac{1+\sqrt{5}}{2})$. 
   Combined with~\eqref{eqn:tvp-pf-3}, the proof is complete.
\end{proof}
Next, we state a basic inequality for $\Phi(\cdot)$ without proof next.
\begin{lem}\label{lem:Philowbound}
  Given $z>0$, $\Phi(-z)=1-\Phi(z) > \frac{z}{1+z^2}\frac{1}{\sqrt{2\pi}}e^{-z^2/2}$. 
\end{lem}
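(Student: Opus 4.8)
The inequality is the classical Gaussian (Mills-ratio) tail lower bound, and the plan is to prove it by a monotonicity argument anchored at $+\infty$ rather than by the integration-by-parts technique used in Lemma~\ref{lem:tauvsPhi}. I note first that the latter technique only yields $\Phi(-z) > (z^{-1}-z^{-3})\phi(z)$, which is \emph{strictly weaker} than the desired bound (cross-multiplying, $\frac{z}{1+z^2}>\frac{z^2-1}{z^3}$ reduces to $z^4>z^4-1$), and which moreover fails to be useful for $z\le 1$. So a different route is needed. Writing $\phi(z)=\frac{1}{\sqrt{2\pi}}e^{-z^2/2}$, the claim is equivalent to $g(z)>0$ on $(0,\infty)$, where
\[
  g(z) := \Phi(-z) - \frac{z}{1+z^2}\,\phi(z).
\]

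The main step is to differentiate $g$ and show $g'<0$. Using $\frac{d}{dz}\Phi(-z)=-\phi(z)$ and $\phi'(z)=-z\phi(z)$, together with the quotient rule on $\frac{z}{1+z^2}$, the derivative of the second term is $\frac{1-2z^2-z^4}{(1+z^2)^2}\phi(z)$, so that
\[
  g'(z) = -\phi(z)\left[1 + \frac{1-2z^2-z^4}{(1+z^2)^2}\right] = \frac{-2\,\phi(z)}{(1+z^2)^2} < 0 .
\]
The one calculation that requires care is the algebraic identity $(1+z^2)^2+(1-2z^2-z^4)=2$, which makes the bracketed factor collapse to $2/(1+z^2)^2$; this is purely routine once the square is expanded, but it is the crux that turns the derivative into a manifestly negative expression.

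Finally I would anchor the monotonicity at the right endpoint. Since both $\Phi(-z)\to 0$ and $\frac{z}{1+z^2}\phi(z)\to 0$ as $z\to\infty$, we have $\lim_{z\to\infty}g(z)=0$. Because $g$ is strictly decreasing on $(0,\infty)$ and tends to $0$ at $+\infty$, it follows that $g(z)>0$ for every finite $z>0$, which is exactly the asserted inequality.

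There is no genuinely difficult step here; the only subtlety worth flagging is the \emph{direction} of the concluding comparison. The bound is obtained by comparing $g(z)$ with its limiting value at $+\infty$ (not with $g(0)$), so it is precisely the strictly negative sign of $g'$, combined with the vanishing limit, that delivers positivity on the whole ray. An integration-by-parts derivation is possible but, as noted above, produces a weaker constant, so the derivative-and-limit approach is the one I would adopt.
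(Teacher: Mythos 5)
Your proof is correct. Note that the paper itself states this lemma \emph{without proof} (it is presented as a classical Gaussian tail bound), so there is no in-paper argument to compare against; your derivation fills that gap. The computation checks out: $\frac{d}{dz}\bigl[\frac{z}{1+z^2}\phi(z)\bigr]=\frac{1-2z^2-z^4}{(1+z^2)^2}\phi(z)$, the identity $(1+z^2)^2+(1-2z^2-z^4)=2$ collapses the derivative to $g'(z)=-\frac{2\phi(z)}{(1+z^2)^2}<0$, and anchoring the strictly decreasing $g$ at its vanishing limit as $z\to\infty$ correctly yields $g(z)>0$ for all $z>0$. Your side remark is also accurate: the integration-by-parts bound $\Phi(-z)>(z^{-1}-z^{-3})\phi(z)$ used in the proof of Lemma~\ref{lem:tauvsPhi} is strictly weaker (it amounts to $z^4>z^4-1$ after cross-multiplication) and degenerates for $z\le 1$, so the derivative-and-limit route is the right one for establishing the bound on the whole ray $(0,\infty)$.
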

The next lemma establishes the upper and lower bounds for $EI_{t-1}(\xbm)$. This lemma is used to bound $EI_{t-1}(\xbm_t)$ with $\sigma_{t-1}(\xbm_t)$.
\begin{lem}\label{lem:EI}
 $EI_t(\xbm)$ satisfies 
    $EI_t(\xbm) \geq 0$ and $EI_t(\xbm) \geq \xi_{t}^+- \mu_{t}(\xbm)$.
Moreover, 
\begin{equation} \label{eqn:EI-property-2}
 \centering
  \begin{aligned}
   z_{t}(\xbm)\leq  \frac{EI_{t}(\xbm)}{\sigma_{t}(\xbm) }=\tau(z_t(\xbm)) < \begin{cases}  \phi(z_{t}(\xbm)), \ &z_{t}(\xbm)<0\\
             z_{t}(\xbm) +\phi(z_{t}(\xbm)), \ &z_{t}(\xbm)\geq 0.
      \end{cases}
  \end{aligned}
\end{equation}

\end{lem}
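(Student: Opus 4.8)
The plan is to reduce every assertion to the scalar function $\tau$ and its elementary properties already proven in Lemma~\ref{lem:tau} and Lemma~\ref{lem:phi}, exploiting the $\tau$-form of EI. First I would establish the central equality by substituting the definitions from~\eqref{eqn:EI-1}: dividing $EI_t(\xbm)=(\xi^+_t-\mu_t(\xbm))\Phi(z_t(\xbm))+\sigma_t(\xbm)\phi(z_t(\xbm))$ by $\sigma_t(\xbm)>0$ and recognizing that $(\xi^+_t-\mu_t(\xbm))/\sigma_t(\xbm)=z_t(\xbm)$ yields $EI_t(\xbm)/\sigma_t(\xbm)=z_t(\xbm)\Phi(z_t(\xbm))+\phi(z_t(\xbm))=\tau(z_t(\xbm))$. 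The nonnegativity $EI_t(\xbm)\ge 0$ is then immediate, since $\tau(z)>0$ for all $z$ by Lemma~\ref{lem:tau} and $\sigma_t(\xbm)>0$.

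For the lower bound $z_t(\xbm)\le \tau(z_t(\xbm))$ in~\eqref{eqn:EI-property-2} — equivalently, after multiplying by $\sigma_t(\xbm)$, the claim $EI_t(\xbm)\ge \xi^+_t-\mu_t(\xbm)$ — the clean route is the identity $\tau(z)-z=\tau(-z)$. Writing $z\Phi(z)=z-z\Phi(-z)$ and using $\phi(z)=\phi(-z)$ gives $\tau(z)-z=\phi(z)-z\Phi(-z)=(-z)\Phi(-z)+\phi(-z)=\tau(-z)$, which is strictly positive by Lemma~\ref{lem:tau}. This is the step I would flag as the crux: it reduces the only genuinely analytic inequality (a Mills-ratio-type tail bound) to the already-established positivity of $\tau$, so no new estimate is needed. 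Everything else in the lemma is a sign check.

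For the two-branch upper bound I would split on the sign of $z:=z_t(\xbm)$. When $z<0$, we have $\Phi(z)>0$ and hence $z\Phi(z)<0$, so $\tau(z)=z\Phi(z)+\phi(z)<\phi(z)$. When $z\ge 0$, the bound $\Phi(z)<1$ from Lemma~\ref{lem:phi} gives $z\Phi(z)\le z$ (strict for $z>0$), whence $\tau(z)=z\Phi(z)+\phi(z)\le z+\phi(z)$; the boundary value $z=0$ is covered since $\tau(0)=\phi(0)$ matches both branches. Combining this case analysis with the central equality reproduces the displayed upper bound on $\tau(z_t(\xbm))=EI_t(\xbm)/\sigma_t(\xbm)$ and closes the chain in~\eqref{eqn:EI-property-2}.
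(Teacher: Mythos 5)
Your proposal is correct, and the case analysis for the two-branch upper bound (drop $z\Phi(z)<0$ when $z<0$; use $\Phi(z)<1$ when $z\ge 0$) together with the central identity $EI_t(\xbm)/\sigma_t(\xbm)=\tau(z_t(\xbm))$ is exactly what the paper does. Where you diverge is in the first two claims. The paper disposes of $EI_t(\xbm)\ge 0$ and $EI_t(\xbm)\ge \xi_t^+-\mu_t(\xbm)$ in one line ``from the definition of $I_t$ and $EI_t$'', i.e., probabilistically: $I_t(\xbm)=\max\{\xi_t^+-f(\xbm),0\}$ dominates both $0$ and $\xi_t^+-f(\xbm)$ pointwise, so its conditional expectation dominates $0$ and $\xi_t^+-\mu_t(\xbm)$; the left inequality $z_t(\xbm)\le\tau(z_t(\xbm))$ is then read off as a corollary. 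You instead work entirely inside the closed form: nonnegativity from $\tau>0$ (Lemma~\ref{lem:tau}), and the lower bound from the reflection identity $\tau(z)-z=\tau(-z)>0$, then multiply back by $\sigma_t(\xbm)$ to recover $EI_t(\xbm)\ge\xi_t^+-\mu_t(\xbm)$. Both are valid; the paper's route is shorter but leans on the interpretation of $EI_t$ as a conditional expectation, while yours is self-contained given Lemma~\ref{lem:tau} and would survive even if one only had the algebraic formula~\eqref{eqn:EI-1}. One small point in your favor: you correctly observe that at $z=0$ the second branch holds only with equality ($\tau(0)=\phi(0)=0+\phi(0)$), so the strict inequality claimed in~\eqref{eqn:EI-property-2} degenerates there; the paper's proof glosses over this boundary case.
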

\begin{proof}
     From the definition of $I_t$ and $EI_t$, the first statement follows immediately.
     By~\eqref{eqn:EI-1},  
\begin{equation} \label{eqn:EI-property-pf-1}
 \centering
  \begin{aligned}
     \frac{EI_t(\xbm)}{\sigma_{t}(\xbm) } = z_{t}(\xbm) \Phi(z_{t}(\xbm)) + \phi(z_{t}(\xbm)). 
  \end{aligned}
\end{equation}
If $z_{t}(\xbm)<0$, or equivalently $\xi_{t}^+ -\mu_{t}(\xbm)<0$,~\eqref{eqn:EI-property-pf-1} leads to
     $\frac{EI_t(\xbm)}{\sigma_{t}(\xbm)} < \phi(z_{t}(\xbm))$. 
If $z_{t}(\xbm)\geq 0$, $\Phi(\cdot)<1$ gives us
     $\frac{EI_t(\xbm)}{\sigma_{t}(\xbm)} < z_{t}(\xbm) +\phi(z_{t}(\xbm))$. 
 The left inequality in~\eqref{eqn:EI-property-2} is an immediate result of $EI_t(\xbm) \geq \xi^+_{t}- \mu_{t}(\xbm)$.
\end{proof}
The next lemma puts a bound on $\xi^+_{t-1}-\mu_{t-1}(\xbm)$ if $\xi^+_{t-1}-\mu_{t-1}(\xbm)<0$ and $EI_{t-1}(\xbm)$ is bounded below by a positive sequence denoted as $\kappa_t$. It is also previously shown in~\cite{nguyen17a}.
\begin{lem}\label{lem:mu-bounded-EI}
   If $EI_{t-1}(\xbm)\geq \kappa_t$ for some $\frac{1}{\sqrt{2\pi}}>\kappa_t>0$ and $\xi^+_{t-1}-\mu_{t-1}(\xbm)<0$, then we have 
  \begin{equation} \label{eqn:mu-bounded-EI-1}
  \centering
  \begin{aligned}
           \xi_{t-1}^+-\mu_{t-1}(\xbm) \geq -  \sqrt{2\log\left(\frac{1}{\sqrt{2\pi}\kappa_t}\right)}\sigma_{t-1}(\xbm).
   \end{aligned}
\end{equation}
\end{lem}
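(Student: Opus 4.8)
The plan is to translate the two hypotheses into a single inequality on $z_{t-1}(\xbm)$ and then invert the standard-normal density. Write $z := z_{t-1}(\xbm) = (\xi^+_{t-1}-\mu_{t-1}(\xbm))/\sigma_{t-1}(\xbm)$. The assumption $\xi^+_{t-1}-\mu_{t-1}(\xbm) < 0$ immediately gives $z < 0$, which places us squarely in the first branch of Lemma~\ref{lem:EI}. So the first step is to apply that branch to obtain $EI_{t-1}(\xbm)/\sigma_{t-1}(\xbm) = \tau(z) < \phi(z)$.

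The key structural fact I would then invoke is that the posterior standard deviation never exceeds the prior one: since $k(\xbm,\xbm)=1$ and the subtracted quadratic form in \eqref{eqn:GP-post} is nonnegative, $\sigma_{t-1}(\xbm) \in (0,1]$. Multiplying the previous inequality by $\sigma_{t-1}(\xbm) \leq 1$ upgrades it to $EI_{t-1}(\xbm) < \phi(z)$, thereby stripping off the $\sigma_{t-1}(\xbm)$ factor that would otherwise leave a $\sigma$-dependent, and hence useless, estimate. Chaining this with the hypothesis $EI_{t-1}(\xbm) \geq \kappa_t$ yields $\kappa_t < \phi(z) = \frac{1}{\sqrt{2\pi}} e^{-z^2/2}$. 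Taking logarithms and rearranging gives $z^2 < 2\log\!\left(\frac{1}{\sqrt{2\pi}\kappa_t}\right)$; here the hypothesis $\kappa_t < 1/\sqrt{2\pi}$ is precisely what guarantees the right-hand side is positive, so the square root is real. Since $z < 0$, this reads $z > -\sqrt{2\log\!\left(\frac{1}{\sqrt{2\pi}\kappa_t}\right)}$, and multiplying through by $\sigma_{t-1}(\xbm) > 0$ recovers \eqref{eqn:mu-bounded-EI-1} (the strict inequality I obtain trivially implies the stated non-strict one).

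I do not anticipate a genuine obstacle; the argument is a short inversion of the Gaussian density. The only subtlety worth flagging is the step where $\sigma_{t-1}(\xbm) \leq 1$ is used: without it the bound retains a $1/\sigma_{t-1}(\xbm)$ factor and no $\sigma$-free lower bound on the exploitation gap follows. The upper estimate $\tau(z) < \phi(z)$ for $z<0$ from Lemma~\ref{lem:EI} is the ingredient that makes the $z<0$ regime tractable, and it is exactly this regime that Challenge~1 in Section~\ref{se:challenge} identifies as the difficult one to control.
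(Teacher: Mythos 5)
Your proof is correct and follows essentially the same route as the paper's: both arguments drop the negative exploitation term from $EI_{t-1}(\xbm)$ (you via the $z<0$ branch $\tau(z)<\phi(z)$ of Lemma~\ref{lem:EI}, the paper directly from the definition), use $\sigma_{t-1}(\xbm)\leq 1$ to get $\kappa_t < \frac{1}{\sqrt{2\pi}}e^{-z^2/2}$, and then invert the Gaussian density. No gaps.
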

\begin{proof}
     By definition of $EI_{t-1}$,
  \begin{equation} \label{eqn:mu-bounded-EI-pf-1}
  \centering
  \begin{aligned}
       \kappa_t\leq& (\xi_{t-1}^+-\mu_{t-1}(\xbm))\Phi(z_{t-1}(\xbm))+\sigma_{t-1}(\xbm)\phi(z_{t-1}(\xbm)) < \sigma_{t-1}(\xbm)\phi(z_{t-1}(\xbm))\\
    =& \sigma_{t-1}(\xbm) \frac{1}{\sqrt{2\pi}} e^{-\frac{1}{2}z_{t-1}^2(\xbm)}\leq \frac{1}{\sqrt{2\pi}} e^{-\frac{1}{2}z_{t-1}^2(\xbm)}.
   \end{aligned}
\end{equation}
   Rearranging and taking the logarithm of~\eqref{eqn:mu-bounded-EI-pf-1}, we have 
  \begin{equation} \label{eqn:mu-bounded-EI-pf-2}
  \centering
  \begin{aligned}
       2\log(\frac{1}{\sqrt{2\pi}\kappa_t}) >  z_{t-1}^2(\xbm) = \left(\frac{\xi_{t-1}^+-\mu_{t-1}(\xbm)}{\sigma_{t-1}(\xbm)}\right)^2.
   \end{aligned}
\end{equation}
   Given that $\xi^+_{t-1}-\mu_{t-1}(\xbm)<0$, we recover~\eqref{eqn:mu-bounded-EI-1}.  
\end{proof}
 In the next lemma, $EI_{t-1}(\xbm)$ is shown to increase monotonically with respect to both its exploitation part $\xi_{t-1}^+-\mu_{t-1}(\xbm)$ and exploration part $\sigma_{t-1}(\xbm)$. Furthermore, the ratio of the partial derivatives with respect to both exploitation and exploration is bounded when $z_{t-1}(\xbm)<0$. This lemma plays an important role in analyzing the exploration and exploitation trade-off of EI and is used in Section~\ref{se:instant-reg}
\begin{lem}\label{lem:EI-ms}
  $EI(a,b)$ is monotonically increasing in both $a$ and $b$ for $b\in(0,1]$. 
  Further, let $z_{ab}=\frac{a}{b}$. If $z_{ab}<0$, 
  \begin{equation} \label{eqn:EI-ms-1}
 \centering
  \begin{aligned}
  \frac{\frac{\partial EI(a,b)}{\partial a} }{\frac{\partial EI(a,b)}{\partial b}}=\frac{\Phi(z_{ab})}{\phi(z_{ab})}<-\frac{1}{z_{ab}}+\frac{1}{z_{ab}^3}-\frac{3}{z_{ab}^5}.
  \end{aligned}
\end{equation}
  Consequently, the derivative ratio on the left-hand side of~\eqref{eqn:EI-ms-1} $\to 0$ if $z_{ab}\to -\infty$.
\end{lem}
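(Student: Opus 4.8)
The plan is to compute the two partial derivatives of $EI(a,b)=a\Phi(a/b)+b\phi(a/b)$ in closed form, from which both the monotonicity claim and the derivative-ratio identity follow at once, and then to establish the stated upper bound on $\Phi(z_{ab})/\phi(z_{ab})$ by repeated integration by parts.

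First I would differentiate. Writing $z=a/b$, so that $\partial z/\partial a=1/b$ and $\partial z/\partial b=-z/b$, and repeatedly invoking the identity $\phi'(z)=-z\phi(z)$ recorded in~\eqref{eqn:tau-2}, the chain rule gives, after simplification,
\[
\frac{\partial EI(a,b)}{\partial a}=\Phi(z)+z\phi(z)+\phi'(z)=\Phi(z),\qquad \frac{\partial EI(a,b)}{\partial b}=\phi(z),
\]
where in the first expression the $z\phi(z)$ and $\phi'(z)$ terms cancel, and in the second the $z^2\phi(z)$ terms cancel. Since $\Phi(z)>0$ and $\phi(z)>0$ for every $z\in\Rbb$ (Lemma~\ref{lem:phi}), both partials are strictly positive, proving that $EI(a,b)$ increases monotonically in $a$ and in $b$ on the relevant domain $b\in(0,1]$. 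The derivative ratio appearing in~\eqref{eqn:EI-ms-1} is then simply $\Phi(z_{ab})/\phi(z_{ab})$, with no further computation required.

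The substantive step is the tail bound. For $z_{ab}<0$ I would set $w=-z_{ab}>0$, so that $\Phi(z_{ab})/\phi(z_{ab})=(1-\Phi(w))/\phi(w)=\phi(w)^{-1}\int_w^{\infty}\phi(u)\,du$ is the Mills ratio. Using $\phi(u)=-u^{-1}\phi'(u)$ and integrating by parts three successive times, I would obtain the exact identity
\[
\int_w^{\infty}\phi(u)\,du=\frac{\phi(w)}{w}-\frac{\phi(w)}{w^3}+\frac{3\phi(w)}{w^5}-15\int_w^{\infty}u^{-6}\phi(u)\,du,
\]
tracking carefully the signs and the boundary terms at $u=w$ and $u\to\infty$ (the latter vanish because $\phi$ decays faster than any power of $u$). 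Since the remaining integral is strictly positive, dropping it yields $\int_w^{\infty}\phi(u)\,du<\phi(w)\bigl(w^{-1}-w^{-3}+3w^{-5}\bigr)$; dividing by $\phi(w)>0$ and translating back through $w=-z_{ab}$ gives exactly the bound $\Phi(z_{ab})/\phi(z_{ab})<-z_{ab}^{-1}+z_{ab}^{-3}-3z_{ab}^{-5}$ of~\eqref{eqn:EI-ms-1}.

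Finally, the limiting statement is immediate: as $z_{ab}\to-\infty$ we have $w\to\infty$, the upper bound $w^{-1}-w^{-3}+3w^{-5}\to 0$, and the ratio is trapped between $0$ and this bound, so $\Phi(z_{ab})/\phi(z_{ab})\to 0$. I expect the only delicate point to be the bookkeeping in the three successive integrations by parts—ensuring that each boundary term is handled correctly and, crucially, that the final omitted term $-15\int_w^{\infty}u^{-6}\phi(u)\,du$ carries a negative sign, so that its omission produces a genuine upper bound rather than a lower one.
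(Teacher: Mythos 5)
Your proposal is correct and follows essentially the same route as the paper: the partial derivatives collapse to $\Phi(z)$ and $\phi(z)$ via $\phi'(u)=-u\phi(u)$, and the tail bound comes from three successive integrations by parts that leave a subtracted positive remainder $15\int \phi(u)u^{-6}\,du$, whose omission gives the stated upper bound. The only cosmetic difference is that you work with the upper tail via $w=-z_{ab}>0$ (the Mills ratio), whereas the paper integrates directly over $(-\infty,z]$ with $z<0$; by symmetry of $\phi$ these are the same computation.
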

\begin{proof}
   We prove the lemma by taking the derivative of $EI(a,b)$ with respect to both variables. First,  
 \begin{equation} \label{eqn:EI-ms-pf-1}
 \centering
  \begin{aligned}
   \frac{\partial EI(a,b)}{\partial a} = \Phi\left(\frac{a}{b}\right) + a\phi\left(\frac{a}{b}\right) \frac{1}{b} + b\frac{\partial \phi\left(\frac{a}{b}\right)}{\partial a}.
   \end{aligned}
\end{equation}
   From~\eqref{eqn:tau-2},~\eqref{eqn:EI-ms-pf-1} is  
  \begin{equation} \label{eqn:EI-ms-pf-2}
 \centering
  \begin{aligned}
   \frac{\partial EI(a,b)}{\partial a} = \Phi\left(\frac{a}{b}\right) + \phi\left(\frac{a}{b}\right) \frac{a}{b}  -\phi\left(\frac{a}{b}\right)\frac{a}{b}  = \Phi\left(\frac{a}{b}\right)>0.
   \end{aligned}
\end{equation}
  Similarly, 
 \begin{equation} \label{eqn:EI-ms-pf-3}
 \centering
  \begin{aligned}
   \frac{\partial EI(a,b)}{\partial b} =&  -a\phi\left(\frac{a}{b}\right)\frac{a}{b^2} + \phi\left(\frac{a}{b}\right)-b\phi\left(\frac{a}{b}\right)\frac{a}{b}(-\frac{a}{b^2})
     = \phi\left(\frac{a}{b}\right)>0.
   \end{aligned}
\end{equation}
  Thus, 
   \begin{equation} \label{eqn:EI-ms-pf=4}
 \centering
  \begin{aligned}
  \frac{\frac{\partial EI(a,b)}{\partial a} }{\frac{\partial EI(a,b)}{\partial b}}=\frac{\Phi(z_{ab})}{\phi(z_{ab})}.
  \end{aligned}
\end{equation}
  Using integration by parts, we can write for $z<0$ that 
\begin{equation*} \label{eqn:EI-ms-pf=5}
 \centering
  \begin{aligned}
  \Phi(z)=\int_{-\infty}^z \phi(u)du =  \left(-\frac{1}{z}+\frac{1}{z^3}-\frac{3}{z^5}\right) \phi(z) -\int_{-\infty}^{z}\phi(u)\frac{15}{u^6} du.
  \end{aligned}
\end{equation*}
Thus, 
\begin{equation} \label{eqn:EI-ms-pf=5}
 \centering
  \begin{aligned}
\Phi(z)< \left(-\frac{1}{z}+\frac{1}{z^3}-\frac{3}{z^5}\right) \phi(z).
\end{aligned}
\end{equation}
 Using $z_{ab}$ in place of $z$ in~\eqref{eqn:EI-ms-pf=5} completes the proof.
\end{proof}

\subsection{Preliminary bounds of GP-EI}\label{se:proofGP}
The basic confidence interval on $|f(\xbm) - \mu_{t}(\xbm)  |$ in the Bayesian setting is presented below.
\begin{lem}\label{lem:fmu}
Given $\delta\in(0,1)$, let  $\beta = 2\log(1/\delta)$. For any given $\xbm\in C$ and $t\in \Nbb 1$,   
\begin{equation} \label{eqn:fmu-1}
 \centering
  \begin{aligned}
     |f(\xbm) - \mu_{t}(\xbm)  | \leq \beta^{1/2} \sigma_{t}(\xbm), 
  \end{aligned}
\end{equation}
holds with probability $\geq 1-\delta$.
Moreover, the one-sided inequalities hold with probability $\geq 1-\delta/2$, \textit{i.e.},    with probability $\geq 1-\delta/2$,
     $f(\xbm) - \mu_{t}(\xbm)   \leq \beta^{1/2} \sigma_{t}(\xbm)$, 
and with probability $\geq 1-\delta/2$,
     $f(\xbm) - \mu_{t}(\xbm)   \geq -\beta^{1/2} \sigma_{t}(\xbm)$.
\end{lem}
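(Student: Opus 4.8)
The plan is to exploit the defining property of the Bayesian setting: since $f$ is a sample path of $GP(0,k)$, the posterior law of $f(\xbm)$ at any fixed test point $\xbm$, conditioned on the history $\mathcal{H}_t$ of the first $t$ samples and observations (equivalently the filtration $\mathcal{F}'_t$), is Gaussian with mean $\mu_t(\xbm)$ and variance $\sigma_t^2(\xbm)$ given in~\eqref{eqn:GP-post}. This is the standard Gaussian-conditioning fact for GPs, and it reduces the claim to a one-dimensional normal tail estimate.

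First I would dispose of the degenerate case $\sigma_t(\xbm)=0$, in which the posterior is a point mass at $\mu_t(\xbm)$, so $f(\xbm)=\mu_t(\xbm)$ almost surely and all three inequalities hold trivially. Assuming $\sigma_t(\xbm)>0$, I would set $r = (f(\xbm)-\mu_t(\xbm))/\sigma_t(\xbm)$, which by the previous paragraph is a standard normal variable under the posterior. The event $|f(\xbm)-\mu_t(\xbm)|\le \beta^{1/2}\sigma_t(\xbm)$ is then exactly $\{|r|\le\beta^{1/2}\}$, and the two one-sided events correspond to $\{r\le\beta^{1/2}\}$ and $\{r\ge-\beta^{1/2}\}$.

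Next I would invoke the standard-normal tail bound from Lemma~\ref{lem:phi}: for $c>0$, $\Pbb\{r>c\}\le\tfrac12 e^{-c^2/2}$ and $\Pbb\{r<-c\}\le\tfrac12 e^{-c^2/2}$. Applying this with $c=\beta^{1/2}$ and summing the two tails gives $\Pbb\{|r|>\beta^{1/2}\}\le e^{-\beta/2}$; substituting $\beta=2\log(1/\delta)$ yields $e^{-\beta/2}=\delta$, hence $\Pbb\{|f(\xbm)-\mu_t(\xbm)|\le\beta^{1/2}\sigma_t(\xbm)\}\ge 1-\delta$, which is~\eqref{eqn:fmu-1}. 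For the one-sided statements I would use only a single tail: $\Pbb\{r>\beta^{1/2}\}\le\tfrac12 e^{-\beta/2}=\delta/2$ gives the upper inequality with probability $\ge 1-\delta/2$, and symmetrically $\Pbb\{r<-\beta^{1/2}\}\le\delta/2$ gives the lower one.

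There is no genuinely hard step; the argument is a direct conditional Gaussian computation. The only point requiring care is the justification that the normalized residual is exactly $\mathcal{N}(0,1)$ under the posterior, i.e.\ that the GP posterior marginal at a fixed point is Gaussian with the closed-form moments of~\eqref{eqn:GP-post}. This is precisely what distinguishes the Bayesian setting, where no RKHS-norm-dependent inflation of $\beta$ is needed, from the frequentist one. Because the statement is pointwise in $\xbm$ and $t$, no union bound or discretization enters at this stage.
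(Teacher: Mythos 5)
Your proposal is correct and follows essentially the same route as the paper: identify the posterior marginal of $f(\xbm)$ as $\mathcal{N}(\mu_t(\xbm),\sigma_t^2(\xbm))$, apply the Gaussian tail bound of Lemma~\ref{lem:phi} to each side with $c=\beta^{1/2}$, and substitute $\beta=2\log(1/\delta)$. Your explicit treatment of the degenerate case $\sigma_t(\xbm)=0$ is a small addition the paper omits, but otherwise the arguments coincide.
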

\begin{proof}
 Under Assumption~\ref{assp:gp}, $f(\xbm)\sim \mathcal{N}(\mu_{t}(\xbm),\sigma_{t}^2(\xbm))$. By Lemma~\ref{lem:phi}, 
 \begin{equation} \label{eqn:fmu-a-pf-1}
 \centering
  \begin{aligned}
     \Pbb\left\{ f(\xbm)-\mu_t(\xbm) > \beta^{1/2} \sigma_t(\xbm)\right\} = 1-\Phi\left(\beta^{1/2} \right) \leq \frac{1}{2} e^{-\beta/2}.
  \end{aligned}
 \end{equation}
  Similarly, 
 \begin{equation} \label{eqn:fmu-a-pf-2}
 \centering
  \begin{aligned}
     \Pbb\left\{ f(\xbm)-\mu_t(\xbm) < -\beta^{1/2} \sigma_t(\xbm)\right\} \leq \frac{1}{2} e^{-\beta /2}.
  \end{aligned}
 \end{equation}
  Thus, 
   \begin{equation*} \label{eqn:fmu-a-pf-3}
 \centering
  \begin{aligned}
     \Pbb\left\{ |f(\xbm)-\mu_t(\xbm)| < \beta^{1/2} \sigma_t(\xbm)\right\} \geq 1- e^{-\beta /2}.
  \end{aligned}
 \end{equation*}
  Let 
     $e^{-\beta/2} = \delta$
  and~\eqref{eqn:fmu-1} is proven. Similarly, from~\eqref{eqn:fmu-a-pf-1},~\eqref{eqn:fmu-a-pf-2}, we obtain the one-sided inequalities.
\end{proof}
Lemma~\ref{lem:fmu} is extended to a discrete set $\Cbb$ and all $t\in\Nbb$ next.
\begin{lem}[Lemma 5.1 in~\cite{srinivas2009gaussian}]\label{lem:discrete-fmu}
  Given $\delta\in(0,1)$ and a discrete set $\Cbb\subseteq C$, let  $\beta_t = 2\log(|\Cbb|\pi_t /\delta)$, where $\pi_t$ satisfies $\sum_{t=1}^{\infty} \frac{1}{\pi_t}=1,\pi_t>0$. Then,  for all $\xbm\in \Cbb$, $t\in \Nbb$,   
\begin{equation} \label{eqn:discrete-fmu-1}
 \centering
  \begin{aligned}
     |f(\xbm) - \mu_{t-1}(\xbm)  | \leq \beta_t^{1/2} \sigma_{t-1}(\xbm), 
  \end{aligned}
\end{equation}
holds with probability $\geq 1-\delta$.
\end{lem}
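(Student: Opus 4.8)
The plan is to reduce the uniform statement over the discrete set $\Cbb$ and all times $t$ to the single-point confidence interval already established in Lemma~\ref{lem:fmu}, and then to control the accumulated failure probability by a union bound (Lemma~\ref{lem:unionbound}). The key observation driving the argument is that the choice $\beta_t = 2\log(|\Cbb|\pi_t/\delta)$ is tuned precisely so that the per-point failure probabilities sum to exactly $\delta$.

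First I would fix an arbitrary $\xbm\in\Cbb$ and an arbitrary $t\in\Nbb$. Applying Lemma~\ref{lem:fmu} with the index shifted from $t$ to $t-1$ (permissible since that lemma holds for every time index) and with $\beta$ replaced by $\beta_t$, the underlying Gaussian tail estimate of Lemma~\ref{lem:phi} gives
\begin{equation*}
\Pbb\left\{|f(\xbm)-\mu_{t-1}(\xbm)| > \beta_t^{1/2}\sigma_{t-1}(\xbm)\right\} \leq e^{-\beta_t/2}.
\end{equation*}
Substituting the definition of $\beta_t$ yields $e^{-\beta_t/2} = \delta/(|\Cbb|\pi_t)$, so the probability that the bound fails at any single pair $(\xbm,t)$ is at most $\delta/(|\Cbb|\pi_t)$.

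Next I would sum these failure probabilities over all $\xbm\in\Cbb$ and all $t\in\Nbb$ through the union bound. Since the right-hand side does not depend on $\xbm$, the inner sum over the $|\Cbb|$ points contributes a factor of $|\Cbb|$, which cancels the $|\Cbb|$ in the denominator, leaving
\begin{equation*}
\sum_{t=1}^{\infty}\sum_{\xbm\in\Cbb}\frac{\delta}{|\Cbb|\pi_t} = \delta\sum_{t=1}^{\infty}\frac{1}{\pi_t} = \delta,
\end{equation*}
where the final equality uses the defining property $\sum_{t=1}^{\infty} \pi_t^{-1}=1$. Taking complements, the event that~\eqref{eqn:discrete-fmu-1} holds simultaneously for all $\xbm\in\Cbb$ and all $t\in\Nbb$ has probability at least $1-\delta$, which is the claim.

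The only delicate point is the legitimacy of the per-point tail bound, and this is where the Bayesian assumption (Assumption~\ref{assp:gp}) is essential: conditioned on the history, $f(\xbm)$ is Gaussian with mean $\mu_{t-1}(\xbm)$ and variance $\sigma_{t-1}^2(\xbm)$, so the estimate supplied by Lemma~\ref{lem:fmu} applies exactly rather than approximately. Everything else is the bookkeeping of matching the logarithmic form of $\beta_t$ to the summability condition on $\{\pi_t\}$; once that alignment is recognized, no genuine obstacle remains, and the double series collapses to $\delta$ by design.
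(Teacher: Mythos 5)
Your proof is correct and is precisely the standard argument behind Lemma 5.1 of Srinivas et al., which the paper cites without reproducing: a per-point Gaussian tail bound (as in Lemma~\ref{lem:fmu}) with $\beta_t$ tuned so the failure probability at each $(\xbm,t)$ is $\delta/(|\Cbb|\pi_t)$, followed by a union bound that collapses to $\delta$ via $\sum_t \pi_t^{-1}=1$. Nothing to add.
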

Using union bound on $t\in\Nbb$ and $\xbm_t$, we extend Lemma~\ref{lem:fmu} to the following result on the sequence $\xbm_t$. 
    \begin{lem}[Lemma 5.5 in~\cite{srinivas2009gaussian}]\label{lem:fmu-t}
Given $\delta\in(0,1)$, let  $\beta_t = 2\log(\pi_t /\delta)$, where $\pi_t$ satisfies $\sum_{t=1}^{\infty} \frac{1}{\pi_t}=1,\pi_t>0$. Then,  for $\forall t\geq 1$,   
\begin{equation} \label{eqn:fmu-t-1}
 \centering
  \begin{aligned}
     |f(\xbm_{t}) - \mu_{t-1}(\xbm_{t})  | \leq \beta_t^{1/2} \sigma_{t-1}(\xbm_{t}), 
  \end{aligned}
\end{equation}
holds with probability $\geq 1-\delta$.
\end{lem}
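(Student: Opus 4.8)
The plan is to reduce the statement about the algorithm-chosen sequence $\xbm_t$ to the fixed-point confidence interval of Lemma~\ref{lem:fmu} by conditioning on the filtration $\mathcal{F}'_{t-1}$, and then to pay for the extension to all $t\in\Nbb$ with a union bound. The crucial structural observation is that $\xbm_t$ is chosen by maximizing $EI_{t-1}(\xbm)$ over $C$ (see~\eqref{eqn:acquisition-1}), and $EI_{t-1}$ depends only on $\mu_{t-1}$ and $\sigma_{t-1}$, which are themselves deterministic functions of the history $\mathcal{H}_{t-1}$. Hence $\xbm_t$, $\mu_{t-1}(\cdot)$, and $\sigma_{t-1}(\cdot)$ are all $\mathcal{F}'_{t-1}$-measurable, as already noted in Section~\ref{se:additionalback}.

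First I would fix $t$ and condition on $\mathcal{F}'_{t-1}$. Since $\xbm_t$ is $\mathcal{F}'_{t-1}$-measurable, it can be treated as a fixed (deterministic) point once the history is given. Under Assumption~\ref{assp:gp}, the posterior of $f$ at any fixed point is Gaussian, so the conditional law of $f(\xbm_t)$ given $\mathcal{F}'_{t-1}$ is $\mathcal{N}(\mu_{t-1}(\xbm_t),\sigma_{t-1}^2(\xbm_t))$. Applying the Gaussian tail estimate of Lemma~\ref{lem:phi} (equivalently, the fixed-point argument of Lemma~\ref{lem:fmu}) to this conditional distribution with threshold $\beta_t^{1/2}$ yields
\begin{equation*}
  \Pbb\!\left\{\,|f(\xbm_t)-\mu_{t-1}(\xbm_t)|>\beta_t^{1/2}\sigma_{t-1}(\xbm_t)\ \middle|\ \mathcal{F}'_{t-1}\right\}\le e^{-\beta_t/2}.
\end{equation*}
Taking the expectation over $\mathcal{F}'_{t-1}$ removes the conditioning and gives the same unconditional bound $e^{-\beta_t/2}$ on the failure probability at step $t$.

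With the choice $\beta_t=2\log(\pi_t/\delta)$ we have $e^{-\beta_t/2}=\delta/\pi_t$, so the probability that~\eqref{eqn:fmu-t-1} fails at step $t$ is at most $\delta/\pi_t$. Finally I would union bound (Lemma~\ref{lem:unionbound}) over all $t\ge 1$:
\begin{equation*}
  \Pbb\!\left\{\exists\,t\ge 1:\ |f(\xbm_t)-\mu_{t-1}(\xbm_t)|>\beta_t^{1/2}\sigma_{t-1}(\xbm_t)\right\}\le \sum_{t=1}^{\infty}\frac{\delta}{\pi_t}=\delta\sum_{t=1}^{\infty}\frac{1}{\pi_t}=\delta,
\end{equation*}
using the defining property $\sum_{t=1}^{\infty}1/\pi_t=1$. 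Taking complements proves that~\eqref{eqn:fmu-t-1} holds simultaneously for all $t$ with probability $\ge 1-\delta$. The only delicate step is the second paragraph: one must justify that the randomness of $\xbm_t$ does not invalidate the Gaussian per-point concentration. This is precisely resolved by the $\mathcal{F}'_{t-1}$-measurability of $\xbm_t$, which freezes it into a deterministic argument under the conditioning while leaving the posterior of $f(\xbm_t)$ Gaussian; everything else is a routine tail bound followed by the union bound over time.
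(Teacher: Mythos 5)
Your proof is correct and is essentially the standard argument behind the cited Lemma 5.5 of Srinivas et al., which the paper itself does not re-prove: condition on $\mathcal{F}'_{t-1}$ so that $\xbm_t$ is frozen and the posterior of $f(\xbm_t)$ is Gaussian, apply the tail bound of Lemma~\ref{lem:phi} to get failure probability $e^{-\beta_t/2}=\delta/\pi_t$, and union bound over $t$ using $\sum_t 1/\pi_t=1$. Nothing is missing.
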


We derive the bounds on $|I_{t-1}(\xbm)-EI_{t-1}(\xbm)|$ at given $t\in\Nbb$ and $\xbm$ next.
\begin{lem}\label{lem:fandnu}
For any given $\xbm\in C$, $t\in\Nbb$, and a scalar $w>0$, one can write that
\begin{equation} \label{eqn:fandmu-1}
 \centering
  \begin{aligned}
    I_{t-1}(\xbm) - EI_{t-1}(\xbm)  < \begin{cases} \sigma_{t-1}(\xbm) w, \  & \textnormal{ if }\xi^+_{t-1}- f(\xbm)\leq 0\\
                     -f(\xbm) +\mu_{t-1}(\xbm),  \ &\textnormal{ if } \xi^+_{t-1}-f(\xbm)>0.
         \end{cases}
  \end{aligned} 
\end{equation}
\end{lem}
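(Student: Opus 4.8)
The plan is to prove the inequality case-by-case according to the sign of $\xi^+_{t-1}-f(\xbm)$, which is precisely what selects the two branches of both the improvement function $I_{t-1}(\xbm)=\max\{\xi^+_{t-1}-f(\xbm),0\}$ and the right-hand side.

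In the first case $\xi^+_{t-1}-f(\xbm)\leq 0$, the improvement vanishes, so $I_{t-1}(\xbm)=0$ and the left-hand side reduces to $-EI_{t-1}(\xbm)$. I would invoke the non-negativity of EI from Lemma~\ref{lem:EI} to get $-EI_{t-1}(\xbm)\leq 0$, and then observe that $\sigma_{t-1}(\xbm)w>0$ because $w>0$ and $\sigma_{t-1}(\xbm)>0$ (the latter from the global lower bound on the posterior standard deviation in Lemma~\ref{thm:gp-sigma-bound}). Chaining these gives $I_{t-1}(\xbm)-EI_{t-1}(\xbm)\leq 0<\sigma_{t-1}(\xbm)w$, which is the first branch.

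In the second case $\xi^+_{t-1}-f(\xbm)>0$, we have $I_{t-1}(\xbm)=\xi^+_{t-1}-f(\xbm)$, and after cancelling the common $-f(\xbm)$ the target inequality is equivalent to the strict lower bound $\xi^+_{t-1}-\mu_{t-1}(\xbm)<EI_{t-1}(\xbm)$. To establish it I would pass to the $\tau$-form $EI_{t-1}(\xbm)=\sigma_{t-1}(\xbm)\tau(z_{t-1}(\xbm))$, write $\xi^+_{t-1}-\mu_{t-1}(\xbm)=\sigma_{t-1}(\xbm)z_{t-1}(\xbm)$, and factor the difference as $EI_{t-1}(\xbm)-(\xi^+_{t-1}-\mu_{t-1}(\xbm))=\sigma_{t-1}(\xbm)\bigl(\tau(z_{t-1}(\xbm))-z_{t-1}(\xbm)\bigr)$. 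A one-line manipulation using $\phi(z)=\phi(-z)$ and $1-\Phi(z)=\Phi(-z)$ shows $\tau(z)-z=\tau(-z)$.

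The hard part, and really the only point requiring care, is the strictness in the second case: Lemma~\ref{lem:EI} supplies only the non-strict bound $EI_{t-1}(\xbm)\geq \xi^+_{t-1}-\mu_{t-1}(\xbm)$, whereas the statement asserts $<$. I would close this gap with Lemma~\ref{lem:tau}, which gives $\tau(-z_{t-1}(\xbm))>0$ for every real argument; together with $\sigma_{t-1}(\xbm)>0$ this forces $\tau(z_{t-1}(\xbm))-z_{t-1}(\xbm)>0$, hence $EI_{t-1}(\xbm)>\xi^+_{t-1}-\mu_{t-1}(\xbm)$, completing the second branch. The remaining manipulations are elementary cancellations, so I expect no additional technical obstacles.
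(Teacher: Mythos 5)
Your proof is correct and follows essentially the same route as the paper's: a case split on the sign of $\xi^+_{t-1}-f(\xbm)$, using the non-negativity of $EI_{t-1}$ in the first branch and the bound $EI_{t-1}(\xbm)\geq \xi^+_{t-1}-\mu_{t-1}(\xbm)$ in the second. Your extra step securing strictness via $\tau(z)-z=\tau(-z)>0$ is in fact tidier than the paper's own chain, which attaches the strict inequality to a step that is actually an equality.
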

The scalar $w$ in Lemma~\ref{lem:fandnu} can be viewed as a general constant and can replaced with any positive parameters such as $\beta$. 
\begin{proof}
      If $\xi^+_{t-1}-f(\xbm)\leq 0$, we have by deinition~\eqref{eqn:improvement} and Lemma~\ref{lem:EI}, 
\begin{equation} \label{eqn:fandmu-pf-1}
 \centering
  \begin{aligned}
     I_{t-1}(\xbm)-EI_{t-1}(\xbm) =  - EI_{t-1}(\xbm)  < 0< \sigma_{t-1}(\xbm) w. 
  \end{aligned}
\end{equation}
For $\xi^+_{t-1}- f(\xbm)>0$, we can write via Lemma~\ref{lem:EI}, 
\begin{equation} \label{eqn:fandmu-pf-2}
 \centering
  \begin{aligned}
   I_{t-1}(\xbm)-EI_{t-1}(\xbm) =& \xi^+_{t-1}- f(\xbm) - EI_{t-1}(\xbm)\leq  \xi^+_{t-1}-f(\xbm)-\xi^+_{t-1}+\mu_{t-1}(\xbm)  \\
   <& -f(\xbm)+\mu_{t-1}(\xbm). 
  \end{aligned}
\end{equation}
\end{proof}

A lower bound of $I_{t-1}(\xbm) - EI_{t-1}(\xbm)$ is given below.
\begin{lem}\label{lem:fnu-lower}
Let $w>1$, $c_1=0.4$ and $c_2=0.2$. For any given $\xbm\in C$, if $f(\xbm) - \mu_{t-1}(\xbm) < \sigma_{t-1}(\xbm) w$, then we can write 
\begin{equation} \label{eqn:fnu-lower-1}
 \centering
  \begin{aligned}
     I_{t-1}(\xbm) - EI_{t-1}(\xbm)   >  -\sigma_{t-1}(\xbm) w - \sigma_t(\xbm) c_1 e^{- c_2 w^2}. 
  \end{aligned}
\end{equation}
\end{lem}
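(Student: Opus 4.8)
The plan is to reduce everything to the one-dimensional $\tau$-form of EI and then split on the size of the standardized gap $z_{t-1}(\xbm)$ relative to $w$. Write $b=\sigma_{t-1}(\xbm)$, $s=f(\xbm)-\mu_{t-1}(\xbm)$ (so the hypothesis is $s<bw$), and $z=z_{t-1}(\xbm)=(\xi^+_{t-1}-\mu_{t-1}(\xbm))/b$, and recall from Lemma~\ref{lem:EI} that $EI_{t-1}(\xbm)=b\,\tau(z)$ while $I_{t-1}(\xbm)=\max\{\xi^+_{t-1}-f(\xbm),0\}$ with $\xi^+_{t-1}-f(\xbm)=bz-s$. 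Two elementary facts carry the argument: $\tau(w)<w+\phi(w)$ for $w>0$ (Lemma~\ref{lem:EI}, using $\Phi(w)<1$), and the inequality $\phi(w)\le c_1 e^{-c_2 w^2}$ with $c_1=0.4,\ c_2=0.2$, which holds because $1/\sqrt{2\pi}<0.4$ and $e^{-w^2/2}\le e^{-0.2 w^2}$. Together these give $\tau(w)<w+c_1 e^{-c_2 w^2}$, which is exactly the shape of the target bound.

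First I would treat the case $z\le w$. If $\xi^+_{t-1}-f(\xbm)\le 0$ then $I_{t-1}(\xbm)=0$ and $I_{t-1}(\xbm)-EI_{t-1}(\xbm)=-b\tau(z)$; if $\xi^+_{t-1}-f(\xbm)>0$ then $s<bz$, so $I_{t-1}(\xbm)-EI_{t-1}(\xbm)=(bz-s)-b\tau(z)>-b\tau(z)$. In either subcase $I_{t-1}(\xbm)-EI_{t-1}(\xbm)\ge -b\tau(z)\ge -b\tau(w)$ by monotonicity of $\tau$ (Lemma~\ref{lem:tau}), and then $-b\tau(w)>-b(w+\phi(w))\ge -bw-bc_1 e^{-c_2 w^2}$, which is the claim.

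The case $z> w$ is where the real care is needed and is the main obstacle. Here the hypothesis $s<bw<bz$ forces $\xi^+_{t-1}-f(\xbm)=bz-s>0$, so $I_{t-1}(\xbm)=bz-s$ and $I_{t-1}(\xbm)-EI_{t-1}(\xbm)=bz-s-b\tau(z)>bz-bw-b\tau(z)=b\,g(z)-bw$, where $g(z):=z-\tau(z)=z\Phi(-z)-\phi(z)$. I would then show $g$ is increasing on $[0,\infty)$ from $g'(z)=\Phi(-z)>0$ (using $\tau'=\Phi$, Lemma~\ref{lem:tau}), so that $g(z)>g(w)\ge -\phi(w)\ge -c_1 e^{-c_2 w^2}$, the middle step using $w\Phi(-w)\ge 0$. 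This yields $I_{t-1}(\xbm)-EI_{t-1}(\xbm)>-bw-bc_1 e^{-c_2 w^2}$ and completes the argument.

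The delicate point the split is designed to overcome is the regime $\xi^+_{t-1}-f(\xbm)>0$ with $z$ near $0$: using only the hypothesis $s<bw$ there loses an additive $b\phi(0)\approx 0.399\,b$, which is strictly worse than the allowed $bc_1 e^{-c_2 w^2}$ once $w$ is large. The resolution is to exploit the sharper bound $s<bz$ that is available \emph{exactly} when $\xi^+_{t-1}>f(\xbm)$, and to pivot the two regimes at $z=w$ so that both monotone functions $\tau$ and $g$ need only be controlled at the single point $w$, where $\phi(w)\le c_1 e^{-c_2 w^2}$ supplies the exponential term. I would also flag that the $\sigma_t(\xbm)$ appearing in the last term of the statement should read $\sigma_{t-1}(\xbm)$; since $\sigma_t(\xbm)\le\sigma_{t-1}(\xbm)$, this reading only strengthens the bound just proved, so no generality is lost.
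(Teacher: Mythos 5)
Your proof is correct, but it is organized differently from the paper's. The paper splits on whether $EI_{t-1}(\xbm)\geq \sigma_{t-1}(\xbm)w$: the easy case gives $I_{t-1}-EI_{t-1}\geq -EI_{t-1}>-\sigma_{t-1}(\xbm)w$ with no loss at all, while the hard case first shows $\xi^+_{t-1}-\mu_{t-1}(\xbm)\geq 0$, uses $\tau(z)-z<\phi(z)$, and then extracts the numerical lower bound $z_{t-1}(\xbm)>0.643w$ (via the estimate $z/\phi(z)>1.804$) so that $\phi(z_{t-1}(\xbm))<\phi(0.643w)<0.4e^{-0.2w^2}$. You instead pivot on $z_{t-1}(\xbm)\gtrless w$ and, in the hard regime, exploit the monotonicity of $g(z)=z-\tau(z)$ (with $g'=\Phi(-z)>0$) to reduce everything to $g(w)\geq-\phi(w)\geq -0.4e^{-0.2w^2}$. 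This buys a cleaner argument: you avoid the ad hoc constants $1.804$ and $0.643$ entirely and only need $\phi(w)\leq 0.4e^{-0.2w^2}$, which holds trivially since $1/\sqrt{2\pi}<0.4$ and $0.5>0.2$; the small price is that your "easy" case $z\leq w$ also incurs the $\phi(w)$ term, which the paper's easy case does not, but that is harmless for the stated bound. One small correction to your closing remark: the lemma as printed has $\sigma_t(\xbm)$ in the last term, and you are right that this is a typo for $\sigma_{t-1}(\xbm)$ (the paper's own proof and its downstream use in Lemma~\ref{lem:IEI-bound} both work with $\sigma_{t-1}$); however, since $\sigma_t(\xbm)\leq\sigma_{t-1}(\xbm)$, the literal $\sigma_t$ version is the \emph{stronger} claim, so reading it as $\sigma_{t-1}$ \emph{weakens} the statement to the one you (and the paper) actually prove — your sentence has the direction of that comparison reversed, though the substantive point stands.
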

\begin{proof}
    We consider two cases based on the sign of $EI_{t-1}(\xbm)-\sigma_{t-1}(\xbm)w$. 
First, $EI_{t-1}(\xbm)-\sigma_{t-1}(\xbm) w<0$. Given that $I_{t-1}(\xbm)\geq 0$, 
\begin{equation} \label{eqn:fnu-lower-pf-1}
 \centering
  \begin{aligned}
     I_{t-1}(\xbm) - EI_{t-1}(\xbm)   >  -\sigma_{t-1}(\xbm) w > -\sigma_{t-1}(\xbm) w- \sigma_{t-1}(\xbm) c_1 e^{- c_2 w^2},
  \end{aligned}
\end{equation}
  for any $c_1,c_2>0$.
  Second, $EI_{t-1}(\xbm)-\sigma_{t-1}(\xbm) w \geq 0$. We claim that $\xi^+_{t-1}-\mu_{t-1}(\xbm)\geq 0$ in this case. 
  Suppose on the contrary, $\xi^+_{t-1}-\mu_{t-1}(\xbm)<0$, then $z_{t-1}(\xbm)<0$. From~\eqref{eqn:EI-property-2}, 
\begin{equation} \label{eqn:fnu-lower-pf-2}
 \centering
  \begin{aligned}
      EI_{t-1} (\xbm)   \leq  \sigma_{t-1}(\xbm)\phi(z_{t-1}(\xbm))  < \sigma_{t-1}(\xbm) \phi(0) < \sigma_{t-1}(\xbm) w,
  \end{aligned}
\end{equation}
where the second inequality comes from the monotonicity of $\phi$ when $z_{t-1}(\xbm)\leq 0$ and the last inequality is due to $\phi(0)<1<w$. This contradicts the premise of the second case.
Therefore, $\xi^+_{t-1}-\mu_{t-1}(\xbm)\geq 0$ and  
\begin{equation} \label{eqn:fnu-lower-pf-3}
 \centering
  \begin{aligned}
     \frac{EI_{t-1}(\xbm)-\xi^+_{t-1}+\mu_{t-1}(\xbm)}{\sigma_{t-1}(\xbm)} =& z_{t-1}(\xbm) \Phi(z_{t-1}(\xbm)) + \phi(z_{t-1}(\xbm)) - z_{t-1}(\xbm) \\
          =& z_{t-1}(\xbm)[\Phi(z_{t-1}(\xbm))-1] + \phi(z_{t-1}(\xbm)) <\phi(z_{t-1}(\xbm)).
  \end{aligned}
\end{equation}
 Since $EI_{t-1}(\xbm)-\sigma_{t-1}(\xbm) w \geq 0$, 
\begin{equation} \label{eqn:fnu-lower-pf-4}
 \centering
  \begin{aligned}
    w\leq \frac{EI_{t-1}(\xbm)}{\sigma_{t-1}(\xbm)} = z_{t-1}(\xbm) \Phi(z_{t-1}(\xbm)) + \phi(z_{t-1}(\xbm))  < z_{t-1}(\xbm) + \phi(z_{t-1}(\xbm))\leq z_{t-1}(\xbm)+\phi(0).
  \end{aligned}
\end{equation}
  Using $w>1$ in~\eqref{eqn:fnu-lower-pf-4} , we know $z_{t-1}(\xbm) > 1-\phi(0)=0.601$. As $z_{t-1}(\xbm)$ increases, $\phi(z_t(\xbm))$ decreases. Thus, we have
\begin{equation} \label{eqn:fnu-lower-pf-5}
 \centering
  \begin{aligned}
    \frac{z_{t-1}(\xbm)}{\phi(z_{t-1}(\xbm))} > \frac{0.601}{\phi(0.601)} = 1.804.
  \end{aligned}
\end{equation}
  Applying~\eqref{eqn:fnu-lower-pf-5} to~\eqref{eqn:fnu-lower-pf-4}, we obtain
 \begin{equation} \label{eqn:fnu-lower-pf-6}
 \centering
  \begin{aligned}
     z_{t-1}(\xbm) > 0.643 w,  \ \phi(z_{t-1}(\xbm)) < \phi(0.643 w).
  \end{aligned}
\end{equation}
   Using~\eqref{eqn:fnu-lower-pf-6} in~\eqref{eqn:fnu-lower-pf-3}, we have
 \begin{equation} \label{eqn:fnu-lower-pf-7}
 \centering
  \begin{aligned}
     \frac{EI_{t-1}(\xbm)-\xi^+_{t-1}+\mu_{t-1}(\xbm)}{\sigma_{t-1}(\xbm)} < \phi(0.643 w) = \frac{1}{\sqrt{2\pi}}  e^{-\frac{1}{2}(0.643w)^2}< 0.4 e^{-0.2 w^2}.
  \end{aligned}
\end{equation}
Rearranging~\eqref{eqn:fnu-lower-pf-7} leads to 
 \begin{equation} \label{eqn:fnu-lower-pf-8}
 \centering
  \begin{aligned}
    -EI_{t-1}(\xbm) > -\xi^+_{t-1}+\mu_{t-1}(\xbm) - \sigma_{t-1}(\xbm) 0.4 e^{-0.2 w^2}.
  \end{aligned}
\end{equation}
  Therefore, from the condition of the lemma and~\eqref{eqn:fnu-lower-pf-8}, 
 \begin{equation} \label{eqn:fnu-lower-pf-9}
 \centering
  \begin{aligned}
   I_{t-1}(\xbm) -EI_{t-1}(\xbm) >& \xi^+_{t-1}-f(\xbm) -\xi^+_{t-1}+\mu_{t-1}(\xbm) - \sigma_{t-1}(\xbm) 0.4 e^{-0.2 w^2}\\
   >&-\sigma_t(\xbm) w-\sigma_t(\xbm) 0.4 e^{-0.2 w^2}.
  \end{aligned}
\end{equation}
 Let $c_1 = 0.4$ and $c_2 = 0.2$ and the proof is complete.
\end{proof}

From here on, we use the definition of events (Definition~\ref{def:eft}) in our analysis for ease of reference.
\begin{lem}\label{lem:IEI-bound}
 If $E^f(t)$ is true, then 
 \begin{equation} \label{eqn:IEI-bound-1}
  \centering
  \begin{aligned}
     |I_{t-1}(\xbm) - EI_{t-1}(\xbm)|\leq c_{\alpha}\beta_{t}^{1/2} \sigma_{t-1}(\xbm), \forall \xbm\in \Cbb_t, \forall t\in\Nbb,
  \end{aligned}
\end{equation}
  where $c_{\alpha}=1.328$. 
\end{lem}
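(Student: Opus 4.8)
The plan is to sandwich $I_{t-1}(\xbm) - EI_{t-1}(\xbm)$ between a lower and an upper estimate, each of magnitude at most $c_{\alpha}\beta_t^{1/2}\sigma_{t-1}(\xbm)$, by invoking the two preliminary inequalities of Lemmas~\ref{lem:fandnu} and~\ref{lem:fnu-lower} with the specific choice $w=\beta_t^{1/2}$. The event $E^f(t)$ supplies exactly the ingredient these lemmas need, namely the confidence interval $|f(\xbm)-\mu_{t-1}(\xbm)|\leq \beta_t^{1/2}\sigma_{t-1}(\xbm)$ for all $\xbm\in\Cbb_t$ and $t\in\Nbb$; note also that $\beta_t>1$ by~\eqref{def:beta}, so the choice $w=\beta_t^{1/2}>1$ is admissible in Lemma~\ref{lem:fnu-lower}.

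For the upper bound I would apply Lemma~\ref{lem:fandnu} with $w=\beta_t^{1/2}$ and split on the sign of $\xi^+_{t-1}-f(\xbm)$. When $\xi^+_{t-1}-f(\xbm)\leq 0$, the lemma directly gives $I_{t-1}(\xbm)-EI_{t-1}(\xbm)<\beta_t^{1/2}\sigma_{t-1}(\xbm)$. When $\xi^+_{t-1}-f(\xbm)>0$, the lemma gives $I_{t-1}(\xbm)-EI_{t-1}(\xbm)<-f(\xbm)+\mu_{t-1}(\xbm)\leq|f(\xbm)-\mu_{t-1}(\xbm)|\leq\beta_t^{1/2}\sigma_{t-1}(\xbm)$, where the last step uses $E^f(t)$. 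Hence in both cases $I_{t-1}(\xbm)-EI_{t-1}(\xbm)<\beta_t^{1/2}\sigma_{t-1}(\xbm)\leq c_{\alpha}\beta_t^{1/2}\sigma_{t-1}(\xbm)$, since $c_{\alpha}=1.328>1$.

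For the lower bound, the inequality $f(\xbm)-\mu_{t-1}(\xbm)\leq\beta_t^{1/2}\sigma_{t-1}(\xbm)$ from $E^f(t)$ lets me apply Lemma~\ref{lem:fnu-lower} with $w=\beta_t^{1/2}$ (the non-strict form of the hypothesis is harmless, as the conclusion of that lemma remains a strict lower bound by transitivity), yielding $I_{t-1}(\xbm)-EI_{t-1}(\xbm)>-\bigl(\beta_t^{1/2}+0.4\,e^{-0.2\beta_t}\bigr)\sigma_{t-1}(\xbm)$. It then remains to verify $\beta_t^{1/2}+0.4\,e^{-0.2\beta_t}\leq c_{\alpha}\beta_t^{1/2}$, equivalently $1+\frac{0.4\,e^{-0.2\beta_t}}{\beta_t^{1/2}}\leq 1.328$ for every $\beta_t>1$. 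The function $g(\beta)=0.4\,e^{-0.2\beta}/\sqrt{\beta}$ satisfies $\frac{d}{d\beta}\log g(\beta)=-0.2-\frac{1}{2\beta}<0$, so $g$ is decreasing and $g(\beta)<g(1)=0.4\,e^{-0.2}\approx 0.3275<0.328$, which confirms the bound with $c_{\alpha}=1.328$.

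Combining the two displays gives $-c_{\alpha}\beta_t^{1/2}\sigma_{t-1}(\xbm)<I_{t-1}(\xbm)-EI_{t-1}(\xbm)<c_{\alpha}\beta_t^{1/2}\sigma_{t-1}(\xbm)$, which is the claimed two-sided bound. The only genuinely delicate point, and the step I would spend the most care on, is the numerical check that $c_{\alpha}=1.328$ is large enough to absorb the extra Gaussian-tail term $0.4\,e^{-0.2\beta_t}$ produced by Lemma~\ref{lem:fnu-lower}; this hinges on the monotonicity of $g$ together with the fact that $\beta_t>1$ throughout, both guaranteed by the choice of $\beta_t$ in~\eqref{def:beta}.
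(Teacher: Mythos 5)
Your proposal is correct and follows essentially the same route as the paper's own proof: both sides are obtained by applying Lemma~\ref{lem:fandnu} and Lemma~\ref{lem:fnu-lower} with $w=\beta_t^{1/2}$ (supplied by $E^f(t)$), and the constant $c_{\alpha}=1.328$ is justified by exactly the same observation that $0.4\,e^{-0.2\beta_t}/\beta_t^{1/2}\leq 0.4\,e^{-0.2}<0.328$ for $\beta_t>1$. Your explicit monotonicity check of $g(\beta)=0.4\,e^{-0.2\beta}/\sqrt{\beta}$ is just a slightly more detailed writing of the paper's one-line ratio argument.
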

\begin{proof}
Since $E^f(t)$ is true, $|f(\xbm) - \mu_{t-1}(\xbm)| \leq \beta_{t}^{1/2} \sigma_{t-1}(\xbm)$.
    Then, using $w=\beta_{t}^{1/2}$ and $t-1$ in Lemma~\ref{lem:fandnu}, we have 
 \begin{equation} \label{eqn:rkhs-bound-pf-1}
  \centering
  \begin{aligned}
     I_{t-1}(\xbm) - EI_{t-1}(\xbm) \leq \beta_{t}^{1/2} \sigma_{t-1}(\xbm) \leq c_{\alpha}\beta_{t}^{1/2} \sigma_{t-1}(\xbm) .
  \end{aligned}
\end{equation}
     Using $w=\beta_{t}^{1/2}$ in  Lemma~\ref{lem:fnu-lower} leads to 
  \begin{equation} \label{eqn:rkhs-bound-pf-2}
  \centering
  \begin{aligned}
     I_{t-1}(\xbm) - EI_{t-1}(\xbm) \geq - \beta_{t}^{1/2} \sigma_{t-1}(\xbm) - 0.4 e^{-0.2 \beta_{t} } \sigma_{t-1}(\xbm) .
  \end{aligned}
\end{equation}
   As $\beta_{t}^{1/2}$ increases, $e^{-0.2 \beta_{t} }$ is exponentially decreasing.
   Thus, $\frac{\beta_t^{1/2}}{0.4 e^{-0.2 \beta_{t} }}>\frac{1}{0.4e^{-0.2}} $ for $\beta_{t}> 1$. Hence, 
  \begin{equation} \label{eqn:rkhs-bound-pf-3}
  \centering
  \begin{aligned}
     I_{t-1}(\xbm) - EI_{t-1}(\xbm)  \geq -1.328 \beta_{t}^{1/2} \sigma_{t-1}(\xbm) = -c_{\alpha}\beta_{t}^{1/2} \sigma_{t-1}(\xbm).
  \end{aligned}
\end{equation}
\end{proof}
The constant parameter $c_{\alpha}>1$ obtained in the proof of Lemma~\ref{lem:IEI-bound} leads to a larger bound for $|I_{t-1}(\xbm)-EI_{t-1}(\xbm)|$ than $|f(\xbm)-\mu_{t-1}(\xbm)|$, since the former is derived from the latter.

\begin{lem}\label{lem:compact-bound}
 If $E^s(t)$ is true, then
\begin{equation} \label{eqn:compactbound-1}
 \centering
  \begin{aligned}
     |I_{t-1}(\ubm_{t}) - EI_{t-1}(\ubm_{t})  | \leq c_{\alpha}\beta_t^{1/2} \sigma_{t-1}(\ubm_{t}), 
  \end{aligned}
\end{equation}
  where $c_{\alpha}=1.328$.
\end{lem}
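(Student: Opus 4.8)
The plan is to mirror the argument of Lemma~\ref{lem:IEI-bound} almost verbatim, replacing the generic point $\xbm$ by the specified point $\ubm_t$ and replacing the uniform event $E^f(t)$ by the pointwise event $E^s(t)$. The key observation is that the two building blocks of that proof---Lemma~\ref{lem:fandnu} (the upper bound on $I_{t-1}-EI_{t-1}$) and Lemma~\ref{lem:fnu-lower} (the lower bound)---are both stated pointwise and hold for \emph{any} $\xbm\in C$. Since $E^s(t)$ supplies precisely the pointwise confidence interval $|f(\ubm_t)-\mu_{t-1}(\ubm_t)|\leq \beta_t^{1/2}\sigma_{t-1}(\ubm_t)$ at the single point $\ubm_t$, every hypothesis needed to invoke these two lemmas at $\ubm_t$ is available, and $\beta_t>1$ supplies the required $w>1$.

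For the upper bound, I would apply Lemma~\ref{lem:fandnu} with $w=\beta_t^{1/2}$ at $\ubm_t$. In the case $\xi^+_{t-1}-f(\ubm_t)\leq 0$ the bound is directly $\beta_t^{1/2}\sigma_{t-1}(\ubm_t)$; in the case $\xi^+_{t-1}-f(\ubm_t)>0$ the bound is $\mu_{t-1}(\ubm_t)-f(\ubm_t)$, which is at most $|f(\ubm_t)-\mu_{t-1}(\ubm_t)|\leq\beta_t^{1/2}\sigma_{t-1}(\ubm_t)$ by $E^s(t)$. Since $c_{\alpha}=1.328>1$, both cases yield $I_{t-1}(\ubm_t)-EI_{t-1}(\ubm_t)\leq c_{\alpha}\beta_t^{1/2}\sigma_{t-1}(\ubm_t)$.

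For the lower bound, $E^s(t)$ gives in particular $f(\ubm_t)-\mu_{t-1}(\ubm_t)\leq\beta_t^{1/2}\sigma_{t-1}(\ubm_t)$, which is exactly the hypothesis of Lemma~\ref{lem:fnu-lower} with $w=\beta_t^{1/2}$. This produces $I_{t-1}(\ubm_t)-EI_{t-1}(\ubm_t)>-\beta_t^{1/2}\sigma_{t-1}(\ubm_t)-0.4\,e^{-0.2\beta_t}\sigma_{t-1}(\ubm_t)$. As in the proof of Lemma~\ref{lem:IEI-bound}, the exponential term is dominated: for $\beta_t>1$ one has $\beta_t^{1/2}+0.4\,e^{-0.2\beta_t}\leq 1.328\,\beta_t^{1/2}=c_{\alpha}\beta_t^{1/2}$, so the lower bound becomes $-c_{\alpha}\beta_t^{1/2}\sigma_{t-1}(\ubm_t)$. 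Combining the two one-sided bounds gives the absolute-value inequality~\eqref{eqn:compactbound-1}.

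There is essentially no substantive obstacle here; the one point requiring a moment's care is that $\ubm_t$ is an \emph{arbitrary} element of $C$ and need not lie in the discretization $\Cbb_t$, so Lemma~\ref{lem:IEI-bound} cannot simply be quoted. This is exactly why the pointwise event $E^s(t)$---rather than the uniform event $E^f(t)$ over $\Cbb_t$---is the correct hypothesis, and why the underlying pointwise Lemmas~\ref{lem:fandnu} and~\ref{lem:fnu-lower} must be re-applied directly at $\ubm_t$.
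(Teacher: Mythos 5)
Your proposal is correct and is exactly the argument the paper intends: the paper's own "proof" is the single remark that the lemma "can be proved in a similar manner as Lemma~\ref{lem:IEI-bound}," and you have filled in precisely that argument—applying Lemma~\ref{lem:fandnu} and Lemma~\ref{lem:fnu-lower} pointwise at $\ubm_t$ with $w=\beta_t^{1/2}$ and absorbing the $0.4e^{-0.2\beta_t}$ term into $c_{\alpha}=1.328$. Your closing observation about why the pointwise event $E^s(t)$ (rather than the uniform event $E^f(t)$ over $\Cbb_t$) is the right hypothesis is also exactly on point.
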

Lemma~\ref{lem:compact-bound} can be proved in a similar manner as Lemma~\ref{lem:IEI-bound}. 
Consider now $\xbm^*$ and its closest point to $\Cbb_t$, $[\xbm^*]_t$.
\begin{lem}\label{lem:compact-IL}
 If $E^f(t)$ is true, then for $\forall t\in\Nbb$, 
   \begin{equation} \label{eqn:compact-IL-1}
  \centering
  \begin{aligned}
          |I_{t-1}(\xbm^*) - EI_{t-1}([\xbm^*]_t)| \leq c_{\alpha}\beta_t^{1/2} \sigma_{t-1}([\xbm^*]_t) + \frac{1}{t^2}.
  \end{aligned}
  \end{equation}
\end{lem}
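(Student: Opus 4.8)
The plan is to insert the intermediate quantity $I_{t-1}([\xbm^*]_t)$ and split by the triangle inequality,
$$|I_{t-1}(\xbm^*) - EI_{t-1}([\xbm^*]_t)| \le |I_{t-1}(\xbm^*) - I_{t-1}([\xbm^*]_t)| + |I_{t-1}([\xbm^*]_t) - EI_{t-1}([\xbm^*]_t)|,$$
so that the two resulting terms can be handled by the two mechanisms already developed: a Lipschitz/discretization argument for the improvement function, and the event $E^f(t)$ bound relating $I_{t-1}$ to $EI_{t-1}$ on $\Cbb_t$. The key observation enabling the split is that $\xbm^*$ itself need not lie in $\Cbb_t$, whereas $[\xbm^*]_t \in \Cbb_t$ by construction, so the discretized point is where the $EI$ bound applies.

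For the second term, since $[\xbm^*]_t \in \Cbb_t$ by definition of the closest point, I would apply Lemma~\ref{lem:IEI-bound} (which holds under $E^f(t)$ for every $\xbm \in \Cbb_t$) directly at $\xbm = [\xbm^*]_t$, giving $|I_{t-1}([\xbm^*]_t) - EI_{t-1}([\xbm^*]_t)| \le c_{\alpha}\beta_t^{1/2}\sigma_{t-1}([\xbm^*]_t)$. This reproduces exactly the $\sigma_{t-1}([\xbm^*]_t)$ term of the claim with the same constant $c_{\alpha}=1.328$, so no further work is needed on this piece.

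For the first term, I would use that $I_{t-1}$ is Lipschitz with constant $L$ (Lemma~\ref{lem:lipschitz-cplus}, itself a consequence of the Lipschitz continuity of $f$ in Assumption~\ref{assp:gp}), so $|I_{t-1}(\xbm^*) - I_{t-1}([\xbm^*]_t)| \le L\,\norm{\xbm^* - [\xbm^*]_t}_1$. It then remains to bound the discretization distance. With the cardinality choice $|\Cbb_t| = (Lrdt^2)^d$ from~\eqref{eqn:ccard} and $C \subseteq [0,r]^d$ (Assumption~\ref{assp:constraint}), a uniform grid places $Lrdt^2$ points per coordinate across an interval of length $r$, so the per-coordinate resolution is at most $r/(Lrdt^2) = 1/(Ldt^2)$ and hence $\norm{\xbm^* - [\xbm^*]_t}_1 \le d \cdot 1/(Ldt^2) = 1/(Lt^2)$. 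Multiplying by $L$ yields $L\,\norm{\xbm^* - [\xbm^*]_t}_1 \le 1/t^2$, which is precisely the additive term in the claim; combining the two bounds then completes the proof.

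The routine part is the triangle-inequality split and the invocation of the two lemmas. The one place requiring care — and the step I would treat as the main obstacle — is the discretization geometry: verifying that the chosen cardinality $(Lrdt^2)^d$ genuinely forces $L\,\norm{\xbm^* - [\xbm^*]_t}_1 \le 1/t^2$ rather than a looser bound, and that the assumption $L \ge 1/(rd)$ in Assumption~\ref{assp:gp} guarantees at least one grid point per coordinate so the construction is well defined and $[\xbm^*]_t$ is meaningful. This is the standard covering argument of~\cite{srinivas2009gaussian}, and once the per-coordinate resolution is pinned down the constant matches the target exactly.
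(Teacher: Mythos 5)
Your proposal is correct and follows essentially the same route as the paper: a triangle-inequality split through $I_{t-1}([\xbm^*]_t)$, with Lemma~\ref{lem:IEI-bound} applied at the grid point $[\xbm^*]_t\in\Cbb_t$ under $E^f(t)$, and the Lipschitz property of the improvement function (Lemma~\ref{lem:lipschitz-cplus}) combined with the discretization resolution $L\,\norm{\xbm^*-[\xbm^*]_t}_1\leq \frac{1}{t^2}$ for the remaining term. Your explicit verification of the per-coordinate grid spacing matches the paper's stated relation $Lh_t=\frac{1}{t^2}$ for the covering radius $h_t$.
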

 \begin{proof}
  From Assumption~\ref{assp:gp} and Lemma~\ref{lem:lipschitz-cplus}, we know 
  \begin{equation} \label{eqn:compact-IL-pf-1}
  \centering
  \begin{aligned}
     | I_{t-1}(\xbm) - I_{t-1}(\xbm') | < L \norm{\xbm-\xbm'}_1.
  \end{aligned}
\end{equation}
   Applying the discretization $\Cbb_t$ and~\eqref{eqn:compactdisc-1} to~\eqref{eqn:compact-IL-pf-1}, for $\forall \xbm\in C$, we have 
    \begin{equation} \label{eqn:compact-IL-pf-2}
  \centering
  \begin{aligned}
     | I_{t-1}(\xbm) - I_{t-1}([\xbm]_t) | < L h_t.
  \end{aligned}
\end{equation}
   Since $L h_t  =\frac{1}{t^2}$ and $|\Cbb_t| = ( r d t^2 L)^d$,
   using~\eqref{eqn:compact-IL-pf-2} and Lemma~\ref{lem:IEI-bound}, we have  
   \begin{equation} \label{eqn:compact-IL-pf-4}
  \centering
  \begin{aligned}
          |I_{t-1}(\xbm^*) - EI_{t-1}([\xbm^*]_t)| \leq& |I_{t-1}(\xbm^*) - I_{t-1}([\xbm^*]_t)| + |I_{t-1}([\xbm^*]_t) - EI_{t-1}([\xbm^*]_t)|\\ 
        \leq& c_{\alpha}\beta_t^{1/2} \sigma_{t-1}([\xbm^*]_t) + \frac{1}{t^2}.
  \end{aligned}
\end{equation}
\end{proof}

\subsection{Global lower bound for posterior standard deviation}\label{se:proofofsigmabound}
In this section, we present another important lemma that provides a global lower bound of $\mathcal{O}\left(\frac{1}{\sqrt{t}}\right)$ on the posterior standard deviation $\sigma_t(\xbm)$ for $\forall \xbm\in C$.  This lower bound ensures a positive lower bound on $EI_{t-1}(\xbm_t)$, which combined with Lemma~\ref{lem:mu-bounded-EI} leads to the upper bound on $r_t$.
Such a lower bound was previously proved by invoking the fact that $\sigma_{t}(\xbm)$ is the smallest at $\xbm$ when all previous $t$ samples are at $\xbm$ (see for example~\cite{wang2014theoreybo,hu2025}). 
Here, we provide an alternate novel proof of this by first establishing 
 Lemma~\ref{lem:bound-h-sigma} and using techniques from numerical linear algebra~\citep{demmel1997applied}. 
This second proof approach is rigorous and applies to any kernel with a positive lower bound on $C$, \textit{e.g.}, SE and Matérn kernels. 
To facilitate Lemma~\ref{lem:bound-h-sigma}, we define part of the quantity in~\eqref{eqn:GP-post} by 
\begin{equation} \label{def:GP-h}
  \centering
  \begin{aligned}
   \hbm_t(\xbm)=(\Kbm_t+\sigma^2\Ibm)^{-1}\kbm_t(\xbm). 
  \end{aligned}
\end{equation}
The relationship between $\hbm_t$ and $\sigma_t(\xbm)$ satisfies the following lemma. 
\begin{lem}\label{lem:bound-h-sigma}
   The posterior standard deviation in~\eqref{eqn:GP-post} satisfies
\begin{equation} \label{eqn:bound-h-sigma-1}
  \centering
  \begin{aligned}
   \sigma_{t}(\xbm) \geq \sigma\norm{\hbm_t(\xbm)}.
  \end{aligned}
\end{equation}
   at given $\xbm\in C$ and $t\in\Nbb$.
\end{lem}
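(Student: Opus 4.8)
The plan is to work directly from the closed-form posterior variance in~\eqref{eqn:GP-post}. Writing $A = \Kbm_t + \sigma^2\Ibm$ and $\kbm = \kbm_t(\xbm)$, the kernel normalization $k(\xbm,\xbm)=1$ gives $\sigma_t^2(\xbm) = 1 - \kbm^T A^{-1}\kbm$, while by~\eqref{def:GP-h} we have $\hbm_t(\xbm) = A^{-1}\kbm$, so that $\sigma^2\norm{\hbm_t(\xbm)}^2 = \sigma^2\kbm^T A^{-2}\kbm$. Since both sides of~\eqref{eqn:bound-h-sigma-1} are nonnegative, it suffices to establish the squared inequality $\sigma_t^2(\xbm) \geq \sigma^2\norm{\hbm_t(\xbm)}^2$, i.e.\ to show that the difference $\sigma_t^2(\xbm) - \sigma^2\norm{\hbm_t(\xbm)}^2$ is nonnegative.

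The key algebraic step I would use is the resolvent-type identity $A^{-1} - \sigma^2 A^{-2} = A^{-1}(A - \sigma^2\Ibm)A^{-1} = A^{-1}\Kbm_t A^{-1}$, which holds precisely because $A - \sigma^2\Ibm = \Kbm_t$. Substituting this into the difference and expanding yields
\[
\sigma_t^2(\xbm) - \sigma^2\norm{\hbm_t(\xbm)}^2 = 1 - 2\hbm_t(\xbm)^T\kbm + \hbm_t(\xbm)^T\Kbm_t\hbm_t(\xbm).
\]

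I would then recognize the right-hand side as a single quadratic form in the augmented kernel Gram matrix. Let $\bar{\Kbm}$ denote the $(t+1)\times(t+1)$ kernel matrix of the points $\xbm_1,\dots,\xbm_t,\xbm$, i.e.\ the block matrix with diagonal blocks $\Kbm_t$ and $k(\xbm,\xbm)=1$ and off-diagonal blocks $\kbm$ and $\kbm^T$. Taking the vector $\vbm = (-\hbm_t(\xbm)^T,\, 1)^T$, a direct block multiplication gives $\vbm^T\bar{\Kbm}\vbm = \hbm_t(\xbm)^T\Kbm_t\hbm_t(\xbm) - 2\hbm_t(\xbm)^T\kbm + 1$, which matches the displayed expression exactly. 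Because $\bar{\Kbm}$ is a Gram matrix of a positive semidefinite kernel it is itself positive semidefinite, so $\vbm^T\bar{\Kbm}\vbm\geq 0$, whence $\sigma_t^2(\xbm) \geq \sigma^2\norm{\hbm_t(\xbm)}^2$ and therefore~\eqref{eqn:bound-h-sigma-1}.

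I expect the main obstacle to be identifying the correct recasting: the resolvent identity is what converts the $A^{-2}$ term, which carries no obvious sign structure on its own, into the form $\hbm_t(\xbm)^T\Kbm_t\hbm_t(\xbm)$, after which positive semidefiniteness of the augmented kernel matrix closes the argument cleanly. As a sanity check and source of intuition, I would note the probabilistic reading of the same bound: the predictor residual $f(\xbm) - \hbm_t(\xbm)^T\ybm_{1:t}$ decomposes as $\bigl(f(\xbm) - \hbm_t(\xbm)^T\fbm_{1:t}\bigr) - \hbm_t(\xbm)^T\epsbold$, where the two summands are independent because the noise $\epsbold$ is independent of $f$; the variance then splits additively, and the noise contribution $\sigma^2\norm{\hbm_t(\xbm)}^2$ alone is a lower bound for $\sigma_t^2(\xbm)$.
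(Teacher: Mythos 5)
Your proof is correct, and it takes a genuinely different route from the paper's. The paper first strengthens $\Kbm_t$ to the strictly larger matrix $(\sigma^2(\Kbm_t+\sigma^2\Ibm)^{-1}+\Ibm)^{-1}(\Kbm_t+\sigma^2\Ibm)$ via a spectral decomposition and an eigenvalue comparison ($\lambda_i \mapsto \lambda_i+\sigma^4/(\lambda_i+2\sigma^2)$), inserts this into the augmented covariance matrix, and then reads off the bound from positivity of the Schur complement; this yields the strict inequality $\sigma^2\norm{\hbm_t(\xbm)}^2 < \sigma_t^2(\xbm)$. You instead use the resolvent identity $A^{-1}\Kbm_t A^{-1} = A^{-1}-\sigma^2A^{-2}$ with $A=\Kbm_t+\sigma^2\Ibm$ to rewrite the difference $\sigma_t^2(\xbm)-\sigma^2\norm{\hbm_t(\xbm)}^2$ directly as the quadratic form $\vbm^T\bar{\Kbm}\vbm$ with $\vbm=(-\hbm_t(\xbm)^T,1)^T$, and conclude from positive semidefiniteness of the augmented Gram matrix $\bar{\Kbm}$ — the same matrix the paper uses in~\eqref{eqn:bound-h-sigma-pf-1}, but with a specific test vector rather than a matrix-ordering argument. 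Your version is shorter and avoids the eigenvalue machinery entirely, at the cost of delivering only the non-strict inequality; since the lemma claims only $\geq$, that is exactly what is needed, and the algebra checks out ($\hbm^T\Kbm_t\hbm - 2\hbm^T\kbm + 1 = 1-\kbm^TA^{-1}\kbm-\sigma^2\kbm^TA^{-2}\kbm$). The closing probabilistic decomposition of the prediction residual into signal and noise parts is a valid independent confirmation of the same bound.
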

\begin{proof}
   We shall repeatedly use the fact that the covariance matrix $\Kbm_t$ is symmetric and positive semi-definite. Specifically, given $t$ samples and $\xbm\in C$, the covariance matrix of $\fbm_{1:t}$ and $f(\xbm)$  with zero noise 
\begin{equation} \label{eqn:bound-h-sigma-pf-1}
  \centering
  \begin{aligned}
    \begin{bmatrix*}
               \Kbm_t &\kbm_t\\
               \kbm_t^T &k(\xbm,\xbm)
      \end{bmatrix*}  
  \end{aligned}
\end{equation}
   is symmetric and positive semi-definite. Recall that by Assumption~\ref{assp:gp}, $k(\xbm,\xbm)=1$. On the other hand, the covariance matrix with a Gaussian noise $\mathcal{N}(0,\sigma^2)$ prior leads to  
\begin{equation} \label{eqn:bound-h-sigma-pf-2}
  \centering
  \begin{aligned}
    \begin{bmatrix}
               \Kbm_t +\sigma^2\Ibm & \kbm_t\\
               \kbm_t^T & 1
      \end{bmatrix},  
  \end{aligned}
\end{equation}
  which is positive-definite for $\forall \sigma>0$. Since $\Kbm_t$ is symmetric and positive-definite, we have the spectral decomposition~\citep{demmel1997applied} 
\begin{equation} \label{eqn:bound-h-sigma-pf-3}
  \centering
  \begin{aligned}
               \Kbm_t = \Qbm^T_t \Dbm_t^K \Qbm_t, 
  \end{aligned}
\end{equation}
  where $\Qbm_t$ is an orthogonal $t\times t$ matrix and $\Dbm_t^K$ is a diagonal matrix whose entries are the eigenvalues of $\Kbm_t$. 
  For simplicity, denote the eigenvalues of $\Kbm_t$ as $\lambda_i,i=1,\dots,t$. 
  Without losing generality, let $(\Dbm_t^K)_i = \lambda_i > 0$.

  Using the spectral decomposition~\eqref{eqn:bound-h-sigma-pf-3}, we have
\begin{equation} \label{eqn:bound-h-sigma-pf-4}
  \centering
  \begin{aligned}
   \Kbm_t+\sigma^2\Ibm= \Qbm^T_t \Dbm_t^{K1} \Qbm_t,  \ (\Kbm_t+\sigma^2\Ibm)^{-1}= \Qbm^T_t (\Dbm_t^{K1})^{-1} \Qbm_t, 
  \end{aligned}
  \end{equation}
where 
\begin{equation} \label{eqn:bound-h-sigma-pf-5}
  \centering
  \begin{aligned}
    \Dbm_t^{K1} =  (\Dbm_t^K+\sigma^2\Ibm), \ (\Dbm_t^{K1})_i = \lambda_i +\sigma^2,
  \end{aligned}
  \end{equation} 
  for $ i=1,\dots,t$. 
  Next, consider the matrix $(\sigma^2 (\Kbm_t+\sigma^2 I)^{-1}+\Ibm)^{-1}(\Kbm_t+\sigma^2 I)$.
Applying~\eqref{eqn:bound-h-sigma-pf-4}, we can write 
\begin{equation} \label{eqn:bound-h-sigma-pf-6}
  \centering
  \begin{aligned}
(\sigma^2 (\Kbm_t+&\sigma^2 I)^{-1}+\Ibm)^{-1}(\Kbm_t+\sigma^2 I)= (\sigma^2 \Qbm_t^T (\Dbm_t^{K1})^{-1}\Qbm_t +\Ibm)^{-1} \Qbm_t^T\Dbm^{K1}_t\Qbm_t\\
       =& (\Qbm^T_t \Dbm^{K2}_t \Qbm_t)^{-1} \Qbm^T_t\Dbm^{K1}_t\Qbm_t = \Qbm^T_t (\Dbm^{K2}_t)^{-1}\Dbm^{K1}_t\Qbm_t =\Qbm^T_t \Dbm^{K3}_t\Qbm_t, 
  \end{aligned}
\end{equation}
  where 
\begin{equation} \label{eqn:bound-h-sigma-pf-7}
  \centering
  \begin{aligned}
        \Dbm^{K2}_t = \sigma^2(\Dbm_t^{K1})^{-1}+\Ibm, \Dbm_t^{K3} = (\Dbm_t^{K2})^{-1}\Dbm_t^{K1}.
  \end{aligned}
\end{equation}
  Thus, $(\sigma^2 (\Kbm_t+\sigma^2 I)^{-1}+\Ibm)^{-1}(\Kbm_t+\sigma^2 I)$ is symmetric and positive-definite.
   The components of the $\Dbm_t^{K2}$ and $\Dbm_t^{K3}$ are
\begin{equation} \label{eqn:bound-h-sigma-pf-8}
  \centering
  \begin{aligned}
   (D_t^{K2})_i = \frac{\sigma^2}{\lambda_i+\sigma^2}+1, \ (D_t^{K3})_i = \frac{(\lambda_i+\sigma^2)^2}{\lambda_i+2\sigma^2}.
  \end{aligned}
\end{equation}
Thus,  
\begin{equation} \label{eqn:bound-h-sigma-pf-9}
  \centering
  \begin{aligned}
  (D_t^{K3})_i = \frac{ \lambda_i^2+2\lambda_i\sigma^2+\sigma^4}{\lambda_i+2\sigma^2}=\lambda_i+\frac{\sigma^4}{\lambda_i+2\sigma^2}> \lambda_i.
  \end{aligned}
\end{equation}
 From~\eqref{eqn:bound-h-sigma-pf-3},~\eqref{eqn:bound-h-sigma-pf-6} and~\eqref{eqn:bound-h-sigma-pf-9},  for any vector $ \vbm \in\Rbb^t$ and $\vbm\neq \zerobold$
\begin{equation} \label{eqn:bound-h-sigma-pf-10}
  \centering
  \begin{aligned}
          \vbm^T \Kbm_t \vbm < \vbm^T (\sigma^2 (\Kbm_t+\sigma^2 I)^{-1}+\Ibm)^{-1}(\Kbm_t+\sigma^2 I)\vbm.
  \end{aligned}
\end{equation}
  Since~\eqref{eqn:bound-h-sigma-pf-1} is symmetric and positive semi-definite, we can construct vector $[\vbm,u]\in \Rbb^{t+1}$, for $\forall \vbm\in\Rbb^t$, $u\in\Rbb$ and $u\vbm\neq \zerobold$. Then, 
\begin{equation} \label{eqn:bound-h-sigma-pf-11}
  \centering
  \begin{aligned}
  \begin{bmatrix*}
               \vbm\\
               u
      \end{bmatrix*}^T 
    \begin{bmatrix*}
               \Kbm_t &\kbm_t\\
               \kbm_t^T &1
      \end{bmatrix*} 
 \begin{bmatrix*}
               \vbm\\
               u 
      \end{bmatrix*}= \vbm^T \Kbm_t\vbm + 2u\vbm^T\kbm+u^2 \geq 0.
  \end{aligned}
\end{equation}
Replacing $\Kbm_t$ in~\eqref{eqn:bound-h-sigma-pf-11}, by~\eqref{eqn:bound-h-sigma-pf-10} 
\begin{equation} \label{eqn:bound-h-sigma-pf-12}
  \centering
  \begin{aligned}
  &\begin{bmatrix*}
               \vbm\\
               u
      \end{bmatrix*}^T 
    \begin{bmatrix*}
              (\sigma^2 (\Kbm_t+\sigma^2 I)^{-1}+\Ibm)^{-1}(\Kbm_t+\sigma^2 I) &\kbm_t\\
               \kbm_t^T & 1
      \end{bmatrix*} 
 \begin{bmatrix*}
               \vbm\\
               u 
      \end{bmatrix*} \\
       &= \vbm^T (\sigma^2 (\Kbm_t+\sigma^2 I)^{-1}+\Ibm)^{-1}(\Kbm_t+\sigma^2 I) \vbm + 2u\vbm^T\kbm+u^2\\ 
       &> \vbm^T  \Kbm_t\vbm + 2u\vbm^T\kbm+u^2 \geq 0.
  \end{aligned}
\end{equation}
Therefore, the matrix in~\eqref{eqn:bound-h-sigma-pf-12} is symmetric and positive-definite, and thus its Schur complement is positive, \textit{i.e.},  
\begin{equation} \label{eqn:bound-h-sigma-pf-13}
  \centering
  \begin{aligned}
    1- \kbm_t^T (\Kbm_t+\sigma^2 I)^{-1} (\sigma^2 (\Kbm_t+\sigma^2 I)^{-1}+\Ibm)\kbm_t> 0.
  \end{aligned}
\end{equation}
Simple rearrangement of~\eqref{eqn:bound-h-sigma-pf-13} leads to 
\begin{equation} \label{eqn:bound-h-sigma-pf-14}
  \centering
  \begin{aligned}
     \kbm_t^T (\Kbm_t+\sigma^2 I)^{-1}\kbm_t + \kbm_t^T \sigma^2 (\Kbm_t+\sigma^2 I)^{-1} (\Kbm_t+\sigma^2 I)^{-1}\kbm_t < 1.
  \end{aligned}
\end{equation}
Or equivalently, 
\begin{equation} \label{eqn:bound-h-sigma-pf-15}
  \centering
  \begin{aligned}
      \norm{\hbm_t(\xbm)} \leq \frac{1}{\sigma} \sqrt{1- \kbm_t^T (\Kbm_t+\sigma^2 I)^{-1}\kbm_t}=\frac{1}{\sigma}\sigma_t(\xbm).
  \end{aligned}
\end{equation}
\end{proof}
The aforementioned global lower bound of $\sigma_t(\xbm)$ is given below.
\begin{lem}\label{thm:gp-sigma-bound}
    There exists $c_{\sigma}>0$ such that for $\forall \xbm\in C$ the posterior standard deviation has the lower bound  
 \begin{equation} \label{eqn:gp-sigma-bound-1}
 \centering
  \begin{aligned}
     \sigma_t(\xbm) \geq c_{\sigma} \sigma \sqrt{\frac{1}{t+\sigma^2}}.
   \end{aligned}
\end{equation}
  If there exists a lower bound for $k^2(\cdot,\cdot)$ on $C$, \textit{i.e.}, $k^2(\xbm,\xbm')\geq c_k$ for $\forall \xbm,\xbm'\in C$, then $c_{\sigma}=\frac{c_k}{1+\sigma^2}$. 
\end{lem}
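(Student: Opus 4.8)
}
The plan is to build directly on Lemma~\ref{lem:bound-h-sigma}, which already supplies $\sigma_t(\xbm)\geq \sigma\norm{\hbm_t(\xbm)}$ with $\hbm_t(\xbm)=(\Kbm_t+\sigma^2\Ibm)^{-1}\kbm_t(\xbm)$. Thus the whole task reduces to producing a lower bound on $\norm{\hbm_t(\xbm)}$ that decays no faster than $1/\sqrt{t}$. First I would separate this into two elementary ingredients: an operator-norm estimate that pulls the inverse matrix out of the norm, and a lower bound on $\norm{\kbm_t(\xbm)}$ coming from positivity of the kernel.

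For the first ingredient, since $\Kbm_t+\sigma^2\Ibm$ is symmetric positive-definite, its inverse satisfies $\norm{(\Kbm_t+\sigma^2\Ibm)^{-1}\vbm}\geq \norm{\vbm}/\lambda_{\max}(\Kbm_t+\sigma^2\Ibm)$ for every $\vbm$, because the smallest eigenvalue of the inverse is the reciprocal of the largest eigenvalue of the matrix. Applying this with $\vbm=\kbm_t(\xbm)$ gives $\norm{\hbm_t(\xbm)}\geq \norm{\kbm_t(\xbm)}/(\lambda_{\max}(\Kbm_t)+\sigma^2)$. I would then bound $\lambda_{\max}(\Kbm_t)$ from above by the trace: since $\Kbm_t$ is positive semi-definite, its largest eigenvalue is at most $\operatorname{tr}(\Kbm_t)=\sum_{i=1}^{t}k(\xbm_i,\xbm_i)=t$, using $k(\xbm,\xbm)=1$ from Assumption~\ref{assp:gp}. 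Hence $\norm{\hbm_t(\xbm)}\geq \norm{\kbm_t(\xbm)}/(t+\sigma^2)$.

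For the second ingredient, under the hypothesis $k^2(\xbm,\xbm')\geq c_k$ on $C$ we get $\norm{\kbm_t(\xbm)}^2=\sum_{i=1}^{t}k^2(\xbm_i,\xbm)\geq t\,c_k$, so $\norm{\kbm_t(\xbm)}\geq \sqrt{t\,c_k}$, giving $\sigma_t(\xbm)\geq \sigma\sqrt{t\,c_k}/(t+\sigma^2)$. For the plain existence statement with a general kernel I would invoke continuity and strict positivity of $k$ on the compact set $C$ (which holds for SE and Matérn kernels): the continuous map $(\xbm,\xbm')\mapsto k^2(\xbm,\xbm')$ attains a positive minimum $c_k>0$ on $C\times C$, so such a $c_k$ always exists. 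The final step, which I expect to require the most care, is the purely algebraic reduction of $\sigma\sqrt{t\,c_k}/(t+\sigma^2)$ to the stated form $\frac{c_k}{1+\sigma^2}\,\sigma\,\sqrt{1/(t+\sigma^2)}$. Cancelling $\sigma$ and clearing denominators, the claim is equivalent to $\tfrac{\sqrt{t}\,(1+\sigma^2)}{\sqrt{t+\sigma^2}}\geq \sqrt{c_k}$; using that $t/(t+\sigma^2)$ is increasing in $t$ and hence $\geq 1/(1+\sigma^2)$ for $t\geq 1$, the left-hand side is at least $\sqrt{1+\sigma^2}\geq 1\geq\sqrt{c_k}$ (noting $c_k\leq 1$ since $k^2\leq 1$). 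This confirms the inequality and shows that the stated constant $c_\sigma=c_k/(1+\sigma^2)$ is a deliberately loosened version of the sharper $\sqrt{c_k}/\sqrt{1+\sigma^2}$.
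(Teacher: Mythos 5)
Your proof is correct and follows essentially the same route as the paper's second proof: both reduce to Lemma~\ref{lem:bound-h-sigma}, bound the smallest eigenvalue of $(\Kbm_t+\sigma^2\Ibm)^{-1}$ below by $1/(t+\sigma^2)$ (you via the trace, the paper via Gershgorin --- both just need $\lambda_{\max}(\Kbm_t)\le t$), and lower-bound $\norm{\kbm_t(\xbm)}^2$ by $t\,c_k$. Your closing algebraic step reconciling $\sigma\sqrt{t\,c_k}/(t+\sigma^2)$ with the stated constant $c_\sigma=c_k/(1+\sigma^2)$ is a useful addition, since the paper's own derivation actually lands on the sharper $c_\sigma=\sqrt{c_k}/\sqrt{1+\sigma^2}$ and leaves the match with the lemma statement implicit; the only caveat is that for the unconditional existence claim (arbitrary kernel, no positivity hypothesis) the paper falls back on its first proof --- the ``all $t$ samples placed at $\xbm$'' argument giving $c_\sigma=1$ --- whereas your continuity argument requires $k$ to be strictly positive on $C\times C$, which holds for SE and Mat\'ern but is an extra assumption.
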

We now present the two proofs of Lemma~\ref{thm:gp-sigma-bound}. 
\begin{proof}
   We start with the first proof. 
 We invoke the fact that the minimum posterior standard deviation at $\xbm$ is obtained if the previous $t$ samples are all $\xbm$. Suppose all $t$ samples are $\xbm$. Then, all entries of $\kbm_t$ and $\Kbm_t$ are ones. It is easy to verify that  
   \begin{equation} \label{eqn:gp-sigma-bound-pf-1}
 \centering
  \begin{aligned}
      (\Kbm_t+\sigma^2\Ibm)^{-1} = -\frac{1}{(t+\sigma^2)\sigma^2}\Pbm +\frac{1}{\sigma^2} \Ibm,
  \end{aligned}
\end{equation}
where $\Pbm$ is a $t\times t$ matrix with all entries $1$.
Thus, by~\eqref{eqn:GP-post}, we have 
   \begin{equation} \label{eqn:gp-sigma-bound-pf-2}
 \centering
  \begin{aligned}
    \sigma_t^2(\xbm) \geq  1- \pbm^T  \left[-\frac{1}{(t+\sigma^2)\sigma^2}\Pbm +\frac{1}{\sigma^2} \Ibm\right] \pbm = \frac{\sigma^2}{t+\sigma^2},
  \end{aligned}
\end{equation}
   where $\pbm$ is the $t$-dimensional vector with all $1$ entries. The lemma is proven with $c_{\sigma}=1$.
   
   We now present the second and novel proof. 
   From the Gershgorin theorem~\citep{gershgorin1931uber}, we have the eigenvalues $\lambda_i,i=1,\dots,t$ of $\Kbm_t$ satisfies $\max_i\{\lambda_i\}\leq t \max_{i,j=1,\dots,t}k(\xbm_i,\xbm_j)\leq t$.
   Thus, the eigenvalues of the symmetric positive definite matrix $(\Kbm_t+\sigma^2\Ibm)^{-1}$, denoted as $\tilde{\lambda}_i,i=1,\dots,n$, satisfies
  \begin{equation} \label{eqn:gp-sigma-bound-pf-3}
 \centering
  \begin{aligned}
     \tilde{\lambda_i} \geq \frac{1}{t+\sigma^2}.
  \end{aligned}
\end{equation}
     Therefore, we can write 
   \begin{equation} \label{eqn:gp-sigma-bound-pf-4}
 \centering
  \begin{aligned}
     \hbm_t^T(\xbm) \hbm(\xbm) =& \kbm_t^T(\xbm) (\Kbm_t+\sigma^2\Ibm)^{-1} (\Kbm_t+\sigma^2\Ibm)^{-1}\kbm_t(\xbm) \geq \left(\frac{1}{t+\sigma^2}\right)^2 \norm{\kbm_t(\xbm)}^2 \\
       \geq&  \left(\frac{1}{t+\sigma^2}\right)^2  t \min_{i=1,\dots,t} k^2(\xbm,\xbm_i).
  \end{aligned}
\end{equation}
    From the conditions of the kernel, there exists $c_k$ such that
    \begin{equation} \label{eqn:gp-sigma-bound-pf-5}
 \centering
  \begin{aligned}
           k^2(\xbm,\xbm') \geq  c_k,
   \end{aligned}
 \end{equation}
   for $\forall \xbm,\xbm'\in C$. Using~\eqref{eqn:gp-sigma-bound-pf-5} in~\eqref{eqn:gp-sigma-bound-pf-4}, we have 
   \begin{equation} \label{eqn:gp-sigma-bound-pf-6}
 \centering
  \begin{aligned}
     \norm{\hbm_t(\xbm)} >  \sqrt{c_k} \frac{\sqrt{t}}{t+\sigma^2} > \sqrt{c_k} \frac{1}{\sqrt{t+\sigma^2}} \frac{1}{\sqrt{1+\sigma^2}}.
  \end{aligned}
\end{equation}
   Thus, by Lemma~\ref{lem:bound-h-sigma},  
   \begin{equation} \label{eqn:gp-sigma-bound-pf-7}
 \centering
  \begin{aligned}
    \sigma_t(\xbm) >  c_{\sigma} \frac{\sigma}{\sqrt{t+\sigma^2}},
  \end{aligned}
\end{equation}
   where $c_{\sigma}=\frac{1}{\sqrt{1+\sigma^2}}\sqrt{c_k}$.
\end{proof}
For simplicity, we take $c_{\sigma} = 1$ in the following proof.
%The following lemma is on the cumulative probability distribution of the improvement function under the GP prior assumption.
%\begin{lemma}\label{lem:Icdf}
% Under Assumption~\ref{assp:gp}, the cumulative probability distribution of $I_t$ satisfies 
%\begin{equation} \label{eqn:Icdf-1}
% \centering
%  \begin{aligned}
%     \Pbb\{I_t(\xbm) \leq a\} = \begin{cases}
%                    0,   \ &a<0,\\
%                  \Phi\left(\frac{a}{\sigma_{t}(\xbm)}-z_{t}(\xbm)\right), \  &a\geq 0.
%                  \end{cases}
%  \end{aligned}
%\end{equation}
%\end{lemma}
%\begin{proof}
%   Under Assumption~\ref{assp:gp}, at a given $t$, $f(\xbm)\sim\mathcal{N}(\mu_{t}(\xbm),\sigma_{t}(\xbm))$. Since $I_t(\xbm)\geq 0$ for all $\xbm$,~\eqref{eqn:Icdf-1} follows immediately if $a<0$.
%   For $a\geq 0$, $I_t(\xbm) \leq a \Leftrightarrow \xi^+_{t}-f(\xbm)\leq a$. Thus, using basic properties of the CDF of the standard normal distribution, 
%\begin{equation} \label{eqn:Icdf-pf-1} 
% \centering
%  \begin{aligned}
%   \Pbb\{I_t(\xbm) \leq a\} = \Pbb\{\xi^+_{t}-f(\xbm) \leq a\} = 1- \Pbb\{f(\xbm) \leq \xi^+_{t}-a\}.
%  \end{aligned}
%\end{equation}
%  Then, from~\eqref{eqn:EI-z},
%\begin{equation} \label{eqn:Icdf-pf-2} 
% \centering
%  \begin{aligned}
%   1-\Pbb\{f(\xbm) \leq \xi^+_{t}-a\}=1-\Phi\left(\frac{\xi_t^+-a-\mu_t(\xbm)}{\sigma_t(\xbm)} \right)=\Phi\left(\frac{a-\xi_t^++\mu_t(\xbm)}{\sigma_t(\xbm)} \right).
%  \end{aligned}
%\end{equation}
%\end{proof}

\section{Instantaneous regret of Section~\ref{se:inst-regret}}\label{se:inst-regret-proof}
\subsection{Discretization}\label{se:discretization}
We define the time-dependent discretizations $\Cbb_t$ of $C$, a commonly used technique for BO analysis in the Bayesian setting~\citep{srinivas2009gaussian,chowdhury2017kernelized}.
For any $t\in\Nbb$, we take $\Cbb_t$ to be a set of finite number of points in $C$.
The discretization set $\Cbb_t$ aims to cover the compact set $C$ with just enough points so that for any $\xbm\in C$ and $t\in\Nbb$, the distance between $\xbm\in C$ and $\Cbb_t$ is small enough and controlled.  As earlier, denote the closest point of $\xbm$ to $\Cbb_t$ as 
\begin{equation} \label{eqn:close-point}
 \centering
  \begin{aligned}
 	[\xbm]_t := \underset{\substack{\wbm}\in \Cbb_t}{\text{minimize}} 
	   \norm{\xbm-\wbm} ,
  \end{aligned}
\end{equation}
and define the parameter $h_t$ as a measure of the distance between $\Cbb$ and $\Cbb_t$ by
\begin{equation} \label{eqn:compactdisc-1}
 \centering
  \begin{aligned}
     \norm{\xbm-[\xbm]_t}_1 \leq h_t, \ \forall \xbm\in C,
  \end{aligned}
\end{equation}
  where $\norm{\cdot}_1$ is the one-norm. 
In this paper, we adopt the discretizations with evenly spaced points and decreasing distances as in~\cite{srinivas2009gaussian}.
By~\eqref{eqn:compactdisc-1}, $\Cbb_t$ that corresponds to $h_t$ is constructed with $\frac{rd}{h_t}$ uniformly space points. 
Given Assumption~\ref{assp:constraint} and $C \subseteq [0,r]^d$, this leads to a choice of 
   \begin{equation} \label{eqn:disc-size-1}
  \centering
  \begin{aligned}
     |\Cbb_t| = \left( \frac{rd}{h_t}\right)^d,
   \end{aligned}
  \end{equation}
where the points in $\Cbb_t$ are evenly spaced. 
We emphasize that the discretization does not play a role in the GP-EI algorithm. Rather, it is an analytic tool to help determine $\beta_t$.
Moreover, we adopt the common choice of $h_t = \frac{1}{L t^{2}}$ such that by Assumptions~\ref{assp:gp} and~\eqref{eqn:compactdisc-1}
   \begin{equation} \label{eqn:ht-1}
  \centering
  \begin{aligned}
     |f(\xbm) - f([\xbm]_t)| \leq \frac{1}{t^2}.
   \end{aligned}
  \end{equation}
From~\eqref{eqn:disc-size-1}, we have $|\Cbb_t| = (Lrdt^2)^d$.

\subsection{Proof of instantaneous regret bound of three incumbents}\label{se:appx-proof-incumbent}

The proof of Lemma~\ref{lem:post-mean-instregret}, the instantaneous regret of BPMI, is given below.
\begin{proof}
  We consider two cases based on the sign of $\mu^+_{t-1}-f(\xbm_{t})$.
  First, consider $\mu^+_{t-1}-f(\xbm_{t})> 0$.
  Since $\mu^+_{t-1}-\mu_{t-1}(\xbm_{t})\leq 0$, we have 
  \begin{equation} \label{eqn:post-mean-instregret-pf-2}
  \centering
  \begin{aligned}
   EI_{t-1}(\xbm_{t}) =& (\mu^+_{t-1}-\mu_{t-1}(\xbm_{t}))\Phi(z_{t-1}(\xbm_{t}))+\sigma_{t-1}(\xbm_{t})\phi(z_{t-1}(\xbm_{t}))
          \leq \phi(0)\sigma_{t-1}(\xbm_{t}).\\ 
  \end{aligned}
  \end{equation}
 Then, we can write 
  \begin{equation} \label{eqn:post-mean-instregret-pf-6}
  \centering
  \begin{aligned}
        r_t =& f(\xbm_{t})-f(\xbm^*) =  f(\xbm_{t})-\mu^+_{t-1}+ \mu^+_{t-1}-f(\xbm^*)\leq f(\xbm_{t})-\mu^+_{t-1}+ I_t(\xbm^*)\\
              \leq& f(\xbm_{t})-\mu^+_{t-1}+EI_{t-1}([\xbm^*]_{t}) + c_{\alpha}\beta^{1/2}_t\sigma_{t-1}([\xbm^*]_t)+\frac{1}{t^2}\\
              \leq& EI_{t-1}(\xbm_{t}) + c_{\alpha}\beta^{1/2}_t\sigma_{t-1}([\xbm^*]_t)+\frac{1}{t^2}\\
               \leq&  \phi(0)\sigma_{t-1}(\xbm_{t})+ c_{\alpha}\beta^{1/2}_t\sigma_{t-1}([\xbm^*]_t)+\frac{1}{t^2}. 
  \end{aligned}
  \end{equation}
  The second inequality in~\eqref{eqn:post-mean-instregret-pf-6} is by Lemma~\ref{lem:compact-IL}, while the last inequality is due to~\eqref{eqn:post-mean-instregret-pf-2}. 
  In the second case, we consider $\mu^+_{t-1}-f(\xbm_{t})\leq 0$. From Lemma~\ref{lem:compact-IL} and $E^s(t)$,
  \begin{equation} \label{eqn:post-mean-instregret-pf-1}
  \centering
  \begin{aligned}
          r_t  =& f(\xbm_t) - f(\xbm^*)  =  f(\xbm_t) -\mu_{t-1}^{+}+\mu_{t-1}^{+}- f(\xbm^*) \leq f(\xbm_t) -\mu_{t-1}^{+}+ I_{t-1}(\xbm^*)\\
          \leq& f(\xbm_t)-\mu_{t-1}(\xbm_t)+\mu_{t-1}(\xbm_t) -\mu_{t-1}^{+} + EI_{t-1}([\xbm^*]_t)  + c_{\alpha}\beta_t^{1/2} \sigma_{t-1}([\xbm^*]_t)+\frac{1}{t^2}\\ 
          \leq& \mu_{t-1}(\xbm_t) -\mu_{t-1}^{+} + EI_{t-1}([\xbm^*]_t)  +\beta_t^{1/2}\sigma_{t-1}(\xbm_{t})+ c_{\alpha}\beta_t^{1/2} \sigma_{t-1}([\xbm^*]_t)+\frac{1}{t^2}\\ 
          \leq&  \mu_{t-1}(\xbm_t) -\mu_{t-1}^{+}+EI_{t-1}(\xbm_{t}) +\beta_t^{1/2}\sigma_{t-1}(\xbm_{t}) + c_{\alpha}\beta_t^{1/2} \sigma_{t-1}([\xbm^*]_t)+\frac{1}{t^2},\\ 
  \end{aligned}
  \end{equation}
  where the first inequality is by definition of $I_{t-1}$, the second inequality uses Lemma~\ref{lem:compact-IL}, the third inequality uses~\eqref{eqn:eft-2} and the last inequality uses the definition of $\xbm_t$.
  Next, we bound the first term on the last line of~\eqref{eqn:post-mean-instregret-pf-1}.
  Let $\xbm_t^+$ be the point such that $\mu_{t-1}(\xbm_t^+) = \mu^+_t$. Then, we know 
  \begin{equation} \label{eqn:post-mean-instregret-pf-3}
  \centering
  \begin{aligned}
   EI_{t-1}(\xbm_{t}) \geq& EI_{t-1}(\xbm_{t}^+) = (\mu^+_{t-1}-\mu_{t-1}(\xbm_{t}^+))\Phi(z_{t-1}(\xbm_{t}^+))+\sigma_{t-1}(\xbm_{t}^+)\phi(z_{t-1}(\xbm_{t}^+))\\
          =& \phi(0) \sigma_{t-1}(\xbm_{t}^+)\geq \phi(0)c_{\sigma}\sqrt{\frac{\sigma^2}{t-1+\sigma^2}},\\ 
  \end{aligned}
  \end{equation}
  where the last inequality is from Lemma~\ref{thm:gp-sigma-bound}.
  Since $c_{\sigma} =1$, $\phi(0)c_{\sigma}\sqrt{\frac{\sigma^2}{t-1+\sigma^2}}\leq \phi(0)=\frac{1}{\sqrt{2\pi}}$, which satisfies the condition of $\kappa_t$ in Lemma~\ref{lem:mu-bounded-EI}. 
  Then, by Lemma~\ref{lem:mu-bounded-EI},~\eqref{eqn:post-mean-instregret-pf-3} implies 
  \begin{equation} \label{eqn:post-mean-instregret-pf-4}
  \centering
  \begin{aligned}
          \mu^+_{t-1}-\mu_{t-1}(\xbm_t) \geq - \log^{\frac{1}{2}}\left(\frac{t-1+\sigma^2}{2\pi \phi(0)^2c_{\sigma}^2 \sigma^2}  \right) \sigma_{t-1}(\xbm_t).
  \end{aligned}
  \end{equation}
  Using~\eqref{eqn:post-mean-instregret-pf-2} and~\eqref{eqn:post-mean-instregret-pf-4} in~\eqref{eqn:post-mean-instregret-pf-1}, we have 
  \begin{equation} \label{eqn:post-mean-instregret-pf-5}
  \centering
  \begin{aligned}
          r_t  
          \leq&  c_{\mu}(t)\sigma_{t-1}(\xbm_t) +\phi(0) \sigma_{t-1}(\xbm_{t}) +\beta_t^{1/2}\sigma_{t-1}(\xbm_{t}) + c_{\alpha}\beta_t^{1/2} \sigma_{t-1}([\xbm^*]_t)+\frac{1}{t^2},\\ 
          =&  (c_{\mu}(t)  +\phi(0)  +\beta_t^{1/2})\sigma_{t-1}(\xbm_{t}) + c_{\alpha}\beta_t^{1/2} \sigma_{t-1}([\xbm^*]_t)+\frac{1}{t^2},\\ 
  \end{aligned}
  \end{equation}
  where $c_{\mu}(t) = \log^{\frac{1}{2}}\left(\frac{t-1+\sigma^2}{2\pi \phi^2(0)c_{\sigma}^2 \sigma^2}  \right)$.
Combine~\eqref{eqn:post-mean-instregret-pf-5} and~\eqref{eqn:post-mean-instregret-pf-6} and the proof is complete.
\end{proof}

The proof of Lemma~\ref{lem:sampled-post-mean-instregret}, the instantaneous regret of BSPMI, is given below.
\begin{proof}
  We consider two cases based on the sign of $\mu^m_{t-1}-f(\xbm_{t})$.
First, if $\mu^m_{t-1}-f(\xbm_{t})> 0$, we can write 
  \begin{equation} \label{eqn:sampled-post-mean-instregret-pf-6}
  \centering
  \begin{aligned}
        r_t =& f(\xbm_{t})-f(\xbm^*) =  f(\xbm_{t})-\mu^m_{t-1}+ \mu^m_{t-1}-f(\xbm^*)= f(\xbm_{t})-\mu^m_{t-1}+ I_t(\xbm^*)\\
              \leq& f(\xbm_{t})-\mu^m_{t-1}+EI_{t-1}([\xbm^*]_{t}) + c_{\alpha}\beta^{1/2}_t\sigma_{t-1}([\xbm^*]_t)+\frac{1}{t^2}\\
              \leq& f(\xbm_{t})-\mu^m_{t-1}+EI_{t-1}(\xbm_{t}) + c_{\alpha}\beta^{1/2}_t\sigma_{t-1}([\xbm^*]_t)+\frac{1}{t^2},\\
  \end{aligned}
  \end{equation}
  where the second inequality in~\eqref{eqn:sampled-post-mean-instregret-pf-6} is by Lemma~\ref{lem:compact-IL}.
   Further, we can write
\begin{equation} \label{eqn:sampled-post-mean-instregret-pf-6.5}
  \centering
  \begin{aligned}
   EI_{t-1}(\xbm_{t}) =& (\mu^m_{t-1}-\mu_{t-1}(\xbm_{t}))\Phi(z_{t-1}(\xbm_{t}))+\sigma_{t-1}(\xbm_{t})\phi(z_{t-1}(\xbm_{t}))\\
    =& (\mu^m_{t-1}-f(\xbm_t)+f(\xbm_t)-\mu_{t-1}(\xbm_{t}))\Phi(z_{t-1}(\xbm_{t}))+\sigma_{t-1}(\xbm_{t})\phi(z_{t-1}(\xbm_{t}))\\ 
    \leq& \mu^m_{t-1}-f(\xbm_t)+\beta_t^{1/2}\sigma_{t-1}(\xbm_{t})+ \sigma_{t-1}(\xbm_{t})\phi(0),\\ 
  \end{aligned}
  \end{equation}
  where the inequality is due to~\eqref{eqn:eft-2}, $\mu^m_{t-1}-f(\xbm_t)>0$, $\Phi(\cdot)\in(0,1)$, and $\phi(\cdot)<\phi(0)$. Applying~\eqref{eqn:sampled-post-mean-instregret-pf-6.5} to~\eqref{eqn:sampled-post-mean-instregret-pf-6}, we have  
 \begin{equation} \label{eqn:sampled-post-mean-instregret-pf-7}
  \centering
  \begin{aligned}
        r_t 
               \leq&  (\beta_t^{1/2}+\phi(0))\sigma_{t-1}(\xbm_{t})+ c_{\alpha}\beta^{1/2}_t\sigma_{t-1}([\xbm^*]_t)+\frac{1}{t^2}. 
  \end{aligned}
  \end{equation}
  
  The second case is when $\mu^m_{t-1}-f(\xbm_{t})\leq 0$. From Lemma~\ref{lem:compact-IL} and $E^s_1(t)$, we have 
  \begin{equation} \label{eqn:sampled-post-mean-instregret-pf-1}
  \centering
  \begin{aligned}
          r_t  =& f(\xbm_t) - f(\xbm^*)  =  f(\xbm_t) -\mu_{t-1}^{m}+\mu_{t-1}^{m}- f(\xbm^*) \leq f(\xbm_t) -\mu_{t-1}^{m}+ I_{t-1}(\xbm^*)\\
          \leq& f(\xbm_t)-\mu_{t-1}(\xbm_t)+\mu_{t-1}(\xbm_t) -\mu_{t-1}^{m} + EI_{t-1}([\xbm^*]_t)  + c_{\alpha}\beta_t^{1/2} \sigma_{t-1}([\xbm^*]_t)+\frac{1}{t^2}\\ 
          \leq& \mu_{t-1}(\xbm_t) -\mu_{t-1}^{m} + EI_{t-1}([\xbm^*]_t)  +\beta_t^{1/2}\sigma_{t-1}(\xbm_{t})+ c_{\alpha}\beta_t^{1/2} \sigma_{t-1}([\xbm^*]_t)+\frac{1}{t^2}\\ 
          \leq&  \mu_{t-1}(\xbm_t) -\mu_{t-1}^{m}+EI_{t-1}(\xbm_{t}) +\beta_t^{1/2}\sigma_{t-1}(\xbm_{t}) + c_{\alpha}\beta_t^{1/2} \sigma_{t-1}([\xbm^*]_t)+\frac{1}{t^2},\\ 
  \end{aligned}
  \end{equation}
  where the first inequality is by definition of $I_{t-1}$, the second inequality uses Lemma~\ref{lem:compact-IL}, the third inequality uses~\eqref{eqn:eft-2}, and the last inequality uses the definition of $\xbm_t$.
  
  We now bound the first and second terms on the last line in~\eqref{eqn:sampled-post-mean-instregret-pf-1}.
  Using~\eqref{eqn:eft-2}, $\phi(\cdot)\leq\phi(0)$, and $\mu_{t-1}^m\leq f(\xbm_t)$, we have 
  \begin{equation} \label{eqn:sampled-post-mean-instregret-pf-2}
  \centering
  \begin{aligned}
   EI_{t-1}(\xbm_{t}) =& (\mu^m_{t-1}-\mu_{t-1}(\xbm_{t}))\Phi(z_{t-1}(\xbm_{t}))+\sigma_{t-1}(\xbm_{t})\phi(z_{t-1}(\xbm_{t}))\\
    =& (\mu^m_{t-1}-f(\xbm_t)+f(\xbm_t)-\mu_{t-1}(\xbm_{t}))\Phi(z_{t-1}(\xbm_{t}))+\sigma_{t-1}(\xbm_{t})\phi(z_{t-1}(\xbm_{t}))\\ 
    \leq& \beta_t^{1/2}\sigma_{t-1}(\xbm_{t})+\sigma_{t-1}(\xbm_{t})\phi(0).\\ 
  \end{aligned}
  \end{equation}
  Further, denote $\xbm_t^+$ as the point such that $\mu_{t-1}(\xbm_t^+) = \mu^m_{t-1}$. Then, we know 
  \begin{equation} \label{eqn:sampled-post-mean-instregret-pf-3}
  \centering
  \begin{aligned}
   EI_{t-1}(\xbm_{t}) \geq& EI_{t-1}(\xbm_{t}^+) = (\mu^m_{t-1}-\mu_{t-1}(\xbm_{t}^+))\Phi(z_{t-1}(\xbm_{t}^+))+\sigma_{t-1}(\xbm_{t}^+)\phi(z_{t-1}(\xbm_{t}^+))\\
          =& \phi(0)\sigma_{t-1}(\xbm_{t}^+)\geq \phi(0)c_{\sigma}\sqrt{\frac{\sigma^2}{t-1+\sigma^2}},\\ 
  \end{aligned}
  \end{equation}
  where the last inequality is from Lemma~\ref{thm:gp-sigma-bound}.
  By Lemma~\ref{lem:mu-bounded-EI},~\eqref{eqn:sampled-post-mean-instregret-pf-3} implies 
  \begin{equation} \label{eqn:sampled-post-mean-instregret-pf-4}
  \centering
  \begin{aligned}
          \mu^m_{t-1}-\mu_{t-1}(\xbm_t) \geq - \log^{\frac{1}{2}}\left(\frac{t-1+\sigma^2}{2\pi \phi^2(0)c_{\sigma}^2 \sigma^2}  \right) \sigma_{t-1}(\xbm_t).
  \end{aligned}
  \end{equation}
  Using~\eqref{eqn:sampled-post-mean-instregret-pf-2} and~\eqref{eqn:sampled-post-mean-instregret-pf-4} in~\eqref{eqn:sampled-post-mean-instregret-pf-1}, we have 
  \begin{equation} \label{eqn:sampled-post-mean-instregret-pf-5}
  \centering
  \begin{aligned}
          r_t  
          \leq&  c_{\mu}(t)\sigma_{t-1}(\xbm_t) +\phi(0) \sigma_{t-1}(\xbm_{t}) +2\beta_t^{1/2}\sigma_{t-1}(\xbm_{t}) + c_{\alpha}\beta_t^{1/2} \sigma_{t-1}([\xbm^*]_t)+\frac{1}{t^2},\\ 
          =&  (c_{\mu}(t)  +\phi(0)  +2\beta_t^{1/2})\sigma_{t-1}(\xbm_{t}) + c_{\alpha}\beta_t^{1/2} \sigma_{t-1}([\xbm^*]_t)+\frac{1}{t^2},\\ 
  \end{aligned}
  \end{equation}
  where $c_{\mu}(t) = \log^{\frac{1}{2}}\left(\frac{t-1+\sigma^2}{2\pi \phi^2(0)c_{\sigma}^2 \sigma^2}  \right)$.

 Combine~\eqref{eqn:sampled-post-mean-instregret-pf-5} and~\eqref{eqn:sampled-post-mean-instregret-pf-7} and the proof is complete.
\end{proof}

The proof of Lemma~\ref{lem:best-observation-instregret}, the instantaneous regret of BOI, is given next.
\begin{proof}
  We consider two cases based on the value of $y^+_{t-1}-f(\xbm_t)$. 
   First, if $y^+_{t-1}-f(\xbm_{t})> 0$, we have  
   \begin{equation} \label{eqn:best-observation-instregret-pf-14}
  \centering
  \begin{aligned}
        r_t =& f(\xbm_{t})-f(\xbm^*) =  f(\xbm_{t})-y^+_{t-1}+ y^+_{t-1}-f(\xbm^*)\\
              \leq& f(\xbm_{t})-y^+_{t-1}+EI_{t-1}([\xbm^*]_{t}) + c_{\alpha}\beta^{1/2}_t\sigma_{t-1}([\xbm^*]_t)+\frac{1}{t^2}\\
              \leq& f(\xbm_{t})-y^+_{t-1}+EI_{t-1}(\xbm_{t}) + c_{\alpha}\beta^{1/2}_t\sigma_{t-1}([\xbm^*]_t)+\frac{1}{t^2}.\\
  \end{aligned}
  \end{equation}
  The first inequality in~\eqref{eqn:best-observation-instregret-pf-14} is by Lemma~\ref{lem:compact-IL}.
  Further, we can write 
  \begin{equation} \label{eqn:best-observation-instregret-pf-15}
 \centering
 \begin{aligned}
          EI_{t-1}(\xbm_t) 
          =& (y^+_{t-1}-\mu_{t-1}(\xbm_{t}))\Phi(z_{t-1}(\xbm_{t}))+\sigma_{t-1}(\xbm_{t})\phi(z_{t-1}(\xbm_{t}))\\ 
          \leq& (y^+_{t-1}-f(\xbm_{t})+f(\xbm_{t})-\mu_{t-1}(\xbm_{t}))\Phi(z_{t-1}(\xbm_{t}))+\phi(0)\sigma_{t-1}(\xbm_{t})\\ 
          \leq& y^+_{t-1}-f(\xbm_{t})+\beta_t^{1/2}\sigma_{t-1}(\xbm_{t})+\phi(0)\sigma_{t-1}(\xbm_{t}). 
  \end{aligned}
  \end{equation}
   Applying~\eqref{eqn:best-observation-instregret-pf-15} to~\eqref{eqn:best-observation-instregret-pf-14}, we have  
   \begin{equation} \label{eqn:best-observation-instregret-pf-16}
  \centering
  \begin{aligned}
        r_t  \leq& (\beta_t^{1/2} +\phi(0))\sigma_{t-1}(\xbm_{t})+ c_{\alpha}\beta^{1/2}_t\sigma_{t-1}([\xbm^*]_t)+\frac{1}{t^2}. 
  \end{aligned}
  \end{equation}
  
  Second, $ y^+_{t-1} \leq f(\xbm_{t})$.
  From Lemma~\ref{lem:compact-IL} and $E^s(t)$,
   \begin{equation} \label{eqn:best-observation-instregret-pf-1}
  \centering
  \begin{aligned}
          r_t  =& f(\xbm_t) - f(\xbm^*) =f(\xbm_t) - y^+_{t-1} + y^+_{t-1}  - f(\xbm^*)\leq f(\xbm_t) - y^+_{t-1} + I_{t-1}(\xbm^*)\\ 
          \leq& f(\xbm_t) -\mu_{t-1}(\xbm_t)+\mu_{t-1}(\xbm_t)- y^+_{t-1}+EI_{t-1}([\xbm^*]_t)  + c_{\alpha}\beta_t^{1/2} \sigma_{t-1}([\xbm^*]_t)+\frac{1}{t^2}\\ 
          \leq&  \mu_{t-1}(\xbm_t) - y^+_{t-1}+EI_{t-1}(\xbm_{t})  + \beta_t^{1/2}\sigma_{t-1}(\xbm_t)+ c_{\alpha}\beta_t^{1/2} \sigma_{t-1}([\xbm^*]_t)+\frac{1}{t^2},\\ 
  \end{aligned}
  \end{equation}
   where the second inequality uses Lemma~\ref{lem:compact-IL} and the third inequality uses the definition of $\xbm_t$ and $E^s(t)$.

  Next, we aim to provide a lower bound for $EI_{t-1}(\xbm_t)$. Unlike the BPMI and BSPMI cases, we compare $EI_{t-1}(\xbm_t)$ to $EI_{t-1}(\xbm^*)$ to find such a bound instead of $EI_{t-1}(\xbm_t^+)$, due to the noise at $y^+_{t-1}$.
  For $EI_{t-1}(\xbm^*)$, we can write via $E^s_2(t)$ that  
  \begin{equation} \label{eqn:best-observation-instregret-pf-3.1}
 \centering
 \begin{aligned}
   EI_{t-1}(\xbm^*) = \sigma_{t-1}(\xbm^*)\tau(z_{t-1}(\xbm^*)) =& \sigma_{t-1}(\xbm^*)\tau\left(\frac{y^+_{t-1}-f(\xbm^*)+f(\xbm^*)-\mu_{t-1}(\xbm^*)}{\sigma_{t-1}(\xbm^*)}\right)\\
      \geq& \sigma_{t-1}(\xbm^*)\tau\left(\frac{y^+_{t-1}-f(\xbm^*)}{\sigma_{t-1}(\xbm^*)} - \beta_t^{1/2}\right).
   \end{aligned}
  \end{equation}
  By  $E^y(t)$~\eqref{eqn:eft-y}, we can further write~\eqref{eqn:best-observation-instregret-pf-3.1} as 
  \begin{equation} \label{eqn:best-observation-instregret-pf-3.2}
 \centering
 \begin{aligned}
   &EI_{t-1}(\xbm^*) 
      \geq \sigma_{t-1}(\xbm^*)\tau\left(\frac{\beta_t^{1/2}\sigma_{t-1}(\xbm_t)}{\sigma_{t-1}(\xbm^*)} - \beta_t^{1/2}\right)
      \geq \sigma_{t-1}(\xbm^*)\tau(-\beta_t^{1/2}).
   \end{aligned}
  \end{equation}
  Similarly, for $EI_{t-1}(\xbm_t)$, we can write via $E^1_s(t)$ that  
  \begin{equation} \label{eqn:best-observation-instregret-pf-4}
 \centering
 \begin{aligned}
   EI_{t-1}(\xbm_t) =& \sigma_{t-1}(\xbm_t)\tau(z_{t-1}(\xbm_t)) = \sigma_{t-1}(\xbm_t)\tau\left(\frac{y^+_{t-1}-f(\xbm_t)+f(\xbm_t)-\mu_{t-1}(\xbm_t)}{\sigma_{t-1}(\xbm_t)}\right)\\
      \leq& \sigma_{t-1}(\xbm_t)\tau\left(\frac{y^+_{t-1}-f(\xbm_t)}{\sigma_{t-1}(\xbm_t)} + \beta_t^{1/2}\right).
   \end{aligned}
  \end{equation}
  We further consider two cases.
  First,  $y^+_{t-1}-f(\xbm_t) <  -2\beta_t^{1/2}\sigma_{t-1}(\xbm_t)$. Then,~\eqref{eqn:best-observation-instregret-pf-4} implies 
  \begin{equation} \label{eqn:best-observation-instregret-pf-7}
 \centering
 \begin{aligned}
   EI_{t-1}(\xbm_t) 
      <& \sigma_{t-1}(\xbm_t)\tau( -2\beta_t^{1/2} + \beta_t^{1/2}) \leq \sigma_{t-1}(\xbm_t)\tau(-\beta_t^{1/2}).
   \end{aligned}
  \end{equation}
  Second, $y^+_{t-1}-f(\xbm_t)\geq  -2\beta_t^{1/2}\sigma_{t-1}(\xbm_t)$. 
  Then, from $E^s_1(t)$, we have 
  \begin{equation*} \label{eqn:best-observation-instregret-pf-5}
 \centering
 \begin{aligned}
      y^+_{t-1}-\mu_{t-1}(\xbm_t) = y^+_{t-1}-f(\xbm_t)+f(\xbm_t)-\mu_{t-1}(\xbm_t)\geq -3\beta_t^{1/2}\sigma_{t-1}(\xbm_t).
   \end{aligned} 
  \end{equation*}
   Thus, 
   \begin{equation} \label{eqn:best-observation-instregret-pf-6}
 \centering
 \begin{aligned}
      \mu_{t-1}(\xbm_t)- y^+_{t-1}\leq 3\beta_t^{1/2}\sigma_{t-1}(\xbm_t).
   \end{aligned} 
  \end{equation}
  Using~\eqref{eqn:best-observation-instregret-pf-3.2} and~\eqref{eqn:best-observation-instregret-pf-7}, we have
   \begin{equation} \label{eqn:best-observation-instregret-pf-8}
 \centering
 \begin{aligned}
   EI_{t-1}(\xbm_t)< \frac{\sigma_{t-1}(\xbm_t)}{\sigma_{t-1}(\xbm^*)} \sigma_{t-1}(\xbm^*)\tau(-\beta_t^{1/2})< \frac{\sigma_{t-1}(\xbm_t)}{\sigma_{t-1}(\xbm^*)} EI_{t-1}(\xbm^*).
   \end{aligned}
  \end{equation}
  However, by definition $EI_{t-1}(\xbm_t)\geq EI_{t-1}(\xbm^*)$. Thus,~\eqref{eqn:best-observation-instregret-pf-8} implies  
   \begin{equation} \label{eqn:best-observation-instregret-pf-9}
 \centering
 \begin{aligned}
          \sigma_{t-1}(\xbm_t)> \sigma_{t-1}(\xbm^*).
   \end{aligned}
  \end{equation}
  From $E^y(t)$, $y^+_{t-1}-f(\xbm^*)\geq \beta_{t}^{1/2}\sigma_{t-1}(\xbm_t)> \beta_{t}^{1/2}\sigma_{t-1}(\xbm^*)$.
  Then, by~\eqref{eqn:best-observation-instregret-pf-3.1} and $E^y(t)$, we can write 
\begin{equation} \label{eqn:best-observation-instregret-pf-10}
 \centering
 \begin{aligned}
   EI_{t-1}(\xbm^*) >\sigma_{t-1}(\xbm^*)\tau(0)\geq \phi(0) c_{\sigma}\sqrt{\frac{\sigma^2}{t+\sigma^2}} .
   \end{aligned}
  \end{equation}
  where the last inequality is from Lemma~\ref{thm:gp-sigma-bound}.
  By Lemma~\ref{lem:mu-bounded-EI},~\eqref{eqn:best-observation-instregret-pf-10} implies 
\begin{equation} \label{eqn:best-observation-instregret-pf-11}
  \centering
  \begin{aligned}
          y^+_{t-1}-\mu_{t-1}(\xbm_t) \geq - \log^{\frac{1}{2}}\left(\frac{t-1+\sigma^2}{2\pi \phi^2(0)c_{\sigma}^2 \sigma^2}  \right) \sigma_{t-1}(\xbm_t).
  \end{aligned}
  \end{equation}
  Combining~\eqref{eqn:best-observation-instregret-pf-6} and~\eqref{eqn:best-observation-instregret-pf-11}, we have 
\begin{equation} \label{eqn:best-observation-instregret-pf-12}
  \centering
  \begin{aligned}
         \mu_{t-1}(\xbm_t)- y^+_{t-1}\leq c_y(t)  \sigma_{t-1}(\xbm_t).
  \end{aligned}
  \end{equation}
  where $c_y(t)= \max\{\log^{\frac{1}{2}}(\frac{t-1+\sigma^2}{2\pi \phi^2(0)c_{\sigma}^2 \sigma^2}),3\}$.
  
  Now, consider the second term on the last line of~\eqref{eqn:best-observation-instregret-pf-1}.  Using $E^s(t)$, $\Phi(\cdot)<1$, and $y^+_{t-1}-f(\xbm_{t})\leq 0$, we have 
  \begin{equation} \label{eqn:best-observation-instregret-pf-2}
 \centering
 \begin{aligned}
          EI_{t-1}(\xbm_t) 
          =& (y^+_{t-1}-\mu_{t-1}(\xbm_{t}))\Phi(z_{t-1}(\xbm_{t}))+\sigma_{t-1}(\xbm_{t})\phi(z_{t-1}(\xbm_{t}))\\ 
          \leq& (y^+_{t-1}-f(\xbm_{t})+f(\xbm_{t})-\mu_{t-1}(\xbm_{t}))\Phi(z_{t-1}(\xbm_{t}))+\phi(0)\sigma_{t-1}(\xbm_{t})\\ 
          \leq& \beta_t^{1/2}\sigma_{t-1}(\xbm_{t})+\phi(0)\sigma_{t-1}(\xbm_{t}). 
  \end{aligned}
  \end{equation}
  Applying\eqref{eqn:best-observation-instregret-pf-12} and~\eqref{eqn:best-observation-instregret-pf-2}  to~\eqref{eqn:best-observation-instregret-pf-1}, we have 
   \begin{equation} \label{eqn:best-observation-instregret-pf-13}
  \centering
  \begin{aligned}
          r_t  
          \leq&  (c_y(t) +\phi(0)+2\beta_t^{1/2})\sigma_{t-1}(\xbm_t)   + c_{\alpha}\beta_t^{1/2} \sigma_{t-1}([\xbm^*]_t)+\frac{1}{t^2},\\ 
  \end{aligned}
  \end{equation}  
   Combine~\eqref{eqn:best-observation-instregret-pf-13} and~\eqref{eqn:best-observation-instregret-pf-16} and the proof is complete.
\end{proof}
\subsection{Proof of Theorem~\ref{theorem:EI-sigma-star}}\label{se:appx-proof-sigma-star}
The proof of Lemma~\ref{lem:ei-r-1} is given below.
\begin{proof}
   Since $f(\xbm)\sim\mathcal{N}(\mu_{t-1}(\xbm), \sigma_{t-1}^2(\xbm))$, from Lemma~\ref{lem:phi}, we have 
       \begin{equation} \label{eqn:ft-bound-RKHS-pf-1}
  \centering
  \begin{aligned}
        \Pbb\{|f(\xbm)-\mu_{t-1}(\xbm)| > \sqrt{\alpha\log(t)} \sigma_{t-1}(\xbm)\}\leq& e^{-\frac{1}{2} \alpha\log( t)}
        = \frac{1}{t^{\frac{\alpha}{2}}}.
   \end{aligned}
  \end{equation} 
\end{proof}

The proof of Lemma~\ref{lem:EI-regret-parameters}, which shows the relationship between the parameters $\alpha_t$, $\beta_t$, $\zeta_t$, and $\eta_t$, is provided next.
\begin{proof} 
   Consider first $t\geq 2$. From Lemma~\ref{lem:Philowbound}, 
   \begin{equation} \label{eqn:EI-sigma-star-pm-pf-5}
   \centering
   \begin{aligned}
       \Phi(-\alpha_t^{1/2}) > \frac{\alpha_t^{1/2}}{1+\alpha_t}\frac{1}{\sqrt{2\pi}}e^{-\alpha_t/2}=\frac{1}{\sqrt{2\pi}}\frac{\sqrt{\alpha\log(t)}}{1+\alpha\log(t)}\left(\frac{1}{t}\right)^{\frac{\alpha}{2}}. 
    \end{aligned}
   \end{equation}
   Recall that without losing genearlity, $Lrd\geq 1$ from Assumption~\ref{assp:gp}.
  From the definition~\eqref{def:beta} and $0<\delta\leq 1$, 
   \begin{equation} \label{eqn:EI-sigma-star-pm-pf-5.5}
   \centering
   \begin{aligned}
      \beta_t^{1/2}  =& \sqrt{2\log\left(8 \frac{\pi^2t^2}{6} (Lrt^2d)^d /\delta\right)} 
         > \sqrt{2\log(\pi^2  t^{2+2d} (Lrd)^d/\delta)}\\
         =&\sqrt{4(1+d)\log( t)+2d\log( Lrd)+2\log(\pi^2/\delta)}\\
         >&\sqrt{4(1+d)\log( t)+2\log(\pi^2)+2d\log(Lrd)}.
    \end{aligned}
   \end{equation}
   Using~\eqref{eqn:EI-sigma-star-pm-pf-5},~\eqref{eqn:EI-sigma-star-pm-pf-5.5}, $\pi>3$, $\alpha\leq 1$, $d\geq 1$, and $t\geq 2$, we can write 
   \begin{equation} \label{eqn:EI-sigma-star-pm-pf-6}
   \centering
   \begin{aligned}
      \beta_t^{1/2}& \Phi(-\alpha_t^{1/2}) > 
          \sqrt{4(1+d)\log( t) + 2d\log (L r d )+2\log(\pi^2)}\frac{1}{\sqrt{2\pi}}\frac{\sqrt{\alpha\log(t)}}{1+\alpha\log(t)}\left(\frac{1}{t}\right)^{\frac{\alpha}{2}}\\
         >& \frac{\sqrt{\alpha}}{\sqrt{2\pi}} [4(1+d) \log^2( t) + 2d\log (Lr d )\log(t)+2\log(\pi^2)\log(t)]^{\frac{1}{2}}\frac{1}{1+\alpha\log(t)}\left(\frac{1}{t}\right)^{\frac{\alpha}{2}}\\
         >& \frac{\sqrt{\alpha}}{\sqrt{2\pi}}\left(\frac{1}{t}\right)^{\frac{\alpha}{2}},
    \end{aligned}
   \end{equation}
   where the last inequality is obvious if one takes the square of $1+\alpha \log(t)$.
    If $t=1$, $\alpha_t=0$, $\Phi(-\alpha_t^{1/2})=\frac{1}{2}$, and $\beta_t^{1/2}\geq \sqrt{2\log(8\pi^2/6)}$, which leads to~\eqref{eqn:EI-sigma-star-pm-pf-6} trivially.  
  By the choice of $\zeta_t^{1/2}$,~\eqref{eqn:EI-sigma-star-pm-pf-6} leads to~\eqref{eqn:EI-regret-parameters}
\end{proof}

The proof of Theorem~\ref{theorem:EI-sigma-star}, the transformed instantaneous regret bound, is provided next.
\begin{proof}
   We first note that $E^r(t)$ being true means that~\eqref{eqn:eft-1} and~\eqref{eqn:eft-2} with $\ubm_t=\xbm_t$ hold for all three incumbents.
   From Lemma~\ref{lem:post-mean-instregret},~\ref{lem:sampled-post-mean-instregret}, and~\ref{lem:best-observation-instregret}, we have 
      \begin{equation} \label{eqn:EI-sigma-star-pm-pf-1}
   \centering
   \begin{aligned}
        r_t \leq (c_{\xi}(t)+\phi(0)+2\beta_t^{1/2}) \sigma_{t-1}(\xbm_{t}) + c_{\alpha}\beta_t^{1/2}\sigma_{t-1}([\xbm^*]_t)+\frac{1}{t^2}, 
    \end{aligned}
   \end{equation} 
   where $c_{\alpha}=1.326$, $c_{\xi}(t)=c_{\mu}(t)=\log^{\frac{1}{2}}(\frac{t-1+\sigma^2}{2\pi \phi^2(0)c_{\sigma}^2 \sigma^2})$ for both BPMI and BSPMI, and $c_{\xi}=c_y(t)= \max\{c_{\mu}(t),3\}$ for BOI. Further,~\eqref{eqn:EI-sigma-star-pm-pf-1} only stands for BOI when $E^y(t)$ is true as well.
 
   We consider the behavior of $\sigma_{t-1}([\xbm^*]_t)$.
   By the definition of $\xbm_{t}$, it generates the largest $EI_{t-1}$, \textit{i.e.},
   \begin{equation} \label{eqn:EI-sigma-star-pm-pf-4}
  \centering
  \begin{aligned}
        EI_{t-1}(\xbm_{t}) \geq EI_{t-1}([\xbm^*]_t).
    \end{aligned}
  \end{equation}
   We aim to show that~\eqref{eqn:EI-sigma-star-pm-pf-4} guarantees a small $\sigma_{t-1}([\xbm^*]_t)$ in some cases.   
  We consider two scenarios regarding the term $f(\xbm_{t})-f([\xbm^*]_t)$ at given $t$.

   \textbf{Scenario A.} Suppose first that $f(\xbm_{t})-f([\xbm^*]_t)$ reduces to $0$ at a fast enough rate to satisfy
   \begin{equation} \label{eqn:EI-sigma-star-pm-pf-8}
   \centering
   \begin{aligned}
      f(\xbm_{t})-f([\xbm^*]_t) \leq  c_1(t)\max\{\xi^+_{t-1}-\mu_{t-1}(\xbm_{t}),0\}+c_2(t)\sigma_{t-1}(\xbm_{t}).
    \end{aligned}
   \end{equation}
    Then, from the definition of $r_t$ and the discretization $\Cbb_t$~\eqref{eqn:ht-1}, we have  
   \begin{equation} \label{eqn:EI-sigma-star-pm-pf-9}
   \centering
   \begin{aligned}
         r_t =& f(\xbm_{t})-f(\xbm^*) =f(\xbm_{t})-f([\xbm^*]_t)+ f([\xbm^*]_t)-f(\xbm^*)\\
        \leq& c_1(t)\max\{\xi^+_{t-1}-\mu_{t-1}(\xbm_{t}),0\}+c_2(t)\sigma_{t-1}(\xbm_{t}) +\frac{1}{t^2},
    \end{aligned}
   \end{equation}
   which then proves~\eqref{eqn:EI-sigma-star-1}. 
   
  \textbf{Scenario B.} Next, we analyze the B scenario where
   \begin{equation} \label{eqn:EI-sigma-star-pm-pf-10}
   \centering
   \begin{aligned}
     f(\xbm_{t})-f([\xbm^*]_t) > c_1(t)\max\{\xi^+_{t-1}-\mu_{t-1}(\xbm_{t}),0\}+ c_2(t) \sigma_{t-1}(\xbm_{t}). 
    \end{aligned}
   \end{equation}
    Our idea is to show that given our choice of $\alpha_t$, $\beta_t$, and $\eta_t$, the exploitation and exploration trade-off of EI and~\eqref{eqn:EI-sigma-star-pm-pf-4} would ensure $\sigma_{t-1}([\xbm^*]_t)$ is reduced, namely, $\sigma_{t-1}([\xbm^*]_t)\leq \sigma_{t-1}(\xbm_{t})$,  with high probability under scenario B. 
  We prove by contradiction. Suppose on the contrary,   
   \begin{equation} \label{eqn:EI-sigma-star-pm-pf-premise}
  \centering
  \begin{aligned}
     \sigma_{t-1}(\xbm_{t})     < \sigma_{t-1}([\xbm^*]_t)\leq 1,
  \end{aligned}
  \end{equation}
   with probability $1-\delta_{\sigma}$.

   We derive the relationship between the exploitation $\xi^+_{t-1}-\mu_{t-1}(\xbm_{t})$ at $\xbm_t$ and exploitation $\xi^+_{t-1}-\mu_{t-1}([\xbm^*]_t)$ at $[\xbm^*]_t$. Using Lemma~\ref{lem:ei-r-1} at $[\xbm^*]_t$ and $t-1$, with probability $\geq1-\frac{1}{2}(\frac{1}{t})^{\frac{\alpha}{2}}$,
   \begin{equation} \label{eqn:EI-sigma-star-pm-pf-2}
   \centering
   \begin{aligned}
        f([\xbm^*]_t)-\mu_{t-1}([\xbm^*]_t) > - \alpha_t^{1/2}\sigma_{t-1}([\xbm^*]_t). 
    \end{aligned}
   \end{equation} 
   Given~\eqref{eqn:eft-2} (with $\ubm_t=\xbm_t$)  and~\eqref{eqn:EI-sigma-star-pm-pf-2}, with probability $\geq 1-\frac{1}{2 t^{\frac{\alpha}{2}}}$,
   \begin{equation} \label{eqn:EI-sigma-star-pm-pf-3}
   \centering
   \begin{aligned}
      &\xi^+_{t-1}-\mu_{t-1}([\xbm^*]_t)> \xi^+_{t-1}-f([\xbm^*]_t)- \alpha_t^{1/2}\sigma_{t-1}([\xbm^*]_t) \\
          =& \xi^+_{t-1}-\mu_{t-1}(\xbm_{t})+\mu_{t-1}(\xbm_{t})-f(\xbm_{t})+f(\xbm_{t})-f([\xbm^*]_t)- \alpha_t^{1/2}\sigma_{t-1}([\xbm^*]_t)\\
           \geq& \xi^+_{t-1}-\mu_{t-1}(\xbm_{t}) +f(\xbm_{t})-f([\xbm^*]_t)- \alpha_t^{1/2}\sigma_{t-1}([\xbm^*]_t)-\beta_t^{1/2}\sigma_{t-1}(\xbm_{t}).
    \end{aligned}
   \end{equation}

   To proceed, we use the \textit{trade-off} notation of exploitation and exploration for EI in~\eqref{eqn:EI-ab}. Let $a_{t}=\xi^+_{t-1}-\mu_{t-1}(\xbm_{t})$ and $b_{t}= \sigma_{t-1}(\xbm_{t})$, \textit{i.e.}, $EI(a_{t},b_{t})=EI_{t-1}(\xbm_{t})$.
   Based on~\eqref{eqn:EI-sigma-star-pm-pf-10} and~\eqref{eqn:EI-sigma-star-pm-pf-premise}, let $b^*_{t}=\sigma_{t-1}([\xbm^*]_t)$ and  
   \begin{equation} \label{eqn:EI-sigma-star-pm-pf-11}
   \centering
   \begin{aligned}
          a^*_{t} &= \xi^+_{t-1}-\mu_{t-1}(\xbm_{t})+f(\xbm_{t})-f([\xbm^*]_t)-\alpha_t^{1/2}\sigma_{t-1}([\xbm^*]_t)-\beta_t^{1/2}\sigma_{t-1}(\xbm_{t})\\
      &= a_t+f(\xbm_{t})-f([\xbm^*]_t)-\alpha_t^{1/2}b_t^*-\beta_t^{1/2}b_t\\
      &> a_t+c_1(t)\max\{a_t,0\}+c_2(t) b_t -\alpha_t^{1/2} b^*_t-\beta^{1/2}_tb_t\\
     & >a_{t}+c_1(t)\max\{a_t,0\}- \alpha_t^{1/2} b^*_{t}+ \eta_t^{1/2} b_{t}, 
    \end{aligned}
   \end{equation}
   where the last inequality uses $c_2(t)=\eta_t^{1/2}+4\beta_t^{1/2}>\beta_t^{1/2}+\eta_t^{1/2}$.
   Using~\eqref{eqn:EI-sigma-star-pm-pf-3}, we have 
   \begin{equation} \label{eqn:EI-sigma-star-pm-pf-12}
   \centering
   \begin{aligned}
      \xi^+_{t-1}-\mu_{t-1}([\xbm^*]_t)\geq a^*_{t},
    \end{aligned}
   \end{equation}
with probability $\geq 1-\frac{1}{2t^{\frac{\alpha}{2}}} $.
   Since EI is monotonically increasing with respect to both $a_{t}$ and $b_{t}$ by Lemma~\ref{lem:EI-ms}, from~\eqref{eqn:EI-sigma-star-pm-pf-12}, we have  
   \begin{equation} \label{eqn:EI-sigma-star-pm-pf-13}
   \centering
   \begin{aligned}
          EI_{t-1}([\xbm^*]_t) =EI(\xi^+_{t-1}-\mu_{t-1}([\xbm^*]_t),\sigma_{t-1}([\xbm^*]_t)) \geq EI(a^*_{t},b^*_{t}),
    \end{aligned}
   \end{equation}
  with probability $\geq 1-\frac{1}{2t^{\frac{\alpha}{2}}}  $.
  Moreover, $b^*_{t}>b_{t}$ with probability $1-\delta_{\sigma}$ by~\eqref{eqn:EI-sigma-star-pm-pf-premise}.
   We consider two cases regarding $b_{t}^*$ and~\eqref{eqn:EI-sigma-star-pm-pf-11}. We aim to show that in both cases, $EI(a^*_{t},b^*_{t}) > EI(a_{t},b_{t})$  stands with probability $1-\delta_{\sigma}$.
   
   \textbf{Case 1} Consider first the case where  
   \begin{equation} \label{eqn:EI-sigma-star-pm-pf-14}
   \centering
   \begin{aligned}
          b^*_{t} \leq \frac{c_1(t)}{\alpha^{1/2}_t}\max\{a_{t},0\}+\frac{\eta_t^{1/2}}{\alpha_t^{1/2}}b_{t}.
    \end{aligned}
   \end{equation}
   Then, by~\eqref{eqn:EI-sigma-star-pm-pf-11}, $a^*_{t} > a_{t}$. Further, from~\eqref{eqn:EI-sigma-star-pm-pf-premise}, with probability $1-\delta_{\sigma}$,
   \begin{equation} \label{eqn:EI-sigma-star-pm-pf-15}
   \centering
   \begin{aligned}
       EI(a^*_{t},b^*_{t}) > EI(a_{t},b_{t})=EI_{t-1}(\xbm_{t}).
    \end{aligned}
   \end{equation}
   
   \textbf{Case 2.} Next, we consider the case where 
   \begin{equation} \label{eqn:EI-sigma-star-pm-pf-16}
   \centering
   \begin{aligned}
          b^*_{t} > \frac{c_1(t)}{\alpha^{1/2}_t}\max\{a_{t},0\}+\frac{\eta_t^{1/2}}{\alpha_t^{1/2}}b_{t}\geq \frac{\eta_t^{1/2}}{\alpha_t^{1/2}}b_{t}.
    \end{aligned}
   \end{equation} 
   By definition, $\frac{\eta_t^{1/2} }{\alpha_t^{1/2}} >1$. 
   Thus, it is clear that in \textbf{Case 2},~\eqref{eqn:EI-sigma-star-pm-pf-premise} always holds. 
   We further distinguish between two cases based on the sign of $a_{t}$.
   
   \textbf{Case 2.1} 
   Consider $a_{t}<0$ and $z_{t}=\frac{a_{t}}{b_{t}}<0$. 
   Using~\eqref{eqn:EI-sigma-star-pm-pf-11} and~\eqref{eqn:EI-sigma-star-pm-pf-16}, 
   \begin{equation} \label{eqn:EI-sigma-star-pm-pf-17}
   \centering
   \begin{aligned}
            EI(a^*_{t},b^*_{t}) \geq& EI(a_{t}+\eta_t^{1/2} b_t - \alpha_t^{1/2} b^*_{t},b^*_{t}) = b^*_{t} \tau\left(\frac{a_{t}+ \eta_t^{1/2} b_{t}- \alpha_t^{1/2} b^*_{t}}{b^*_{t}}\right) \\
             =& \frac{1}{\rho_{t}}  b_{t} \tau\left((z_{t}+\eta_t^{1/2} )\rho_{t}- \alpha_t^{1/2}\right).
    \end{aligned}
   \end{equation}
   where $\rho_{t}=\frac{b_{t}}{b^*_{t}}$. 
Consider two cases based on the value of $z_{t}$.

   \textbf{Case 2.1.1} 
   Suppose $z_{t}> -\eta_t^{1/2}$. Define function $\tilde{\tau}:\Rbb\to\Rbb$ at $z,\eta_t,\alpha_t$ as 
   \begin{equation} \label{eqn:EI-sigma-star-pm-pf-bartau}
   \centering
   \begin{aligned}
    \bar{\tau}(\rho;z,\eta_t,\alpha_t) = 
\frac{1}{\rho} \tau\left( (z+ \eta_t^{1/2})\rho-\alpha_t^{1/2}\right),
    \end{aligned}
   \end{equation}
   and $\rho\in (0,\frac{\alpha_t^{1/2}}{\eta_t^{1/2}}), - \eta_t^{1/2}\leq z < 0$. 
   We aim to find the minimum of~\eqref{eqn:EI-sigma-star-pm-pf-bartau} given $z$.
   Taking the derivative with $\rho$ using Lemma~\ref{lem:tau}, we have 
   \begin{equation} \label{eqn:EI-sigma-star-pm-pf-18}
   \centering
   \begin{aligned}
    \frac{d \bar{\tau}}{d \rho} =& -\frac{1}{\rho^2}\tau\left( (z+ \eta_t^{1/2})\rho-\alpha_t^{1/2}\right)+\frac{1}{\rho}\Phi((z+ \eta_t^{1/2})\rho-\alpha_t^{1/2})(z+\eta_t^{1/2})\\
          =&-\frac{1}{\rho^2}[((z+ \eta_t^{1/2})\rho-\alpha_t^{1/2})\Phi\left( (z+ \eta_t^{1/2})\rho-\alpha_t^{1/2}\right)+\phi((z+ \eta_t^{1/2})\rho-\alpha_t^{1/2})]\\
          &+\frac{1}{\rho}\Phi((z+ \eta_t^{1/2})\rho-\alpha_t^{1/2})(z+\eta_t^{1/2})\\
         =&-\frac{1}{\rho^2}[-\alpha_t^{1/2}\Phi\left( (z+ \eta_t^{1/2})\rho-\alpha_t^{1/2}\right)+\phi((z+ \eta_t^{1/2})\rho-\alpha_t^{1/2})].
    \end{aligned}
   \end{equation}
   Define the function $\theta(\rho;z,\eta_t,\alpha_t) = -\alpha_t^{1/2}\Phi\left( (z+ \eta_t^{1/2})\rho-\alpha_t^{1/2}\right)+\phi((z+ \eta_t^{1/2})\rho-\alpha_t^{1/2})$.
   From~\eqref{eqn:EI-sigma-star-pm-pf-18}, the sign of $\frac{d \bar{\tau}}{d \rho}$ is determined by $\theta$.
  The derivative of $\theta$ with $\rho$ is
   \begin{equation} \label{eqn:EI-sigma-star-pm-pf-19}
   \centering
   \begin{aligned}
    \frac{d \theta}{d \rho} 
         =&  - \alpha_t^{1/2}\phi((z+ \eta_t^{1/2})\rho-\alpha_t^{1/2})(z+\eta_t^{1/2}) - \phi((z+ \eta_t^{1/2})\rho-\alpha_t^{1/2})((z+ \eta_t^{1/2})\rho-\alpha_t^{1/2})(z+\eta_t^{1/2})\\
          =& -\phi((z+ \eta_t^{1/2})\rho-\alpha_t^{1/2})(z+\eta_t^{1/2})^2\rho<0, 
    \end{aligned}
   \end{equation}
  for $\forall \rho \in(0,\frac{\alpha_t^{1/2}}{\eta_t^{1/2}})$. Therefore, $\theta$ is monotonically decreasing with $\rho$. 
  Now let $\rho\to 0$. We have $\theta(\rho;z,\eta_t,\alpha_t)\to -\alpha_t^{1/2}\Phi(-\alpha_t^{1/2})+\phi(-\alpha_t^{1/2})=\tau(-\alpha_t^{1/2})>0$.
  Further, let $\rho \to \frac{\alpha_t^{1/2}}{\eta_t^{1/2}}$. Then, $\theta(\rho;z,\eta_t,\alpha_t)\to -\alpha_t^{1/2} \Phi\left(\frac{\alpha_t^{1/2}}{\eta_t^{1/2}}z\right) + \phi\left(\frac{\alpha_t^{1/2}}{\eta_t^{1/2}}z\right)$. 
  Depending on the sign of $-\alpha_t^{1/2} \Phi\left(\frac{\alpha_t^{1/2}}{\eta_t^{1/2}}z\right) + \phi\left(\frac{\alpha_t^{1/2}}{\eta_t^{1/2}}z\right)$, $\bar{\tau}$ has two different types of behavior.  

   \textbf{Case 2.1.1.1} 
   First, $-\alpha_t^{1/2} \Phi\left(\frac{\alpha_t^{1/2}}{\eta_t^{1/2}}z\right) + \phi\left(\frac{\alpha_t^{1/2}}{\eta_t^{1/2}}z\right) \geq 0$, which means $\theta(\rho;z,\eta_t,\alpha_t)>0$ for $\forall \rho\in(0,\frac{\alpha_t^{1/2}}{\eta_t^{1/2}})$. By~\eqref{eqn:EI-sigma-star-pm-pf-18}, $\bar{\tau}(\rho;z,w)$ is monotonically decreasing with $\rho$ and 
   \begin{equation} \label{eqn:EI-sigma-star-pm-pf-20}
   \centering
   \begin{aligned}
    \bar{\tau}(\rho;z,\eta_t,\alpha_t)  > \bar{\tau}\left(\frac{\alpha_t^{1/2}}{\eta_t^{1/2}};z,\eta_t,\alpha_t\right) = \frac{\eta_t^{1/2}}{\alpha_t^{1/2}} \tau\left( \frac{\alpha_t^{1/2}}{\eta_t^{1/2}}z\right).
    \end{aligned}
   \end{equation}
   Using $\frac{\alpha_t^{1/2}}{\eta_t^{1/2}}<1$, $z<0$,  and the monotonicity of $\tau$ (Lemma~\ref{lem:tau}), we have 
   \begin{equation} \label{eqn:EI-sigma-star-pm-pf-21}
   \centering
   \begin{aligned}
    \bar{\tau}(\rho;z,\eta_t,\alpha_t)  > \frac{\eta_t^{1/2}}{\alpha_t^{1/2}}  \tau\left( \frac{\alpha_t^{1/2}}{\eta_t^{1/2}}z\right)>\tau\left(\frac{\alpha_t^{1/2}}{\eta_t^{1/2}} z\right)>\tau(z).
    \end{aligned}
   \end{equation}
  Applying~\eqref{eqn:EI-sigma-star-pm-pf-21} to~\eqref{eqn:EI-sigma-star-pm-pf-17} with $z=z_{t}$,
   \begin{equation} \label{eqn:EI-sigma-star-pm-pf-22}
   \centering
   \begin{aligned}
    EI(a^*_{t},b^*_{t}) \geq  b_{t}\bar{\tau}(\rho_{t};z_{t},\eta_t,\alpha_t)  > b_{t} \tau(z_{t})=EI(a_{t},b_{t}).
    \end{aligned}
   \end{equation}

   \textbf{Case 2.1.1.2} 
   In the second case, $-\alpha_t^{1/2} \Phi\left(\frac{\alpha_t^{1/2}}{\eta_t^{1/2}}z\right) + \phi\left(\frac{\alpha_t^{1/2}}{\eta_t^{1/2}}z\right) < 0$, which means there exists a unique $\bar{\rho}\in\left(0,\frac{\alpha_t^{1/2}}{\eta_t^{1/2}}\right)$ as a stationary point for $\theta(\rho;z,\eta_t,\alpha_t)$, \textit{i.e.},
   \begin{equation} \label{eqn:EI-sigma-star-pm-pf-23}
   \centering
   \begin{aligned}
      -\alpha_t^{1/2}\Phi((z+\eta_t^{1/2})\bar{\rho}-\alpha_t^{1/2}) + \phi((z+\eta_t^{1/2})\bar{\rho}-\alpha_t^{1/2}) =0.
    \end{aligned}
   \end{equation}
We claim $\bar{\rho}$ is a local minimum of $\bar{\tau}$ via the second derivative test.  
  From~\eqref{eqn:EI-sigma-star-pm-pf-18},~\eqref{eqn:EI-sigma-star-pm-pf-19}, and~\eqref{eqn:EI-sigma-star-pm-pf-23}, the second-order derivative of $\bar{\tau}$ with $\rho$ at $\bar{\rho}$ is
   \begin{equation} \label{eqn:EI-sigma-star-pm-pf-24}
   \centering
   \begin{aligned}
    \frac{d^2 \bar{\tau}}{d \rho^2}|_{\bar{\rho}} =& \frac{2}{\bar{\rho}^3}\theta(\bar{\rho};z,\eta_t,\alpha_t)-\frac{1}{\bar{\rho}^2}\frac{d\theta}{d\rho}(\bar{\rho};z,\eta_t,\alpha_t) = \frac{1}{\rho}\phi((z+ \eta_t^{1/2})\bar{\rho}-\alpha_t^{1/2})(z+\eta_t^{1/2})^2>0.
    \end{aligned}
   \end{equation}
   It is well-known from second derivative test that $\bar{\rho}$ is a local minimum of $\bar{\tau}$.
  Further, $\frac{d\bar{\tau}}{d\rho}<0$ for $\rho<\bar{\rho}$ and  $\frac{d\bar{\tau}}{d\rho}>0$ for $\rho>\bar{\rho}$. Therefore, $\bar{\rho}$ is the global minimum on $\left(0,\frac{\alpha_t^{1/2}}{\eta_t^{1/2}}\right)$ and by~\eqref{eqn:EI-sigma-star-pm-pf-23},
   \begin{equation} \label{eqn:EI-sigma-star-pm-pf-25}
   \centering
   \begin{aligned}
    \bar{\tau}(\bar{\rho};z,\eta_t,\alpha_t) &= \frac{1}{\bar{\rho}} [((z+ \eta_t^{1/2})\bar{\rho}-\alpha_t^{1/2})\Phi\left( (z+ \eta_t^{1/2})\bar{\rho}-\alpha_t^{1/2}\right) + \phi\left( (z+ \eta_t^{1/2})\bar{\rho}-\alpha_t^{1/2}\right)]\\
         &=\frac{1}{\bar{\rho}} [((z+ \eta_t^{1/2})\bar{\rho}-\alpha_t^{1/2})\Phi\left( (z+ \eta_t^{1/2})\bar{\rho}-\alpha_t^{1/2}\right) + \alpha_t^{1/2}\Phi\left( (z+ \eta_t^{1/2})\bar{\rho}-\alpha_t^{1/2}\right)] \\&
         = (z+ \eta_t^{1/2}) \Phi\left( (z+ \eta_t^{1/2})\bar{\rho}-\alpha_t^{1/2}\right).
    \end{aligned}
   \end{equation}
   Denote the corresponding $\bar{\rho}$ for $z_{t}$ as $\bar{\rho}_{t}$. By~\eqref{eqn:EI-sigma-star-pm-pf-25},
   \begin{equation} \label{eqn:EI-sigma-star-pm-pf-26}
   \centering
   \begin{aligned}
     \bar{\tau}(\rho_{t};z_{t},\eta_t,\alpha_t) -\tau(z_{t}) >& (z_{t}+ \eta_t^{1/2}) \Phi\left( (z_{t}+ \eta_t^{1/2})\bar{\rho}_{t}-\alpha_t^{1/2}\right) - z_{t}\Phi(z_{t}) -\phi(z_{t})\\
    \end{aligned}
   \end{equation}
If $z_{t} \in \left(\frac{\eta_t^{1/2}\bar{\rho}_{t}-\alpha_t^{1/2}}{1-\bar{\rho}_{t}} ,0\right)$, we know $(z_{t}+ \eta_t^{1/2})\bar{\rho}_{t}-\alpha_t^{1/2}<z_{t}$. 
   Thus,~\eqref{eqn:EI-sigma-star-pm-pf-26} becomes
   \begin{equation} \label{eqn:EI-sigma-star-pm-pf-27}
   \centering
   \begin{aligned}
     \bar{\tau}(\rho_{t};z_{t},\eta_t,\alpha_t) -\tau(z_{t}) 
      \geq& z_{t}[\Phi( (z_{t}+ \eta_t^{1/2})\bar{\rho}_{t}-\alpha_t^{1/2}) -\Phi(z_{t})] \\
      &+ \eta_t^{1/2} \Phi\left( (z_{t}+ \eta_t^{1/2})\bar{\rho}_{t}-\alpha_t^{1/2}\right)  -\phi(z_{t})\\
      \geq&  \eta_t^{1/2} \Phi\left( (z_{t}+ \eta_t^{1/2})\bar{\rho}_{t}-\alpha_t^{1/2}\right)  - \phi(0)>\eta_t^{1/2}\Phi(-\alpha_t^{1/2})-\phi(0) \geq 0,
    \end{aligned}
   \end{equation}
   where the last inequality is from Lemma~\ref{lem:EI-regret-parameters}.
   
   If $z_{t} \leq \frac{\eta_t^{1/2}\bar{\rho}_{t}-\alpha_t^{1/2}}{1-\bar{\rho}_{t}}$, we have $(z_{t}+ \eta_t^{1/2})\bar{\rho}_{t}-\alpha_t^{1/2}\geq z_{t}$. 
   Suppose now $z_{t}>-\eta_t^{1/2}+1$. By~\eqref{eqn:EI-sigma-star-pm-pf-26} and Lemma~\ref{lem:tauvsPhi},
   \begin{equation} \label{eqn:EI-sigma-star-pm-pf-28}
   \centering
   \begin{aligned}
     \bar{\tau}(\rho_{t};z_{t},\eta_t,\alpha_t) -\tau(z_{t}) 
       \geq& (z_{t}+ \eta_t^{1/2}) \Phi\left( z_{t}\right) - \tau(z_{t})>\Phi(z_{t})-\tau(z_{t}) >0.\\
    \end{aligned}
   \end{equation}
   On the other hand, we note that $\eta_t^{1/2}>\alpha_t^{1/2}+1$ by definition~\eqref{def:eta}.
   Thus, if $z_{t}\leq -\eta_t^{1/2}+1$, we have $z_{t}<-\alpha_t^{1/2}$. 
   By~\eqref{eqn:EI-sigma-star-pm-pf-bartau} and the monotonicity of $\tau(\cdot)$, 
   \begin{equation} \label{eqn:EI-sigma-star-pm-pf-29}
   \centering
   \begin{aligned}
     \bar{\tau}(\rho_{t};z_{t},\eta_t,\alpha_t) -\tau(z_{t}) 
       \geq& \frac{\eta_t^{1/2}}{\alpha_t^{1/2}} \tau\left( -\alpha_t^{1/2}\right) - \tau(z_{t})> \frac{\eta_t^{1/2}}{\alpha_t^{1/2}} \tau\left( -\alpha_t^{1/2}\right) - \tau(-\alpha_t^{1/2})>0.\\
    \end{aligned}
   \end{equation}
    Combining all three cases~\eqref{eqn:EI-sigma-star-pm-pf-27},~\eqref{eqn:EI-sigma-star-pm-pf-28} and~\eqref{eqn:EI-sigma-star-pm-pf-29},~\eqref{eqn:EI-sigma-star-pm-pf-26} leads to
   \begin{equation} \label{eqn:EI-sigma-star-pm-pf-30}
   \centering
   \begin{aligned}
     \bar{\tau}(\rho_{t};z_{t},\eta_t,\alpha_t) -\tau(z_{t})>0. 
    \end{aligned}
   \end{equation}
   From~\eqref{eqn:EI-sigma-star-pm-pf-17} and~\eqref{eqn:EI-sigma-star-pm-pf-bartau}, we again have~\eqref{eqn:EI-sigma-star-pm-pf-22} for \textbf{Case 2.1.1.2}.
  Therefore,~\eqref{eqn:EI-sigma-star-pm-pf-22} is valid for \textbf{Case 2.1.1}. 

   \textbf{Case 2.1.2} 
   Next, consider $z_{t}<- \eta_t^{1/2} $. By~\eqref{eqn:EI-sigma-star-pm-pf-16}, we know 
  $\frac{b_{t}}{b^*_{t}}\leq \frac{\alpha_t^{1/2}}{\eta_t^{1/2}}<1$.
    Using the monotonicity of $\tau(\cdot)$,~\eqref{eqn:EI-sigma-star-pm-pf-17} implies  
   \begin{equation} \label{eqn:EI-sigma-star-pm-pf-31}
   \centering
   \begin{aligned}
            EI(a^*_{t},b^*_{t}) \geq\frac{1}{\rho_{t}}b_{t} \tau\left( (z_{t}+\eta_t^{1/2})\rho_{t}-\alpha_t^{1/2}\right)
             \geq& \frac{\eta_t^{1/2}}{\alpha_t^{1/2}} b_{t} \tau\left( (z_{t}+\eta_t^{1/2})\frac{\alpha_t^{1/2}}{\eta_t^{1/2}}-\alpha_t^{1/2} \right) \\
             > & b_{t} \tau\left(z_{t} \right)= EI(a_{t},b_{t}).
    \end{aligned}
   \end{equation}
   That is, we have~\eqref{eqn:EI-sigma-star-pm-pf-22} for \textbf{Case 2.1.2}.
   Combining \textbf{Case 2.1.1} and \textbf{Case 2.1.2}, we have $EI(a^*_{t},b^*_{t})>EI(a_{t},b_{t})$ for \textbf{Case 2.1}. 
   
\textbf{Case 2.2}
   Next, we consider $a_{t}\geq 0$.  
   By definition~\eqref{eqn:EI-sigma-star-pm-pf-11}, 
   \begin{equation} \label{eqn:EI-sigma-star-pm-pf-32}
   \centering
   \begin{aligned}
            EI(a^*_{t},b^*_{t}) \geq& EI((c_1(t)+1)a_{t}+\eta_t^{1/2} b_{t} - \alpha_t^{1/2} b^*_{t},b^*_{t}) 
              \geq b^*_{t} 
\tau\left((c_1(t) z_{t}+\eta_t^{1/2}) \rho_{t} -\alpha_t^{1/2} \right)\\
             =& b_{t} \frac{1}{\rho_{t}} \tau\left( c_1(t) z_{t}\rho_{t}+ \eta_t^{1/2}\rho_{t}-\alpha_t^{1/2}\right),
    \end{aligned}
   \end{equation}
   where $\rho_{t}=\frac{b_{t}}{b^*_{t}}$.
    By~\eqref{eqn:EI-sigma-star-pm-pf-16}, we again have 
       $\eta_t^{1/2} \rho_{t}- \alpha_t^{1/2} \leq 0$.
  Define function $\tilde{\tau}:\Rbb^2\to\Rbb$ as 
   \begin{equation} \label{eqn:EI-sigma-star-pm-pf-tildetau}
   \centering
   \begin{aligned}
    \tilde{\tau}(\rho,z;\eta_t,\alpha_t) = 
\frac{1}{\rho} \tau\left( c_1(t) z\rho+ \eta_t^{1/2}\rho-\alpha_t\right),
    \end{aligned}
   \end{equation}
   for $\rho\in (0,\frac{\alpha_t^{1/2}}{\eta_t^{1/2}}), z\geq 0$.
   From~\eqref{eqn:EI-sigma-star-pm-pf-32}, $EI(a^*_{t},b^*_{t})\geq b_{t}\tilde{\tau}(\rho_{t},z_{t};\eta_t,\alpha_t)$.
    We now compare $ \tilde{\tau}(\rho_{t},z_{t};\eta_t,\alpha_t)$ and $\tau(z_{t})$. For $\forall \rho_{t}\in \left(0,\frac{\alpha_t^{1/2}}{\eta_t^{1/2}}\right)$,  Lemma~\ref{lem:EI-regret-parameters} and Lemma~\ref{lem:tau} lead to the derivative 
   \begin{equation} \label{eqn:EI-sigma-star-pm-pf-34}
   \centering
   \begin{aligned}
         \frac{\partial \tilde{\tau}}{\partial z}|_{(\rho_{t},z_{t}) } = c_1(t) \Phi\left(c_1(t)z_{t}\rho_{t}+\eta_t^{1/2}\rho_{t}-\alpha_t^{1/2}\right)>c_1(t)\Phi(-\alpha_t^{1/2})\geq 1 > \Phi(z_{t})=\frac{d \tau}{d z}|_{z_{t}}.
    \end{aligned}
   \end{equation}
   Therefore, both $\tilde{\tau}(\cdot,\cdot)$ and $\tau(\cdot)$ are monotonically increasing with $z_{t}$ and $\tilde{\tau}$ always has a larger positive derivative. For any $\rho_{t}$, this leads to
   \begin{equation} \label{eqn:EI-sigma-star-pm-pf-35}
   \centering
   \begin{aligned}
      \tilde{\tau}(\rho_{t},z_{t};\eta_t,\alpha_t) - \tau(z_{t}) \geq \tilde{\tau}(\rho_{t},0;\eta_t,\alpha_t)-\tau(0)=\frac{1}{\rho_{t}}\tau\left(\eta_t^{1/2}\rho_{t}-\alpha_t^{1/2} \right)-\tau(0).
    \end{aligned}
    \end{equation}
  Notice that the right-hand side of~\eqref{eqn:EI-sigma-star-pm-pf-35} can be written via~\eqref{eqn:EI-sigma-star-pm-pf-bartau} as
   \begin{equation} \label{eqn:EI-sigma-star-pm-pf-36}
   \centering
   \begin{aligned}
      \bar{\tau}(\rho_{t};0,\eta_t,\alpha_t)-\tau(0).
    \end{aligned}
    \end{equation}
   Therefore, we can follow the analysis in \textbf{Case 2.1.1} with $z=0$. Notice that $-\alpha_t^{1/2}\Phi(0)+\phi(0)<0$ for $\forall \alpha_t^{1/2}>1$. Thus,~\eqref{eqn:EI-sigma-star-pm-pf-36} follows \textbf{Case 2.1.1.2} where a minimum exists at $\bar{\rho}_t\in (0,\frac{\alpha_t^{1/2}}{\eta_t^{1/2}})$ which satisfies $\alpha_t^{1/2}\Phi(\eta_t^{1/2}\bar{\rho}_t-\alpha_t^{1/2})= \phi(\eta_t^{1/2}\bar{\rho}_t-\alpha_t^{1/2})$. Hence, we have  
   \begin{equation} \label{eqn:EI-sigma-star-pm-pf-37}
   \centering
   \begin{aligned}
      \bar{\tau}(\rho_{t};0,\eta_t,\alpha_t)-\tau(0)=& \frac{1}{\rho_{t}}\tau\left(\eta_t^{1/2}\rho_{t}-\alpha_t^{1/2} \right)-\tau(0)> \frac{1}{\bar{\rho}_t} \tau(\eta_t^{1/2}\bar{\rho}_t-\alpha_t^{1/2}) - \phi(0) \\ 
=&\frac{1}{\bar{\rho}_t} ((\eta_t^{1/2}\bar{\rho}_t-\alpha_t^{1/2}) \Phi(\eta_t^{1/2}\bar{\rho}_t-\alpha_t^{1/2})+\phi(\eta_t^{1/2}\bar{\rho}_t-\alpha_t^{1/2}))-\phi(0)\\
         =& \eta_t^{1/2} \Phi(\eta_t^{1/2}\bar{\rho}_t-\alpha_t^{1/2})-\phi(0) > \eta_t^{1/2}\Phi(-\alpha_t^{1/2})-\phi(0).
    \end{aligned}
    \end{equation}
  Given  Lemma~\ref{lem:EI-regret-parameters}, 
   \begin{equation} \label{eqn:EI-sigma-star-pm-pf-38}
   \centering
   \begin{aligned}
      \bar{\tau}(\rho_{t};0,\eta_t,\alpha_t)-\tau(0) > \eta_t^{1/2} \Phi(-\alpha_t^{1/2})-\phi(0) > 0.
    \end{aligned}
    \end{equation}
   By~\eqref{eqn:EI-sigma-star-pm-pf-32},~\eqref{eqn:EI-sigma-star-pm-pf-tildetau},~\eqref{eqn:EI-sigma-star-pm-pf-35},~\eqref{eqn:EI-sigma-star-pm-pf-36}, and~\eqref{eqn:EI-sigma-star-pm-pf-38}, we have 
   \begin{equation} \label{eqn:EI-sigma-star-pm-pf-39}
   \centering
   \begin{aligned}
            EI(a^*_{t},b^*_{t})- EI(a_{t},b_{t})\geq b_{t} \tilde{\tau}(\rho_{t},0;\eta_t,\alpha_t)-b_{t}\tau(0)> 0.
    \end{aligned}
   \end{equation}

    To summarize, under the \textbf{Scenario B Case 2}, for all cases given~\eqref{eqn:EI-sigma-star-pm-pf-16}, 
   $EI(a_{t}^*,b_{t}^*)>EI(a_{t},b_{t})$.
   Combined with~\eqref{eqn:EI-sigma-star-pm-pf-15}, under \textbf{Scenario B}, $EI(a_{t}^*,b_{t}^*)>EI(a_{t},b_{t})$ with probability $\geq 1-\delta_{\sigma}$.
   Combined with~\eqref{eqn:EI-sigma-star-pm-pf-13}, with probability $\geq 1-\frac{1}{2t^{\frac{\alpha}{2}}}-\delta_{\sigma}$, $EI_{t-1}([\xbm^*]_t)>EI_{t-1}(\xbm_{t})$. However, this is a contradiction of~\eqref{eqn:EI-sigma-star-pm-pf-4}, a sure event.
   Therefore, we have   
   \begin{equation*} \label{eqn:EI-sigma-star-pm-pf-40}
   \centering
   \begin{aligned}
 1-\frac{1}{2t^{\frac{\alpha}{2}}} -\delta_{\sigma}\leq 0.
    \end{aligned}
   \end{equation*}
   That is, $\delta_{\sigma}\geq 1-\frac{1}{2t^{\frac{\alpha}{2}}}$. From~\eqref{eqn:EI-sigma-star-pm-pf-premise},
   \begin{equation} \label{eqn:EI-sigma-star-pm-pf-41}
   \centering
   \begin{aligned}
           \sigma_{t-1}(\xbm_{t})\geq \sigma_{t-1}([\xbm^*]_t),
    \end{aligned}
   \end{equation}
   with probability $\geq 1-\frac{1}{2 t^{\frac{\alpha}{2}}}$.
   Under \textbf{Scenario B}, by~\eqref{eqn:EI-sigma-star-pm-pf-41} and~\eqref{eqn:EI-sigma-star-pm-pf-1}, with probability $\geq 1-\frac{1}{2 t^{\frac{\alpha}{2}}}$,  
   \begin{equation} \label{eqn:EI-sigma-star-pm-pf-42}
   \centering
   \begin{aligned}
       r_t \leq (c_{\xi}(t)+\phi(0)+(c_{\alpha}+2) \beta_t^{1/2})\sigma_{t-1}(\xbm_{t})+\frac{1}{t^2}.
     \end{aligned}
   \end{equation}
    By~\eqref{eqn:EI-sigma-star-assp}, $(\zeta_t^{1/2}+4)\beta_t^{1/2} > c_{\xi}(t)+\phi(0)+(c_{\alpha}+2) \beta_t^{1/2}$. 
    Combining \textbf{Scenario A}~\eqref{eqn:EI-sigma-star-pm-pf-9} and \textbf{Scenario B}~\eqref{eqn:EI-sigma-star-pm-pf-42}, we obtain~\eqref{eqn:EI-sigma-star-1}. 
\end{proof}

\section{Cumulative regret bound proof}\label{se:cumu-regret-proof}
In this section, we provide the proofs of the cumulative regret bounds in Section~\ref{se:regret}.  
\subsection{BPMI regret proof}\label{se:bmpi-regret-proof}
Since $\mu_{t-1}^+-\mu_{t-1}(\xbm)\leq 0$ for $\forall \xbm\in C$, the following corollary is a direct result of Theorem~\ref{theorem:EI-sigma-star} for BPMI.
\begin{cor}\label{cor:EI-sigma-star-pm}
   For any filtration $\mathcal{F}_{t-1}'$, if $E^r(t)$ is true,
   then,  with probability $\geq 1-\frac{1}{2 t^{\frac{\alpha}{2}}}$,
      \begin{equation} \label{eqn:EI-sigma-star-pm-1}
  \centering
  \begin{aligned}
      r_t  \leq c_2(t) \sigma_{t-1}(\xbm_{t})+\frac{1}{t^2}, 
   \end{aligned}
  \end{equation}
   for BPMI, where the parameters are in Theorem~\ref{theorem:EI-sigma-star}.
\end{cor}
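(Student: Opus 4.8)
The plan is to invoke Theorem~\ref{theorem:EI-sigma-star} directly and to show that the exploitation term on its right-hand side vanishes identically for BPMI. Recall that for BPMI the incumbent is $\xi^+_{t-1}=\mu^+_{t-1}=\min_{\xbm\in C}\mu_{t-1}(\xbm)$, the minimum of the posterior mean taken over the \emph{entire} domain $C$. This global nature of the incumbent is precisely what makes the corollary a one-line consequence of the theorem.

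First I would note that the sample point $\xbm_t$ produced by the GP-EI acquisition step~\eqref{eqn:acquisition-1} satisfies $\xbm_t\in C$. Since $\mu^+_{t-1}$ is the global minimum of $\mu_{t-1}(\cdot)$ over $C$, it follows immediately that $\mu^+_{t-1}\le\mu_{t-1}(\xbm_t)$, and therefore $\xi^+_{t-1}-\mu_{t-1}(\xbm_t)=\mu^+_{t-1}-\mu_{t-1}(\xbm_t)\le 0$. Consequently the exploitation term obeys $\max\{\xi^+_{t-1}-\mu_{t-1}(\xbm_t),0\}=0$.

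Then, under the hypothesis that $E^r(t)$ holds for the given filtration $\mathcal{F}'_{t-1}$, Theorem~\ref{theorem:EI-sigma-star} guarantees~\eqref{eqn:EI-sigma-star-1} with probability at least $1-\frac{1}{2t^{\alpha/2}}$. Substituting $\max\{\xi^+_{t-1}-\mu_{t-1}(\xbm_t),0\}=0$ into~\eqref{eqn:EI-sigma-star-1} annihilates the $c_1(t)$ term and leaves exactly $r_t\le c_2(t)\sigma_{t-1}(\xbm_t)+\frac{1}{t^2}$, which is~\eqref{eqn:EI-sigma-star-pm-1}. The probability bound and the definition of the parameter $c_2(t)$ are inherited verbatim from Theorem~\ref{theorem:EI-sigma-star}.

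There is essentially no technical obstacle here: the entire content of the corollary is the observation that BPMI's choice of incumbent as the global posterior-mean minimizer forces the exploitation term to be non-positive, so that only the exploration term $c_2(t)\sigma_{t-1}(\xbm_t)$ and the discretization residual $\frac{1}{t^2}$ survive. The only point worth stating carefully is that the inequality $\mu^+_{t-1}\le\mu_{t-1}(\xbm_t)$ relies on $\xbm_t$ lying in the same domain $C$ over which the minimum defining $\mu^+_{t-1}$ is taken, which holds by construction of the acquisition step.
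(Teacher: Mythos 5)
Your proposal is correct and matches the paper's argument exactly: the paper also deduces the corollary from Theorem~\ref{theorem:EI-sigma-star} by observing that $\mu^+_{t-1}-\mu_{t-1}(\xbm)\le 0$ for all $\xbm\in C$, so the exploitation term $\max\{\xi^+_{t-1}-\mu_{t-1}(\xbm_t),0\}$ vanishes for BPMI. No gaps.
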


The remainder of the section follows similar notations and analysis framework to GP-TS in~\cite{chowdhury2017kernelized}. 
We consider the conditional expectation of $r_t$ in the following lemma.
\begin{lem}\label{lem:ei-pm-r-exp}
   For any filtration $\mathcal{F}_{t-1}'$, if $E^r(t)$ is true,
   \begin{equation} \label{eqn:ei-pm-r-exp-1}
  \centering
  \begin{aligned}
      \Ebb[r_t|\mathcal{F}'_{t-1}] \leq (\eta_t^{1/2}+4\beta_t^{1/2})\sigma_{t-1}(\xbm_{t})+\frac{1}{t^2} + \frac{B}{t^{\frac{\alpha}{2}}}.
   \end{aligned}
  \end{equation} 
\end{lem}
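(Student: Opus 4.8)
The plan is to split the conditional expectation according to whether the high-probability instantaneous bound of Corollary~\ref{cor:EI-sigma-star-pm} holds, and to absorb the complement into the tail term $\frac{B}{t^{\alpha/2}}$. First I would record the algebraic identity that, for BPMI, $c_2(t)=(4+\zeta_t^{1/2})\beta_t^{1/2}=4\beta_t^{1/2}+\eta_t^{1/2}$, since $\eta_t^{1/2}=\zeta_t^{1/2}\beta_t^{1/2}$ by~\eqref{def:eta}; this already matches the coefficient of $\sigma_{t-1}(\xbm_t)$ in the target bound~\eqref{eqn:ei-pm-r-exp-1}, so the whole task reduces to controlling the expectation via the probabilistic content of Corollary~\ref{cor:EI-sigma-star-pm}.

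Fix a filtration $\mathcal{F}'_{t-1}$ on which $E^r(t)$ holds, and let $G_t$ denote the pointwise confidence event of Lemma~\ref{lem:ei-r-1} that underlies Corollary~\ref{cor:EI-sigma-star-pm}, so that $\Pbb(G_t\mid\mathcal{F}'_{t-1})\geq 1-\frac{1}{2t^{\alpha/2}}$ and, on $G_t$, we have $r_t\leq c_2(t)\sigma_{t-1}(\xbm_t)+\frac{1}{t^2}$. I would then write $\Ebb[r_t\mid\mathcal{F}'_{t-1}]=\Ebb[r_t\mathbf{1}_{G_t}\mid\mathcal{F}'_{t-1}]+\Ebb[r_t\mathbf{1}_{G_t^c}\mid\mathcal{F}'_{t-1}]$, using that $\sigma_{t-1}(\xbm_t)$ is $\mathcal{F}'_{t-1}$-measurable (hence deterministic under the conditioning), because $\xbm_t$ is selected by maximizing the deterministic acquisition $EI_{t-1}$ given $\mathcal{F}'_{t-1}$.

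For the first term, since $r_t\geq 0$ and the bound $c_2(t)\sigma_{t-1}(\xbm_t)+\frac{1}{t^2}$ is nonnegative and deterministic, $\Ebb[r_t\mathbf{1}_{G_t}\mid\mathcal{F}'_{t-1}]\leq \bigl(c_2(t)\sigma_{t-1}(\xbm_t)+\frac{1}{t^2}\bigr)\Pbb(G_t\mid\mathcal{F}'_{t-1})\leq c_2(t)\sigma_{t-1}(\xbm_t)+\frac{1}{t^2}$. For the second term, I would apply the crude deterministic bound $r_t=f(\xbm_t)-f(\xbm^*)\leq 2B$, which follows from $|f|\leq B$ on $C$ (Lemma~\ref{lem:f-bound}) since $f(\xbm_t)\leq B$ and $f(\xbm^*)\geq -B$, together with $\Pbb(G_t^c\mid\mathcal{F}'_{t-1})\leq \frac{1}{2t^{\alpha/2}}$; this gives $\Ebb[r_t\mathbf{1}_{G_t^c}\mid\mathcal{F}'_{t-1}]\leq 2B\cdot\frac{1}{2t^{\alpha/2}}=\frac{B}{t^{\alpha/2}}$. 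Summing the two contributions and substituting $c_2(t)=\eta_t^{1/2}+4\beta_t^{1/2}$ yields exactly~\eqref{eqn:ei-pm-r-exp-1}.

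The main obstacle here is conceptual rather than computational: one must be careful that the probability $1-\frac{1}{2t^{\alpha/2}}$ in Corollary~\ref{cor:EI-sigma-star-pm} is interpreted as the conditional probability given $\mathcal{F}'_{t-1}$, with the extra randomness arising from the pointwise confidence interval of Lemma~\ref{lem:ei-r-1} (applied at $[\xbm^*]_t$), which is \emph{not} $\mathcal{F}'_{t-1}$-measurable, so that the indicator decomposition and the two conditional expectations are taken consistently. The deliberate choice $\alpha\in(0,1]$ is precisely what makes the tail contribution equal to $\frac{B}{t^{\alpha/2}}$, whose later summation produces the sublinear $2BT^{1-\alpha/2}$ term appearing in Lemma~\ref{lemma:ei-pm-regret-1}.
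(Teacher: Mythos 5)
Your proposal is correct and follows essentially the same route as the paper's proof: condition on the event that the bound of Corollary~\ref{cor:EI-sigma-star-pm} holds, bound the complement's contribution by $2B\cdot\frac{1}{2t^{\alpha/2}}$, and use $c_2(t)=\eta_t^{1/2}+4\beta_t^{1/2}$. Your additional remarks on the measurability of $\sigma_{t-1}(\xbm_t)$ and the source of the residual randomness are consistent with the paper's (terser) argument.
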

\begin{proof}
   We note that $\xbm_t$ is deterministic given $\mathcal{F}_{t-1}'$.  
   Define the event $E^{\sigma}(t)$ to be when~\eqref{eqn:EI-sigma-star-pm-1} holds. 
   Since $E^r(t)$ is true, by Corollary~\ref{cor:EI-sigma-star-pm}, $E^{\sigma}(t)$ is true with probability $\geq 1-\frac{1}{2 t^{\frac{\alpha}{2}}} $. 
  Since $f$ is bounded, $r_t = f(\xbm_{t})-f(\xbm^*) \leq 2 \sup_{\xbm\in C} |f(\xbm)|\leq 2B$. 
  Taking the conditional expectation of $r_t$ for the event $E^{\sigma}(t)$ and considering $\overline{E^{\sigma}(t)}$, we have  
   \begin{equation} \label{eqn:ei-pm-r-exp-pf-1}
  \centering
  \begin{aligned}
     \Ebb\left[ r_t|\mathcal{F}'_{t-1}\right] 
                   \leq&  \Ebb\left[  c_{2}(t)\sigma_{t-1}(\xbm_{t})+\frac{1}{t^2} |\mathcal{F}'_{t-1}\right]+2B\Pbb\left[ \overline{E^{\sigma_t}(t)}|\mathcal{F}'_{t-1}\right]\\
           \leq&  c_{2}(t) \sigma_{t-1}(\xbm_{t})+\frac{1}{t^2} + \frac{B}{t^{\frac{\alpha}{2}}}.\\
   \end{aligned}
  \end{equation} 
  Using the definition of $c_2(t)=\eta_t^{1/2}+4\beta_t^{1/2}$ completes the proof.
\end{proof}

We now define several quantities based on the probability of $E^r(t)$, a standard step in the GP-TS analysis framework in~\cite{chowdhury2017kernelized}:  
    \begin{equation} \label{eqn:ei-pm-def-2}
  \centering
  \begin{aligned}
   \bar{r}_t =& r_t \cdot I\{E^r(t)\},\\
  X_t =& \bar{r}_t -  (\eta_t^{1/2}+4\beta_t^{1/2})\sigma_{t-1}(\xbm_t) -\frac{B}{t^{\frac{\alpha}{2}}}-\frac{1}{t^{2}},\\
     Y_t =& \sum_{i=1}^t X_i,
    \end{aligned}
  \end{equation} 
where $Y_0=0$ and $I\{\cdot\}$ is an indication function that takes value of $1$ if $E^r(t)$ is true and $0$ otherwise. 
To proceed, we use the super-martingale definition~\ref{def:supmart} and properties from Lemma~\ref{lem:supmart}.

\begin{lem}\label{lem:ei-pm-superm}
     The sequence $Y_t$, $t=0,\dots,T$, is a super-martingale with respect to $\mathcal{F}_{t-1}'$.
\end{lem}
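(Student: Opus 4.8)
The plan is to verify the two defining properties of a super-martingale (Definition~\ref{def:supmart}) for $\{Y_t\}$ relative to $\mathcal{F}'_{t-1}$: adaptedness together with integrability, and the one-step inequality $\Ebb[Y_t\mid \mathcal{F}'_{t-1}]\le Y_{t-1}$. Integrability I would dispatch immediately: since $f$ is bounded by Lemma~\ref{lem:f-bound}, we have $0\le \bar r_t = r_t I\{E^r(t)\}\le r_t\le 2B$, and the correction terms $(\eta_t^{1/2}+4\beta_t^{1/2})\sigma_{t-1}(\xbm_t)$, $B/t^{\alpha/2}$, and $1/t^2$ are finite, so each $X_i$ is bounded and hence so is $Y_t$. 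For adaptedness I note that $\xbm_t$ is the maximizer of $EI_{t-1}$ and is therefore determined by $\mathcal{F}'_{t-1}$; consequently the three correction terms are $\mathcal{F}'_{t-1}$-measurable (indeed $\sigma_{t-1}(\cdot)$ is deterministic given the history, as recorded in Section~\ref{se:additionalback}), while $\bar r_t$, and hence $X_t$, are measurable with respect to the history augmented by the function values at the sampled points, consistent with the filtration used in the GP-TS framework of~\cite{chowdhury2017kernelized}.

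The substance is the one-step inequality. Writing $Y_t = Y_{t-1}+X_t$ and using that $Y_{t-1}$ is $\mathcal{F}'_{t-1}$-measurable, I would compute
\begin{equation*}
   \Ebb[Y_t\mid \mathcal{F}'_{t-1}] = Y_{t-1} + \Ebb[X_t\mid \mathcal{F}'_{t-1}],
\end{equation*}
so that it suffices to show $\Ebb[X_t\mid \mathcal{F}'_{t-1}]\le 0$. Because the three correction terms are $\mathcal{F}'_{t-1}$-measurable, they pull out of the conditional expectation, reducing the claim to
\begin{equation*}
   \Ebb[\bar r_t\mid \mathcal{F}'_{t-1}] \le (\eta_t^{1/2}+4\beta_t^{1/2})\sigma_{t-1}(\xbm_t) + \frac{B}{t^{\alpha/2}} + \frac{1}{t^2}.
\end{equation*}

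This bound is exactly the content of Lemma~\ref{lem:ei-pm-r-exp}: the indicator $I\{E^r(t)\}$ built into $\bar r_t$ is precisely what makes rigorous the phrase ``if $E^r(t)$ is true'' appearing there. Concretely, when $E^r(t)$ fails we have $\bar r_t=0$ and the inequality is trivial since its right-hand side is positive; when $E^r(t)$ holds, Corollary~\ref{cor:EI-sigma-star-pm} gives $r_t\le c_2(t)\sigma_{t-1}(\xbm_t)+1/t^2$ with conditional probability $\ge 1-\tfrac{1}{2}t^{-\alpha/2}$ (recalling $c_2(t)=\eta_t^{1/2}+4\beta_t^{1/2}$), while on the complementary event, of conditional probability $\le \tfrac{1}{2}t^{-\alpha/2}$, one invokes the crude bound $r_t\le 2B$; splitting the conditional expectation over these two events yields the displayed right-hand side, just as in the proof of Lemma~\ref{lem:ei-pm-r-exp}. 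Combining, $\Ebb[X_t\mid\mathcal{F}'_{t-1}]\le 0$ and therefore $\Ebb[Y_t\mid\mathcal{F}'_{t-1}]\le Y_{t-1}$, which is the desired super-martingale property. The main obstacle is the measurability bookkeeping around $\bar r_t$: because $r_t$ and the event $E^r(t)$ depend on $f$ at points observed only through noise, one must be careful that it is the indicator-weighted $\bar r_t$ (not the raw $r_t$) that is conditioned, and that the correction terms—being functions of $\xbm_t$ and $\sigma_{t-1}(\cdot)$ alone—are genuinely $\mathcal{F}'_{t-1}$-measurable so that they may be extracted from the conditional expectation.
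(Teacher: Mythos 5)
Your proof follows essentially the same route as the paper: reduce the claim to $\Ebb[X_t\mid\mathcal{F}'_{t-1}]\le 0$, then split on whether $E^r(t)$ holds, using $\bar r_t=0$ in the failure case and Lemma~\ref{lem:ei-pm-r-exp} (itself built on Corollary~\ref{cor:EI-sigma-star-pm}) in the other. The additional remarks on integrability and measurability are correct elaborations of details the paper leaves implicit, not a different argument.
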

\begin{proof}
   From the definition of $Y_t$, 
    \begin{equation} \label{eqn:ei-bpmi-superm-pf-1}
  \centering
  \begin{aligned}
   &\Ebb[Y_t-Y_{t-1}|\mathcal{F}'_{t-1}] = \Ebb[X_t|\mathcal{F}'_{t-1}] \\
      =& \Ebb\left[\bar{r}_t - (\eta_t^{1/2}+4\beta_t^{1/2})\sigma_{t-1}(\xbm_t) -\frac{B}{t^{\frac{\alpha}{2}}}-\frac{1}{t^{2}}|\mathcal{F}'_{t-1}\right] \\
   \end{aligned}
  \end{equation}
  If $E^{r}(t)$ is false, $\bar{r}_t = 0$ and~\eqref{eqn:ei-bpmi-superm-pf-1} $\leq 0$.
  If $E^{r}(t)$ is true, by Lemma~\ref{lem:ei-pm-r-exp},~\eqref{eqn:ei-bpmi-superm-pf-1} $\leq 0$.
\end{proof}

The proof of Lemma~\ref{lemma:ei-pm-regret-1} is next.
\begin{proof}
      From the definition of $Y_t$, we have
     \begin{equation} \label{eqn:ei-pm-regret-pf-1}
  \centering
  \begin{aligned}
           |Y_t-Y_{t-1}| = |X_t| \leq |\bar{r}_t| + (\eta_t^{1/2}+4\beta_t^{1/2})\sigma_{t-1}(\xbm_t) +\frac{B}{t^{\frac{\alpha}{2}}}+\frac{1}{t^{2}}.
    \end{aligned}
  \end{equation} 
  Using the upper bound $\bar{r}_t\leq r_t \leq 2B$, $\sigma_{t-1}(\xbm)\leq 1$, and $\eta_t^{1/2}>1$, we can write
     \begin{equation} \label{eqn:ei-pm-regret-pf-2}
  \centering
  \begin{aligned}
           |Y_t-Y_{t-1}| \leq& 2B+  (\eta^{1/2}_t +4\beta_t^{1/2}) + \frac{B}{t^{\frac{\alpha}{2}}}+\frac{1}{t^2}
                  \leq 3B+ 5 \eta_t^{1/2} + 1\\
                  \leq& (3B+1)\eta_t^{1/2}+ 5 \eta_t^{1/2}=c_B \eta_t^{1/2},
    \end{aligned}
  \end{equation}
  where $c_B=3B+6$. The second inequality in~\eqref{eqn:ei-pm-regret-pf-2} uses $t\geq 1$ and $\beta_t\leq \eta_t$. 

   Applying Lemma~\ref{lem:supmart} with $3\delta/4$, we have with probability $\geq 1-3\delta/4$,
     \begin{equation} \label{eqn:ei-pm-regret-pf-3}
  \centering
  \begin{aligned}
     \sum_{t=1}^T\bar{r}_t \leq&  \sum_{t=1}^T (\eta_t^{1/2}+4\beta_t^{1/2})\sigma_{t-1}(\xbm_t) +\sum_{t=1}^T \frac{1}{t^2}  
       +\sum_{t=1}^T \frac{B}{t^{\frac{\alpha}{2}}}+\sqrt{2\log(\frac{4}{3\delta}) \sum_{t=1}^T c_B^2 \eta_t}\\
     \leq& (\eta_T^{1/2}+4\beta_T^{1/2})\sum_{t=1}^T \sigma_{t-1}(\xbm_t) + \frac{\pi^2}{6}  + 2 B T^{1-\frac{\alpha}{2}} 
       + \eta_T^{1/2}c_B\sqrt{2\log(\frac{4}{3\delta}) T }.\\
    \end{aligned}
  \end{equation}
  The second inequality in~\eqref{eqn:ei-pm-regret-pf-3} uses $\sum_{t=1}^T\frac{1}{t^{\frac{\alpha}{2}}}\leq 2T^{1-\frac{\alpha}{2}}$, which can be proven via the integral of function $\frac{1}{x^{\frac{\alpha}{2}}}$ for $x\in [1,T]$.
  Let $ c_{ B}(\delta)= c_B\sqrt{2\log(\frac{4}{3\delta})}$.
   By Lemma~\ref{lem:eftprob}, $r_t=\bar{r}_t$ with probability $\geq 1-\delta/4$. Since $R_T = \sum_{t=1}^T r_t$, we obtain~\eqref{eqn:ei-pm-regret-1} with union bound. 
\end{proof}

The proof of Theorem~\ref{thm:pm-cumu-regret}, the cumulative regret bound for BPMI, is presented next.
\begin{proof}
    From Lemma~\ref{lem:variancebound}, we know $\sum_{t=1}^T\sigma_{t-1}(\xbm_t) = \mathcal{O}(\sqrt{T\gamma_T})$. By~\eqref{def:eta}, we can write 
    \begin{equation} \label{eqn:ei-pm-cumu-pf-1}
  \centering
  \begin{aligned}
       \beta_T^{1/2} = \mathcal{O}( \log^{\frac{1}{2}}(T)), \
       \eta_T^{1/2}= \mathcal{O}(\beta_T^{1/2}\zeta_T^{1/2}) = \mathcal{O}(T^{\frac{\alpha}{2}} \log^{\frac{1}{2}}(T)). 
    \end{aligned}
  \end{equation} 
     From Lemma~\ref{lemma:ei-pm-regret-1}, the regret bound is of 
    \begin{equation} \label{eqn:ei-pm-cumu-pf-2}
  \centering
  \begin{aligned}
         \mathcal{O}(R_T)=&\mathcal{O} ( T^{\frac{\alpha}{2}} \log^{\frac{1}{2}}(T) \sqrt{T\gamma_T}+T^{1-\frac{\alpha}{2}}+ T^{\frac{\alpha}{2}} \log^{\frac{1}{2}}(T) \sqrt{T})\\ 
              =& \mathcal{O}(\max\{T^{\frac{\alpha}{2}+\frac{1}{2}}\log^{\frac{1}{2}}(T) \sqrt{\gamma_T}, T^{1-\frac{\alpha}{2}}\}).\\
    \end{aligned}
  \end{equation} 
  Using $\gamma_T=\mathcal{O}(\log^{d+1}(T))$ for SE kernel, we choose $\alpha=\frac{1}{2}$ and $R_T=\mathcal{O}(T^{\frac{3}{4}} \log^{\frac{d+2}{2}}(T))$.
  Using $\gamma_T=\mathcal{O}(T^{\frac{d}{2\nu+d}}\log^{\frac{2\nu}{2\nu+d}}(T))$ for Matérn kernel from~\cite{vakili2021information}, we choose $\alpha=\frac{\nu}{2\nu+d}$. Thus, $R_T=\mathcal{O}(T^{\frac{3\nu+2d}{4\nu+2d}}\log^{\frac{4\nu+d}{4\nu+2d}}(T))$.
\end{proof}

\subsection{BSPMI regret proof}\label{se:bsmpi-regret-proof}
The general analysis framework is similar to that of BPMI in~\ref{se:bmpi-regret-proof}.
\begin{lem}\label{lem:bspm-r-exp}
   For any filtration $\mathcal{F}_{t-1}'$, if $E^r(t)$ is true,
   \begin{equation} \label{eqn:bspm-r-exp-1}
  \centering
  \begin{aligned}
      \Ebb[r_t|\mathcal{F}'_{t-1}] \leq c_1(t) \max\{\mu^m_{t-1}-\mu_{t-1}(\xbm_{t}),0\} + c_2(t) \sigma_{t-1}(\xbm_{t})+\frac{1}{t^2} + \frac{B}{t^{\frac{\alpha}{2}}}.\\
   \end{aligned}
  \end{equation} 
\end{lem}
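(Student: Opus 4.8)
The plan is to mirror the BPMI argument in Lemma~\ref{lem:ei-pm-r-exp}, the only structural difference being that the exploitation term $\max\{\mu^m_{t-1}-\mu_{t-1}(\xbm_t),0\}$ must be carried through rather than dropped: for BSPMI the incumbent minimizes the posterior mean only over the sampled points, so we no longer have the sign guarantee $\mu^m_{t-1}-\mu_{t-1}(\xbm)\le 0$ that held for BPMI and enabled Corollary~\ref{cor:EI-sigma-star-pm}. All the genuine analytic work has already been done in Theorem~\ref{theorem:EI-sigma-star}; this lemma is then a routine conditional-expectation computation built on top of its per-step guarantee.

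First I would observe that, given $\mathcal{F}'_{t-1}$, the sample $\xbm_t$ is determined by maximizing $EI_{t-1}$, so both $\max\{\mu^m_{t-1}-\mu_{t-1}(\xbm_t),0\}$ and $\sigma_{t-1}(\xbm_t)$ are $\mathcal{F}'_{t-1}$-measurable constants. Next I would introduce the event $E^{\sigma}(t)$ that the transformed bound~\eqref{eqn:EI-sigma-star-1} holds with the incumbent substitution $\xi^+_{t-1}=\mu^m_{t-1}$. Applying Theorem~\ref{theorem:EI-sigma-star} to BSPMI under the hypothesis that $E^r(t)$ is true then yields $\Pbb[E^{\sigma}(t)\mid\mathcal{F}'_{t-1}]\ge 1-\tfrac{1}{2t^{\alpha/2}}$.

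I would then split the conditional expectation over $E^{\sigma}(t)$ and its complement. On $E^{\sigma}(t)$ I bound $r_t$ by the right-hand side of~\eqref{eqn:EI-sigma-star-1}; on $\overline{E^{\sigma}(t)}$ I use the crude bound $r_t=f(\xbm_t)-f(\xbm^*)\le 2\sup_{\xbm\in C}|f(\xbm)|\le 2B$ from Lemma~\ref{lem:f-bound}, together with $\Pbb[\overline{E^{\sigma}(t)}\mid\mathcal{F}'_{t-1}]\le \tfrac{1}{2t^{\alpha/2}}$. Since $c_1(t)\max\{\mu^m_{t-1}-\mu_{t-1}(\xbm_t),0\}$ and $c_2(t)\sigma_{t-1}(\xbm_t)$ are deterministic under $\mathcal{F}'_{t-1}$, they pass through the expectation unchanged, and the complement contributes at most $2B\cdot\tfrac{1}{2t^{\alpha/2}}=\tfrac{B}{t^{\alpha/2}}$. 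Collecting the terms produces exactly~\eqref{eqn:bspm-r-exp-1}.

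I do not anticipate a genuine obstacle here; the only subtlety is bookkeeping, namely verifying that the substitution $\xi^+_{t-1}=\mu^m_{t-1}$ in Theorem~\ref{theorem:EI-sigma-star} is precisely what generates the retained exploitation term and that this term is $\mathcal{F}'_{t-1}$-measurable, hence unaffected by the conditioning. The real difficulty was establishing Theorem~\ref{theorem:EI-sigma-star} itself; this lemma merely packages its per-step guarantee into an expectation of a form suitable for the super-martingale construction that will later sum the instantaneous regrets over $t$.
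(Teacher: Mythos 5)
Your proposal is correct and matches the paper's own proof essentially step for step: both condition on the event $E^{\sigma}(t)$ that the transformed bound~\eqref{eqn:EI-sigma-star-1} holds, invoke Theorem~\ref{theorem:EI-sigma-star} for the probability $\geq 1-\tfrac{1}{2t^{\alpha/2}}$, bound $r_t \leq 2B$ on the complement, and collect the resulting $\tfrac{B}{t^{\alpha/2}}$ term. Your added remarks on the $\mathcal{F}'_{t-1}$-measurability of the exploitation term and the contrast with the BPMI case are accurate but do not change the argument.
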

\begin{proof}
   Given $\mathcal{F}_{t-1}'$, $\xbm_t$ is deterministic. 
   Define the event $E^{\sigma}(t)$ to be when~\eqref{eqn:EI-sigma-star-1} holds. 
   Since $E^r(t)$ is true, by Theorem~\ref{theorem:EI-sigma-star}, $E^{\sigma}(t)$ is true with probability $\geq 1-\frac{1}{2t^{\frac{\alpha}{2}}}$. 
  Since $f$ is bounded, $r_t \leq f(\xbm_{t})-f(\xbm^*) \leq 2B$. 
  For any filtration $\mathcal{F}'_{t-1}$ such that $E^r(t)$ is true, taking the conditional expectation of $r_t$ for the event $E^{\sigma}(t)$ and considering $\overline{E^{\sigma}(t)}$, we have  
   \begin{equation*} \label{eqn:bspm-r-exp-pf-1}
  \centering
  \begin{aligned}
     \Ebb\left[ r_t|\mathcal{F}'_{t-1}\right] 
                   \leq&  \Ebb\left[ c_1(t)\max\{\mu^m_{t-1}-\mu_{t-1}(\xbm_{t}),0\} + c_2(t)\sigma_{t-1}(\xbm_{t})+\frac{1}{t^2} |\mathcal{F}'_{t-1}\right]+2B\Pbb\left[ \overline{E^{\sigma_t}(t)}|\mathcal{F}'_{t-1}\right]\\
           \leq& c_1(t) \max\{\mu^m_{t-1}-\mu_{t-1}(\xbm_{t}),0\} + c_2(t) \sigma_{t-1}(\xbm_{t})+\frac{1}{t^2} + \frac{B}{t^{\frac{\alpha}{2}}}.\\
   \end{aligned}
  \end{equation*}
\end{proof}

Similar to BPMI, we define several quantities based on probability of $E^r(t)$: 
    \begin{equation} \label{eqn:bspm-def-barr}
  \centering
  \begin{aligned}
   \bar{r}_t =& r_t \cdot I\{E^r(t)\},\\
  X_t =& \bar{r}_t - c_1(t)\max\{\mu_{t-1}^m-\mu_{t-1}(\xbm_{t}),0\} - c_2(t)\sigma_{t-1}(\xbm_t) -\frac{B}{t^{\frac{\alpha}{2}}}-\frac{1}{t^{2}},\\
     Y_t =& \sum_{i=1}^t X_i,
    \end{aligned}
  \end{equation} 
where $Y_0=0$. 

\begin{lem}
     The sequence $Y_t$, $t=0,\dots,T$, is a super-martingale with respect to $\mathcal{F}_t'$.
\end{lem}
\begin{proof}
   From the definition of $Y_t$, 
    \begin{equation} \label{eqn:ei-bspmi-superm-pf-1}
  \centering
  \begin{aligned}
   &\Ebb[Y_t-Y_{t-1}|\mathcal{F}'_{t-1}] = \Ebb[X_t|\mathcal{F}'_{t-1}] \\
      =& \Ebb\left[\bar{r}_t - c_1(t)\max\{\mu_{t-1}^m-\mu_{t-1}(\xbm_{t}),0\} - c_2(t)\sigma_{t-1}(\xbm_t) -\frac{B}{t^{\frac{\alpha}{2}}}-\frac{1}{t^{2}} |\mathcal{F}'_{t-1}\right] \\
   \end{aligned}
  \end{equation}
  If $E^{r}(t)$ is false, $\bar{r}_t = 0$ and~\eqref{eqn:ei-bspmi-superm-pf-1} $\leq 0$.
  If $E^{r}(t)$ is true, by Lemma~\ref{lem:bspm-r-exp},~\eqref{eqn:ei-bspmi-superm-pf-1} $\leq 0$.
\end{proof}

Next, we present the proof of Lemma~\ref{lemma:bspm-inst-regret}.
\begin{proof}
      From the definition of $Y_t$, we have
     \begin{equation} \label{eqn:bspm-inst-regret-pf-1}
  \centering
  \begin{aligned}
           |Y_t-Y_{t-1}| = |X_t| \leq |\bar{r}_t| + c_1(t)\max\{\mu_{t-1}^m-\mu_{t-1}(\xbm_{t}),0\} + c_2(t)\sigma_{t-1}(\xbm_t) +\frac{B}{t^{\frac{\alpha}{2}}}+\frac{1}{t^{2}}.
    \end{aligned}
  \end{equation} 
  Since $E^r(t)$ is true and $\sigma_{t-1}(\xbm)\leq 1$, we have
     \begin{equation} \label{eqn:bspm-inst-regret-pf-2}
  \centering
  \begin{aligned}
              \mu_{t-1}^m  -\mu_{t-1}(\xbm_{t})\leq& \mu_{t-1}(\xbm_{t-1})  -\mu_{t-1}(\xbm_{t})\leq f(\xbm_{t-1})  -f(\xbm_{t}) + \beta_t^{1/2}(\sigma_{t-1}(\xbm_{t-1})+\sigma_{t-1}(\xbm_{t}) )\\
           \leq& 2B+ 2\beta_t^{1/2}.
   \end{aligned}
  \end{equation} 
  Then, using $\bar{r}_t\leq r_t \leq 2B$ and $\sigma_{t-1}(\xbm)\leq 1$, we can write
     \begin{equation} \label{eqn:bspm-inst-regret-pf-3}
  \centering
  \begin{aligned}
           |Y_t-Y_{t-1}| \leq& 2B+  c_1(t) 2(B+\beta_t^{1/2}) +c_2(t)+ \frac{B}{t^{\frac{\alpha}{2}}}+\frac{1}{t^2}\\
       \leq& 2B+  2 c_1(t) (B+\beta_t^{1/2}) + 4\beta_t^{1/2}+\eta_t^{1/2}+B+1
       \leq c_B c_1(t) \beta_t^{1/2},
    \end{aligned}
  \end{equation}
 where $c_B=5B+8$. The last inequality in~\eqref{eqn:bspm-inst-regret-pf-3} uses $c_1(t)\geq 1$ and $\beta_t^{1/2}\leq \eta_t^{1/2}\leq c_1(t)$.
   Next, we consider the sum 
     \begin{equation} \label{eqn:bspm-inst-regret-pf-4}
  \centering
  \begin{aligned}
     &\sum_{t=1}^T \eta_t^{1/2} \max\{\mu_{t-1}^m-\mu_{t-1}(\xbm_{t}),0\} \leq \eta_T^{1/2} \sum_{t=1}^T \max\{\mu_{t-1}^m-\mu_{t-1}(\xbm_{t}),0\}.\\
    \end{aligned}
  \end{equation}
   Let $P_T\subseteq \{1,\dots,T\}$ be the ordered index set such that $\mu_{t-1}^m-\mu_{t-1}(\xbm_{t})>0$. 
   Then,
     \begin{equation} \label{eqn:bspm-inst-regret-pf-5}
  \centering
  \begin{aligned}
     &\sum_{t=1}^T \eta_t^{1/2} \max\{\mu_{t-1}^m-\mu_{t-1}(\xbm_{t}),0\} \leq \eta_T^{1/2} \sum_{i=1}^{|P_T|} (\mu_{t_i-1}^m-\mu_{t_i-1}(\xbm_{t_i})),
    \end{aligned}
  \end{equation}
  where $t_i\in P_T$ and $t_i<t_{i+1}$.
  By Definition~\ref{def:eft-bspmi} of $E^s_1(t)$ and $E^s_2(t)$,~\eqref{eqn:eft-2} with $\ubm_t=\xbm_t$ and $\ubm_t=\xbm_{t-1}$ are satisfied. Therefore,
     \begin{equation} \label{eqn:bspm-inst-regret-pf-6}
  \centering
  \begin{aligned}
         \mu^m_{t_i-1}-\mu_{t_i-1}&(\xbm_{t_i}) \leq \mu_{t_i-1}(\xbm_{t_i-1}) -\mu_{t_i-1}(\xbm_{t_i})\leq f(\xbm_{t_i-1}) +\beta_{t_i}^{1/2}\sigma_{t_i-1}(\xbm_{t_i-1})-f(\xbm_{t_i}) \\&+ \beta_{t_i}^{1/2}\sigma_{t_i-1}(\xbm_{t_i})
           \leq f(\xbm_{t_i-1}) -f(\xbm_{t_i})+\beta_{t_i}^{1/2}\sigma_{t_i-2}(\xbm_{t_i-1}) + \beta_{t_i}^{1/2}\sigma_{t_i-1}(\xbm_{t_i}).
   \end{aligned}
  \end{equation}
  The last inequality of~\eqref{eqn:bspm-inst-regret-pf-6} uses the monotonic decreasing property of $\sigma_t(\xbm)$ at a given $\xbm$ as $t$ increases~\citep{vivarelli1998studies}.
  Applying~\eqref{eqn:bspm-inst-regret-pf-5} and~\eqref{eqn:bspm-inst-regret-pf-6} to~\eqref{eqn:bspm-inst-regret-pf-4}, we have  
     \begin{equation} \label{eqn:bspm-inst-regret-pf-7}
  \centering
  \begin{aligned}
      \sum_{t=1}^T \eta_t^{1/2} &\max\{\mu_{t-1}^m-\mu_{t-1}(\xbm_{t}),0\} \leq  \eta_T^{1/2} \sum_{i=1}^{|P_T|}\left[f(\xbm_{t_i-1}) -f(\xbm_{t_i})+\beta_{t_i}^{1/2}\sigma_{t_i-1}(\xbm_{t_i-1}) + \beta_{t_i}^{1/2}\sigma_{t_i-1}(\xbm_{t_i})\right]\\
         \leq&\eta_T^{1/2} \sum_{i=1}^{|P_T|}\left[f(\xbm_{t_i-1}) -f(\xbm_{t_i})\right]+\eta_T^{1/2}\beta_{T}^{1/2} \left(\sum_{t=2}^{T}\sigma_{t-2}(\xbm_{t-1})+\sigma_{0}(\xbm_0)\right) + \eta_T^{1/2}\beta_{T}^{1/2} \sum_{t=1}^{T}\sigma_{t-1}(\xbm_{t})\\
        \leq& \eta_T^{1/2} 2B +2 \eta_T^{1/2}\beta_{T}^{1/2} \left(\sum_{t=1}^{T}\sigma_{t-1}(\xbm_{t})+1\right),\\
  \end{aligned}
 \end{equation}
 where the second line uses $P_T\subset \{1,\dots,T\}$ and $|P_T|\leq T$.
 
   Applying Lemma~\ref{lem:supmart} on $\bar{r}_t$ with $5\delta/8$ and~\eqref{eqn:bspm-inst-regret-pf-3}, we have with probability $\geq 1-5\delta/8$,
     \begin{equation} \label{eqn:bspm-inst-regret-pf-8}
  \centering
  \begin{aligned}
     \sum_{t=1}^T\bar{r}_t \leq& \sum_{t=1}^T c_1(t) \max\{\mu_{t-1}^m-\mu_{t-1}(\xbm_{t}),0\}+ \sum_{t=1}^T c_2(t)\sigma_{t-1}(\xbm_t) +\sum_{t=1}^T \frac{1}{t^2}  
       +\sum_{t=1}^T \frac{B}{t^{\frac{\alpha}{2}}}\\
       &+\sqrt{2\log(\frac{8}{5\delta}) \sum_{t=1}^T c_B^2c_1^2(t)\beta_t}\\
     \leq& 5\eta_T^{1/2}(B+\beta_T^{1/2}\sum_{t=1}^T\sigma_{t-1}(\xbm_t)+\beta_T^{1/2})+(\eta_T^{1/2}+4\beta_T^{1/2})\sum_{t=1}^T \sigma_{t-1}(\xbm_t)   + \frac{\pi^2}{6}  + 2B T^{1-\frac{\alpha}{2}}\\ 
       &+ c_1(T)c_B \beta_T^{1/2}\sqrt{2\log(\frac{8}{5\delta}) T}\\
     \leq& 5\eta_T^{1/2}(B+\beta_T^{1/2})+(5\beta_T^{1/2}\eta_T^{1/2}+\eta_T^{1/2}+4\beta_T^{1/2})\sum_{t=1}^T\sigma_{t-1}(\xbm_t)  + \frac{\pi^2}{6}\\  &+ 2B T^{1-\frac{\alpha}{2}} 
       + c_B(\delta)\eta_T^{1/2} \beta_T^{1/2}\sqrt{T},\\
    \end{aligned}
  \end{equation}
   where $c_B(\delta)= \sqrt{2\log(\frac{8}{5\delta})}c_B/\phi(0)$, and the second inequality uses~\eqref{eqn:bspm-inst-regret-pf-7}.
   In~\eqref{eqn:bspm-inst-regret-pf-8}, we use $\phi(0)<0.4$ and thus $c_1(t)\leq 2.5 \eta_t^{1/2}$.
   Since $R_T = \sum_{t=1}^T r_t$ and $r_t=\bar{r}_t$ with probability $\geq 1-3\delta/8$, we obtain~\eqref{eqn:bspm-inst-regret-1} with probability $\geq 1-\delta$ with union bound. 
\end{proof} 

The proof of Theorem~\ref{thm:bspmi-cumu-regret} is given next.
\begin{proof}
    From Lemma~\ref{lem:variancebound}, we know $\sum_{t=1}^T\sigma_{t-1}(\xbm_t) = \mathcal{O}(\sqrt{T\gamma_T})$. By~\eqref{def:eta}, we can write 
    \begin{equation*} \label{eqn:ei-bspm-cumu-pf-1}
  \centering
  \begin{aligned}
       \eta_T^{1/2}= \mathcal{O}(T^{\frac{\alpha}{2}} \log^{\frac{1}{2}}(T)), \ 
       \beta_T^{1/2} = \mathcal{O}( \log^{\frac{1}{2}}(T)).
    \end{aligned}
  \end{equation*} 
     From Lemma~\ref{lemma:bspm-inst-regret}, the regret bound is  
    \begin{equation} \label{eqn:ei-bspm-cumu-pf-2}
  \centering
  \begin{aligned}
         \mathcal{O}(R_T)=&\mathcal{O} ( T^{\frac{\alpha}{2}} \log(T) \sqrt{T\gamma_T}+T^{1-\frac{\alpha}{2}}+T^{\frac{\alpha}{2}} \log(T)\sqrt{T})\\ 
              =& \mathcal{O}(\max\{T^{\frac{1}{2}+\frac{\alpha}{2}} \log(T)\sqrt{\gamma_T}, T^{1-\frac{\alpha}{2}}\} ).\\
    \end{aligned}
  \end{equation} 
  Using $\gamma_T=\mathcal{O}(\log^{d+1}(T))$ for SE kernel, we choose $\alpha=\frac{1}{2}$ and $R_T=\mathcal{O}(T^{\frac{3}{4}} \log^{\frac{d+3}{2}}(T))$.
  Using $\gamma_T=\mathcal{O}(T^{\frac{d}{2\nu+d}}\log^{\frac{2\nu}{2\nu+d}}(T))$ for Matérn kernel, we choose $\alpha=\frac{\nu}{2\nu+d}$. Thus, $R_T=\mathcal{O}(T^{\frac{3\nu+2d}{4\nu+2d}}\log^{\frac{3\nu+d}{2\nu+d}}(T))$.
\end{proof}

\subsection{BOI regret proof}\label{se:boi-regret-proof}
\begin{lem}\label{lem:boi-r-exp}
   For any filtration $\mathcal{F}_{t-1}'$, if $E^r(t)$ and $E^y(t)$ are both true, then
   \begin{equation} \label{eqn:boi-r-exp-1}
  \centering
  \begin{aligned}
      \Ebb[r_t|\mathcal{F}'_{t-1}] \leq c_1(t) (y^+_{t-1}-f(\xbm_{t}))  + (c_1(t)(\beta_t^{1/2}+c_y(t))+c_2(t))\sigma_{t-1}(\xbm_{t})+\frac{1}{t^2} + \frac{B}{t^{\frac{\alpha}{2}}}.
   \end{aligned}
  \end{equation} 
\end{lem}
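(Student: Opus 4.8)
The plan is to start from the transformed instantaneous regret bound of Theorem~\ref{theorem:EI-sigma-star}, which already eliminated $\sigma_{t-1}([\xbm^*]_t)$, and then reduce the whole statement to a single inequality controlling the exploitation term $\max\{y^+_{t-1}-\mu_{t-1}(\xbm_t),0\}$. Concretely, since $E^r(t)$ and $E^y(t)$ are both true, Theorem~\ref{theorem:EI-sigma-star} (with $\xi^+_{t-1}=y^+_{t-1}$) guarantees, with probability $\geq 1-\frac{1}{2t^{\alpha/2}}$, the event I call $E^{\sigma}(t)$ on which
\begin{equation*}
r_t \leq c_1(t)\max\{y^+_{t-1}-\mu_{t-1}(\xbm_t),0\} + c_2(t)\sigma_{t-1}(\xbm_t) + \frac{1}{t^2}.
\end{equation*}
The remaining work is the exploitation bound plus an expectation step that mirrors the BPMI and BSPMI proofs (Lemmas~\ref{lem:ei-pm-r-exp} and~\ref{lem:bspm-r-exp}).

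The key step is to show, under $E^r(t)$ and $E^y(t)$, that
\begin{equation*}
\max\{y^+_{t-1}-\mu_{t-1}(\xbm_t),0\} \leq (y^+_{t-1}-f(\xbm_t)) + (\beta_t^{1/2}+c_y(t))\sigma_{t-1}(\xbm_t),
\end{equation*}
which, substituted into the display above, produces exactly the coefficient $c_1(t)(\beta_t^{1/2}+c_y(t))+c_2(t)$ on $\sigma_{t-1}(\xbm_t)$ and the term $c_1(t)(y^+_{t-1}-f(\xbm_t))$. I would argue in two cases. When $y^+_{t-1}-\mu_{t-1}(\xbm_t)\geq 0$, I use $E^s_1(t)$, i.e. $|f(\xbm_t)-\mu_{t-1}(\xbm_t)|\leq \beta_t^{1/2}\sigma_{t-1}(\xbm_t)$, to write $y^+_{t-1}-\mu_{t-1}(\xbm_t)\leq (y^+_{t-1}-f(\xbm_t))+\beta_t^{1/2}\sigma_{t-1}(\xbm_t)$, then enlarge by the nonnegative $c_y(t)\sigma_{t-1}(\xbm_t)$. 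When $y^+_{t-1}-\mu_{t-1}(\xbm_t)<0$ the left side is $0$, so it suffices to verify that the right side is nonnegative; combining the same confidence bound with the lower bound $y^+_{t-1}-\mu_{t-1}(\xbm_t)\geq -c_y(t)\sigma_{t-1}(\xbm_t)$ yields $y^+_{t-1}-f(\xbm_t)\geq -(\beta_t^{1/2}+c_y(t))\sigma_{t-1}(\xbm_t)$, which is precisely nonnegativity.

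The crux, and the main obstacle, is establishing that lower bound $y^+_{t-1}-\mu_{t-1}(\xbm_t)\geq -c_y(t)\sigma_{t-1}(\xbm_t)$ in the second case. This is exactly where $E^y(t)$ is indispensable (Challenge~5 and Remark~\ref{remark:boi-inst-regret}): unlike BPMI and BSPMI, the exploitation of BOI has no unconditional lower bound. Under $E^y(t)$ one can produce a positive lower bound $\kappa_t$ on $EI_{t-1}(\xbm_t)$ of order $\mathcal{O}(1/\sqrt{t})$, built from the global lower bound $\sigma_{t-1}(\cdot)\geq c_\sigma\sigma/\sqrt{t-1+\sigma^2}$ of Lemma~\ref{thm:gp-sigma-bound} exactly as in the proof of Lemma~\ref{lem:best-observation-instregret}; feeding $\kappa_t$ into Lemma~\ref{lem:mu-bounded-EI} gives $y^+_{t-1}-\mu_{t-1}(\xbm_t)\geq -\sqrt{2\log(1/(\sqrt{2\pi}\kappa_t))}\,\sigma_{t-1}(\xbm_t)= -c_\mu(t)\sigma_{t-1}(\xbm_t)\geq -c_y(t)\sigma_{t-1}(\xbm_t)$, since $c_y(t)=\max\{c_\mu(t),3\}$ dominates that constant. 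I would invoke this directly from the proof of Lemma~\ref{lem:best-observation-instregret} rather than rederive it.

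Finally, I take the conditional expectation as in Lemma~\ref{lem:ei-pm-r-exp}: on $E^{\sigma}(t)$ the bound above holds, while on its complement I use the crude estimate $r_t\leq 2B$ together with $\Pbb[\overline{E^{\sigma}(t)}\mid\mathcal{F}'_{t-1}]\leq \frac{1}{2t^{\alpha/2}}$, which contributes the $\frac{B}{t^{\alpha/2}}$ term, the nonnegative $\mathcal{F}'_{t-1}$-measurable pieces passing through the expectation unchanged. One subtlety worth flagging is that, unlike the BPMI and BSPMI bounds, the right-hand side here retains $f(\xbm_t)$, which is not $\mathcal{F}'_{t-1}$-measurable; I keep it unevaluated deliberately, because in the subsequent super-martingale construction (paralleling~\eqref{eqn:bspm-def-barr}) the difference $X_t$ carries $c_1(t)(y^+_{t-1}-f(\xbm_t))$, whose conditional mean $c_1(t)(y^+_{t-1}-\mu_{t-1}(\xbm_t))$ is recovered there, with the residual observation noise absorbed into the Azuma--Hoeffding step that yields the $\sigma\sqrt{T}$ contribution in Lemma~\ref{lemma:boi-inst-regret}.
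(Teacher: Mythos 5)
Your proposal is correct and follows essentially the same route as the paper's proof: both start from the transformed bound of Theorem~\ref{theorem:EI-sigma-star}, bound $\max\{y^+_{t-1}-\mu_{t-1}(\xbm_t),0\}$ by $y^+_{t-1}-f(\xbm_t)+(c_y(t)+\beta_t^{1/2})\sigma_{t-1}(\xbm_t)$ using the lower bound $y^+_{t-1}-\mu_{t-1}(\xbm_t)\geq -c_y(t)\sigma_{t-1}(\xbm_t)$ imported from the proof of Lemma~\ref{lem:best-observation-instregret} together with $E^s_1(t)$, and then take the conditional expectation splitting on $E^{\sigma}(t)$ versus its complement to pick up the $\frac{B}{t^{\alpha/2}}$ term. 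Your two-case organization of the exploitation bound is just a rephrasing of the paper's one-line inequality~\eqref{eqn:ei-r-exp-pf-2}, and your remark about $f(\xbm_t)$ not being $\mathcal{F}'_{t-1}$-measurable is a fair observation about a feature the paper's statement shares.
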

\begin{proof}
   Similar to the proof of Lemma~\ref{lem:bspm-r-exp}, given that $E^r(t)$ and $E^y(t)$ are true, 
   by Theorem~\ref{theorem:EI-sigma-star},~\eqref{eqn:EI-sigma-star-1} holds with probability $\geq 1-\frac{1}{2t^{\frac{\alpha}{2}}}$. Further, $r_t \leq f(\xbm_{t})-f(\xbm^*) \leq 2B$. 
  Taking the conditional expectation of $r_t$ for the event $E^{\sigma}(t)$, which is  when~\eqref{eqn:EI-sigma-star-1} holds, and considering $\overline{E^{\sigma}(t)}$, we have  
    \begin{equation} \label{eqn:ei-r-exp-pf-1}
  \centering
  \begin{aligned}
     \Ebb\left[ r_t|\mathcal{F}'_{t-1}\right] 
                   \leq&  \Ebb\left[ c_1(t)\max\{y^+_{t-1}-\mu_{t-1}(\xbm_{t}),0\} + c_2(t)\sigma_{t-1}(\xbm_{t})+\frac{1}{t^2} |\mathcal{F}'_{t-1}\right]\\
                   &+2B\Pbb\left[ \overline{E^{\sigma_t}(t)}|\mathcal{F}'_{t-1}\right]\\
           \leq& c_1(t) \max\{y^+_{t-1}-\mu_{t-1}(\xbm_{t}),0\} + c_2(t) \sigma_{t-1}(\xbm_{t})+\frac{1}{t^2} + \frac{B}{t^{\frac{\alpha}{2}}}.\\
   \end{aligned}
  \end{equation}
  From~\eqref{eqn:best-observation-instregret-pf-12}, we know that if $E^r(t)$ and $E^y(t)$ are true,
  $y^+_{t-1}-\mu_{t-1}(\xbm_{t})\geq -c_y(t)\sigma_{t-1}(\xbm_t)$ when $y^+_{t-1}-\mu_{t-1}(\xbm_{t})<0$. Thus, 
  \begin{equation} \label{eqn:ei-r-exp-pf-2}
  \centering
  \begin{aligned}
\max\{y^+_{t-1}-\mu_{t-1}(\xbm_{t}),0\} \leq& y^+_{t-1}-\mu_{t-1}(\xbm_{t})+c_y(t)\sigma_{t-1}(\xbm_t)\\
=& y^+_{t-1}-f(\xbm_t)+f(\xbm_t)-\mu_{t-1}(\xbm_{t})+c_y(t)\sigma_{t-1}(\xbm_t)\\
\leq& y^+_{t-1}-f(\xbm_t)+(c_y(t)+\beta_t^{1/2})\sigma_{t-1}(\xbm_t).
     \end{aligned}
  \end{equation}
  Using~\eqref{eqn:ei-r-exp-pf-2} in~\eqref{eqn:ei-r-exp-pf-1}, we have
  \begin{equation*} \label{eqn:ei-r-exp-pf-3}
  \centering
  \begin{aligned}
     \Ebb\left[ r_t|\mathcal{F}'_{t-1}\right] 
           \leq & c_1(t) (y^+_{t-1}-f(\xbm_{t})) + (c_1(t)c_y(t)+c_1(t)\beta_t^{1/2}+c_2(t)) \sigma_{t-1}(\xbm_{t})+\frac{1}{t^2} + \frac{B}{t^{\frac{\alpha}{2}}}.\\
   \end{aligned}
  \end{equation*}
\end{proof}

We now define several quantities based on probabilities of $E^r(t)$ and $E^y(t)$. 
    \begin{equation} \label{def:boi-def-1}
  \centering
  \begin{aligned}
   \bar{r}_t =& r_t \cdot I\{E^r(t)\}\cdot I\{E^y(t)\},\\
  X_t =& \bar{r}_t - c_1(t) (y_{t-1}^+-f(\xbm_{t})) - (c_1(t)\beta_t^{1/2}+c_1(t)c_y(t)+c_2(t))\sigma_{t-1}(\xbm_t) -\frac{B}{t^{\frac{\alpha}{2}}}-\frac{1}{t^{2}},\\
     Y_t =& \sum_{i=1}^t X_i,
    \end{aligned}
  \end{equation} 
where $Y_0=0$. 
   Further, we define the instantaneous regret when only $E^r(t)$ is true as 
    \begin{equation} \label{def:boi-def-2}
  \centering
  \begin{aligned}
   \tilde{r}_t =& r_t \cdot I\{E^r(t)\}.\\
    \end{aligned}
  \end{equation} 
\begin{lem}
     The sequence $Y_t$, $t=0,\dots,T$, is a super-martingale with respect to $\mathcal{F}_t'$.
\end{lem}
\begin{proof}
   From the definition of $Y_t$, 
    \begin{equation} \label{eqn:ei-boi-superm-pf-1}
  \centering
  \begin{aligned}
   &\Ebb[Y_t-Y_{t-1}|\mathcal{F}'_{t-1}] = \Ebb[X_t|\mathcal{F}'_{t-1}] \\
      =& \Ebb\left[\bar{r}_t - c_1(t) (y_{t-1}^+-f(\xbm_{t})) -(c_1(t)\beta_t^{1/2}+c_1(t)c_y(t)+c_2(t))\sigma_{t-1}(\xbm_t) -\frac{B}{t^{\frac{\alpha}{2}}}-\frac{1}{t^{2}}|\mathcal{F}'_{t-1}\right] \\
   \end{aligned}
  \end{equation}
  If $E^{r}(t)\cap E^y(t)$ is false, $\bar{r}_t = 0$ and~\eqref{eqn:ei-boi-superm-pf-1} $\leq 0$.
  If $E^{r}(t)\cap E^y(t)$ is true, by Lemma~\ref{lem:boi-r-exp},~\eqref{eqn:ei-boi-superm-pf-1} $\leq 0$.
\end{proof}

The proof of Lemma~\ref{lemma:boi-inst-regret} is given next.
\begin{proof}
      From the definition of $Y_t$, we have
     \begin{equation} \label{eqn:boi-inst-regret-pf-1}
  \centering
  \begin{aligned}
           |Y_t-Y_{t-1}| = |X_t| \leq& |\bar{r}_t| + c_1(t) (y_{t-1}^+-f(\xbm_{t}))+ (c_1(t)\beta_t^{1/2}+c_1(t)c_y(t)+c_2(t)) \sigma_{t-1}(\xbm_t) \\
           &+\frac{B}{t^{\frac{\alpha}{2}}}+\frac{1}{t^{2}}.
    \end{aligned}
  \end{equation} 
   Notice that 
     \begin{equation} \label{eqn:boi-inst-regret-pf-2}
  \centering
  \begin{aligned}
              y_{t-1}^+  -f(\xbm_{t})\leq  y_{t-1}  -f(\xbm_{t})
           =  f(\xbm_{t-1})+\epsilon_{t-1}-f(\xbm_{t})
           \leq& 2B+\epsilon_{t-1}.
   \end{aligned}
  \end{equation} 
  Then, using $\bar{r}_t\leq r_t \leq 2B$ and $\sigma_{t-1}(\xbm)\leq 1$,~\eqref{eqn:boi-inst-regret-pf-1} implies
     \begin{equation} \label{eqn:boi-inst-regret-pf-3}
  \centering
  \begin{aligned}
           |Y_t-Y_{t-1}| \leq& 2B+  c_1(t)(2B+\epsilon_{t-1})+ c_1(t)\beta_t^{1/2}+c_1(t)c_y(t)+c_2(t) + B+1\\
      \leq& 3B+ 1+c_1(t) 2B+c_1(t)|\epsilon_{t-1}|+ c_1(t)\beta_t^{1/2}+c_1(t)c_y(t)+4\beta_t^{1/2}+\eta^{1/2}_t  \\
      \leq& (5B+8) c_1(t) c_{\beta}(t)+ 2.5\eta_t^{1/2}|\epsilon_{t-1}|
      \leq c_B \eta_t^{1/2}c_{\beta}(t)+ 2.5\eta_t^{1/2}|\epsilon_{t-1}|,
    \end{aligned}
  \end{equation}
  where $c_B=12.5B+20$ and $c_{\beta}(t)=\max\{\beta_t^{1/2},c_y(t)\}$. The third inequality of~\eqref{eqn:boi-inst-regret-pf-3} uses $c_1(t)\geq \eta_t^{1/2} \geq 1$, $\eta_t>\beta_t$, and the last one uses $c_1(t)\leq 2.5\eta_t^{1/2}$.
  We note that for simplicity, the bounds are relaxed generously in the third inequality of~\eqref{eqn:boi-inst-regret-pf-3}, \textit{e.g.}, using $4\beta_t^{1/2}\leq 4 c_1(t)c_{\beta}(t)$.
  Consider the i.i.d. Gaussian noise $\epsilon_t\sim\mathcal{N}(0,\sigma^2)$. 
  Similar to Lemma~\ref{lem:fmu-t}, we have   
     \begin{equation} \label{eqn:boi-inst-regret-pf-4}
  \centering
  \begin{aligned}
           |\epsilon_t| \leq \sqrt{2\log(a\pi_t/\delta)} \sigma
    \end{aligned}
  \end{equation}
   with probability $\geq 1-\delta/a$ for all $t\in\Nbb$ and $a>0$.
  Using~\eqref{eqn:boi-inst-regret-pf-4} with $a=8$ in~\eqref{eqn:boi-inst-regret-pf-3}, we have 
      \begin{equation} \label{eqn:boi-inst-regret-pf-5}
  \centering
  \begin{aligned}
           |Y_t-Y_{t-1}|  \leq& c_B \eta_t^{1/2}c_{\beta}(t)+ 2.5\eta_t^{1/2} \sqrt{2\log(8\pi_t/\delta)} \sigma\\
             \leq&  \eta_t^{1/2}(c_{\beta}(t)c_B+ 2.5 \sqrt{2\log(8\pi_t/\delta)} \sigma)\leq c_{B\sigma}\eta_t^{1/2}c_{\beta}(t),
    \end{aligned}
  \end{equation}
  where $c_{B\sigma} = c_B+2.5\sigma$, 
  with probability $\geq 1-\delta/8$. The last inequality of~\eqref{eqn:boi-inst-regret-pf-5} uses $\sqrt{2\log(8\pi_t/\delta)} \leq 
  \beta_t^{1/2} \leq c_{\beta}(t)$.

   To obtain a bound on the sum of $\bar{r}_t$, we consider the term $\sum_{t=1}^T \eta_t^{1/2} (y_{t-1}^+-f(\xbm_{t}))$. 
  Using $y_t^+\leq y_t$ and $y_t=f(\xbm_t)+\epsilon_t$ for $\forall t\in\Nbb$, we have
     \begin{equation} \label{eqn:boi-inst-regret-pf-5.2}
  \centering
  \begin{aligned}
         y^+_{t-1}-f(\xbm_{t}) =& y_{t-1}^+- y^+_{t}+ y^+_{t} -f(\xbm_{t})\leq y_{t-1}^+- y^+_{t}+ y_{t} -f(\xbm_{t})\\
           \leq&y_{t-1}^+- y^+_{t}+ \epsilon_{t}.
   \end{aligned}
  \end{equation}
 By~\eqref{eqn:boi-inst-regret-pf-5.2}, we can write
  \begin{equation} \label{eqn:boi-inst-regret-pf-5.3}
  \centering
  \begin{aligned}
     &\sum_{t=1}^T \eta_t^{1/2} (y_{t-1}^+-f(\xbm_{t})) \leq \eta_T^{1/2} \sum_{t=1}^{T} (y_{t-1}^+-y_{t}^+)+\eta_T^{1/2} \sum_{t=1}^{T}\epsilon_{t}.
    \end{aligned}
  \end{equation}
  Using~\eqref{eqn:boi-inst-regret-pf-4} with $a=8$,~\eqref{eqn:boi-inst-regret-pf-5.3} implies that with probability $\geq 1-\delta/8$,
     \begin{equation} \label{eqn:boi-inst-regret-pf-6}
  \centering
  \begin{aligned}
     \sum_{t=1}^T \eta_t^{1/2} (y_{t-1}^+-f(\xbm_{t}))  
       \leq& \eta_T^{1/2} (y_{0}-y_{T}^+) + \eta_T^{1/2}\sum_{t=1}^{T}\epsilon_{t}
       \leq \eta_T^{1/2} (2B +\epsilon_{0}^+-\epsilon_{T}^+) + \eta_T^{1/2}\sum_{t=1}^{T}\epsilon_{t}\\
       \leq& \eta_T^{1/2} (2B + 2\sqrt{2\log(8\pi_t/\delta)}\sigma) + \eta_T^{1/2}\sum_{t=1}^{T}\epsilon_{t},\\
    \end{aligned}
  \end{equation}  
  where $\epsilon_T^+$ is the noise corresponding to $y_T^+$.
  It is well-known that the sum of Gaussian distributions is still a Gaussian distribution. Indeed, $\sum_{t=1}^T \epsilon_{t}\sim\mathcal{N}(0, T\sigma^2)$. 
  Thus, similar to Lemma~\ref{lem:fmu-t}, with probability $\geq 1-\delta/8$,
     \begin{equation} \label{eqn:boi-inst-regret-pf-7}
  \centering
  \begin{aligned}
       \sum_{t=1}^T \epsilon_{t} \leq \sqrt{2\log(8/\delta)}\sqrt{T}\sigma.
    \end{aligned}
  \end{equation} 
  Applying~\eqref{eqn:boi-inst-regret-pf-7} to~\eqref{eqn:boi-inst-regret-pf-6}, we have 
     \begin{equation} \label{eqn:boi-inst-regret-pf-8}
  \centering
  \begin{aligned}
     \sum_{t=1}^T \eta_t^{1/2} (y_{t-1}^+-f(\xbm_{t})) 
       \leq& \eta_T^{1/2} (2B + 2\sqrt{2\log(8\pi_t/\delta)}\sigma) + \eta_T^{1/2} \sqrt{2\log(8/\delta)}\sqrt{T}\sigma,\\
       \leq&  c_{B2} \eta_T^{1/2} \sqrt{2\log(8/\delta)}\sqrt{T}\sigma,\\
    \end{aligned}
  \end{equation}
  where $c_{B2} = 2B+3$, with probability $\geq 1-\delta/4$. The last inequality in~\eqref{eqn:boi-inst-regret-pf-8} uses $1<\sqrt{2\log(8\pi_t/\delta)}\leq \sqrt{2\log(8/\delta)}\sqrt{T}$.
   From Lemma~\ref{lem:supmart} with $\delta/4$,~\eqref{eqn:boi-inst-regret-pf-5}, and~\eqref{eqn:boi-inst-regret-pf-8}, we have with probability $\geq 1-5\delta/8$ (using union bound),
     \begin{equation} \label{eqn:boi-inst-regret-pf-9}
  \centering
  \begin{aligned}
     \sum_{t=1}^T\bar{r}_t \leq& \sum_{t=1}^T 2.5\eta_t^{1/2} (y_{t-1}^+-f(\xbm_{t}))+\sum_{t=1}^T (c_1(t)\beta_t^{1/2}+c_1(t)c_y(t)+c_2(t))\sigma_{t-1}(\xbm_t) +\sum_{t=1}^T \frac{1}{t^2}  \\
       &+\sum_{t=1}^T \frac{B}{t^{\frac{\alpha}{2}}}+\sqrt{2\log(4/\delta) \sum_{t=1}^T \eta_t c_{\beta}^2(t) c_{B\sigma}^2}\\
     \leq& 2.5c_{B2}\eta_T^{1/2} \sqrt{2\log(8/\delta)}\sqrt{T}\sigma+ 10\eta_T^{1/2}c_{\beta}(T)\sum_{t=1}^T \sigma_{t-1}(\xbm_t) + \frac{\pi^2}{6}  + 2B T^{1-\frac{\alpha}{2}} \\
       &+ \eta_T^{1/2} c_{\beta}(T) c_{B\sigma} \sqrt{2\log(4/\delta) T}.\\
    \end{aligned}
  \end{equation}
  The second inequality in~\eqref{eqn:boi-inst-regret-pf-9} uses $c_1(t)\leq 2.5\eta_t^{1/2}$, $\beta_t^{1/2}\geq 1$, $c_2(t)\leq 5 \eta_t^{1/2}\beta_t^{1/2}$, and thus 
  \begin{equation*} \label{eqn:boi-inst-regret-pf-10}
  \centering
  \begin{aligned}
  c_1(t)\beta_t^{1/2}+c_1(t)c_y(t)+c_2(t) \leq& 2.5\eta_t^{1/2}\beta_t^{1/2} + 2.5\eta_t^{1/2}c_y(t)+5\eta_t^{1/2}\beta_t^{1/2}\\
  =& 7.5\eta_t^{1/2}\beta_t^{1/2}+2.5\eta_t^{1/2}c_y(t)
  \leq 10\eta_t^{1/2} c_{\beta}(t).
  \end{aligned}
  \end{equation*}
   Let $c_{B}(\delta)=2.5c_{B2}\sqrt{2\log(8/\delta)}$, $c_3=10$, and $c_{B\sigma}(\delta)=c_{B\sigma}\sqrt{2\log(4/\delta)}$. 
  By~\eqref{eqn:boi-inst-regret-pf-8}, with probability $\geq 1-5\delta/8$, we have 
     \begin{equation} \label{eqn:boi-inst-regret-pf-11}
  \centering
  \begin{aligned}
     \sum_{t=1}^T\bar{r}_t 
     \leq& 
      c_{B}(\delta) \eta_T^{1/2} \sqrt{T}\sigma+ c_3\eta_T^{1/2} c_{\beta}(T)\sum_{t=1}^T \sigma_{t-1}(\xbm_t) + \frac{\pi^2}{6}  +2 B T^{1-\frac{\alpha}{2}} 
       + \eta_T^{1/2}c_{\beta}(T)c_{B\sigma}(\delta)\sqrt{T}.\\
    \end{aligned}
  \end{equation}
   Next, we consider the case when $E^r(t)$ is true, but $y^+_{t-1}-f(\xbm^*)\leq \beta_t^{1/2}\sigma_{t-1}(\xbm_t)$. 
   The instantaneous regret $r_t$ can be bounded by $2B$. Therefore, by definition~\eqref{def:boi-def-2}, 
     \begin{equation} \label{eqn:boi-inst-regret-pf-12}
  \centering
  \begin{aligned}
     \sum_{t=1}^T\tilde{r}_t  \leq \sum_{t=1}^T\bar{r}_t  + 2B n_y(T),
    \end{aligned}
  \end{equation}
  where $0\leq n_y(T)\leq T$ and $n_y(T)\in\Nbb$.
   Since $R_T = \sum_{t=1}^T r_t$ and $r_t=\tilde{r}_t$ with probability $\geq 1-3\delta/8$, we obtain~\eqref{eqn:boi-inst-regret-1} with probability $\geq 1-\delta$ with union bound. 
\end{proof} 

Proof of Theorem~\ref{thm:boi-cumulative-regret} is given next.
\begin{proof}
    We first prove scenario 1.
    From Lemma~\ref{lem:variancebound}, we know $\sum_{t=1}^T\sigma_{t-1}(\xbm_t) = \mathcal{O}(\sqrt{T\gamma_T})$. By~\eqref{def:eta} and definition of $c_y(t)$ in~\eqref{eqn:EI-sigma-star-assp}, we can write 
    \begin{equation} \label{eqn:ei-boi-cumu-pf-1}
  \centering
  \begin{aligned}
       \eta_T^{1/2}= \mathcal{O}(T^{\frac{\alpha}{2}} \log^{\frac{1}{2}}(T)), \ 
       \beta_T^{1/2} = \mathcal{O}( \log^{\frac{1}{2}}(T)),
       \ c_y(T) = \mathcal{O}( \log^{\frac{1}{2}}(T)),\ c_{\beta}(T) = \mathcal{O}( \log^{\frac{1}{2}}(T)).
    \end{aligned}
  \end{equation} 
     From Lemma~\ref{lemma:boi-inst-regret}, the cumulative regret bound is 
    \begin{equation} \label{eqn:ei-boi-cumu-pf-2}
  \centering
  \begin{aligned}
         \mathcal{O}(R_T)=&\mathcal{O} ( T^{\frac{\alpha}{2}} \log(T) \sqrt{T\gamma_T}+T^{1-\frac{\alpha}{2}}+T^{\frac{\alpha}{2}} \log(T)\sqrt{T}+n_y(T))\\ 
              =& \mathcal{O}( \max\{T^{\frac{1}{2}+\frac{\alpha}{2}} \log(T)\sqrt{\gamma_T}, T^{1-\frac{\alpha}{2}},n_y(T)\} ).\\
    \end{aligned}
  \end{equation} 
  Using $\gamma_T=\mathcal{O}(\log^{d+1}(T))$ for SE kernel, we choose $\alpha=\frac{1}{2}$ and $R_T=\mathcal{O}(\max\{T^{\frac{3}{4}} \log^{\frac{d+3}{2}}(T),n_y(T)\})$.
  Using $\gamma_T=\mathcal{O}(T^{\frac{d}{2\nu+d}}\log^{\frac{2\nu}{2\nu+d}}(T))$ for Matérn kernel, we choose $\alpha=\frac{\nu}{2\nu+d}$. Thus, $R_T=\mathcal{O}(\max\{T^{\frac{3\nu+2d}{4\nu+2d}}\log^{\frac{3\nu+d}{2\nu+d}}(T),n_y(T)\})$.

   We note that if $n_y(T)=o(T)$ does not hold, then there exists $c_n>0$ such that $n_y(T)\geq c_n T$.
   For scenario 2, by definition~\eqref{eqn:eft-y}, there are $c_n T$ times that $y^+_{t-1}-f(\xbm^*) < \beta_t^{1/2}\sigma_{t-1}(\xbm_t)$. Denote the index set where $E^y(t)$ is false as $I^y(t)$. At $T$ samples, we have $|I^y(T)|\geq c_n T$.
   From Lemma~\ref{lem:variancebound}, we have~\eqref{eqn:var-1}. For any $n\in\Nbb$ and $n\leq T$,  $\sigma_{t-1}^2(\xbm_t)\geq \frac{C_\gamma \gamma_T}{n}, t\leq T$ at most $n$ times. Let $n = [c_n T/2]$, where $[\cdot]$ is the largest integer smaller than $[\cdot]$. Then, $c_nT-1 \leq 2n\leq c_n T$. 
    Consider the indices $i\in I^y(T)$ and their corresponding $\xbm_i$. There exists $n\leq t\leq 2n$, such that $t\in I^y(T)$ and $\sigma_{t-1}^2(\xbm_{t})<\frac{C_\gamma \gamma_T}{n}\leq \frac{2C_\gamma \gamma_T}{c_n T-1}$.
  Therefore,
    \begin{equation} \label{eqn:ei-boi-cumu-pf-3}
  \centering
  \begin{aligned}
r_T^s=y^+_{T-1}-f(\xbm^*) \leq y^+_{t-1}-f(\xbm^*) < \beta_{t}^{1/2} \sigma_{t-1}(\xbm_{t}) <\beta_{t}^{1/2} \sqrt{\frac{C_\gamma \gamma_T}{n}}.
    \end{aligned}
  \end{equation}
  By Lemma~\ref{lem:gammarate},  for SE kernel $\gamma_T=\mathcal{O}(\log^{d+1}(T))$, and the noisy simple regret bound is $\mathcal{O}(T^{-\frac{1}{2}}\log^{\frac{d+2}{2}}(T))$. For Matérn kernel, the noisy simple regret bound is $\mathcal{O}(T^{\frac{-\nu}{2\nu+d}}\log^{\frac{2\nu+0.5d}{2\nu+d}}(T))$.  
\end{proof}

\section{Additional experiments with the square exponential (SE) kernel} 
\label{appdx:experiments}
\begin{figure}[!t]
\centering
\begin{subfigure}{0.33\linewidth}
\includegraphics[width=\linewidth]{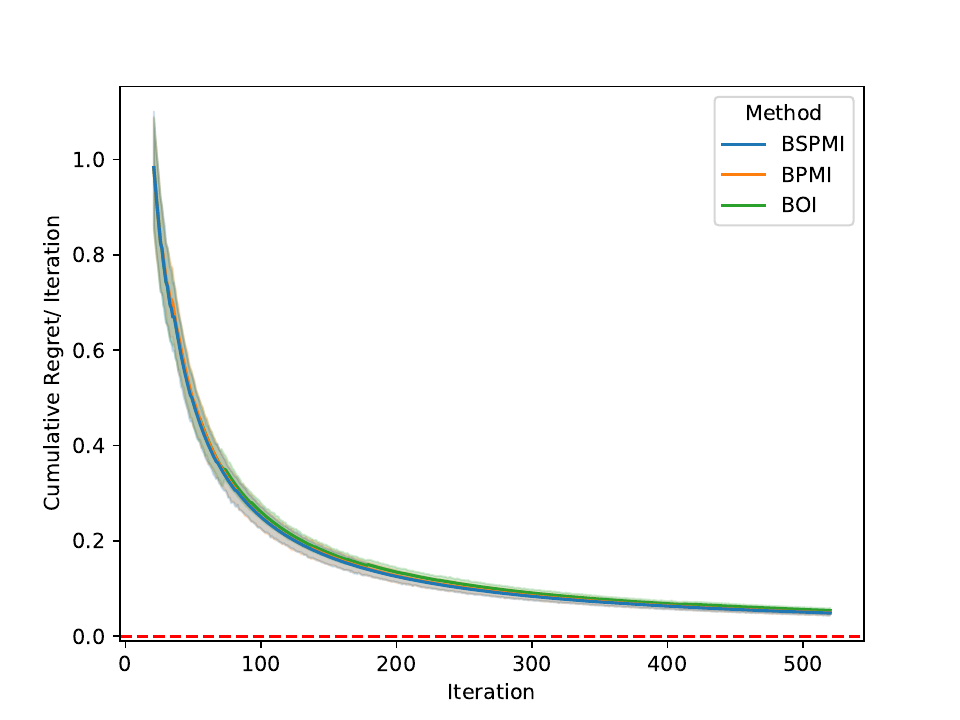} 
\caption{Branin 2D, $\sigma = 0.001$}
\end{subfigure}\hfill
\begin{subfigure}{0.33\linewidth}
\includegraphics[width=\linewidth]{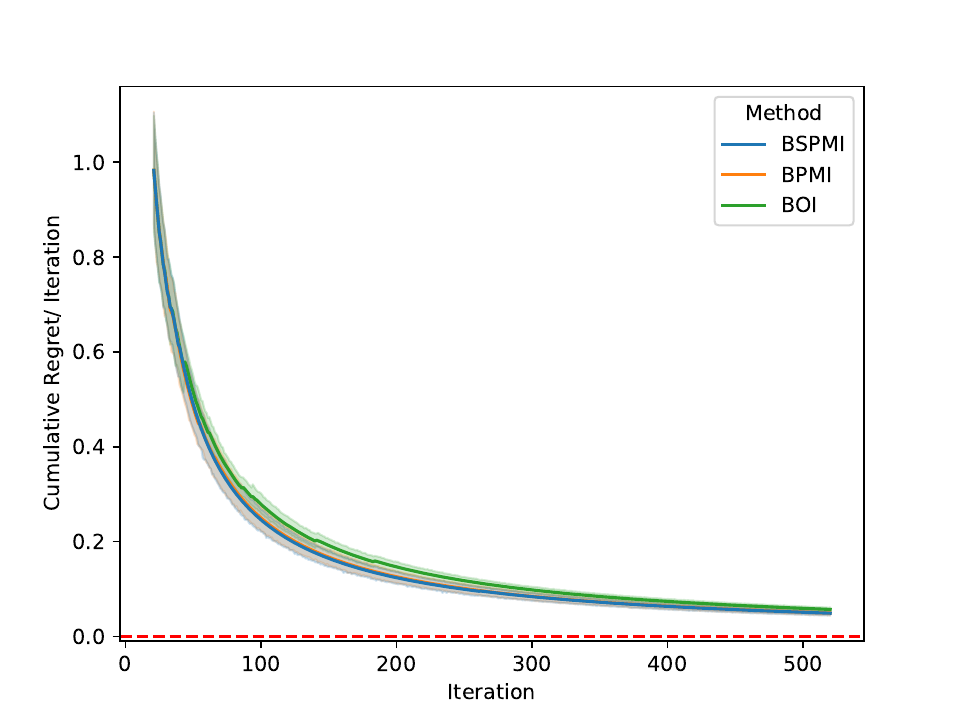}
\caption{Branin 2D, $\sigma = 0.01$}
\end{subfigure}\hfill
\begin{subfigure}{0.33\linewidth}
\includegraphics[width=\linewidth]{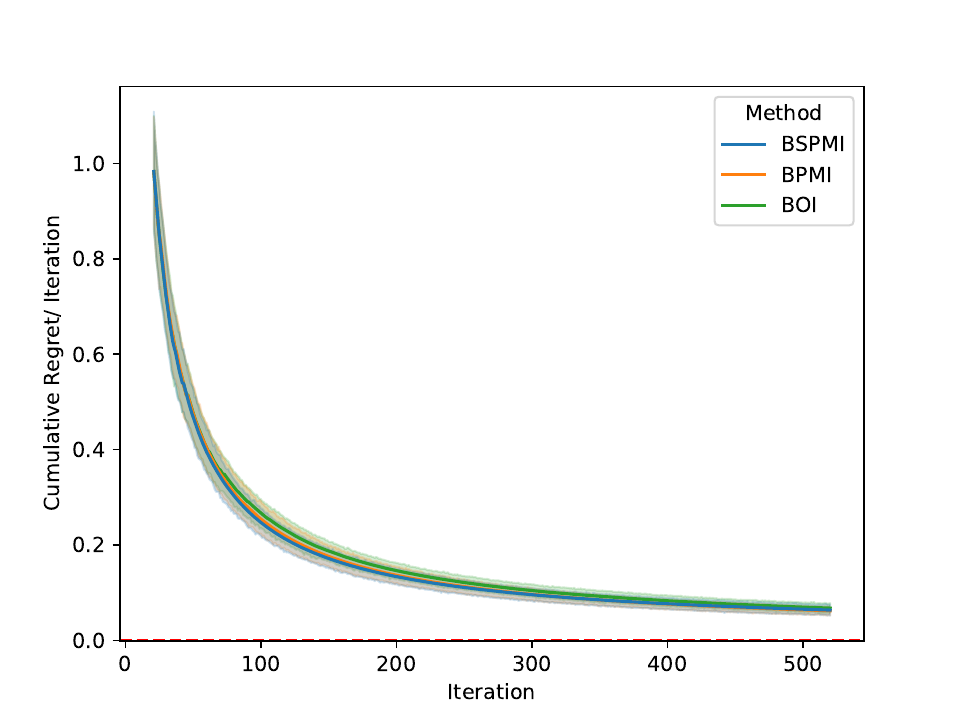}
\caption{Branin 2D, $\sigma = 0.1$}
\end{subfigure}

\begin{subfigure}{0.33\linewidth}
\includegraphics[width=\linewidth]{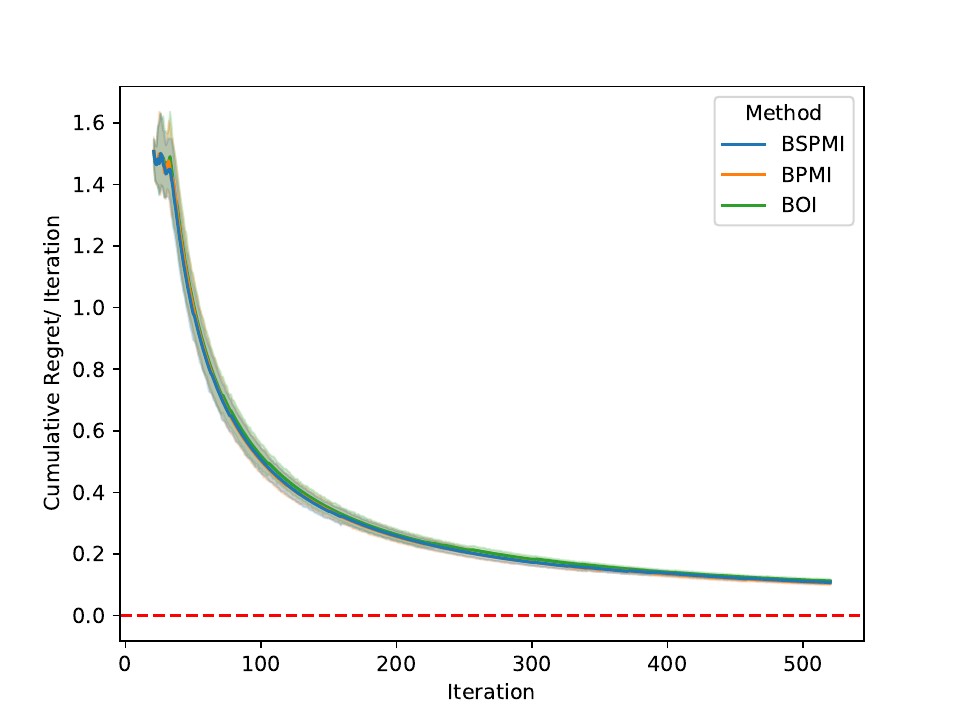} 
\caption{Styblinski-Tang 2D, $\sigma = 0.001$}
\end{subfigure}\hfill
\begin{subfigure}{0.33\linewidth}
\includegraphics[width=\linewidth]{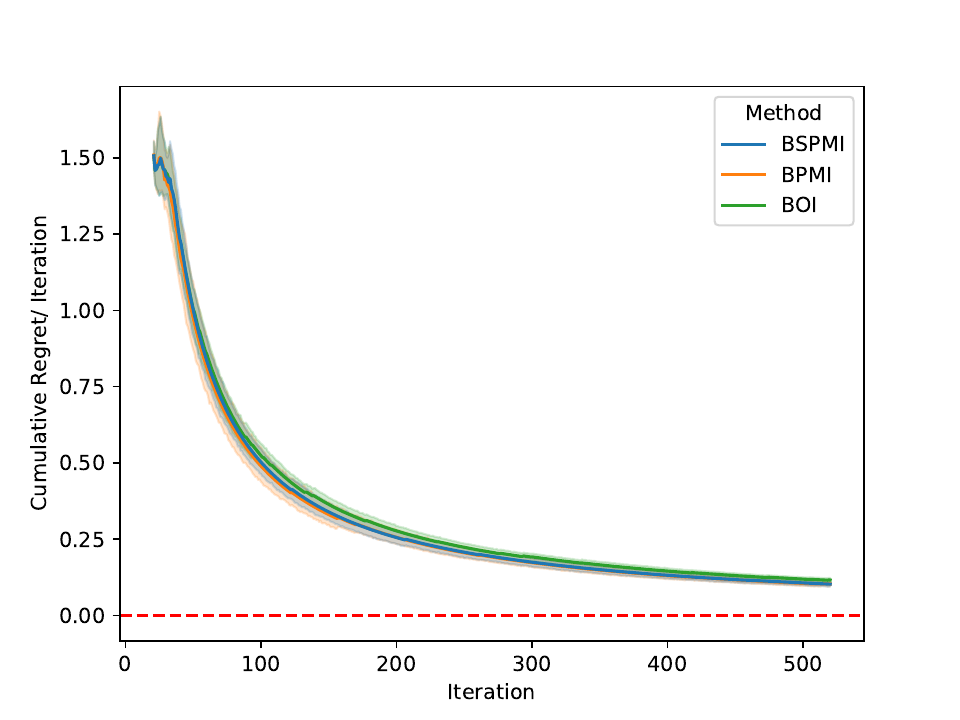}
\caption{Styblinski-Tang 2D, $\sigma = 0.01$}
\end{subfigure}
\begin{subfigure}{0.33\linewidth}
\includegraphics[width=\linewidth]{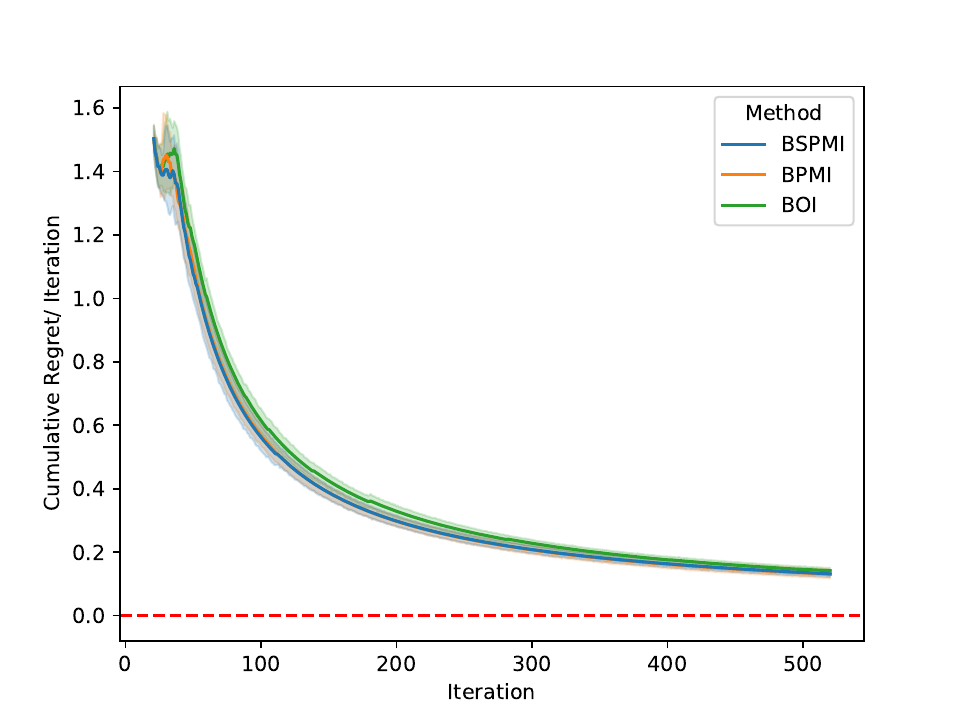}
\caption{Styblinski-Tang 2D, $\sigma = 0.1$}
\end{subfigure}

\begin{subfigure}{0.33\linewidth}
\includegraphics[width=\linewidth]{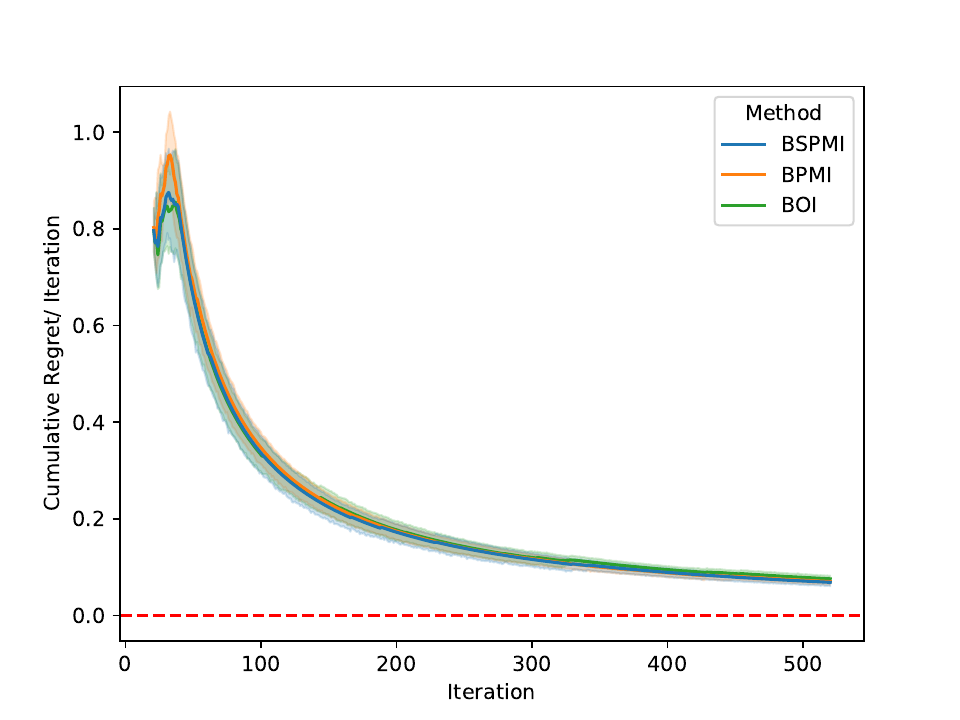} 
\caption{Camel 2D, $\sigma = 0.001$}
\end{subfigure}\hfill
\begin{subfigure}{0.33\linewidth}
\includegraphics[width=\linewidth]{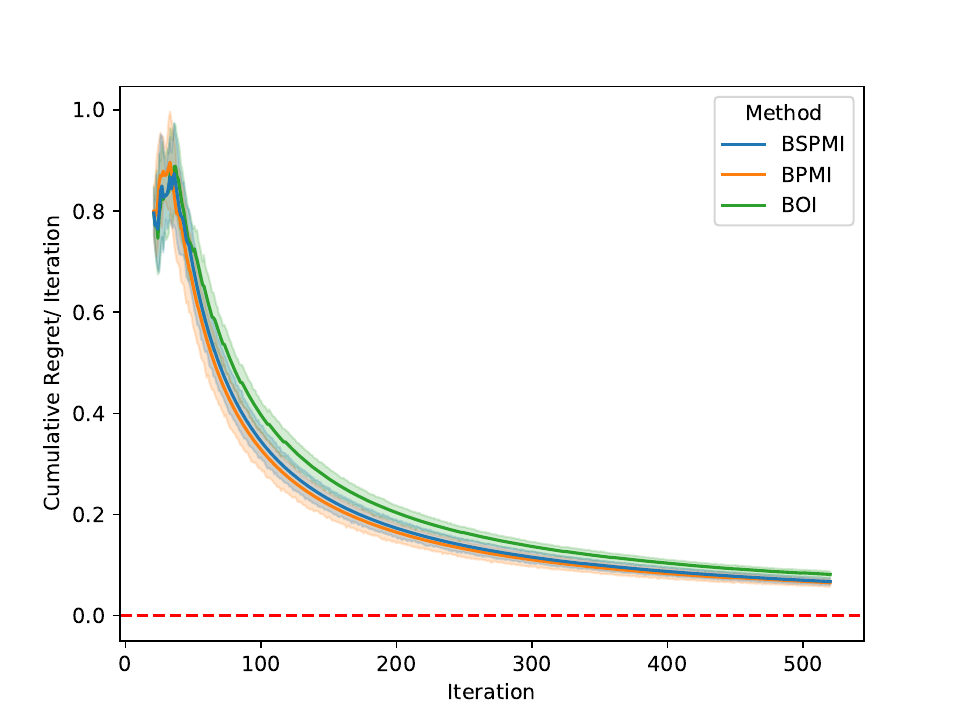}
\caption{Camel 2D, $\sigma = 0.01$}
\end{subfigure}
\begin{subfigure}{0.33\linewidth}
\includegraphics[width=\linewidth]{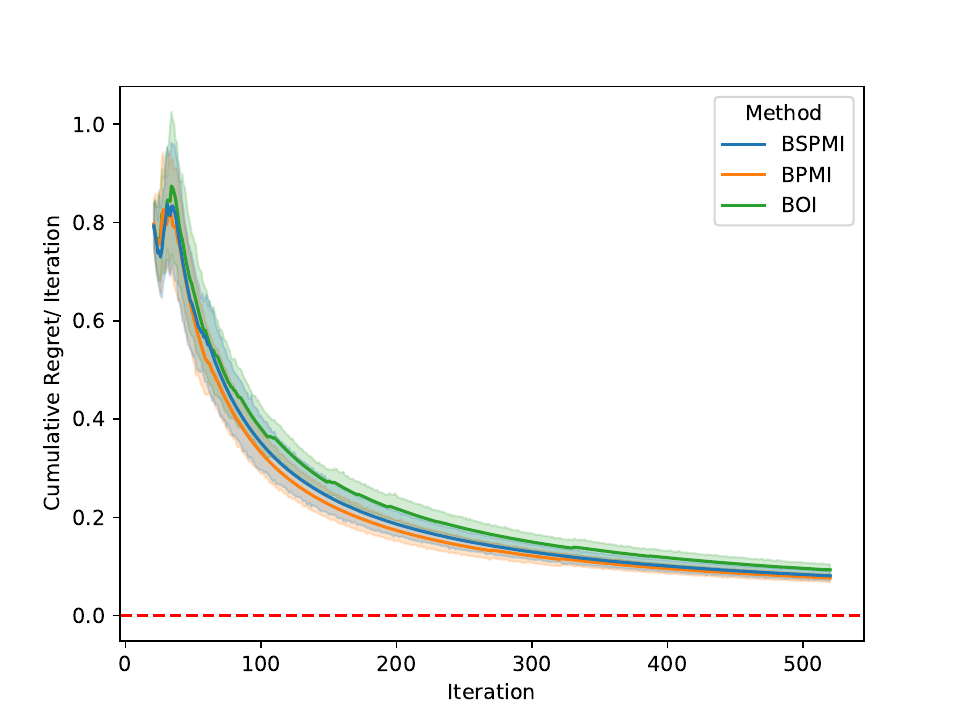}
\caption{Camel 2D, $\sigma = 0.1$}
\end{subfigure}

\begin{subfigure}{0.33\linewidth}
\includegraphics[width=\linewidth]{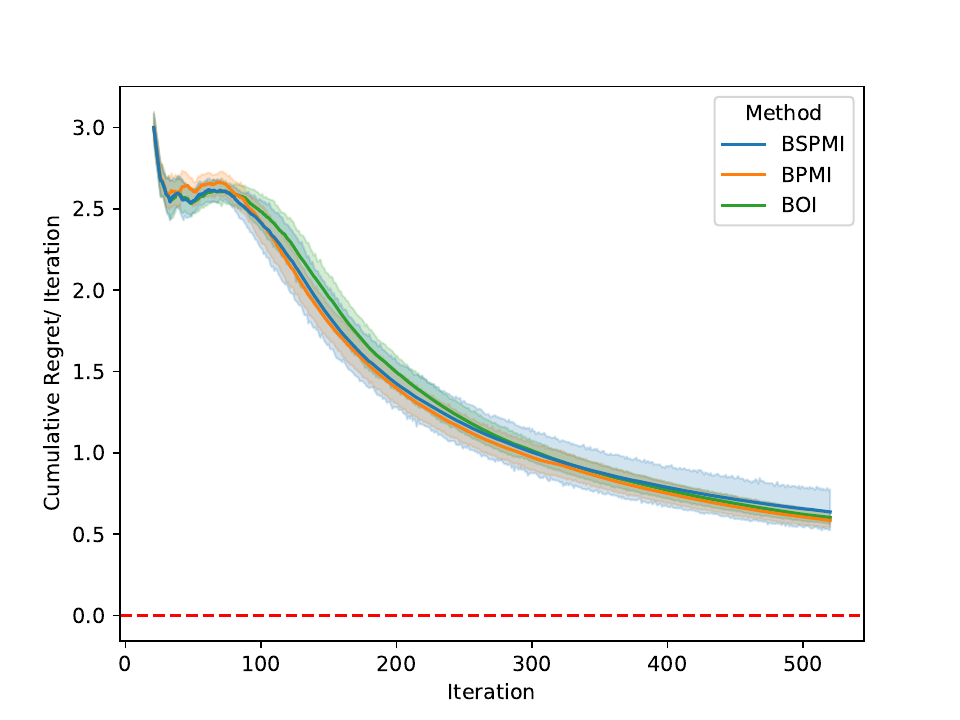} 
\caption{Schwefel 2D, $\sigma = 0.001$}
\end{subfigure}\hfill
\begin{subfigure}{0.33\linewidth}
\includegraphics[width=\linewidth]{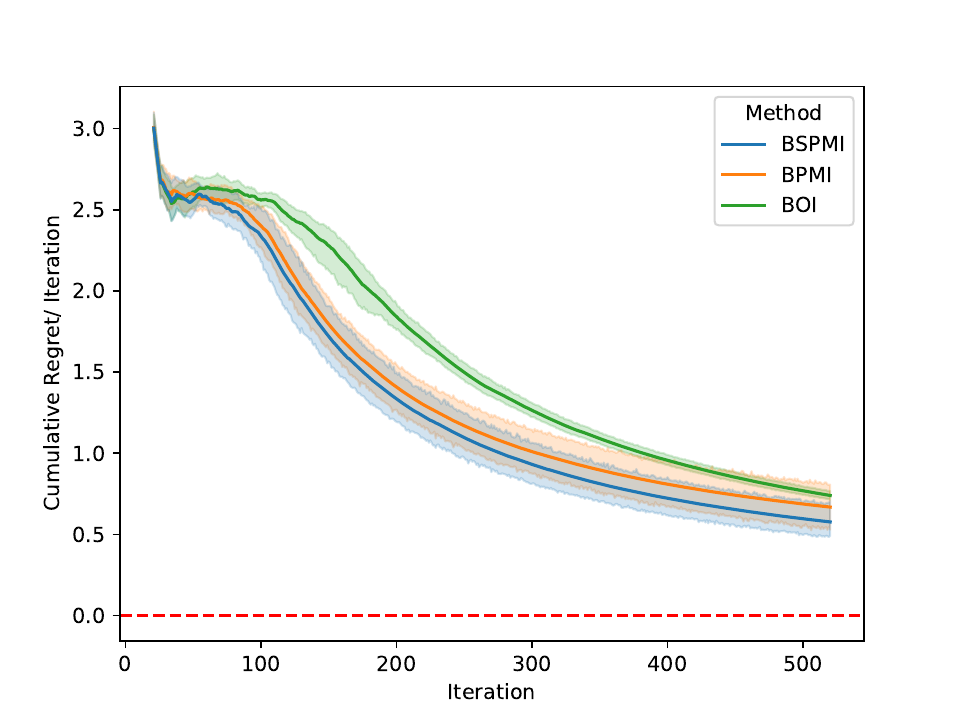}
\caption{Schwefel 2D, $\sigma = 0.01$}
\end{subfigure}
\begin{subfigure}{0.33\linewidth}
\includegraphics[width=\linewidth]{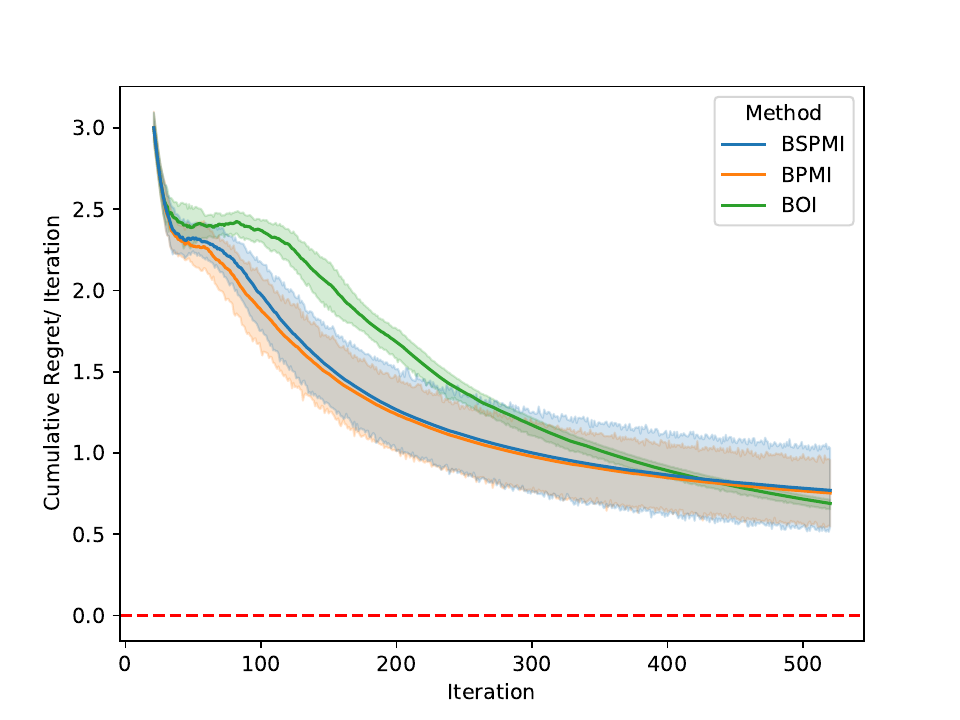}
\caption{Schwefel 2D, $\sigma = 0.1$}
\end{subfigure}

\caption{Cumulative regret/the number of iterations for different GP-EI algorithms with different incumbents on four 2D test functions (Branin 2D, Styblinski-Tang 2D, Camel 2D and Schwefel 2D) with the square exponential (SE) kernel.}
\label{fig:results3}
\end{figure}

\begin{figure}[!t]
\centering
\begin{subfigure}{0.33\linewidth}
\includegraphics[width=\linewidth]{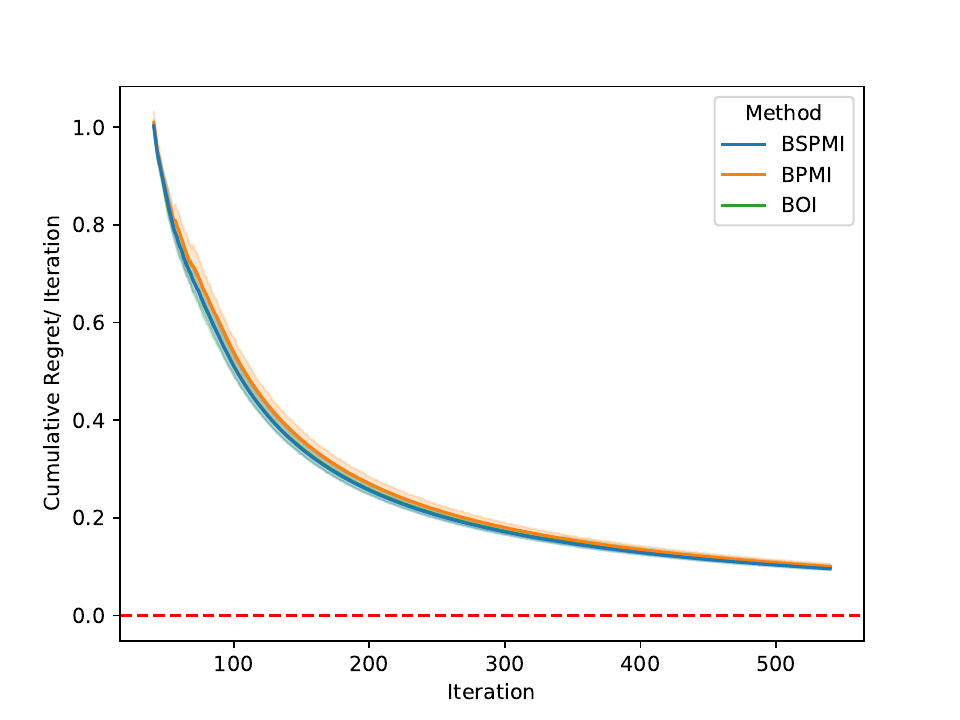} 
\caption{Rosenbrock 4D, $\sigma = 0.001$}
\end{subfigure}\hfill
\begin{subfigure}{0.33\linewidth}
\includegraphics[width=\linewidth]{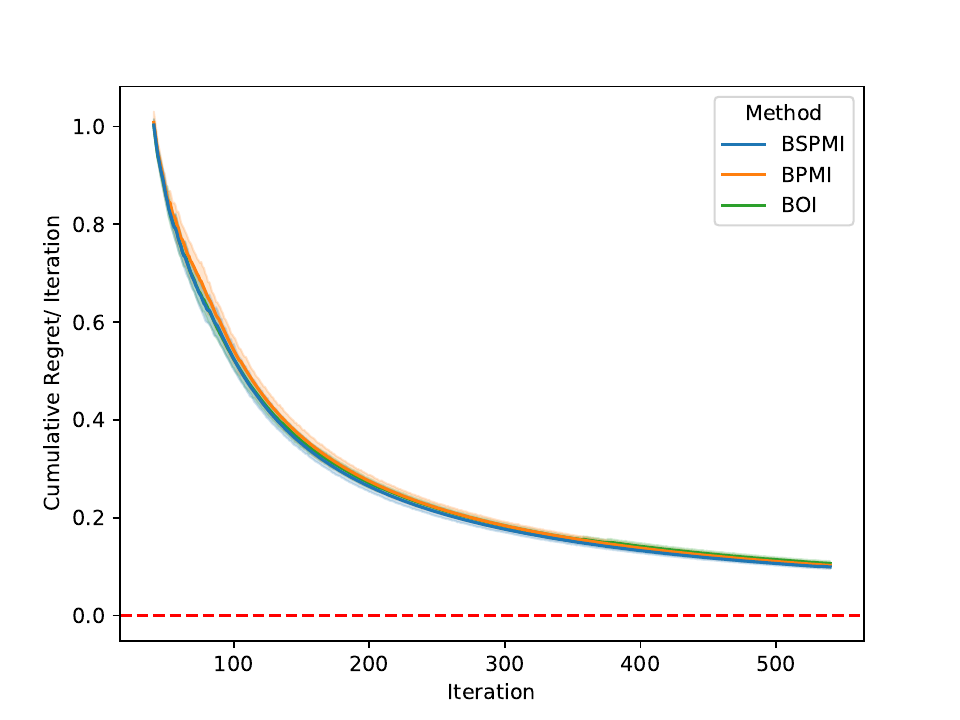}
\caption{Rosenbrock 4D, $\sigma = 0.01$}
\end{subfigure}
\begin{subfigure}{0.33\linewidth}
\includegraphics[width=\linewidth]{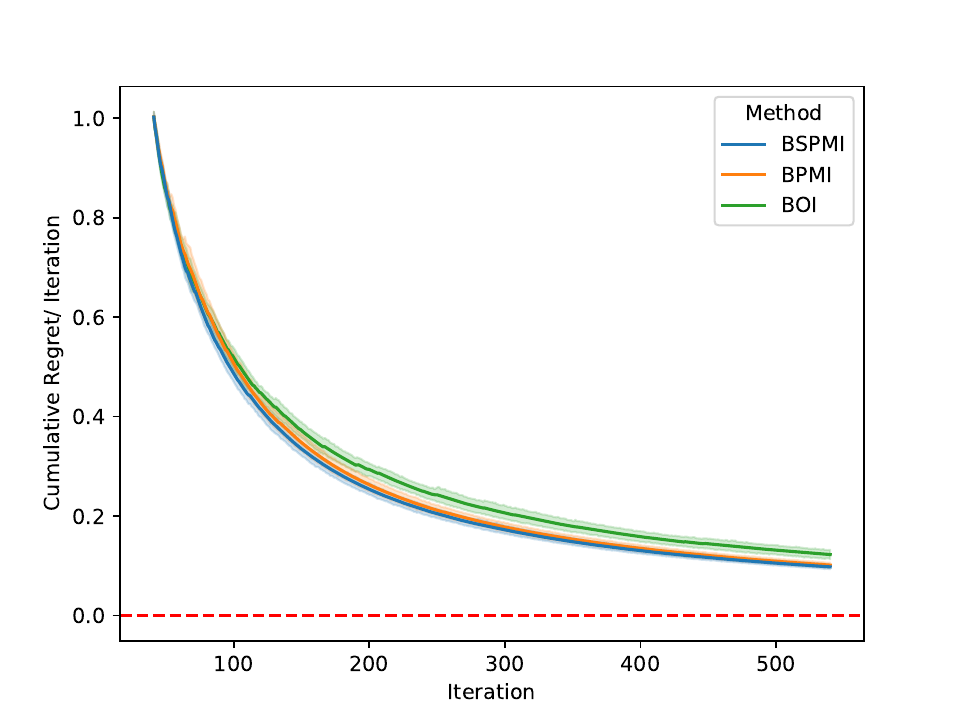}
\caption{Rosenbrock 4D, $\sigma = 0.1$}
\end{subfigure}

\begin{subfigure}{0.33\linewidth}
\includegraphics[width=\linewidth]{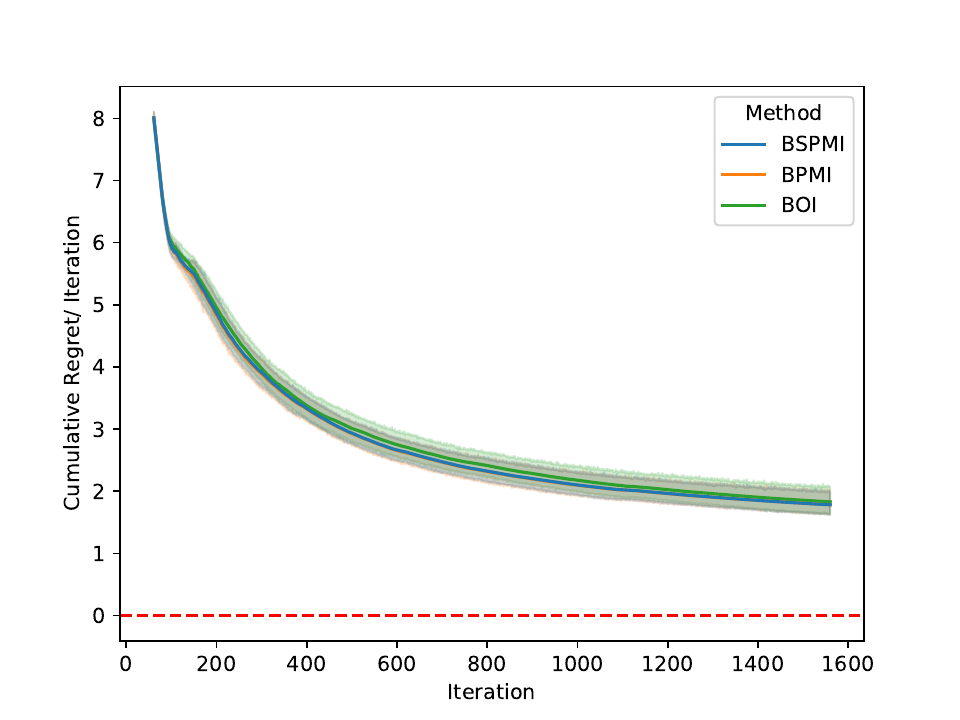} 
\caption{Hartmann 6D, $\sigma = 0.001$}
\end{subfigure}\hfill
\begin{subfigure}{0.33\linewidth}
\includegraphics[width=\linewidth]{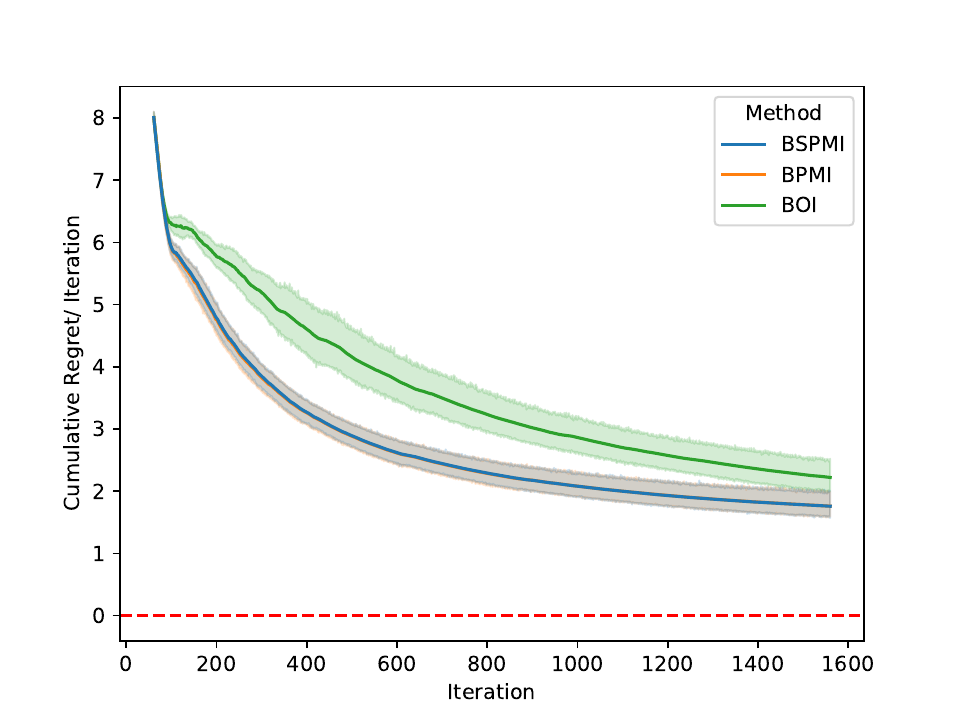}
\caption{Hartmann 6D, $\sigma = 0.01$}
\end{subfigure}
\begin{subfigure}{0.33\linewidth}
\includegraphics[width=\linewidth]{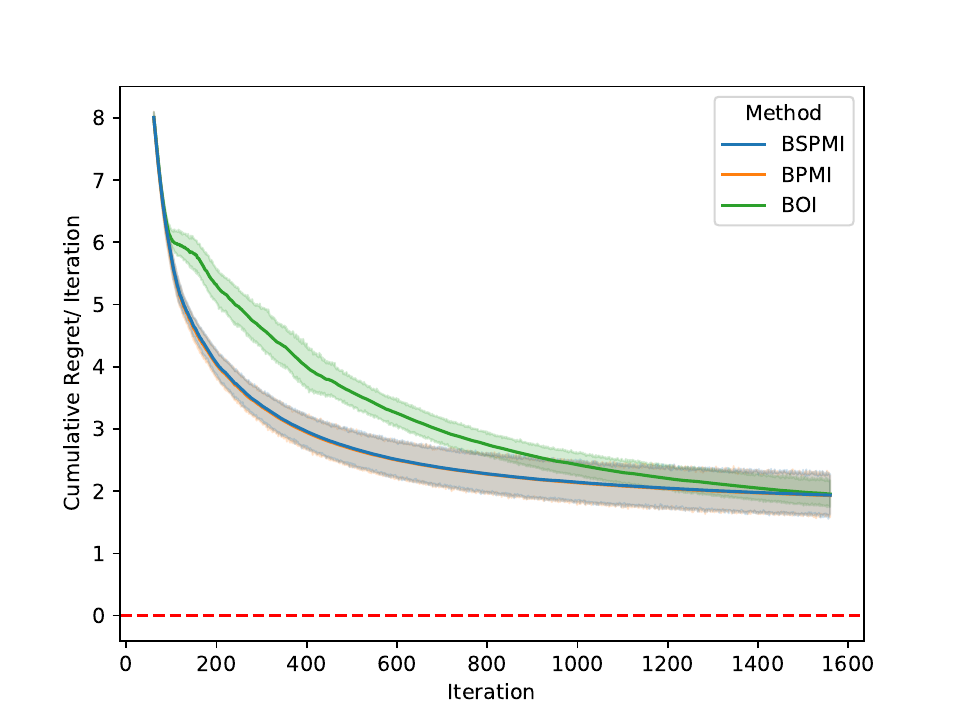}
\caption{Hartmann 6D, $\sigma = 0.1$}
\end{subfigure}

\caption{Cumulative regret/the number of iterations for different GP-EI algorithms with different incumbents on two higher dimension test functions (Rosenbrock 4D and Hartmann 6D) with the square exponential (SE) kernel.}
\label{fig:results4}
\end{figure}
%\setcitestyle{numbers}
\clearpage
\bibliography{bibliography}

%%%%%%%%%%%%%%%%%%%%%%%%%%%%%%%%%%%%%%%%%%%%%%%%%%%%%%%%%%%%

\end{document}